\newcommand{\cmark}{\ding{51}}%
\newcommand{\xmark}{\ding{55}}%
\newcommand\norm[1]{\lVert#1\rVert}
\def \algnamePHE{{CoopTS-PHE}}
\def \algnameLMC{{CoopTS-LMC}}
\title{Randomized Exploration in Cooperative Multi-Agent Reinforcement Learning }
\author{
   Hao-Lun Hsu\footnotemark[1],\quad Weixin Wang\thanks{Equal contribution.},\quad  Miroslav Pajic,\quad Pan Xu\\
   Duke University\\
   \texttt{\{hao-lun.hsu,weixin.wang,miroslav.pajic,pan.xu\}@duke.edu}
}
\begin{document}

\maketitle

\begin{abstract}
  We present the first study on provably efficient randomized exploration in cooperative multi-agent reinforcement learning (MARL). We propose a unified algorithm framework for randomized exploration in parallel Markov Decision Processes (MDPs), and two Thompson Sampling (TS)-type algorithms, CoopTS-PHE and CoopTS-LMC, incorporating the perturbed-history exploration (PHE) strategy and the Langevin Monte Carlo exploration (LMC) strategy, respectively, which are flexible in design and easy to implement in practice. For a special class of parallel MDPs where the transition is (approximately) linear, we theoretically prove that both CoopTS-PHE and CoopTS-LMC achieve a $\widetilde{\mathcal{O}}(d^{3/2}H^2\sqrt{MK})$ regret bound with communication complexity $\widetilde{\mathcal{O}}(dHM^2)$, where $d$ is the feature dimension, $H$ is the horizon length, $M$ is the number of agents, and $K$ is the number of episodes. This is the first theoretical result for randomized exploration in cooperative MARL. We evaluate our proposed method on multiple parallel RL environments, including a deep exploration problem (i.e., $N$-chain), a video game, and a real-world problem in energy systems. Our experimental results support that our framework can achieve better performance, even under conditions of misspecified transition models. Additionally, we establish a connection between our unified framework and the practical application of federated learning. 
\end{abstract}

\section{Introduction}
Multi-Agent Reinforcement Learning (MARL) has emerged as a potent tool with wide-ranging applications in diverse fields including robotics \citep{ding2020distributed, liu2019lifelong}, gaming \citep{marioRL, zhao2019multi, ye2020towards}, and numerous real-world systems \citep{bazzan2009opportunities,fei2022cascaded, sustaingym}. This is particularly evident in cooperative scenarios, where MARL's effectiveness is enhanced through both direct and indirect communication channels among agents. This requires MARL algorithms to adeptly and flexibly coordinate communications to optimize the benefits of cooperation. One of the classic challenges in MARL is balancing exploration and exploitation so that agents effectively utilize existing information while acquiring new knowledge. Recent literature highlights the intricacies of this balance, focusing on cooperative exploration strategies \citep{hao2023exploration} and dynamic exploitation tactics \citep{rojas2023one}. Achieving this equilibrium is crucial for the practical deployment of MARL systems in real-world scenarios, where unpredictability and the need for rapid adaptation are prevalent \citep{hao2023exploration,chalkiadakis2003coordination,liu2023cell}.

Optimism in the Face of Uncertainty (OFU) is a popular strategy to address the exploration-exploitation problem \citep{abbasi2011improved}. OFU strategy leads to numerous upper confidence bound (UCB)-type algorithms in contextual bandits \citep{chu2011contextual,abbasi2011improved,li2017provably}, single-agent reinforcement learning \citep{jin2020provably, wang2020reinforcement}, and more recently multi-agent reinforcement learning \citep{dubey2021provably,min2023cooperative}. These algorithms compute statistical confidence regions for the model or the value function, given the observed history, and perform the greedy policy with respect to these regions, or upper confidence bounds. 
Though UCB-based methods give out strong theoretical results, they often have poor performance in practice \citep{osband2013more, osband2017posterior}. For example, \citet{wang2020reinforcement} demonstrates that computing the confidence bonus necessitates advanced sensitivity sampling and the expensive computation makes practical applications inefficient. It is worth noting that UCB is mostly constructed based on a linear structure~\citep{chu2011contextual, jin2020provably}. NeuralUCB is a notable attempt at a nonlinear version while it is infeasible in terms of computational complexity~\citep{NeuralUCB,xu2021neural}. %
Inspired by Thompson Sampling (TS) \citep{thompson1933likelihood}, posterior sampling for reinforcement learning (PSRL)~\citep{agrawal2017posterior, zhou2019posterior} involves maintaining a posterior distribution over the parameters of the Markov Decision Processes (MDP) model parameters. Although conceptually simple, most existing TS methods require the exact posterior or a good Laplacian approximation~\citep{xu2022langevin}. Recently,  there have been advancements in randomized exploration with approximate sampling. One important method is perturb-history exploration (PHE) strategy, which involves introducing random perturbations in the action history of the agent \citep{kveton2019perturbedhistory,pmlr-v108-kveton20a,ishfaq2021randomized}. This randomized exploration approach diversifies the agent's experience, aiding in learning more robust strategies in environments with uncertainty and variability. Another effective method is Langevin Monte Carlo (LMC) method~\citep{xu2022langevin,ishfaq2024provable,pmlr-v206-huix23a,pmlr-v202-karbasi23a,mousavihosseini2023complete,ishfaq2024more}. Notably, \citet{ishfaq2024provable} maintains the simplicity and scalability of LMC, making it applicable in deep RL algorithms by approximating the posterior distribution of the $Q$ function. %

Despite the aforementioned advancements of randomized exploration in bandits and single-agent RL, there remains a scarcity of research on randomized exploration within cooperative MARL, which motivates us to present the first investigation into provably efficient randomized exploration in cooperative MARL, with both theoretical and empirical evidence. We specifically focus on the applicability in parallel MDPs, aiming to facilitate faster learning and to improve policy optimization with the same state and action spaces, allowing for leveraging similarities across MDPs. We theoretically and empirically demonstrate that randomized exploration strategies can be extended to the multi-agent setting and the benefit of randomized exploration instead of UCB can be significant from single-agent to multi-agent setting.

In summary, \textbf{our contributions} are as follows:
\begin{itemize}[leftmargin=*,nosep]
 
    \item We propose a unified algorithm framework for learning parallel MDPs, and apply two TS-related strategies PHE and LMC for exploration, which leads to the \algnamePHE\ and \algnameLMC\ algorithms. Unlike conventional TS, which suffers from sampling errors due to Laplace approximation and expensive posterior computation \citep{riquelme2018deep,kveton2020randomized}, our proposed algorithms only require adding standard Gaussian noises to the dataset (\algnamePHE) or the gradient (\algnameLMC) when performing Least-Square Value Iteration (LSVI), which is efficient in computation and avoids sampling bias due to the Laplace approximation. Notably, both algorithms are easily implementable which are more practical than UCB-based algorithms in deep MARL.

    \item When reduced to linear parallel MDPs, we theoretically prove that both \algnamePHE\ and \algnameLMC\ with linear function approximation can achieve a regret bound $\widetilde{\mathcal{O}}\big(d^{3/2}H^2\sqrt{M}\big(\sqrt{dM\gamma}+\sqrt{K}\big)\big)$ with communication complexity $\widetilde{\mathcal{O}}\big((d+K/\gamma)MH\big)$, where $d$ is the feature dimension, $H$ is the horizon length, $M$ is the number of agents, $K$ is the number of episodes for each agent, and $\gamma$ is a parameter controlling the communication frequency. When $\gamma=\mathcal{O}(K/dM)$, our algorithms attain $\widetilde{\mathcal{O}}\big(d^{3/2}H^2\sqrt{MK}\big)$ regret with   $\widetilde{\mathcal{O}}(dHM^2)$ communication complexity. This result matches the best communication complexity in cooperative MARL \citep{min2023cooperative}, and the best regret bounds for randomized RL in the single-agent setting ($M=1$) \citep{ishfaq2021randomized,ishfaq2024provable}. A comprehensive comparison with baseline algorithms on episodic, non-stationary, linear MDPs is presented in \Cref{table:regret_comparison}.

    \item We further extend our theoretical analysis to the misspecified setting where both the transition and reward are approximately linear up to an error $\zeta$ and the MDPs could be heterogeneous across agents, which is a generalized notion of misspecification \citep{jin2020provably}. We theoretically prove when $\zeta=\mathcal{O}\big(\sqrt{d/MK}\big)$, the cumulative regret for \algnamePHE\ matches the result in the linear homogeneous MDP setting. Simultaneously, when $\zeta=\mathcal{O}\big(\sqrt{1/MK}\big)$, the cumulative regret for \algnameLMC\ matches the result in the linear homogeneous MDP setting. This result indicates that \algnamePHE\ has a slightly higher tolerance on the model misspecification than \algnameLMC.

    \item We conduct extensive experiments on various benchmarks with comprehensive ablation studies, including $N$-chain that requires deep exploration, Super Mario Bros task in a misspecified setting, and a real-world problem in thermal control of building energy systems. Our empirical evaluation demonstrates that our randomized exploration strategies outperform existing deep $Q$-network (DQN)-based baselines. We also show that these strategies in cooperative MARL can be adapted to the existing federated RL framework when data transitions are not shared.    
\end{itemize}

\begin{table*}[t]
\centering
\caption{Comparison on episodic, non-stationary, linear MDPs. We define the average regret as the cumulative regret divided by the total number of samples (transition pairs) used by the algorithm. Here $d$ is the feature dimension, $H$ is the episode length, $K$ is the number of episodes, and $M$ is the number of agents in a multi-agent setting.}
\label{table:regret_comparison}
\resizebox{\textwidth}{!}{
\begin{tabular}{l l c c c c c}
\toprule
\multirow{2}{*}{Setting} &\multirow{2}{*}{Algorithm} & \multirow{2}{*}{Regret} & \multirow{2}{*}{Average Regret} & Randomized & Generalizable & Communication \\
& & & & {Exploration} & {to Deep RL} & {Complexity}\\
\midrule
\multirow{5}{*}{\begin{tabular}{@{}c@{}}
single- \\
agent
\end{tabular}} 
&OPT-RLSVI \citep{zanette2020frequentist} & $\widetilde{\mathcal{O}}(d^2 H^{\frac{5}{2}}\sqrt{K})$ & $\widetilde{\mathcal{O}}(d^2H^{\frac{3}{2}}\sqrt{1/K})$ & \cmark & \xmark & -- \\
& LSVI-UCB \citep{jin2020provably} & $\widetilde{\mathcal{O}}(d^{\frac{3}{2}}H^{2}\sqrt{K})$ & $\widetilde{\mathcal{O}}(d^{\frac{3}{2}}H\sqrt{1/K})$ & \xmark & \xmark & -- \\
& LSVI-PHE \citep{ishfaq2021randomized} & $\widetilde{\mathcal{O}}(d^{\frac{3}{2}}H^{2}\sqrt{K})$ & $\widetilde{\mathcal{O}}(d^{\frac{3}{2}}H\sqrt{1/K})$ & \cmark & \cmark & -- \\
& LMC-LSVI \citep{ishfaq2024provable} & $\widetilde{\mathcal{O}}(d^{\frac{3}{2}}H^{2}\sqrt{K})$ & $\widetilde{\mathcal{O}}(d^{\frac{3}{2}}H\sqrt{1/K})$ & \cmark & \cmark & -- \\
& LSVI-ASE \citep{ishfaq2024more} & $\widetilde{\mathcal{O}}(dH^{2}\sqrt{K})$ & $\widetilde{\mathcal{O}}(dH^{}\sqrt{1/K})$ & \cmark & \cmark & -- \\
\midrule
\multirow{4}{*}{\begin{tabular}{@{}c@{}}
multi- \\
agent
\end{tabular}} 
& Coop-LSVI \citep{dubey2021provably} & $\widetilde{\mathcal{O}}(d^{\frac{3}{2}}H^{2}\sqrt{MK})$ & $\widetilde{\mathcal{O}}(d^{\frac{3}{2}}H\sqrt{1/MK})$ & \xmark & \xmark & $\widetilde{\mathcal{O}}\big(dHM^3\big)$ \\
& Asyn-LSVI \citep{min2023cooperative} & $\widetilde{\mathcal{O}}(d^{\frac{3}{2}}H^{2}\sqrt{K})$ & $\widetilde{\mathcal{O}}(d^{\frac{3}{2}}H\sqrt{1/K})$ & \xmark & \xmark & $\widetilde{\mathcal{O}}\big(dHM^2\big)$ \\
& \textbf{\algnamePHE\ (Ours)} & $\widetilde{\mathcal{O}}(d^{\frac{3}{2}}H^{2}\sqrt{MK})$ & $\widetilde{\mathcal{O}}(d^{\frac{3}{2}}H\sqrt{1/MK})$ & \cmark & \cmark & $\widetilde{\mathcal{O}}\big(dHM^2\big)$ \\
& \textbf{\algnameLMC\ (Ours)} & $\widetilde{\mathcal{O}}(d^{\frac{3}{2}}H^{2}\sqrt{MK})$ & $\widetilde{\mathcal{O}}(d^{\frac{3}{2}}H\sqrt{1/MK})$ & \cmark & \cmark & $\widetilde{\mathcal{O}}\big(dHM^2\big)$ \\
\bottomrule
\end{tabular}
}
\end{table*}

\section{Preliminary}

In parallel Markov Decision Processes (MDPs), $M$ agents interact independently with their respective discrete-time MDPs, sharing the same but independent state and action spaces. Each agent might have its unique reward functions and transition kernels. Specifically, for agent $m \in \mathcal{M}$, the associated MDP is defined by the tuple $\operatorname{MDP}(\mathcal{S}, \mathcal{A}, H, \mathbb{P}_m, r_m)$. Here $\mathcal{S}$ and $\mathcal{A}$ are the state and action spaces, respectively, $H$ is the horizon length, $\mathbb{P}_m = \{\mathbb{P}_{m,h}\}_{h\in[H]}$ and $r_m = \{r_{m,h}\}_{h \in [H]}$ are the sets of transition kernels and reward functions. For step $h \in [H]$, $\mathbb{P}_{m,h}(\cdot|s,a)$ is the probability measure in the next state given the current state-action pair $(s,a)$, $r_{m,h}: \mathcal{S} \times \mathcal{A} \rightarrow [0,1]$ is the deterministic reward function. The policy $\pi_m = \{\pi_{m,h}\}_{h \in [H]}$ is a sequence of decision rules, where $\pi_{m,h}: \mathcal{S} \rightarrow \mathcal{A}$ is the deterministic policy at step $h$. 

For agent $m \in \cM$, given any policy $\pi$ and transition $\PP$, to evaluate the policy effectiveness in the $m^{\text{th}}$ MDP, we define value function $V_{m,h}^{\pi}(s)\coloneqq \mathbb{E}_{\pi} \big[\sum_{h^\prime=h}^H r_{m, h^\prime}(s_{m,h^\prime}, a_{m,h^\prime})|s_{m,h} = s\big]$ and $Q$ function $Q_{m,h}^{\pi}(s,a):=\mathbb{E}_{\pi} \big[\sum_{h^\prime=h}^H r_{m, h^\prime}(s_{m,h^\prime}, a_{m,h^\prime})|s_{m,h}=s, a_{m,h}=a\big]$ for any $(h,s,a)\in[H]\times\cS\times\cA$. The optimal policy is defined as $\pi_m^*$, and we denote $V_{m,h}^*(s)=V_{m,h}^{\pi_{m}^*}(s)$. 
For each $k \in [K]$, at the beginning of episode $k$, each agent $m \in \cM$ receives the initial state $s_{m,1}^k$ chosen arbitrarily by the environment. For each step $h \in [H]$ in this episode, each agent $m$ observes its current state $s_{m,h}^k$, selects an action $a_{m,h}^k$ based on policy $\pi_{m,h}^k$,  receives a reward $r_{m,h}(s_{m,h}^k, a_{m,h}^k)$, and then transitions to the next state $s_{m, h+1}^k$ based on the transition probability measure $\mathbb{P}_{m,h}(\cdot|s_{m,h}^k, a_{m,h}^k)$. The reward defaults to $0$ when the episode terminates at step $H+1$. The goal of agents is to minimize the cumulative group regret after $K$ episodes, which is defined as
\begin{align*} 
    \textstyle \operatorname{Regret}(K) = \sum_{m\in \mathcal{M}}\sum_{k=1}^{K}\big[V_{m,1}^*\big(s^k_{m,1}\big) - V_{m,1}^{\pi_{m}^k}\big(s^k_{m,1}\big)\big]. 
\end{align*}

\section{Algorithm Design}\label{sec:main_algo}
In this section, we first present a unified algorithm framework for conducting randomized exploration in cooperative MARL. Then we introduce two practical randomized exploration strategies. 

\subsection{Unified Algorithm Framework}
A unified algorithm framework is presented in \Cref{algo:general_framework}, where each agent executes Least-Square Value Iteration (LSVI) in parallel and makes decisions based on collective data obtained from communication between each agent and the server. Before we describe the details of our algorithm, we first define notations about the datasets stored on each agent's local machine and the server.

\begin{algorithm}[ht]
\caption{Unified Algorithm Framework for Randomized Exploration in Parallel MDPs \label{algo:general_framework}
}
    \begin{algorithmic}[1]
        \STATE Initialization: set $U_h^{\text{ser}}(k),  U_{m,h}^{\text{loc}}(k)=\emptyset$.
	
        \FOR{episode $k =1,...,K$}
		\FOR{agent $m \in \mathcal{M}$} \label{line:first_stage_begin}
    		\STATE Receive initial state $s^{k}_{m,1}$. 
                    
                \STATE $V_{m, H+1}^k(\cdot) \leftarrow 0$. \label{line:initialize_V_function}
                    
                \STATE $\{Q_{m,h}^k(\cdot,\cdot)\}_{h=1}^H\leftarrow$\textbf{\textcolor{blue}{Randomized Exploration}} \hfill $\triangleleft$ \Cref{algo:exploration_part_PHE_general_function_class} or \Cref{algo:exploration_part_LMC_general_function_class}  \label{line:randomized_exploration}

                \FOR{step $h =1,...,H$} \label{line:second_part_begin}
                    \STATE $a^k_{m,h} \leftarrow \argmax_{a \in \mathcal{A}} Q^k_{m,h}(s^k_{m,h},a)$. \label{line:greedy_policy}
                    
                    \STATE Receive $s^k_{m, h+1}$ and $r_{m,h}$.
                      
                    \STATE $U_{m,h}^{\text{loc}}(k) \leftarrow U_{m,h}^{\text{loc}}(k) \bigcup \big(s_{m,h}^k, a_{m,h}^k, s_{m,h+1}^k\big)$.\label{algline:local_data_update}

                    \IF{Condition} \label{line:verify_synchronization_begin}
                        \STATE SYNCHRONIZE $\leftarrow$ True.
                    \ENDIF \label{line:verify_synchronization_end}
                \ENDFOR \label{line:second_part_end}
            \ENDFOR \label{line:first_stage_end}
            
            \IF{SYNCHRONIZE} \label{line:second_stage_begin}
                \FOR{step $h =H,...,1$} \label{line:SYNCHRONIZE}
                    \STATE $\forall$ \textit{AGENT}: Send $U_{m,h}^{\text{loc}}(k)$ to \textit{SERVER}.
                    
                    \STATE \textit{SERVER}: $U_{h}^{\text{loc}}(k) \leftarrow \bigcup_{m\in \mathcal{M}} U_{m,h}^{\text{loc}}(k)$. 

                    \STATE \textit{SERVER}: $U_h^{\text{ser}}(k) \leftarrow U_h^{\text{ser}}(k) \bigcup U_{h}^{\text{loc}}(k)$.

                    \STATE \textit{SERVER}: Send $U_h^{\text{ser}}(k)$ to each \textit{AGENT}.
                        
                    \STATE $\forall$ \textit{AGENT}: Set $U_{m,h}^{\text{loc}}(k) \leftarrow \emptyset$. 

                \ENDFOR
            \ENDIF \label{line:second_stage_end}
	\ENDFOR	
    \end{algorithmic}
\end{algorithm}

\paragraph{Index notation} 
We define $k_s(k)$ (denoted as $k_s$ when no ambiguity arises) as the last episode before episode $k$ where synchronization happens. For episode $k$ and step $h$, we define three datasets: 
\begin{subequations}\label{equ:transition_set_general}
\begin{align} 
    \textstyle U_h^{\text{ser}}(k)&=\big\{\big(s_{n,h}^{\tau}, a_{n,h}^{\tau}, s_{n,h+1}^{\tau}\big)\big\}_{n\in \mathcal{M}, \tau \in [k_s]}, \label{equ:transition_set_ser} \\
    \textstyle U_{m,h}^{\text{loc}}(k)&=\big\{\big(s_{m,h}^{\tau}, a_{m,h}^{\tau}, s_{m,h+1}^{\tau}\big)\big\}_{\tau = k_s + 1}^{k-1}, \label{equ:transition_set_loc} \\
    \textstyle U_{m,h}(k) &= U_h^{\text{ser}}(k) \bigcup U_{m,h}^{\text{loc}}(k).\label{def:transition_set_all} %
\end{align}
\end{subequations}
By definition, $U_h^{\text{ser}}(k)$ is the dataset that is shared across all agents due to the latest synchronization at episode $k_s$. $U_{m,h}^{\text{loc}}(k)$ is the unique data collected by agent $m$ since episode $k_s$. Then $U_{m,h}(k)$ is the total dataset available for agent $m$ at the current time. Let $\cK(k)=|U_{m,h}(k)|$ be the total number of data points. For the simplicity of notation, we also re-order the data points in $U_{m,h}(k)$, and rename the tuple $(s_{m,h}^{\tau}, a_{m,h}^{\tau}, s_{m,h+1}^{\tau})$ as $(s^l, a^l, {s^\prime}^l)$ such that we have
$U_{m,h}(k) = \bigcup_{l=1}^{\mathcal{K}(k)} (s^l, a^l, {s^\prime}^l)$. In fact, this can be done by the following one-to-one mapping
\begin{align} \label{def:one-to-one_mapping}
    l_{m, k}(n,\tau)= 
    \begin{dcases}
    (\tau-1)M+n & \tau \leq k_s, \\ 
    (M-1)k_s + \tau & k_s < \tau \leq k-1 .
    \end{dcases}
\end{align}
Therefore, we use indices $(s,a,s')\in U_{m,h}(k)$ and $l\in[\cK(k)]$ interchangeably for the summation over set $U_{m,h}(k)$.

\paragraph{Algorithm interpretation}  \label{para:algorithm_interpretation}

At a high level, each episode $k$ in \Cref{algo:general_framework} consists of two stages. The first stage (Lines \ref{line:first_stage_begin}-\ref{line:first_stage_end}) is parallelly executed by all agents and the second stage (Lines \ref{line:second_stage_begin}-\ref{line:second_stage_end}) involves the communication among agents and the server. 

In the first stage (Lines \ref{line:first_stage_begin}-\ref{line:first_stage_end}) of \Cref{algo:general_framework}, each agent $m$ operates in two parts.
The first part (Line \ref{line:randomized_exploration}) updates estimated $Q$ functions $\{Q_{m,h}^k\}_{h=1}^H$ through LSVI with a randomized exploration strategy (\Cref{algo:exploration_part_PHE_general_function_class} or \Cref{algo:exploration_part_LMC_general_function_class}, which will be introduced in \Cref{sec:ran_exp_stra}). In particular, given the estimated value functions $V^k_{m,h+1}(\cdot)=\max_{a\in A}Q_{m,h}^{k}(\cdot,a)$ at step $h+1$, we perform one step robust backward Bellman update to obtain $V^k_{m,h}(\cdot)$ at step $h$. And we initialize $V_{m, H+1}^k(\cdot)$ to be $0$ (Line \ref{line:initialize_V_function}). In the second part (Lines \ref{line:second_part_begin}-\ref{line:second_part_end}),  after obtaining the estimated $Q$ functions, in each step $h$ we execute the greedy policy with respect to $Q_{m,h}^k$ and collect new data points which are added to the local dataset $U_{m,h}^{\text{loc}}(k)$ (Lines \ref{line:greedy_policy}-\ref{algline:local_data_update}). Then we verify the synchronization condition (Lines \ref{line:verify_synchronization_begin}-\ref{line:verify_synchronization_end}). In this paper, we mainly use three types of synchronization rules. (1) We can synchronize every $c$ episode where $c$ is a user-defined constant, which is easy to implement in practice. (2) 
We can also synchronize at the episode of $b^1, b^2, ..., b^n$, with $b$ representing the base of the exponential function. This is guided by the intuition that agents require more transitions urgently at the early learning stages. %
(3) Additionally, if we have a feature mapping $\bphi(s,a): \mathcal{S} \times \mathcal{A} \rightarrow \mathbb{R}^d$, based on \eqref{equ:transition_set_general}, we define the following empirical covariance matrices.
\begin{align} 
    {^{\text{ser}}\bLambda}_h^k &= \textstyle \sum_{(s^l,a^l,{s^\prime}^l) \in U_h^{\text{ser}}(k)} \bphi\big(s^l,a^l\big)\bphi\big(s^l,a^l\big)^\top, \notag\\
    ^{\text{loc}}\bLambda_{m,h}^k &= \textstyle\sum_{(s^l,a^l,{s^\prime}^l) \in U_{m,h}^{\text{loc}}(k)} \bphi\big(s^l,a^l\big)\bphi\big(s^l,a^l\big)^\top,  \notag\\
    \bLambda_{m,h}^k &=\textstyle \ {^{\text{ser}}\bLambda}_h^k +\ ^{\text{loc}}\bLambda_{m,h}^k + \lambda \Ib. \notag
\end{align}
We synchronize as long as the following condition is met:
\begin{align}  \label{equ:synchronize_condition}
    \log \frac{\operatorname{det}\big({^{\text{ser}}\bLambda}_h^k+{^{\text{loc}}\bLambda}_{m, h}^k+\lambda \Ib\big)}{\operatorname{det}\big({^{\text{ser}}\bLambda}_h^k+\lambda \Ib\big)} \geq \frac{\gamma}{(k-k_s)},
\end{align}
where $\gamma$ is a communication control factor.
In our experiments, we try all three rules and compare their performance, which is discussed in detail in \Cref{sec:appendix_nchain}. 

The second stage (Lines \ref{line:second_stage_begin}-\ref{line:second_stage_end}) is executed only when the synchronization condition is satisfied. First, all the agents upload their local transition set $U_{m,h}^{\text{loc}}(k)$, i.e., the newly collected local data after the last synchronization, to the server. Then, the server gathers all information together in $U_h^{\text{ser}}(k)$ and sends it back to each agent. Finally, each agent resets the local transition set $U_{m,h}^{\text{loc}}(k) \leftarrow \emptyset$. Now agent $m$ can access the dataset $U_{m,h}(k)=\ U_h^{\text{ser}}(k) \bigcup U_{m,h}^{\text{loc}}(k)$, which contains the historical data of all agents up to last synchronization and its local dataset. 

\subsection{Randomized Exploration Strategies} \label{sec:ran_exp_stra}

When we update the model parameter and estimate $Q$ functions in \Cref{algo:general_framework} (Line \ref{line:randomized_exploration}), we use exploration strategies to avoid suboptimal policies. Previous work adopted Upper Confidence Bound (UCB) exploration in the linear function class \citep{dubey2021provably,min2023cooperative} to estimate the $Q$ function $\{Q_{m,h}^k\}_{h=1}^H$. Although UCB-based methods come with strong theoretical guarantees, they often perform poorly in practice \citep{chapelle2011empirical,osband2013more, osband2017posterior}. Moreover, UCB requires precise computation of the confidence set, which is usually hard to be implemented beyond the linear structure. In contrast, randomized exploration strategies offer more robust performance, flexibility in design, ease of implementation, and do not require a linear structure. %

We approximate the $Q$ functions with the following function class
$\mathcal{F}=\{f_{\wb}: \mathcal{S} \times \mathcal{A} \rightarrow \mathbb{R}|f_{\wb}(s,a)=f(\wb;\bphi(s,a))\}$, where $\wb \in \mathbb{R}^d$ is the parameter and $\bphi\in\RR^d$ is a feature mapping associated with state-action pairs. Now we define the loss function for estimating the $Q$ functions.
\begin{align} \label{loss_function_general}
    L_{m, h}^k(\wb)=  \textstyle \sum_{l=1}^{\mathcal{K}(k)} L\big(r_h^l + V_{m, h+1}^k({s^\prime}^l), f\big(\wb;\bphi^l\big) \big) + \lambda \|\wb\|^2,
\end{align}
where $r_h^l=r_h\big(s^l,a^l\big)$, $\bphi^l=\bphi\big(s^l,a^l\big)$, and $L$ is a user-specified loss function.

\paragraph{Perturbed-History Exploration}
The first strategy we use in \Cref{algo:general_framework} is called the perturbed-history exploration \citep{kveton2019perturbedhistory,pmlr-v108-kveton20a,ishfaq2021randomized}, displayed in \Cref{algo:exploration_part_PHE_general_function_class}. We refer to the resulting algorithm as \algnamePHE. 
In particular, we optimize the following randomized loss function, where we add random Gaussian noises to the rewards and regularizer in \eqref{loss_function_general}.
\begin{align} \label{equ:loss_function_phe}
    \widetilde{L}_{m,h}^{k,n}(\wb) = \textstyle \sum_{l=1}^{\mathcal{K}(k)} L\big(\big(r_h^l+\textcolor{red}{\epsilon_h^{k, l, n}}\big) + V_{m, h+1}^k({s^\prime}^l), f\big(\wb;\bphi^l\big) \big)+ \lambda \|\wb+\textcolor{red}{\bxi_h^{k,n}}\|^2,
\end{align}
where $\epsilon_{h}^{k,l,n} \overset{\text{i.i.d}}{\sim} \mathcal{N}(0, \sigma^2)$, $\bxi_{h}^{k,n} \sim \mathcal{N}(\zero, \sigma^2\Ib)$, and $n\in[N]$.
Then we obtain the following perturbed estimated parameter
\begin{align}\label{eq:def_PHE_estimator_general}
    \textstyle \widetilde{\wb}_{m,h}^{k,n} = \argmin_{\wb \in \mathbb{R}^d} \widetilde{L}_{m,h}^{k,n}(\wb).
\end{align}
Note that we repeat the above steps for $n=1,\ldots, N$ to obtain independent copies of parameters, which is referred to as the multi-sampling process \citep{ishfaq2021randomized,ishfaq2024provable}. Then we obtain the estimated $Q$ function $Q_{m,h}^k$ based on Line 
\ref{line:truncate_PHE} 
in \Cref{algo:exploration_part_PHE_general_function_class}. 
Finally, by maximizing $Q_{m,h}^k$ over action space $\mathcal{A}$, we obtain the estimated value function $V_{m,h}^k$.
\begin{algorithm}[ht]
\caption{Perturbed-History Exploration \label{algo:exploration_part_PHE_general_function_class}
}
    \begin{algorithmic}[1]
        \STATE Input: multi-sampling number $N \in \mathbb{N}^+$, function class $\mathcal{F}=\{f_{\wb}: \mathcal{S} \times \mathcal{A} \rightarrow \mathbb{R}|f_{\wb}(s,a)=f(\wb;\bphi(s,a))\}$.
        
        \FOR{step $h =H,...,1$}

            \FOR{$n =1,...,N$}
                \STATE Sample $\{\epsilon_{h}^{k, l, n}\}_{l \in[\mathcal{K}(k)]} \overset{\text{i.i.d}}{\sim} \mathcal{N}(0, \sigma^2)$ and $\bxi_h^{k,n} \sim \mathcal{N}(\zero, \sigma^2\Ib)$ independently.

                \STATE Solve $\widetilde{\wb}_{m,h}^{k,n}$ according to \eqref{eq:def_PHE_estimator_general}.

            \ENDFOR
                    
            \STATE $Q_{m,h}^k \leftarrow \min \big\{\max_{n \in[N]} f\big(\widetilde{\wb}_{m,h}^{k,n}; \bphi\big), H-h+1\big\}^{+}$. \label{line:truncate_PHE}
                        
            \STATE $V_{m,h}^k(\cdot) \leftarrow \max _{a \in \mathcal{A}} Q_{m,h}^k(\cdot, a)$. 
    \ENDFOR          
    \STATE Output: $\{Q_{m,h}^k(\cdot,\cdot), V_{m,h}^k(\cdot)\}_{h=1}^H$.   
    \end{algorithmic}
\end{algorithm}

\paragraph{Langevin Monte Carlo Exploration}
Next we introduce the Langevin Monte Carlo exploration strategy \citep{xu2022langevin,ishfaq2024provable} in \Cref{algo:exploration_part_LMC_general_function_class}, which stems from the Langevin dynamics \citep{roberts1996exponential,bakry2014analysis,dalalyan2017theoretical,xu2018global,zou2021faster}. Combining it with \Cref{algo:general_framework} leads to our second proposed algorithm, \algnameLMC. 
Specifically, we update the model parameter iteratively. For iterate $j=1,\ldots,J_k$, the update is given by 
\begin{align}\label{eq:LMC_update}
    \wb^{k,j,n}_{m,h} = \wb^{k,j-1,n}_{m,h} -\eta_{m,k}\nabla L_{m, h}^k\big(\wb^{k,j-1,n}_{m,h}\big) + \sqrt{2\eta_{m,k}\beta_{m,k}^{-1}}\bepsilon_{m,h}^{k,j,n},
\end{align}
where $L_{m,h}^k$ is defined in \eqref{loss_function_general}, $\bepsilon_{m,h}^{k,j,n}\in\RR^d$ is a standard Gaussian noise, $\eta_{m,k}$ is the learning rate, and $\beta_{m,k}$ is the inverse temperature parameter. We similarly use the multi-sampling trick to obtain $N$ independent estimators and estimate $Q$ function $Q_{m,h}^k$ by truncation based on Line \ref{line:truncate_lmc} 
in \Cref{algo:exploration_part_LMC_general_function_class}.

\begin{algorithm}[ht]
\caption{Langevin Monte Carlo Exploration \label{algo:exploration_part_LMC_general_function_class}
}
    \begin{algorithmic}[1]
        \STATE Input: multi-sampling number $N \in \mathbb{N}^+$, function class $\mathcal{F}=\{f_{\wb}: \mathcal{S} \times \mathcal{A} \rightarrow \mathbb{R}|f_{\wb}(s,a)=f(\wb;\bphi(s,a))\}$, step sizes $\{\eta_{m,k}\}_{m \in \mathcal{M}, k \in [K]}$, inverse temperature parameters $\{\beta_{m,k}\}_{m \in \mathcal{M}, k \in [K]}$.

        \FOR{step $h =H,...,1$}
        
            \FOR{$n =1,...,N$}  \label{line:multi_sampling_LMC}
            \STATE $\wb^{k,0,n}_{m,h} = \wb^{k-1,J_{k-1},n}_{m,h}$.

                \FOR{$j =1,...,J_k$} \label{line:LMC_update_begin}
                    \STATE Sample $\bepsilon_{m,h}^{k,j,n} \overset{\text{i.i.d}}{\sim}  \mathcal{N}(\zero, \Ib)$.
                               
                    \STATE Update $\wb^{k,j,n}_{m,h}$ by \eqref{eq:LMC_update}. %
                \ENDFOR  \label{line:LMC_update_end}

            \ENDFOR \label{line:multi_sampling_LMC_end}
                        
            \STATE $Q_{m,h}^k \leftarrow \min \big\{\max_{n \in[N]} f\big(\wb^{k, J_k, n}_{m, h};\bphi\big), H-h+1\big\}^{+}$.\label{line:truncate_lmc} 
                    
            \STATE $V_{m,h}^{k}(\cdot) \leftarrow \max_{a\in A}Q_{m,h}^{k}(\cdot,a)$.
        \ENDFOR        
        \STATE Output: $\{Q_{m,h}^k(\cdot,\cdot), V_{m,h}^k(\cdot)\}_{h=1}^H$. 
    \end{algorithmic}
\end{algorithm}

\section{Theoretical Analysis}\label{sec:theory_homo_setting}

\subsection{Homogeneous Parallel Linear MDPs}

We provide theoretical analyses of our algorithms in the linear structure under the assumption of linear function approximation and linear MDP setting. We first present the definition of linear MDPs.

\begin{definition}[Linear MDP \citep{jin2020provably}] \label{def:linear_mdp}
An MDP($\mathcal{S}, \mathcal{A}, H, \mathbb{P}, r)$ is a linear MDP with feature map $\bphi: \mathcal{S} \times \mathcal{A} \rightarrow \mathbb{R}^d$, if for any $h \in [H]$, there exist $d$ unknown measures $\bmu_h = (\mu_h^1, ..., \mu_h^d)$ over $\mathcal{S}$ and an unknown vector $\btheta_h \in \mathbb{R}^d$ such that for any $(s, a) \in \mathcal{S} \times \mathcal{A}$,
\begin{align*}
\mathbb{P}_h(\cdot|s,a) = \bigl \langle  \bphi(s,a), \bmu_h(\cdot) \bigr \rangle ,\quad r_h(s,a) = \bigl \langle  \bphi(s,a), \btheta_h \bigr \rangle.
\end{align*}
Without loss of generality, we assume that for all $(s,a) \in \mathcal{S} \times \mathcal{A}$, $\|\bphi(s,a)\| \leq 1$ and $ \max \{\|\bmu_h(\mathcal{S})\|, \|\btheta_h\|\} \leq \sqrt{d}.$ 
\end{definition}

Throughout the analyses in this section, we assume the homogeneous parallel MDPs setting where all agents share the same linear MDP defined in \Cref{def:linear_mdp}. We also provide the results when the MDPs across agents are approximately linear and heterogeneous in \Cref{sec:misspecified_setting}. Under the linear MDP assumption, it is known that the $Q$-function admits a linear form \citep[Proposition 2.3]{jin2020provably}. Consequently, we choose the loss function $L$ in \eqref{loss_function_general} to be the $l_2$ loss and approximate the $Q$ function in the linear function class $f(\wb;\bphi^l)= \wb^\top \bphi^l$. 

Now we first present the regret bound for \algnamePHE.

\begin{theorem}%
\label{thm:ts_homo_regret}
Under \Cref{def:linear_mdp}, choose $L$ to be $l_2$ loss and linear function class $f(\wb;\bphi^l)= \wb^\top \bphi^l$ in \eqref{loss_function_general}. In \algnamePHE\ (\Cref{algo:general_framework}+\Cref{algo:exploration_part_PHE_general_function_class}), let $N = \widetilde{C}\log(\delta)/\log(c_0)$ where $\widetilde{C}=\widetilde{\mathcal{O}}(d)$ and $c_0 = \Phi(1)$, $\Phi(\cdot)$ is the cumulative distribution function (CDF) of the standard normal distribution. Let $\lambda = 1$ and $0 < \delta <1$. Under the determinant synchronization condition \eqref{equ:synchronize_condition}, we obtain the following cumulative regret
\begin{align*} 
    \operatorname{Regret}(K) = \widetilde{\mathcal{O}}\big(d^{\frac{3}{2}}H^2\sqrt{M}\big(\sqrt{dM\gamma}+\sqrt{K}\big)\big),
\end{align*}
with probability at least $1-\delta$.
\end{theorem}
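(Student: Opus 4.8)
The plan is to follow the now-standard template for analyzing randomized LSVI-type algorithms (as in \citet{ishfaq2021randomized,ishfaq2024provable}), but carefully tracking the multi-agent cooperative structure through the synchronization mechanism. First I would set up the concentration machinery: fix a step $h$ and episode $k$, and show that the perturbed least-squares estimator $\widetilde{\wb}_{m,h}^{k,n}$ is, with high probability, close to a "reference" estimator obtained from the true Bellman backup applied to $V_{m,h+1}^k$. Because the loss in \eqref{equ:loss_function_phe} is $\ell_2$ with a Gaussian-perturbed target and Gaussian-perturbed regularizer, $\widetilde{\wb}_{m,h}^{k,n}$ has a closed form $(\bLambda_{m,h}^k)^{-1}(\sum_l \bphi^l(r_h^l+\epsilon_h^{k,l,n}+V_{m,h+1}^k({s'}^l)) - \lambda\bxi_h^{k,n})$, so conditioned on the data it is Gaussian with mean the unperturbed ridge estimator and covariance $\sigma^2(\bLambda_{m,h}^k)^{-1}$. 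Standard self-normalized concentration (for the noise in the transition samples) plus a covering argument over the class of value functions $V_{m,h+1}^k$ gives that the unperturbed estimator lies in an ellipsoid of radius $\widetilde{\mathcal{O}}(dH\sqrt{\iota})$ around the true Bellman parameter, measured in the $\bLambda_{m,h}^k$-norm.

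Second, I would establish the \emph{optimism} property, which is the crux of any TS-type argument: with the multi-sampling number $N = \widetilde{\Theta}(d)$ chosen so that $c_0^N \le \delta/(\text{poly})$, the maximum over the $N$ independent Gaussian samples $\max_n f(\widetilde{\wb}_{m,h}^{k,n};\bphi(s,a))$ exceeds the true optimal $Q$-value $Q_{m,h}^*(s,a)$ with constant probability per $(s,a)$, hence with high probability after taking the max; this is exactly the role of the anti-concentration constant $c_0=\Phi(1)$ in the theorem statement. Combining optimism with the concentration bound yields the usual recursive bound $V_{m,1}^* (s_{m,1}^k) - V_{m,1}^{\pi_m^k}(s_{m,1}^k) \le \sum_h (\text{martingale terms}) + \sum_h \widetilde{\mathcal{O}}(dH)\,\|\bphi(s_{m,h}^k,a_{m,h}^k)\|_{(\bLambda_{m,h}^k)^{-1}}$, up to an extra $\sqrt{d}$ from the max-over-$N$ inflation of the confidence radius (this is where the $d^{3/2}$ rather than $d$ comes from).

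Third, I would handle the cooperative/communication part, which is the genuinely new ingredient relative to the single-agent analyses. The per-agent bonus term involves $\|\bphi\|_{(\bLambda_{m,h}^k)^{-1}}$ where $\bLambda_{m,h}^k$ uses the possibly-stale shared data plus local data. The determinant synchronization condition \eqref{equ:synchronize_condition} guarantees that between synchronizations, $\det(\bLambda_{m,h}^k)/\det({}^{\text{ser}}\bLambda_h^k+\lambda\Ib)$ stays bounded, so one can compare $\|\bphi\|_{(\bLambda_{m,h}^k)^{-1}}$ against the norm induced by the "full" covariance matrix that would result from synchronizing every episode, losing only a constant factor; then a standard elliptical-potential (potential function) argument over all $MK$ total samples gives $\sum_{m,k,h}\|\bphi(s_{m,h}^k,a_{m,h}^k)\|_{(\bLambda_{m,h}^k)^{-1}} = \widetilde{\mathcal{O}}(H\sqrt{dMK})$ plus a term $\widetilde{\mathcal{O}}(H d M\sqrt{\gamma})$ accounting for the regret incurred during "unsynchronized" rounds where an agent acts on stale data — summing these with the $\widetilde{\mathcal{O}}(dH)$ prefactor and the martingale/Azuma contribution of $\widetilde{\mathcal{O}}(H\sqrt{MKH})$ (dominated) produces the claimed $\widetilde{\mathcal{O}}(d^{3/2}H^2\sqrt{M}(\sqrt{dM\gamma}+\sqrt{K}))$.

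The main obstacle I anticipate is the interplay between the staleness of $\bLambda_{m,h}^k$ and the optimism/concentration step: the covering-number argument must be carried out over the value-function class that arises from \emph{all agents'} estimators, and the confidence ellipsoid is defined with respect to a data-dependent, agent-specific, time-varying matrix whose "missing" data (other agents' transitions since the last sync) must be controlled by \eqref{equ:synchronize_condition}. Making sure the high-probability events hold \emph{uniformly} over all $m\in\mathcal{M}$, $k\in[K]$, $h\in[H]$ and all the intermediate value functions — while the per-agent matrices evolve asynchronously — is the delicate bookkeeping that the formal proof will need to get right; everything else is an adaptation of the single-agent PHE analysis together with the communication-complexity accounting from \citet{min2023cooperative}.
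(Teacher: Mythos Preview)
Your plan is correct in outline and assembles the same ingredients as the paper (closed-form Gaussian perturbation, self-normalized concentration with a value-function cover, anti-concentration for optimism via the $N$-fold max, and elliptical-potential accounting under the determinant synchronization rule). Two points of departure are worth flagging.

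First, the paper does not establish $Q^k_{m,h}\ge Q^*_{m,h}$ as you propose; instead it uses the \citet{cai2020provably} decomposition (Lemma~\ref{lem:regret_decomposition_PHE}), which writes the regret as a nonpositive policy term, a martingale term, and $\sum_{m,k,h}(\mathbb{E}_{\pi^*}[l^k_{m,h}]-l^k_{m,h})$ with $l^k_{m,h}=r_h+\mathbb{P}_hV^k_{m,h+1}-Q^k_{m,h}$. Optimism is then the one-step statement $l^k_{m,h}(s,a)\le A_\delta\varepsilon$ (Lemma~\ref{lem:optimism_PHE}) and the error bound is $-l^k_{m,h}\lesssim Hd\,\|\bphi\|_{(\bLambda^k_{m,h})^{-1}}$ (Lemma~\ref{lem:error_bound_PHE}); no backward induction over $h$ is needed. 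Your route would also work, since $l^k_{m,h}\le 0$ for all $(s,a)$ yields $V^k_{m,h}\ge V^*_h$ by induction, but you must secure uniform-in-$(s,a)$ optimism \emph{before} that induction can start.

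Second---and this is the only place your sketch is genuinely incomplete---you write that optimism holds ``with constant probability per $(s,a)$, hence with high probability after taking the max,'' but the $\max$ over $n\in[N]$ only amplifies the probability at a \emph{fixed} $(s,a)$. To obtain it uniformly over the continuum, the paper covers $\{\bphi:\|\bphi\|_{(\bLambda^k_{m,h})^{-1}}\le 1\}$ at scale $\varepsilon$, union-bounds over the $(3/\varepsilon)^d$ cover points (this is precisely why $N=\widetilde\Theta(d)$), and controls the discretization slack via $\|\Delta\wb^k_{m,h}\|_{\bLambda^k_{m,h}}$; the paper explicitly notes (Remark after Lemma~\ref{lem:optimism_lmc}) that this step was glossed over in prior single-agent analyses. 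The value-function covering you invoke in step one (for the self-normalized bound) is a different covering and does not substitute for it. Your treatment of staleness via \eqref{equ:synchronize_condition} and the resulting $\widetilde{\mathcal{O}}(HdM\sqrt{\gamma})$ surcharge matches Lemma~\ref{lem:coor_sum_phi_bound_PHE}.
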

\begin{remark}\label{rmk:theory_coopTSphe}
When we choose $\gamma=\mathcal{O}(K/dM)$ in the synchronization condition \eqref{equ:synchronize_condition}, the cumulative regret of \algnamePHE\ becomes $\widetilde{\mathcal{O}}(d^{3/2}H^2\sqrt{MK})$, which matches the result of UCB exploration \citep{dubey2021provably}. When $M=1$, the regret becomes $\widetilde{\mathcal{O}}(d^{3/2}H^2\sqrt{K})$, which matches the existing best randomized single-agent result \citep{ishfaq2021randomized,ishfaq2024provable}. Note that if there is no communication at all and agents act independently, with the same number of learning rounds (or samples), the cumulative regret becomes $\widetilde{\mathcal{O}}(M\cdot d^{3/2}H^2 \sqrt{K})$. By incorporating communication, our regret bound in \Cref{thm:ts_homo_regret} is lower than that of the independent setting by a factor $\sqrt{M}$. A similar strategy called rare-switching update with a determinant synchronization condition has also been adopted in parallel bandit problems~\citep{ruan2021bandit, chan2021bandit}.
\end{remark} 

Similarly, we have the following result for \algnameLMC.
\begin{theorem}%
\label{thm:lmc_homo_regret}
Under \Cref{def:linear_mdp}, choose $L$ to be $l_2$ loss and linear function class $f(\wb;\bphi^l)= \wb^\top \bphi^l$ in \eqref{loss_function_general}. In \algnameLMC\ (\Cref{algo:general_framework}+\Cref{algo:exploration_part_LMC_general_function_class}), let $N = \bar{C}\log(\delta)/\log(c_0^\prime)$ where $c_0^\prime= 1 - 1/2 \sqrt{2 e \pi}$ and $\bar{C}=\widetilde{\mathcal{O}}(d)$. Let  $1/\sqrt{\beta_{m,k}} = \widetilde{\mathcal{O}}\big(H\sqrt{d}\big)$  for all $m \in \mathcal{M}$, $\lambda=1$, and $0 <\delta <1$. For any episode $k \in [K]$ and agent $m \in \mathcal{M}$, let the learning rate $\eta_{m,k} = 1/\big(4\lambda_{\max}\big(\bLambda_{m,h}^k\big)\big)$, the update number $J_k = 2 \kappa_k \log(4HKMd)$ where $\kappa_k = \lambda_{\max}\big(\bLambda_{m,h}^k\big)/\lambda_{\min}\big(\bLambda_{m,h}^k\big)$ is the condition number of $\bLambda_{m,h}^k$. Under the determinant synchronization condition \eqref{equ:synchronize_condition}, we have
\begin{align*}
    \operatorname{Regret}(K) = \widetilde{\mathcal{O}}\big(d^{\frac{3}{2}}H^2\sqrt{M}\big(\sqrt{dM\gamma}+\sqrt{K}\big)\big),
\end{align*}
with probability at least $1- \delta$.
\end{theorem}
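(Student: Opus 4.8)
The plan is to mirror the regret decomposition used for \algnamePHE\ in Theorem~\ref{thm:ts_homo_regret}, and reduce the analysis of the Langevin iterates to the same structural lemmas, replacing the one-shot perturbed least-squares estimator by the output of $J_k$ steps of Langevin dynamics. First I would recall that for the $l_2$ loss and linear function class, the loss $L_{m,h}^k(\wb)$ in \eqref{loss_function_general} is a quadratic in $\wb$ with Hessian $2\bLambda_{m,h}^k$, so its minimizer is exactly the regularized least-squares solution $\widehat{\wb}_{m,h}^k = (\bLambda_{m,h}^k)^{-1}\sum_l \bphi^l\big(r_h^l + V_{m,h+1}^k({s^\prime}^l)\big)$. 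Because the target is quadratic and strongly convex, the distribution reached by running LMC \eqref{eq:LMC_update} with step size $\eta_{m,k}=1/(4\lambda_{\max}(\bLambda_{m,h}^k))$ converges geometrically; after $J_k = 2\kappa_k\log(4HKMd)$ steps, the bias of $\wb_{m,h}^{k,J_k,n}$ relative to the stationary Gaussian $\mathcal{N}\big(\widehat{\wb}_{m,h}^k, \beta_{m,k}^{-1}(\bLambda_{m,h}^k)^{-1}\big)$ is at most $\mathcal{O}(1/(HKMd))$ in the appropriate weighted norm. This is the step where I would invoke the contraction analysis of LMC for Gaussian targets (as in \citet{ishfaq2024provable}); the role of the condition number $\kappa_k$ and the logarithmic $J_k$ is precisely to drive this bias below the statistical scale.

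Next I would establish the two key properties that the \algnamePHE\ proof relied on and show they still hold here: (i) an \emph{optimism}-type anti-concentration statement, namely that with the multi-sampling parameter $N = \bar C \log(\delta)/\log(c_0')$ and $c_0' = 1 - 1/(2\sqrt{2e\pi})$, the estimated $Q_{m,h}^k$ is, with the required probability, an overestimate of $Q_{m,h}^*$ up to lower-order terms (here $c_0'$ is exactly the probability that a scalar Gaussian exceeds one standard deviation after accounting for the LMC bias, which is why it differs from $\Phi(1)$ in the PHE case); and (ii) a \emph{concentration} bound $\|\wb_{m,h}^{k,J_k,n} - \widehat{\wb}_{m,h}^k\|_{\bLambda_{m,h}^k} = \widetilde{\mathcal{O}}(\sqrt{d}/\sqrt{\beta_{m,k}}) = \widetilde{\mathcal{O}}(H d)$ using $1/\sqrt{\beta_{m,k}} = \widetilde{\mathcal{O}}(H\sqrt{d})$, plus the standard self-normalized concentration bound on $\|\widehat{\wb}_{m,h}^k - \wb_h^*\|_{\bLambda_{m,h}^k} = \widetilde{\mathcal{O}}(dH)$ for linear MDPs. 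Combining these gives a per-step value-function error of order $\widetilde{\mathcal{O}}(dH)\cdot \|\bphi(s,a)\|_{(\bLambda_{m,h}^k)^{-1}}$, which is the same bonus magnitude that appears in the \algnamePHE\ analysis.

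From here the argument is identical in form to Theorem~\ref{thm:ts_homo_regret}: decompose the regret via the extended value difference / recursive Bellman error, control the martingale terms by Azuma–Hoeffding, and sum the elliptical potentials $\sum_{m,k,h} \min\{1, \|\bphi(s_{m,h}^k,a_{m,h}^k)\|_{(\bLambda_{m,h}^k)^{-1}}^2\}$. The determinant synchronization condition \eqref{equ:synchronize_condition} is what controls the gap between the covariance $\bLambda_{m,h}^k$ actually used by agent $m$ and the fully-synchronized covariance: the condition caps $\log\det$ ratios by $\gamma/(k-k_s)$, which yields the $\sqrt{dM\gamma}$ term (the price of stale data between synchronizations) on top of the $\sqrt{K}$ term from the standard potential argument, and a total communication count of $\widetilde{\mathcal{O}}((d + K/\gamma)MH)$. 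Putting the pieces together with the extra $\sqrt{M}$ from summing over agents and the $dH$ bonus scaling gives $\operatorname{Regret}(K) = \widetilde{\mathcal{O}}\big(d^{3/2}H^2\sqrt{M}(\sqrt{dM\gamma}+\sqrt{K})\big)$.

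I expect the main obstacle to be step two, specifically making the LMC-bias control interact cleanly with the multi-sampling optimism argument: unlike PHE, where each sample is an exact Gaussian draw and anti-concentration is a direct Gaussian tail bound, here each LMC output is only approximately Gaussian, so I must carry the $\mathcal{O}(1/(HKMd))$ total-variation (or $W_2$) bias through both the $\max_{n\in[N]}$ over the multi-sampling copies and the backward induction over $h=H,\dots,1$, ensuring the accumulated bias across all $H$ steps, $K$ episodes, $M$ agents and $N$ samples stays lower-order. This is also where the precise choices of $J_k$, $\eta_{m,k}$ and $\beta_{m,k}$ must be shown to be simultaneously consistent, and it is the reason the threshold constant $c_0'$ takes the particular value $1 - 1/(2\sqrt{2e\pi})$ rather than $\Phi(1)$.
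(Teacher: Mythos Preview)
Your high-level skeleton (regret decomposition into the greedy term, the martingale term handled by Azuma--Hoeffding, and the prediction-error term controlled by optimism plus an error bound, then summing elliptical potentials with the synchronization condition producing the $\sqrt{dM\gamma}$ term) matches the paper exactly. Where you diverge is in your treatment of the LMC iterate. You propose to view $\wb_{m,h}^{k,J_k,n}$ as \emph{approximately} distributed as $\mathcal{N}(\widehat\wb_{m,h}^k,\beta_{m,k}^{-1}(\bLambda_{m,h}^k)^{-1})$ with an $\mathcal{O}(1/(HKMd))$ total-variation or $W_2$ bias, then carry that distributional error through the $\max_{n\in[N]}$ and the backward induction. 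The paper instead exploits that, because the loss is quadratic, the LMC iterate is \emph{exactly} Gaussian at every step: its Proposition computes the closed-form mean $\bmu_{m,h}^{k,J_k}$ and covariance $\bSigma_{m,h}^{k,J_k}$, explicitly tracking the cross-episode warm-start $\wb_{m,h}^{k,0}=\wb_{m,h}^{k-1,J_{k-1}}$ as a product of contraction matrices $\Ab_i=\Ib-2\eta_{m,i}\bLambda_{m,h}^i$ over \emph{all} previous episodes. The bias is then controlled pointwise by bounding $|\bphi^\top(\bmu_{m,h}^{k,J_k}-\widehat\wb_{m,h}^k)|$ via a telescoping sum $\sum_{i<k}\prod_{j>i}\|\Ab_j\|^{J_j}\|\widehat\wb^{i}-\widehat\wb^{i+1}\|$ and sandwiching $\bSigma_{m,h}^{k,J_k}$ between constant multiples of $\beta_K^{-1}(\bLambda_{m,h}^k)^{-1}$. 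This buys a cleaner anti-concentration step---$\bphi^\top\wb_{m,h}^{k,J_k,n}$ is a genuine scalar Gaussian, so the lower tail bound applies directly and yields $c_0'=1-1/(2\sqrt{2e\pi})$---and avoids having to couple two nearby Gaussian laws. Your route is viable but you would need (a) an explicit bound on the warm-started initial point (the previous episode's output) to make the single-episode contraction argument go through, and (b) a translation from TV/$W_2$ closeness to an anti-concentration lower bound, which is an extra step the paper's exact-Gaussian route sidesteps. One further point you do not flag: the optimism lemma must hold for \emph{all} $(s,a)\in\mathcal{S}\times\mathcal{A}$, not just the visited ones, which the paper handles by an $\varepsilon$-covering of the feature ball and a union bound over the cover (this is the source of the $\bar C=\widetilde{\mathcal{O}}(d)$ factor in $N$).
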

\begin{remark} \label{rmk:problem_fix_comment}
Note that \algnamePHE\ and \algnameLMC\ have the same order of regret. Hence the discussion in \cref{rmk:theory_coopTSphe} also applies to \algnameLMC. We would also like to highlight that our results are the first rigorous regret bounds for randomized MARL algorithms. 

From the perspective of technical novelty, our analysis of randomized MARL algorithms is different from that of UCB-based algorithms \citep{dubey2021provably} because the model prediction error here contains randomness, causing a more complex probability analysis and an additional approximation error.  We would also like to point out that in proofs for both \algnameLMC\ and \algnamePHE\ %
, we use a new $\varepsilon$-covering technique to prove that the optimism lemma holds for all $(s,a) \in \mathcal{S} \times \mathcal{A}$ instead of just the state-action pairs encountered by the algorithm, which is essential for the regret analysis. %
This was ignored by previous works \citep{cai2020provably} and its follow-up works \citep{zhong2023theoretical, ishfaq2024provable} that use the same regret decomposition technique.  Furthermore, the multi-agent setting and the communications from synchronization in our algorithms also significantly increase the challenges in our analysis compared to randomized exploration in the single-agent setting \citep{ishfaq2021randomized,ishfaq2024provable}.
\end{remark}

Next we present the communication complexity of \Cref{algo:general_framework} with synchronization condition \eqref{equ:synchronize_condition}.
\begin{lemma}%
\label{lem:communication_complexity}
The total number of communication rounds between the agents and the server in \Cref{algo:general_framework} is  bounded by
$\operatorname{CPX}=\widetilde{\mathcal{O}}((d+K/\gamma)MH)$. Moreover, the total number of transferred random bits only has a logarithmic dependence on the number of episodes $K$.
\end{lemma}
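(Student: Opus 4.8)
\textbf{Proof proposal for \Cref{lem:communication_complexity}.}

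The plan is to bound the number of synchronization rounds by a potential/pigeonhole argument on the log-determinant of the per-step covariance matrices, and then to multiply by the per-round cost. Fix a step $h\in[H]$. I would track the quantity $\log\det\big(\bLambda^k_{\mathrm{all},h}\big)$, where $\bLambda^k_{\mathrm{all},h}$ denotes the covariance matrix built from \emph{all} transitions collected by all agents at step $h$ up to episode $k$ (i.e., $\sum_{m,\tau}\bphi\bphi^\top+\lambda\Ib$). Since $\|\bphi(s,a)\|\le 1$ by \Cref{def:linear_mdp} and each episode adds $M$ rank-one terms, after $K$ episodes this log-determinant is at most $d\log\big(\lambda+MK/d\big)=\widetilde{\mathcal{O}}(d)$, while it starts at $d\log\lambda$. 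The first observation is that whenever a synchronization is triggered at episode $k$ by agent $m$ satisfying \eqref{equ:synchronize_condition}, the matrix $^{\text{ser}}\bLambda_h^k$ is strictly ``enlarged'' at the next synchronization, and I would relate the left-hand side of \eqref{equ:synchronize_condition} to the increment in $\log\det\big(^{\text{ser}}\bLambda_h^{\cdot}\big)$ across consecutive synchronizations.

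Concretely, I would split the synchronization rounds into two types, following the standard rare-switching analysis \citep{ruan2021bandit,chan2021bandit,min2023cooperative}. Type-one rounds are those where $k-k_s > \gamma$ (equivalently, a long time has passed since the last synchronization); there can be at most $K/\gamma$ such rounds simply because each consumes more than $\gamma$ episodes out of $K$ total. Type-two rounds are those where $k-k_s\le\gamma$; for such a round, \eqref{equ:synchronize_condition} gives
\begin{align*}
\log\det\big(^{\text{ser}}\bLambda_h^{k'}+\lambda\Ib\big)-\log\det\big(^{\text{ser}}\bLambda_h^{k}+\lambda\Ib\big)\;\ge\;\log\frac{\det\big(^{\text{ser}}\bLambda_h^k+{}^{\text{loc}}\bLambda_{m,h}^k+\lambda\Ib\big)}{\det\big(^{\text{ser}}\bLambda_h^k+\lambda\Ib\big)}\;\ge\;\frac{\gamma}{k-k_s}\;\ge\;1,
\end{align*}
where $k'$ is the episode of the next synchronization, and the first inequality uses that the data of the triggering agent is absorbed into $^{\text{ser}}\bLambda_h^{k'}$ and that $\log\det$ is monotone in the PSD order. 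Hence each type-two round increases $\log\det\big(^{\text{ser}}\bLambda_h^{\cdot}+\lambda\Ib\big)$ by at least $1$, so there are at most $\widetilde{\mathcal{O}}(d)$ of them by the telescoping bound above. Summing, the number of synchronization rounds for a fixed $h$ is $\widetilde{\mathcal{O}}(d+K/\gamma)$; multiplying by $H$ steps (each synchronization round triggers a communication for every $h$, as in Lines \ref{line:SYNCHRONIZE}–onward of \Cref{algo:general_framework}) and by $M$ agents (each round involves $M$ upload/download message pairs) gives $\operatorname{CPX}=\widetilde{\mathcal{O}}\big((d+K/\gamma)MH\big)$. For the random-bits claim, I would note that only the raw transitions are communicated — the Gaussian perturbations $\epsilon_h^{k,l,n},\bxi_h^{k,n}$ (PHE) and $\bepsilon_{m,h}^{k,j,n}$ (LMC) are generated locally and never transmitted — and that transmitting a transition tuple to the precision needed by the analysis costs $\mathrm{polylog}(K)$ bits, while the counts of tuples are already $\mathrm{poly}(d,H,M,K)$, so the total scales as $\mathrm{polylog}(K)$ in the $K$-dependence.

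The main obstacle I anticipate is the bookkeeping needed to make the type-two argument airtight: the condition \eqref{equ:synchronize_condition} is checked by \emph{each} agent with its own $^{\text{loc}}\bLambda_{m,h}^k$, and I must argue that the data responsible for triggering the round genuinely lands in $^{\text{ser}}\bLambda_h$ by the \emph{next} synchronization (so that the log-determinant increments are not double-counted and the telescoping is valid), and that a single synchronization event at episode $k$ resets all agents' local matrices simultaneously. This requires carefully using the one-to-one re-indexing \eqref{def:one-to-one_mapping} and the update rules in Lines \ref{line:second_stage_begin}–\ref{line:second_stage_end} of \Cref{algo:general_framework}. A secondary subtlety is ensuring the $\widetilde{\mathcal{O}}(\cdot)$ in ``$\log\det\le\widetilde{\mathcal{O}}(d)$'' correctly absorbs the $\log(1+MK/(d\lambda))$ factor and that the per-round message count $M$ (rather than $M^2$) is what appears — the $M^2$ in the final rate of \Cref{table:regret_comparison} comes from instantiating $\gamma=\mathcal{O}(K/dM)$, not from the per-round cost, which I would flag explicitly to avoid confusion.
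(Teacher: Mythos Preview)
Your argument for the number of synchronization rounds is essentially the paper's: split rounds by whether $k-k_s$ exceeds a threshold, bound the ``long'' rounds by counting episodes and the ``short'' rounds by a log-determinant potential. The paper leaves the threshold $\alpha$ free and later sets $\alpha=K/d$; your direct choice $\alpha=\gamma$ (so each short round contributes at least $1$ to the potential) is a touch cleaner and gives the same $\widetilde{\mathcal{O}}(d+K/\gamma)$ count before the $MH$ multiplication. The bookkeeping obstacle you flag---making sure the triggering agent's local covariance is absorbed into the server matrix before the next telescoping step---is exactly the point, and the paper handles it by telescoping $\log\det(\bLambda_{m,h}^{\sigma_i})$ across successive synchronization episodes $\sigma_0,\dots,\sigma_n$.

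The genuine gap is in the bits claim. You argue that raw transitions are what is communicated and that each tuple costs $\mathrm{polylog}(K)$ bits; but the number of tuples is itself $\mathrm{poly}(K)$, so your accounting yields $\widetilde{\mathcal{O}}(K)$ bits, not $\mathrm{polylog}(K)$---the sentence ``the counts of tuples are already $\mathrm{poly}(d,H,M,K)$, so the total scales as $\mathrm{polylog}(K)$'' does not follow. The paper's argument is different and relies on the linear structure: since the ridge-regression estimator depends on the data only through $\bLambda_{m,h}^k$ and $\bb_{m,h}^k$, each agent need only transmit the local summaries ${}^{\text{loc}}\bLambda_{m,h}^k\in\RR^{d\times d}$ and ${}^{\text{loc}}\bb_{m,h}^k\in\RR^{d}$ (cf.\ \Cref{sec:instant_linear}), not the raw transition sets $U_{m,h}^{\text{loc}}(k)$. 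Each entry of these summaries is bounded by $\mathrm{poly}(K,H)$, so a round costs $\mathcal{O}(d^2\log K)$ bits; combined with a round count that is polylogarithmic in $K$ once $\gamma=\mathcal{O}(K/(dM))$, the total bit count has only logarithmic $K$-dependence. Your observation that the Gaussian perturbations are generated locally and never transmitted is correct, but it does not by itself deliver the logarithmic dependence---the compression to sufficient statistics is the missing step.
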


\begin{remark} \label{rmk:commu_comp_analysis}
We provide a refined analysis in \Cref{sec:analysis_communication_complexity} to get this improved result based on that of \citep{dubey2021provably}, which studied the same communication procedure as ours. When we choose $\gamma=\mathcal{O}(K/dM)$, the communication complexity reduces to $\widetilde{\mathcal{O}}(dHM^2)$, which only has a logarithmic dependence on the number of episodes $K$. Additionally, we provide a rigorous analysis to show that the algorithm only needs to communicate logarithm number of random bits throughout the learning process.

Note that  \citet{min2023cooperative} studied the asynchronous setting where only one agent is active in each episode, giving out the regret $\widetilde{\mathcal{O}}(d^{3/2}H^2\sqrt{K})$ with the communication complexity $\widetilde{\mathcal{O}}(dHM^2)$. It is interesting to see that our algorithm, though in the synchronous setting, has the same communication complexity as the asynchronous variant. 
This implies that the asynchronous algorithm can only circumvent current communication by delaying it to the future but does not decrease the communication complexity. In fact, the synchronous setting can learn the policy better in our work, which is indicated by comparison of the average regret (the cumulative regret divided by the total number of samples used by the algorithm) in~\Cref{table:regret_comparison}. 
By achieving a matched communication complexity, we find that synchronous and asynchronous settings have their own advantages and cannot replace each other. This phenomenon can help us better understand the properties of these two communication schemes.
\end{remark}

\subsection{Misspecified Setting} \label{sec:misspecified_setting}
In this part, we extend our theoretical analysis to the misspecified setting. In this setting, the transition functions $\mathbb{P}_{m,h}$ and the reward functions $r_{m,h}$ are heterogeneous across different MDPs, which is slightly more complicated than the homogeneous setting. Moreover, instead of assuming the transition and reward are linear, we only require each individual MDP is a $\zeta$-approximate linear MDP \citep{jin2020provably} where both the transition and reward are approximately linear up to an controlled error $\zeta$.

\begin{definition}[Misspecified Parallel MDPs] \label{def:misspecified_linear_mdp}
For any $0 < \zeta \leq 1$, and for any agent $m \in \mathcal{M}$, the corresponding $\operatorname{MDP}(\mathcal{S}, \mathcal{A}, H, \mathbb{P}_m, r_m)$ is a $\zeta$-approximate linear MDP with a feature map $\bphi: \mathcal{S} \times \mathcal{A} \rightarrow \mathbb{R}^d$, for any $h \in[H]$, there exist $d$ unknown (signed) measures $\bmu_{h}=\big(\mu_h^{(1)}, \ldots, \mu_h^{(d)}\big)$ over $\mathcal{S}$ and an unknown vector $\btheta_{h} \in \mathbb{R}^d$ such that for any $(s, a) \in$ $\mathcal{S} \times \mathcal{A}$, we have 
\begin{align*}
    \big\|\mathbb{P}_{m,h}(\cdot \mid s, a)-\big\langle\bphi(s, a), \bmu_{h}(\cdot)\big\rangle\big\|_{\text{TV}} &\leq \zeta, \\
    \big|r_{m,h}(s, a)-\langle\bphi(s, a), \btheta_{h}\rangle\big| &\leq \zeta,
\end{align*}
where $\|\cdot\|_{\text{TV}}$ is the total variation norm, for two distributions $P_1$ and $P_2$, we define it as: $\|P_1-P_2\|_{\text{TV}}=\frac{1}{2} \sum_{\xb \in \bOmega
}|P_1(\xb)-P_2(\xb)|$. Without loss of generality, we assume that $\|\bphi(s, a)\| \leq 1$ for all $(s, a) \in \mathcal{S} \times \mathcal{A}$, and $\max \big\{\|\bmu_{h}(\mathcal{S})\|,\|\btheta_{h}\|\big\} \leq \sqrt{d}$ for all $h \in[H]$ and $m \in \mathcal{M}$.
\end{definition}

\begin{remark}
    Note that our misspecified setting defined in \Cref{def:misspecified_linear_mdp} is a generalized notion of misspecification in \citep{jin2020provably}. Moreover, our misspecified setting is also more general and cover the small heterogeneous setting mentioned in \citep{dubey2021provably}. The triangle inequality can easily be used to derive small heterogeneous setting from our misspecified setting, but not vice versa.
\end{remark}

Next we state our regret bound for \algnamePHE\ in the misspecified setting.
\begin{theorem}[Misspecified Regret Bound for \algnamePHE]  \label{thm:mis_regret_phe}
In \algnamePHE\ (\Cref{algo:general_framework}+\Cref{algo:exploration_part_PHE_general_function_class}), under \Cref{def:misspecified_linear_mdp} and determinant synchronization condition \eqref{equ:synchronize_condition}, with the same initialization with \Cref{thm:ts_homo_regret}, we obtain the following cumulative regret
\begin{align*}
    \operatorname{Regret}(K) = \widetilde{\mathcal{O}}\Big(d^{\frac{3}{2}}H^2\sqrt{M}\big(\sqrt{dM\gamma}+\sqrt{K}\big)+ d H^2 M\sqrt{K} \big(\sqrt{dM\gamma}+\sqrt{K}\big)\zeta \Big),
\end{align*}
with probability at least $1- \delta$.
\end{theorem}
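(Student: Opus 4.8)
The plan is to mirror the regret analysis of \Cref{thm:ts_homo_regret} but carefully track the two sources of misspecification error, namely the $\zeta$-error in the transition and reward, and the heterogeneity across agents. First I would revisit the model prediction error decomposition. For a fixed agent $m$, step $h$, and episode $k$, writing $\wb_{m,h}^{k,n}$ for the perturbed least-squares estimator on the pooled dataset $U_{m,h}(k)$, I would bound $\big|\langle\bphi(s,a),\widetilde\wb_{m,h}^{k,n}\rangle - (\mathcal{T}_h V_{m,h+1}^k)(s,a)\big|$ where $\mathcal{T}_h$ is the true Bellman operator of agent $m$. The key new step is that the linear regression target $r_h^l + V_{m,h+1}^k({s'}^l)$ is now an unbiased estimate of $\langle\bphi^l,\btheta_h\rangle + \langle\bphi^l,\bmu_h V_{m,h+1}^k\rangle$ only up to an additive slack of order $(1+H)\zeta$ per sample (reward misspecification contributes $\zeta$, transition misspecification contributes $\|V_{m,h+1}^k\|_\infty\cdot 2\zeta \le 2H\zeta$ in total-variation). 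Propagating this through the normal equations introduces, in the self-normalized bound, an extra term of the form $\sqrt{\mathcal{K}(k)}\,H\zeta \cdot \|\bphi(s,a)\|_{(\bLambda_{m,h}^k)^{-1}}$ on top of the usual $\widetilde{\mathcal{O}}(dH\|\bphi\|_{(\bLambda_{m,h}^k)^{-1}})$ statistical term; this is the standard misspecification inflation from \citep{jin2020provably}, adapted to the pooled covariance matrix with $\mathcal{K}(k)$ samples.

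Second, I would re-establish the optimism (or near-optimism) lemma: with the multi-sampling number $N$ chosen as in \Cref{thm:ts_homo_regret}, with probability at least $1-\delta$, for all $(s,a)$, all $h$, $k$, $m$, and $n$ we have $\max_n f(\widetilde\wb_{m,h}^{k,n};\bphi(s,a)) \ge (\mathcal{T}_h V_{m,h+1}^k)(s,a) - \widetilde{\mathcal{O}}(\cdot)$, where the error term now carries the additional $\sqrt{\mathcal{K}(k)}H\zeta\|\bphi\|_{(\bLambda_{m,h}^k)^{-1}}$ piece. Here I would reuse the new $\varepsilon$-covering argument highlighted in \Cref{rmk:problem_fix_comment} to get the bound uniformly over $\mathcal{S}\times\mathcal{A}$ rather than only on visited pairs, and the Gaussian anti-concentration inequality (via $c_0=\Phi(1)$) to convert a single perturbed draw being optimistic with constant probability into the max over $N$ draws being optimistic with high probability. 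The truncation in Line~\ref{line:truncate_PHE} controls the upper side. Then the standard recursive regret decomposition (value difference lemma) gives $\operatorname{Regret}(K) \le \sum_m\sum_k\sum_h \big[\text{martingale terms} + 2\,(\text{prediction error at visited }(s_{m,h}^k,a_{m,h}^k))\big]$, where the martingale terms are handled by Azuma–Hoeffding and contribute $\widetilde{\mathcal{O}}(H\sqrt{MKH})$, which is lower order.

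Third, I would sum the prediction-error terms over the trajectory. The statistical part $\sum_{m,k,h}\|\bphi(s_{m,h}^k,a_{m,h}^k)\|_{(\bLambda_{m,h}^k)^{-1}}$ is controlled exactly as in the homogeneous case: the determinant synchronization condition \eqref{equ:synchronize_condition} guarantees that the local covariance never deviates too far from the synchronized one, so an elliptical-potential / Cauchy–Schwarz argument yields $\widetilde{\mathcal{O}}(\sqrt{d}\,(\sqrt{dM\gamma}+\sqrt{MK}))$ per step $h$, giving the first term $\widetilde{\mathcal{O}}(d^{3/2}H^2\sqrt{M}(\sqrt{dM\gamma}+\sqrt{K}))$ of the claimed bound after multiplying by $H$ (from the horizon) and $\sqrt{d}H$ (from the magnitude of the prediction-error constant). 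For the misspecification part, each visited pair contributes $\sqrt{\mathcal{K}(k)}\,H\zeta\,\|\bphi(s_{m,h}^k,a_{m,h}^k)\|_{(\bLambda_{m,h}^k)^{-1}}$; bounding $\sqrt{\mathcal{K}(k)} \le \sqrt{MK}$ uniformly and then applying Cauchy–Schwarz to $\sum_{m,k}\|\bphi\|_{(\bLambda_{m,h}^k)^{-1}} \le \sqrt{MK}\cdot\sqrt{\sum_{m,k}\|\bphi\|_{(\bLambda_{m,h}^k)^{-1}}^2} = \sqrt{MK}\cdot\widetilde{\mathcal{O}}(\sqrt{d(dM\gamma+MK)}/\sqrt{MK})$ — wait, more carefully $\sum_{m,k}\|\bphi\|^2_{(\bLambda_{m,h}^k)^{-1}} = \widetilde{\mathcal{O}}(d + dM\gamma\cdot\text{stuff})$ by the potential lemma, so this piece becomes $H\zeta\sqrt{MK}\cdot\sqrt{MK}\cdot\widetilde{\mathcal{O}}(\sqrt{d}\cdot(\sqrt{dM\gamma}+\sqrt{MK})/\sqrt{MK}) = \widetilde{\mathcal{O}}(dH^2M\sqrt{K}(\sqrt{dM\gamma}+\sqrt{K})\zeta)$ after accounting for the extra $H$ and $\sqrt{d}$ factors — matching the second term in the theorem. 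A separate, smaller $O(\sqrt{MK}H^2\zeta)$ contribution comes directly from the per-step Bellman-operator mismatch in the value-difference recursion (the $\zeta$ in $|r_{m,h}-\langle\bphi,\btheta_h\rangle|$ plus the TV-error in the one-step transition), which is dominated by the stated terms.

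The main obstacle I anticipate is \textbf{propagating the misspecification slack through the randomized (perturbed) estimator while keeping optimism}. In the UCB analysis of \citep{dubey2021provably} one simply widens the bonus by the misspecification amount; here the estimator $\widetilde\wb_{m,h}^{k,n}$ itself has a Gaussian-perturbed mean, so I must show that the perturbation variance $\sigma^2$ — already tuned in \Cref{thm:ts_homo_regret} to dominate the statistical fluctuation — still dominates the *inflated* deviation $\widetilde{\mathcal{O}}(dH + \sqrt{\mathcal{K}(k)}H\zeta)$, and that the anti-concentration step over $N$ samples goes through with the same constant $c_0=\Phi(1)$. This is exactly where the condition $\zeta = \mathcal{O}(\sqrt{d/MK})$ in \Cref{rmk:theory_coopTSphe}-style remarks enters: it is precisely the threshold at which $\sqrt{\mathcal{K}(k)}H\zeta = \widetilde{\mathcal{O}}(\sqrt{MK}\cdot H\cdot\sqrt{d/MK}) = \widetilde{\mathcal{O}}(H\sqrt{d})$, i.e.\ no larger than the statistical term, so that under that regime no re-tuning of $\sigma$ is needed and the homogeneous analysis carries over verbatim; for general $\zeta$ one carries the extra term explicitly as above. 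A secondary technical point is that, because the MDPs are heterogeneous, the "true" linear parameter $(\btheta_h,\bmu_h)$ used as the comparison point is the common one from \Cref{def:misspecified_linear_mdp}, and the per-agent Bellman operator $\mathcal{T}_h^{(m)}$ differs from $\langle\bphi,\btheta_h + \bmu_h V\rangle$ only by $\le(1+H)\zeta$, so all appeals to "the" linear structure must be replaced by this approximate version — a bookkeeping nuisance but not a conceptual barrier.
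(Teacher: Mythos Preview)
Your proposal is essentially the paper's proof: the same regret decomposition via \Cref{lem:regret_decomposition_PHE}, the misspecification slack $3H\zeta\sqrt{MKd}\,\|\bphi\|_{(\bLambda_{m,h}^k)^{-1}}+3H\zeta$ in the Bellman-error bound (exactly \Cref{lem:event_bellman_error_PHE_mis}/\Cref{lem:error_bound_PHE_mis}), and the same elliptical-potential summation through \Cref{lem:coor_sum_phi_bound_PHE} under the determinant synchronization condition.

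The one place you diverge from the paper is the handling of optimism. You frame it as ``either $\zeta=\mathcal{O}(\sqrt{d/MK})$ so $\sigma$ need not change, or for general $\zeta$ carry the inflated deviation through.'' The paper instead keeps $\sigma$ \emph{exactly as in \Cref{thm:ts_homo_regret}} for all $\zeta$: in the proof of \Cref{lem:optimism_PHE_mis} it writes $r_{m,h}+\mathbb{P}_{m,h}V_{m,h+1}^k=\bphi^\top \wb_{m,h}^k+(\Delta_{m,1}+\Delta_{m,2})$ and pulls the additive $\Delta_{m,1}+\Delta_{m,2}\le 3H\zeta$ \emph{outside} the $\max\{0,\cdot\}$ before invoking Gaussian anti-concentration, so that the perturbation only needs to beat the \emph{linear} deviation $\bphi^\top\Delta\wb_{m,h}^k$, and the homogeneous optimism argument (\Cref{proof:lem:optimism_PHE}) is reused verbatim to give $l_{m,h}^k\le A_\delta\varepsilon+3H\zeta$. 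This is not cosmetic: it is exactly why PHE is \emph{not} re-tuned while LMC \emph{is} ($1/\sqrt{\beta_{m,k}}$ in \Cref{thm:mis_regret_lmc} absorbs the $H\sqrt{MKd}\,\zeta$ term), and hence why the PHE bound saves a $\sqrt{d}$ on the $\zeta$-term relative to \Cref{thm:mis_regret_lmc}. Your version would also go through, but you would need to be explicit that inflating $\sigma$ by the $3H\zeta\sqrt{MKd}$ piece reproduces the stated second term without changing the first---the paper's additive extraction is the cleaner route.

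One minor correction: the ``separate, smaller'' additive contribution from the per-step Bellman mismatch is $\mathcal{O}(H^2MK\zeta)$ (summing $6H\zeta$ over $M K H$), not $\mathcal{O}(\sqrt{MK}H^2\zeta)$; it is still absorbed by the displayed second term since $H^2MK\zeta\le dH^2M\sqrt{K}\cdot\sqrt{K}\,\zeta$.
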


\begin{remark}
When we choose $\zeta=\mathcal{O}\big(\sqrt{d/MK}\big)$, the cumulative regret becomes $\widetilde{\mathcal{O}}\big(d^{\frac{3}{2}}H^2\sqrt{M}\big(\sqrt{dM\gamma}+\sqrt{K}\big)\big)$. This matches the result of \Cref{thm:ts_homo_regret} in the linear MDP setting.
\end{remark}

Similarly, we can have the following result for \algnameLMC.
\begin{theorem}[Misspecified Regret Bound for \algnameLMC] \label{thm:mis_regret_lmc}
In \algnameLMC\ (\Cref{algo:general_framework}+\Cref{algo:exploration_part_LMC_general_function_class}), under \Cref{def:misspecified_linear_mdp} and determinant synchronization condition \eqref{equ:synchronize_condition}, with the same initialization with \Cref{thm:lmc_homo_regret} except that $1/\sqrt{\beta_{m,k}}=\widetilde{\mathcal{O}}\big(H \sqrt{d} + H\sqrt{MKd}\zeta \big)$, we obtain the following cumulative regret
\begin{align*}
    \operatorname{Regret}(K) = \widetilde{\mathcal{O}}\Big(d^{\frac{3}{2}}H^2\sqrt{M}\big(\sqrt{dM\gamma}+\sqrt{K}\big)+ d^{\frac{3}{2}} H^2 M\sqrt{K} \big(\sqrt{dM\gamma}+\sqrt{K}\big)\zeta \Big),
\end{align*}
with probability at least $1- \delta$.
\end{theorem}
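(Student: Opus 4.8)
The plan is to mirror the homogeneous analysis of \algnameLMC\ (Theorem~\ref{thm:lmc_homo_regret}) while carefully tracking the extra error terms introduced by the $\zeta$-approximate linearity and the heterogeneity of the MDPs. First I would set up the standard regret decomposition used in the randomized LSVI analysis: write $\operatorname{Regret}(K)=\sum_{m}\sum_{k}[V_{m,1}^*(s_{m,1}^k)-V_{m,1}^{\pi_m^k}(s_{m,1}^k)]$ and expand via the Bellman equations, separating (i) the optimism/anti-concentration term showing $Q_{m,h}^k$ is an over-estimate of $Q_{m,h}^*$ with constant probability per sample (hence with high probability after the multi-sampling with $N=\bar C\log\delta/\log c_0'$ copies), (ii) the martingale difference terms controlled by Azuma--Hoeffding, and (iii) the sum of model-prediction errors $\ell_{m,h}^k(s,a):=(\mathbb{B}_{m,h}V_{m,h+1}^k)(s,a)-Q_{m,h}^k(s,a)$. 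The key modification is that the empirical Bellman backup now targets an MDP whose transition/reward are only linear up to $\zeta$ in TV/sup norm, so the usual self-normalized concentration bound on the weighted sum $\sum_l \bphi^l(\hat V_{m,h+1}^k({s'}^l)-(\mathbb{P}_h V_{m,h+1}^k)(s^l,a^l))$ picks up an additive bias of order $\sqrt{\mathcal{K}(k)}\,H\zeta$ per coordinate, which after multiplying by $\|\bphi(s,a)\|_{(\bLambda_{m,h}^k)^{-1}}$ contributes an $O(\sqrt{d\mathcal{K}(k)}\,H\zeta)$ term to each $\ell_{m,h}^k$ — this is exactly why $1/\sqrt{\beta_{m,k}}$ must be inflated to $\widetilde{\mathcal{O}}(H\sqrt d+H\sqrt{MKd}\,\zeta)$, so that the Langevin injected-noise scale still dominates the (now larger) confidence radius and the anti-concentration argument for optimism goes through unchanged.

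Second, I would handle the LMC-specific pieces exactly as in Theorem~\ref{thm:lmc_homo_regret}: with $\eta_{m,k}=1/(4\lambda_{\max}(\bLambda_{m,h}^k))$ and $J_k=2\kappa_k\log(4HKMd)$ iterations, the Langevin iterate $\wb_{m,h}^{k,J_k,n}$ is within $1/(HKMd)$ of an exact Gaussian draw from $\mathcal{N}(\hat\wb_{m,h}^k,\beta_{m,k}^{-1}(\bLambda_{m,h}^k)^{-1})$ in the appropriate norm, by the standard contraction analysis of the Langevin iteration for the strongly convex $l_2$ loss $L_{m,h}^k$; this error is small enough to be absorbed into the $\widetilde{\mathcal{O}}$. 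The new $\varepsilon$-covering argument flagged in Remark~\ref{rmk:problem_fix_comment} is reused verbatim to upgrade the optimism statement from the visited $(s,a)$ to all of $\mathcal{S}\times\mathcal{A}$, since nothing there depends on $\zeta$. For heterogeneity, I would note that each agent's backup uses only its own data for the $U_{m,h}^{\text{loc}}$ portion but shares $U_h^{\text{ser}}$ across agents; since the shared $\bmu_h,\btheta_h$ are the same "reference" linear model for all agents and each agent deviates from it by at most $\zeta$, the mixing of data across agents in $U_{m,h}(k)$ only adds another $O(\zeta)$-scale bias of the same order already accounted for, so the analysis is not structurally changed.

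Third, I would assemble the bound. The optimism term and the martingale term give the $\widetilde{\mathcal{O}}(d^{3/2}H^2\sqrt M(\sqrt{dM\gamma}+\sqrt K))$ contribution exactly as in the homogeneous case (the $\sqrt{dM\gamma}$ coming from the rare-switching/determinant synchronization overhead, bounded via Lemma~\ref{lem:communication_complexity} together with the elliptic potential argument). The sum of the new misspecification contributions to $\sum_{m,h,k}\ell_{m,h}^k$ is of order $H\cdot\sum_{m,k}\big(H\zeta\cdot\|\bphi\|_{(\bLambda_{m,h}^k)^{-1}}\cdot\sqrt{d\mathcal{K}(k)} + H\zeta\big)$; bounding $\sum_{m,k}\|\bphi_{m,h}^k\|_{(\bLambda_{m,h}^k)^{-1}}$ by $\widetilde{\mathcal{O}}(\sqrt{dMK})$ via the elliptic potential lemma (with the extra $\sqrt\gamma$ factor from stale covariance matrices between synchronizations) and using $\mathcal{K}(k)\le MK$, $\mathcal{K}(k)$-type factors collapse to the stated $d^{3/2}H^2M\sqrt K(\sqrt{dM\gamma}+\sqrt K)\zeta$ term — with the extra $\sqrt d$ relative to \algnamePHE\ tracing back to the inflated inverse-temperature $1/\sqrt{\beta_{m,k}}\propto\sqrt d$ which enters the optimism radius linearly.

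The main obstacle I expect is showing that optimism still holds for all $(s,a)$: the injected Langevin noise must simultaneously (a) be large enough that $\max_n f(\wb_{m,h}^{k,J_k,n};\bphi(s,a))$ exceeds the true $Q^*$ with the right probability after accounting for the enlarged $\zeta$-inflated confidence width, (b) not be so large that the truncation at $H-h+1$ destroys the over-estimate, and (c) survive the union bound over an $\varepsilon$-net of $\mathcal{S}\times\mathcal{A}$ whose cardinality is controlled but whose log enters $\bar C=\widetilde{\mathcal{O}}(d)$. Getting the $\zeta$-dependence of the required $\beta_{m,k}$ tight — so that the misspecification term degrades gracefully and recovers Theorem~\ref{thm:lmc_homo_regret} precisely when $\zeta=\mathcal{O}(\sqrt{1/MK})$ — is the delicate quantitative step; the rest is bookkeeping on top of the homogeneous proof.
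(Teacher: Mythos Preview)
Your proposal is correct and follows essentially the same route as the paper: the identical three-term regret decomposition, the misspecified Bellman-error lemma picking up the additive $3H\zeta$ and the multiplicative $3H\zeta\sqrt{MKd}$ terms, the inflated inverse temperature $1/\sqrt{\beta_K}=\widetilde{\mathcal{O}}(H\sqrt d+H\sqrt{MKd}\,\zeta)$ to restore the anti-concentration step in the optimism lemma, and the same elliptic-potential/coordination-sum bound for the final assembly. Your attribution of the extra $\sqrt d$ (relative to \algnamePHE) to the $\sqrt{d/\beta_K}$ factor in the LMC error bound is exactly the mechanism the paper exploits.
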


\begin{remark}
When $\zeta=\mathcal{O}\big(\sqrt{1/MK}\big)$, the cumulative regret becomes $\widetilde{\mathcal{O}}\big(d^{\frac{3}{2}}H^2\sqrt{M}\big(\sqrt{dM\gamma}+\sqrt{K}\big)\big)$. This matches the result of \Cref{thm:lmc_homo_regret} in the linear MDP setting. By comparing \Cref{thm:mis_regret_lmc,thm:mis_regret_phe}, we find the result of \algnameLMC\ has an extra $\sqrt{d}$ factor worse than that of \algnamePHE, causing the chosen $\zeta$ in \algnamePHE\ has an extra $\sqrt{d}$ order over that in \algnameLMC. This indicates that \algnamePHE\ has better performance tolerance for the misspecified setting.
\end{remark}

\section{Experiments}\label{sec:exp}
In this section, we present an empirical evaluation of our proposed randomized exploration strategies (i.e., CoopTS-PHE and CoopTS-LMC) with deep $Q$-networks (DQNs)~\citep{dqn} as the core algorithm on varying tasks under multi-agent settings compared with several baselines: vanilla DQN, Double DQN~\citep{doubledqn}, Bootstrapped DQN~\citep{bootstrappeddqn}, and Noisy-Net~\citep{noisynet}). Given that all experiments are conducted under multi-agent settings unless explicitly specified as a single-agent or centralized scenario, we denote CoopTS-PHE as "PHE" and CoopTS-LMC as "LMC" in both experimental contexts and figures. Note that we run all our experiments on Nvidia RTX A5000 with 24GB RAM. The implementation of this work can be found at \href{https://github.com/panxulab/MARL-CoopTS}{https://github.com/panxulab/MARL-CoopTS}

\begin{figure}[ht]
    \centering   
    \vspace{-0.1in}
     \subfigure[m=2]{
         \includegraphics[height=0.18\linewidth]{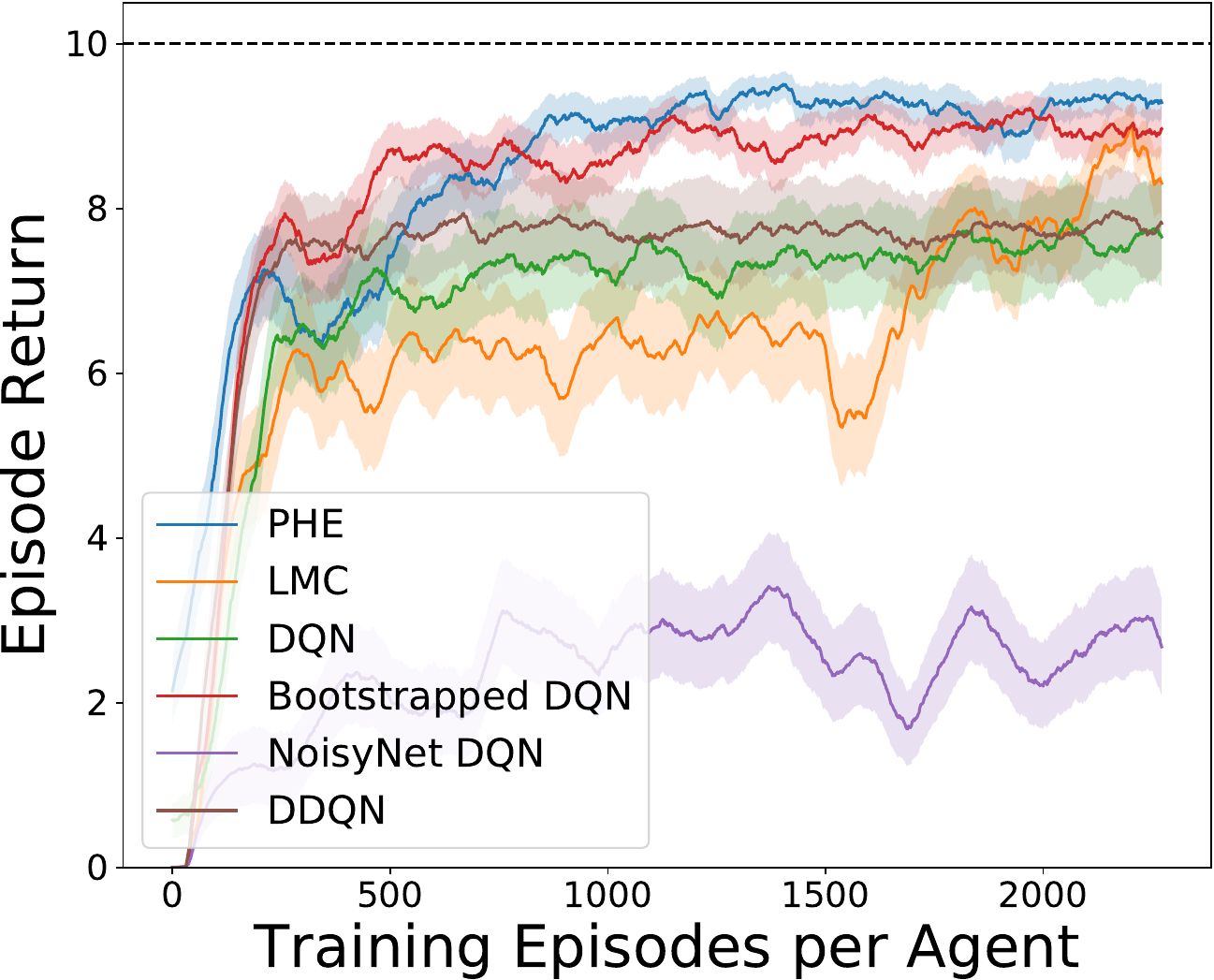}%
         \label{fig:n25_all_agent2}%
     }
     \subfigure[m=3]{
         \includegraphics[height=0.18\linewidth]{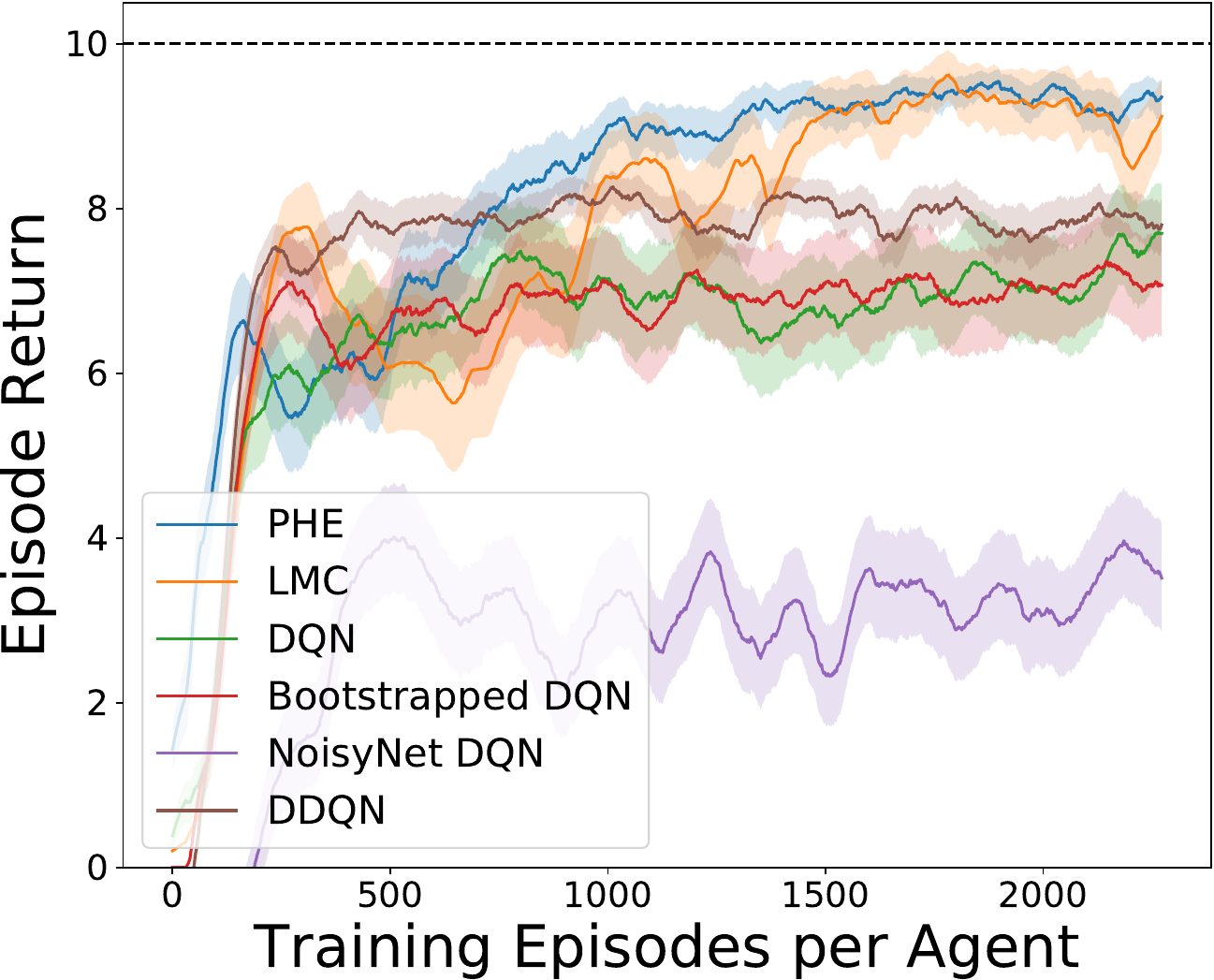}%
         \label{fig:n25_all_agent3}
     }
    \subfigure[Mario (Parallel)]{
         \includegraphics[height=0.18\linewidth]{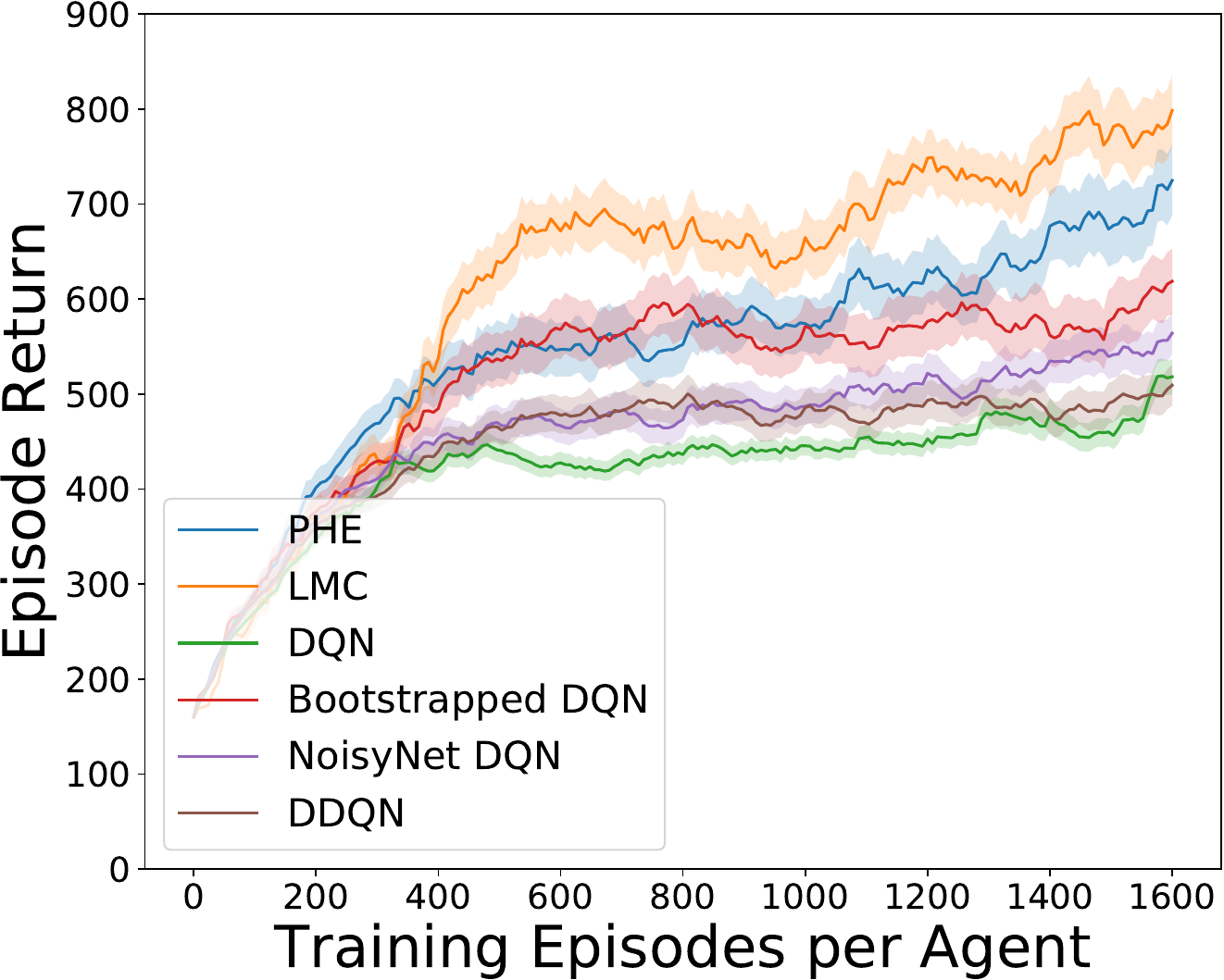}
         \label{fig:mario_parallel}
     }
     \subfigure[Mario (Federated)]{
         \includegraphics[height=0.18\linewidth]{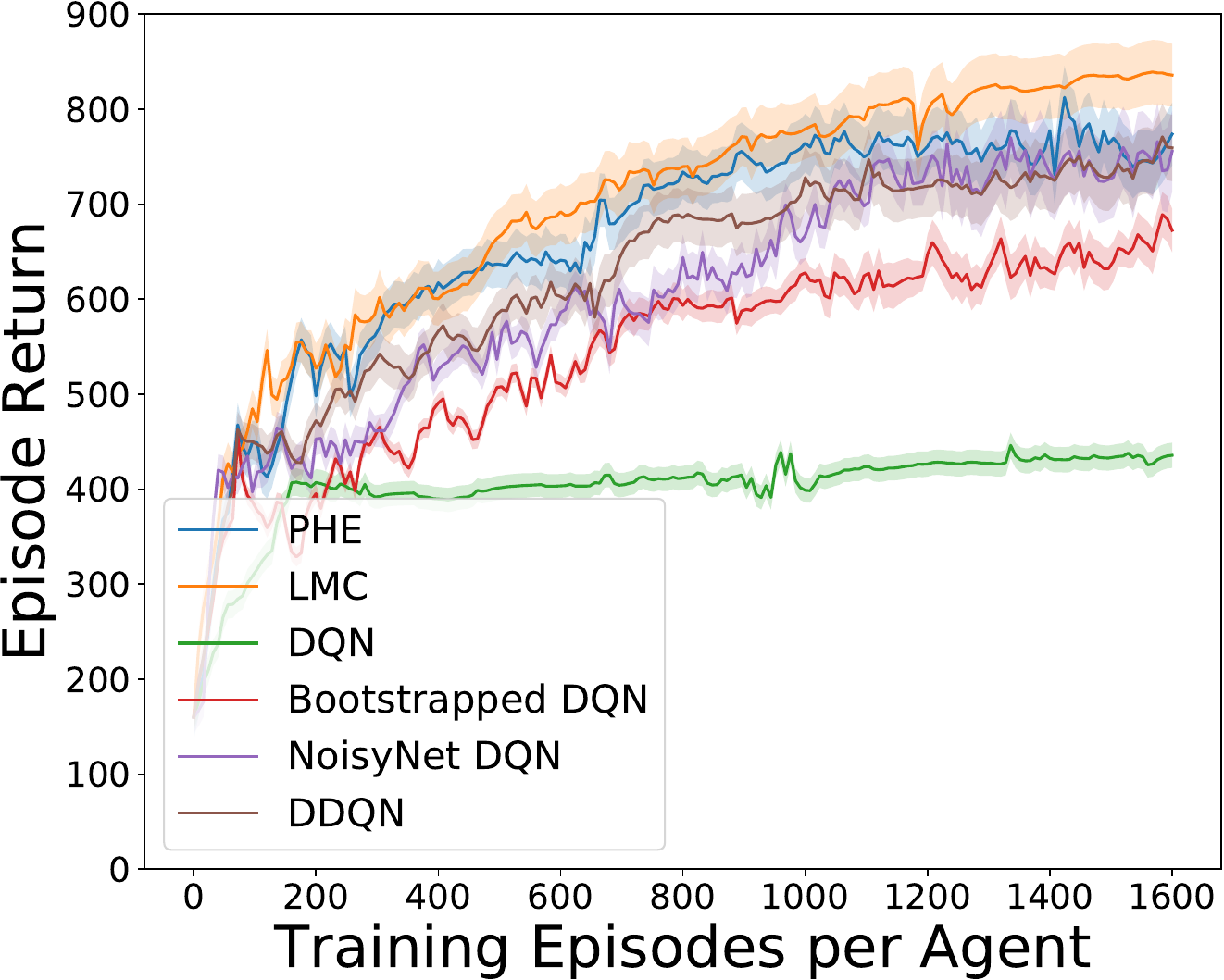}
         \label{fig:mario_federated}
     }
    \caption{Comparison among different exploration strategies in different environments. (a)-(b): $N$-chain with $N=25$. (c)-(d): Super Mario Bros. All results are averaged over 10 runs and the shaded area represents the standard deviation. 
    \label{fig:n25_all}
    } 
    \vspace{-0.15in}
\end{figure}

\subsection{$N$-chain}\label{sec:main_n-chain}
The $N$-chain~\citep{bootstrappeddqn} comprises a sequence of $N$ states denoted as $\{s_l\}_{l=1}^{N}$. Assuming the existence of $m$ agents, all initiating their trajectories from $s_2$, this study explores the dynamics of their movement within the chain. At each time step, agents face the decision to move either left or right. Notably, each agent incurs a nominal reward of $r = 0.001$ upon reaching state $s_1$, while a more substantial reward of $r = 1$ is obtained upon reaching the terminal state $s_N$. The illustration of $N$-chain environment is shown in \Cref{fig:nChain_env}. With a horizon length of $N + 9$, the optimal return is $10$. We consider $N = 25$ with the communication among agents in \Cref{fig:n25_all} following the synchronization approach in \Cref{algo:general_framework}. In \Cref{fig:n25_all_agent2}, we show that PHE and Bootstrapped DQN result in higher average episode return among all agents while LMC can also eventually converge to a similar reward. 

Upon increasing the number of agents to $m=3$, we show in \Cref{fig:n25_all_agent3} that our randomized exploration methods outperform all other baselines. Notably, the fluctuation in PHE is observed to be less pronounced against LMC. This observation lends support to our theoretical framework regarding performance tolerance in the misspecified setting, as detailed in \Cref{sec:misspecified_setting}. The complete results for $N$-chain and ablation studies can be found in \Cref{sec:appendix_nchain}.

\subsection{Super Mario Bros}
Environmental heterogeneity, arising from various sources, is a prevalent challenge in practical scenarios. In \Cref{sec:misspecified_setting}, we illustrate the extension of homogeneous parallel MDP to the misspecified setting. In the Super Mario Bros task~\citep{marioRL}, we examine a scenario where four agents, denoted as $m=4$, engage in learning within distinct environments. Despite these environments sharing the same state space $\mathcal{S}$, action space $\mathcal{A}$, and reward function, their characteristics are different described in \Cref{sec:appendix_mario}. The primary objective of the Super Mario Bros task is to train an agent capable of advancing as far-right and rapidly as possible without collisions or falls. Utilizing preprocessed images as input states, agents aim to select optimal actions from a set of $7$ discrete actions.

\Cref{fig:mario_parallel} visually depicts that both randomized exploration strategies outperform other baselines in cooperative parallel learning. Notably, we observe that the superiority of LMC gets significant against PHE unlike the results in $N$-chain in \Cref{fig:n25_all_agent2,fig:n25_all_agent3}. In the case of PHE, Gaussian noise is introduced to the reward before applying the Bellman update, which can be viewed as a method empirically approximating the posterior distribution of the $Q$ function using a Gaussian distribution. However, it is crucial to note that in practical scenarios, unlike the $N$-chain setting, Gaussian distributions may not always provide an accurate approximation of the true posterior of the $Q$ function~\citep{ishfaq2024provable}. Here, transitions are shared among the four agents whenever the synchronization condition in \eqref{equ:synchronize_condition} is met. We also conducted extra experiments in this task extending our proposed method to federated learning shown in \Cref{fig:mario_federated} with details in \Cref{sec:appendix_mario}.

\subsection{Thermal Control of Building Energy Systems}
Finally, we assess the efficacy of our randomized exploration strategies through their application to a practical task within a sustainable energy system: BuildingEnv, as outlined in \cite{sustaingym}. BuildingEnv is designed to manage the heating supply in a multi-zone building, which involves addressing real-world physical constraints and accounting for environmental shifts over time. The objective is to meet user-defined temperature specifications while simultaneously minimizing overall electricity consumption. We defer the environment details to \Cref{sec:appendix_building}. 

\begin{wrapfigure}{r}{0.45\linewidth}%
    \centering
    \vspace{-0.1in}
    \includegraphics[height=0.23\textwidth]{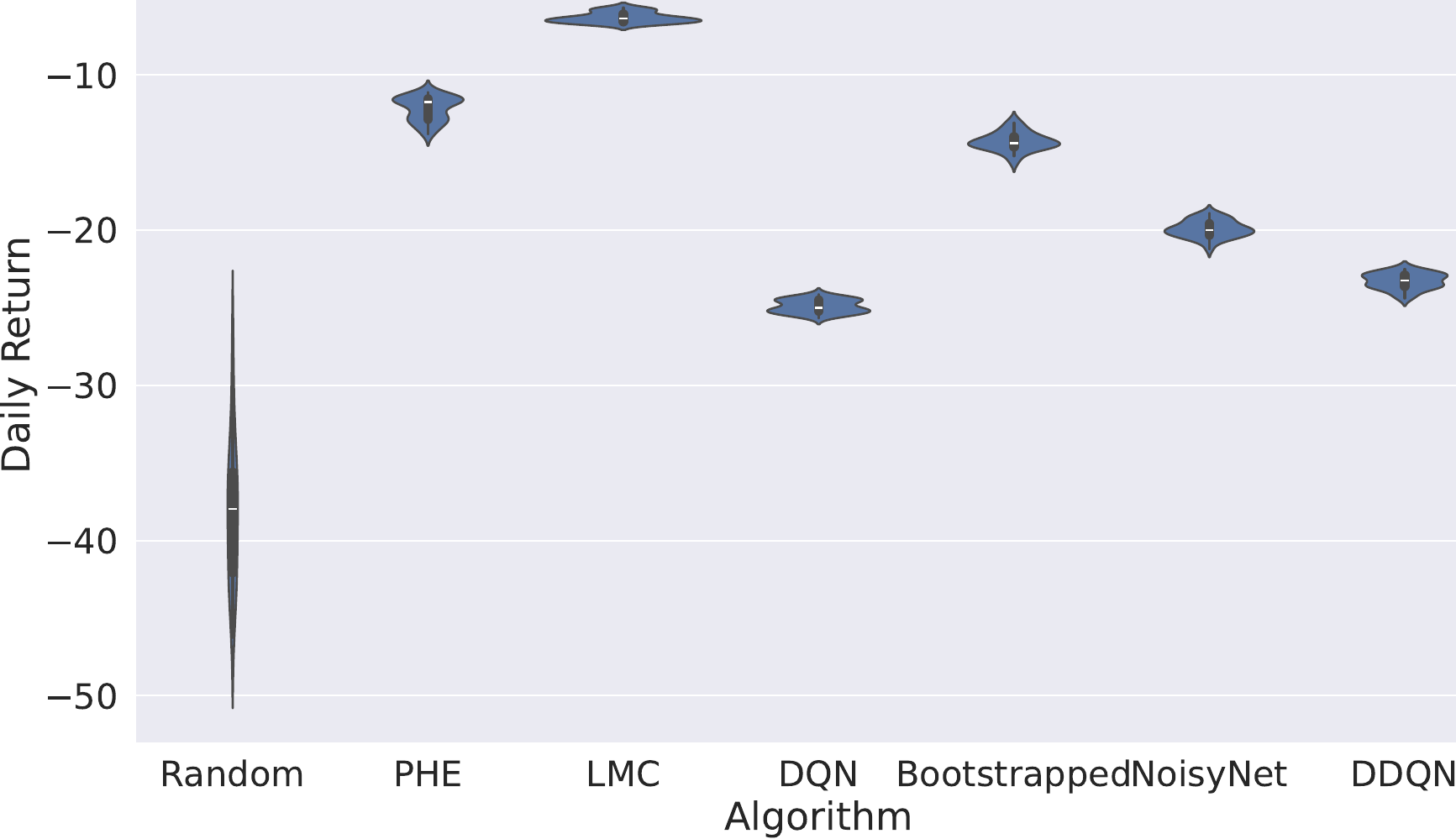}
    \caption{Evaluation performance at Tampa (hot humid) in building energy systems. All results are averaged over 10 runs. \label{fig:sustain_hothumid_main}
    } 
\end{wrapfigure} 
With the availability of different cities in varying weather types, we conduct experiments on multiple  cities in parallel and share their data following \Cref{algo:general_framework} for each exploration strategy. During the evaluation, we deploy those trained policies to the environment of each city/weather. We include all methods as well as random action in \Cref{fig:sustain_hothumid_main} for a fair comparison. Specifically, we sample action randomly from action space for random action. We display the distribution of the return with probability density in violin plots, indicating that our PHE and LMC can perform better with a higher mean. Additional results for other cities can be found in \Cref{sec:appendix_building}.

\section{Conclusion}\label{sec:conclusion} 
We proposed a unified algorithm framework for provably efficient randomized exploration in parallel MDPs. By combining this unified algorithm framework with two TS-type randomized exploration strategies, PHE and LMC, we obtained two algorithms for parallel MDPs: \algnamePHE\ and \algnameLMC. These two algorithms are both flexible in design and easy to implement in practice. Under the linear MDP setting, we derived the theoretical regret bounds and communication complexities of \algnamePHE\ and \algnameLMC. This is the first result for randomized exploration in cooperative MARL, matching the best existing regret bounds for single-agent RL \citep{ishfaq2021randomized,ishfaq2024provable}. We also extended our theoretical analysis to the misspecified setting. Our experiments on diverse RL parallel environments verified that randomized exploration improves the balance between exploration and exploitation in both homogeneous and heterogeneous settings. Future research directions includes extending our randomized exploration algorithm to fully decentralized or federated learning settings. Additionally, developing a more communication-efficient algorithm to reduce the substantial communication costs in the general function class setting is another potential direction.

\section*{Acknowledgments}
We would like to thank the anonymous reviewers for their helpful comments. HH and MP were supported in part by the ONR under agreement N00014-23-1-2206, AFOSR under the award number FA9550-19-1-0169, and by the NSF under NAIAD Award 2332744 as well as the National AI Institute for Edge Computing Leveraging Next Generation Wireless Networks, Grant CNS-2112562. 
WW and PX were supported in part by the National Science Foundation (DMS-2323112) and the Whitehead Scholars Program at the Duke University School of Medicine. The views and conclusions in this paper are those of the authors and should not be interpreted as representing any funding agency.

\bibliography{References}
\bibliographystyle{abbrvnat}

\newpage
\appendix
\onecolumn

\section{Related Work}

\paragraph{Cooperative Multi-Agent Reinforcement Learning} 
Cooperative MARL is closely intertwined with the domain of multi-agent multi-armed bandits, exemplified by decentralized algorithms featuring communication across a network or hypergraphs~\citep{landgren2016multiarmed,zhang2023global,Jin2024MATS} and distributed settings~\citep{hillel2013multiarmed, wang2020multiarmed}. Cooperative MARL manifests primarily in two categories: multi-agent MDPs~\citep{Boutilier1996PlanningLA, kaiqing2018decentralized, xie2020zero-sum,dubey2021provably} and parallel MDPs~\citep{daniel2002decentralized, dubey2021provably, justin2022decentralized, daniel2002decentralized, min2023cooperative}. In the realm of cooperative multi-agent robotics, the former is employed to formulate optimal multi-agent policies across the distributed system~\citep{yu2022the, chao2023multirobot}. On the other hand, homogeneous parallel MDPs leverage inter-agent communication to expedite learning processes~\citep{kretchmar2002parallel}. Additionally, heterogeneous parallel MDPs establish connections to heterogeneous federated learning~\citep{li2020federated} and exhibit improved generalizability in transfer learning scenarios~\citep{matthew2009transfer}. 

We focus on parallel MDPs in this paper, where agents interact with the environment simultaneously to tackle shared challenges within extensive and distributed systems~\citep{kretchmar2002parallel}. Recently, \citet{dubey2021provably} proposed the Coop-LSVI algorithm, extending the LSVI-UCB algorithm~\citep{jin2020provably} in single-agent RL to MARL with linear MDPs. In a parallel RL setting with asynchronous communication, \citet{min2023cooperative} builds upon Coop-LSVI while relinquishing compatibility with heterogeneous MDPs. Meanwhile, \citet{justin2022decentralized} focuses on fully decentralized multi-agent UCB $Q$-learning in a tabular setting, maintaining polynomial space complexity even as the number of agents increases. However, it is worth noting that neither of the previous works~\citep{dubey2021provably, min2023cooperative} in non-tabular cooperative MARL provides experimental validation for the efficacy of their proposed communication strategies. The gap arises from their reliance on LSVI-UCB as the core algorithm, wherein optimism is instantiated through UCB. Empirical evidence suggests that UCB-based approaches tend to underperform in practical scenarios~\citep{osband2013more, osband2017posterior, ishfaq2024provable}. Moreover, the computational demands of LSVI-UCB become untenable due to the necessity of recurrently computing the feature covariance matrix for updating the UCB bonus function. On the other hand, distributed applications of parallel MDPs in TS-based concurrent RL algorithms have been explored~\citep{Dimakopoulou2018, Dimakopoulou2018scale, chen2022}. Specifically, \citet{Dimakopoulou2018} proposed a tabular model learning method based on seed sampling for coordinated exploration. This approach was further generalized to address intractable state spaces in \citep{Dimakopoulou2018scale} and supported by a \textbf{Bayesian} regret bound in \citep{chen2022}. However, none of these studies consider the communication complexity associated with efficient cooperative strategies. Therefore, randomized exploration in this work is critical to make these algorithm designs practical.

\paragraph{Randomized Exploration}
The roots of randomized exploration, particularly TS, can be traced back to its success in bandit problems~\citep{thompson1933likelihood}. Randomized exploration strategies can typically exhibit superior performance in practical applications due to avoidance of early convergence to suboptimal actions \citep{jin2021mots,jin2022finite,jin2023thompson}. Furthermore, these strategies demonstrate robustness in the face of noise and uncertainty, particularly within non-stationary environments~\citep{wang2020thompson,bakshi2023guts}. This success has extended to Langevin Monte Carlo Thompson Sampling (LMCTS), which has been applied to various domains, including linear bandits, generalized linear bandits, and neural contextual bandits~\citep{xu2022langevin}. The exploration of posterior sampling techniques in RL has gained prominence, building upon the foundation laid by TS~\citep{strens2000ts, agrawal2017posterior}. Randomized Least-Square Value Iteration (RLSVI) is an approach that leverages random perturbations to approximate the posterior, with frequentist regret analysis applied under the tabular MDP setting~\citep{osband2016generalization}, inspiring subsequent works focusing on theoretical analyses aimed at improving worst-case regret under tabular MDPs~\citep{russo2019random, agrawal2021improved}, with extensions to the linear setting~\citep{zanette2020frequentist, ishfaq2021randomized, dann2021posterior}. In addition to theoretical advancements, several practical algorithms have been proposed based on RLSVI to approximate posterior samples of $Q$ functions in deep RL. These approaches involve ensembles of randomly initialized neural networks~\citep{bootstrappeddqn, osband2018prior} and noise injection into the parameters of the neural network~\citep{noisynet, hyperdqn}. With the success of LMCTS~\citep{xu2022langevin} in bandit domains, the exploration of randomized methods has expanded to alternative approaches like LMC in tabular RL~\citep{pmlr-v202-karbasi23a} and linear MDPs with neural network approximation~\citep{ishfaq2024provable}. Further works delve into the realm of random exploration from the perspectives of delayed feedback~\citep{nikki2023delay} and offline RL~\citep{Thanh2023offline}.

While posterior sampling demonstrates superiority in various contexts, its theoretical foundations in the multi-agent setting remain underexplored. Existing research predominantly focuses on two-player zero-sum games, considering both Bayesian~\citep{zhou2019posterior, mehdi2021zerosum} and frequentist regrets~\citep{xiong2022selfreplay, qiu2023partial}. %
There is no existing work studying randomized exploration for cooperative multi-agent settings.

\section{Instantiation of the Proposed Algorithms in the Linear Function Class}\label{sec:instant_linear}
In this section, we specifically discuss our TS-related algorithms in the linear structure, which is under the assumption of linear function approximation and linear MDP setting. 

Recall from the loss function in \eqref{loss_function_general}, here we choose $L$ to be $l_2$ loss and linear function class $f(\wb;\bphi^l)= \wb^\top \bphi^l$. By solving this least-square regression problem, we obtain the unperturbed regression estimator $\widehat{\wb}_{m,h}^k$.
In the linear setting, we have the closed-form solution
\begin{align} 
\label{equ:closed_form_solution}
    \widehat{\wb}_{m,h}^k = (\bLambda_{m,h}^k)^{-1} \bb_{m,h}^k,
\end{align}
where $\bLambda_{m,h}^k$ and $\bb_{m,h}^k$ are defined as follows
\begin{align*}
    \bLambda_{m,h}^k &= \sum_{l=1}^{\mathcal{K}(k)} \bphi\big(s^l,a^l\big)\bphi\big(s^l,a^l\big)^\top + \lambda \Ib, \\
    \bb_{m,h}^k &= \sum_{l=1}^{\mathcal{K}(k)}\big[r_h\big(s^l,a^l\big) + V_{m, h+1}^k\big({s^\prime}^l\big)\big] \bphi\big(s^l,a^l\big).
\end{align*}

A natural way of doing randomized exploration is to add a noise $\mathcal{N}(\zero, \sigma^2(\bLambda_{m,h}^k)^{-1})$ to $\widehat{\wb}_{m,h}^k$ and get the estimated parameter $\bar{\wb}_{m,h}^k$. Then we can construct estimated $Q$ function $Q_{m, h}^k(\cdot, \cdot) = \min \{\bphi(\cdot, \cdot)^{\top} \bar{\wb}_{m,h}^k, H-h+1\}^{+}$. We call this method as CoopTS, which is aligned with other linear TS algorithms \citep{agrawal2013thompson,abeille2017linear}. In what follows, we theoretically show that our proposed algorithms are equivalent or approximately converge to the CoopTS algorithm in the linear function approximation setting.

For \algnamePHE\ (\Cref{algo:general_framework}+\Cref{algo:exploration_part_PHE_general_function_class}), let the function approximation in \eqref{equ:loss_function_phe} be linear and choose $L$ to be the squared loss. By solving this least-square regression problem, we obtain the perturbed regression estimator $\widetilde{\wb}_{m,h}^{k, n}$ in \algnamePHE. The following proposition conveys that \algnamePHE\ is actually equivalent to CoopTS.

\begin{proposition}[Equivalent to CoopTS] \label{proposition:equivalent_to_ts}
The output $\widetilde{\wb}_{m,h}^{k, n}$ by \algnamePHE\ is equivalent to adding a Gaussian vector to the unperturbed regression estimator $\widehat{\wb}_{m,h}^k$, i.e., $\widetilde{\wb}_{m,h}^{k, n}=\widehat{\wb}_{m,h}^k +\bzeta_{m,h}^{k, n}$, where
$\bzeta_{m,h}^{k, n}\sim \mathcal{N}(\zero, \sigma^2(\bLambda_{m,h}^k)^{-1})$.
\end{proposition}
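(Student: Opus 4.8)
The plan is to compute the minimizer of the randomized quadratic loss $\widetilde{L}_{m,h}^{k,n}$ in closed form and show it equals $\widehat{\wb}_{m,h}^k$ plus a Gaussian perturbation with the stated covariance. First I would write out the perturbed loss with the linear function class $f(\wb;\bphi^l)=\wb^\top\bphi^l$ and the $\ell_2$ loss substituted into \eqref{equ:loss_function_phe}:
\begin{align*}
\widetilde{L}_{m,h}^{k,n}(\wb) = \textstyle\sum_{l=1}^{\mathcal{K}(k)}\big(r_h^l + \epsilon_h^{k,l,n} + V_{m,h+1}^k({s^\prime}^l) - \wb^\top\bphi^l\big)^2 + \lambda\|\wb + \bxi_h^{k,n}\|^2.
\end{align*}
Since this is strictly convex in $\wb$, I would set the gradient to zero: $\nabla\widetilde{L}_{m,h}^{k,n}(\wb) = -2\sum_l(r_h^l + \epsilon_h^{k,l,n} + V_{m,h+1}^k({s^\prime}^l) - \wb^\top\bphi^l)\bphi^l + 2\lambda(\wb + \bxi_h^{k,n}) = \zero$, which rearranges to $\big(\sum_l\bphi^l(\bphi^l)^\top + \lambda\Ib\big)\wb = \sum_l(r_h^l + V_{m,h+1}^k({s^\prime}^l))\bphi^l + \sum_l\epsilon_h^{k,l,n}\bphi^l - \lambda\bxi_h^{k,n}$. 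Recognizing the left factor as $\bLambda_{m,h}^k$ and the first term on the right as $\bb_{m,h}^k$, this gives
\begin{align*}
\widetilde{\wb}_{m,h}^{k,n} = (\bLambda_{m,h}^k)^{-1}\bb_{m,h}^k + (\bLambda_{m,h}^k)^{-1}\Big(\textstyle\sum_{l=1}^{\mathcal{K}(k)}\epsilon_h^{k,l,n}\bphi^l - \lambda\bxi_h^{k,n}\Big) = \widehat{\wb}_{m,h}^k + \bzeta_{m,h}^{k,n},
\end{align*}
using the closed-form expression \eqref{equ:closed_form_solution}, where $\bzeta_{m,h}^{k,n} := (\bLambda_{m,h}^k)^{-1}\big(\sum_l\epsilon_h^{k,l,n}\bphi^l - \lambda\bxi_h^{k,n}\big)$.

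It then remains to verify $\bzeta_{m,h}^{k,n}\sim\mathcal{N}(\zero,\sigma^2(\bLambda_{m,h}^k)^{-1})$. Conditioning on the data (so the $\bphi^l$ and $\bLambda_{m,h}^k$ are fixed), $\bzeta_{m,h}^{k,n}$ is a fixed linear image of the jointly Gaussian vector $(\epsilon_h^{k,1,n},\dots,\epsilon_h^{k,\mathcal{K}(k),n},\bxi_h^{k,n})$, hence Gaussian with mean $\zero$. For its covariance, I would use independence of the $\epsilon_h^{k,l,n}$ across $l$ and of $\bxi_h^{k,n}$: the covariance of $\sum_l\epsilon_h^{k,l,n}\bphi^l - \lambda\bxi_h^{k,n}$ equals $\sigma^2\sum_l\bphi^l(\bphi^l)^\top + \lambda^2\sigma^2\Ib = \sigma^2\bLambda_{m,h}^k$ (since $\sum_l\bphi^l(\bphi^l)^\top + \lambda\Ib = \bLambda_{m,h}^k$ and the extra $\lambda^2\sigma^2\Ib$ versus $\lambda\sigma^2\Ib$ — here I should double-check the regularizer perturbation scaling: with $\lambda=1$ these coincide, and more generally one gets $\mathrm{Cov} = \sigma^2(\sum_l\bphi^l(\bphi^l)^\top + \lambda^2\Ib)$, which the authors presumably intend to match $\sigma^2\bLambda_{m,h}^k$ via $\lambda=1$). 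Then $\mathrm{Cov}(\bzeta_{m,h}^{k,n}) = (\bLambda_{m,h}^k)^{-1}\cdot\sigma^2\bLambda_{m,h}^k\cdot(\bLambda_{m,h}^k)^{-1} = \sigma^2(\bLambda_{m,h}^k)^{-1}$, using symmetry of $\bLambda_{m,h}^k$.

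The calculation is essentially routine once the loss is written out; the only subtlety I anticipate is bookkeeping the regularizer perturbation term $\lambda\|\wb+\bxi_h^{k,n}\|^2$ correctly — specifically ensuring its contribution to the gradient ($2\lambda\bxi_h^{k,n}$) and to the covariance lands exactly so that the $\bLambda_{m,h}^k$ on the two sides of the inverse-sandwich cancel to leave $\sigma^2(\bLambda_{m,h}^k)^{-1}$. This is where the choice $\lambda=1$ (assumed throughout the theorems) does the work; I would state that dependence explicitly or absorb it into the definition of the perturbation. Everything else follows from the standard facts that affine images of Gaussians are Gaussian and that independent summands have additive covariances.
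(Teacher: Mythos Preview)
Your proposal is correct and follows essentially the same route as the paper: write out the perturbed quadratic loss, solve the first-order condition to obtain $\widetilde{\wb}_{m,h}^{k,n} = \widehat{\wb}_{m,h}^k + (\bLambda_{m,h}^k)^{-1}\big(\sum_l\epsilon_h^{k,l,n}\bphi^l - \lambda\bxi_h^{k,n}\big)$, and then sandwich the covariance of the inner Gaussian by $(\bLambda_{m,h}^k)^{-1}$ on both sides. You are also right to flag the $\lambda$ versus $\lambda^2$ bookkeeping in the covariance of $\lambda\bxi_h^{k,n}$ --- the paper's proof silently writes $\lambda\Ib$ there, which is only correct under the standing assumption $\lambda=1$; your explicit acknowledgment of this dependence is more careful than the original.
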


For \algnameLMC\ (\Cref{algo:general_framework}+\Cref{algo:exploration_part_LMC_general_function_class}), let function approximation in \eqref{loss_function_general} be linear and choose $L$ to be $l_2$ loss to get the loss function. Then after finishing the LMC update, we get the estimated parameter $\wb^{k, J_k, n}_{m, h}$ and construct the model approximation of $Q$ function. 
The following proposition conveys that the distribution of $\wb^{k, J_k}_{m, h}$ converges to the posterior distribution of Thompson Sampling exploration. The proof of this proposition is given in \citep{xu2022langevin}. %

\begin{proposition}[Approximately equivalent to CoopTS \citep{xu2022langevin}] \label{proposition:converge_to_ts}
If the epoch length $J_k$ in \Cref{algo:exploration_part_LMC_general_function_class} is sufficiently large, the distribution of $\wb^{k, J_k}_{m, h}$ converges to Gaussian distribution $\mathcal{N}(\widehat{\wb}_{m,h}^k, \beta_{m, k}^{-1} {(\bLambda_{m, h}^k)}^{-1})$. 
\end{proposition}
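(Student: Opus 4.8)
The plan is to exploit the fact that, in the linear function class with the squared loss, the objective $L_{m,h}^k$ in \eqref{loss_function_general} is an exactly quadratic, strongly convex function, so that the target Gibbs measure of the Langevin dynamics and every LMC iterate are Gaussian and can be tracked in closed form. First I would identify the target distribution. Writing out $L_{m,h}^k(\wb)$ with $L$ the $l_2$ loss and $f(\wb;\bphi^l)=\wb^\top\bphi^l$, a direct computation gives $\nabla L_{m,h}^k(\wb) = \bLambda_{m,h}^k\wb - \bb_{m,h}^k = \bLambda_{m,h}^k(\wb - \widehat{\wb}_{m,h}^k)$ (up to the multiplicative convention of the loss), using the definitions of $\bLambda_{m,h}^k$, $\bb_{m,h}^k$ and the closed form $\widehat{\wb}_{m,h}^k = (\bLambda_{m,h}^k)^{-1}\bb_{m,h}^k$ in \eqref{equ:closed_form_solution}. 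Hence $L_{m,h}^k(\wb) = \frac12(\wb-\widehat{\wb}_{m,h}^k)^\top\bLambda_{m,h}^k(\wb-\widehat{\wb}_{m,h}^k) + \mathrm{const}$, and the Gibbs density $\pi(\wb)\propto\exp(-\beta_{m,k}L_{m,h}^k(\wb))$ is precisely $\mathcal{N}(\widehat{\wb}_{m,h}^k, \beta_{m,k}^{-1}(\bLambda_{m,h}^k)^{-1})$, i.e. the claimed limit.

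Next I would recognize the update \eqref{eq:LMC_update} as the unadjusted Langevin algorithm (ULA) discretizing the overdamped Langevin diffusion $d\wb_t = -\nabla L_{m,h}^k(\wb_t)\,dt + \sqrt{2\beta_{m,k}^{-1}}\,dB_t$, whose unique stationary law is $\pi$. Because $\nabla L_{m,h}^k$ is affine, the recursion becomes the affine–Gaussian map $\wb^{k,j,n}_{m,h} = (\Ib - \eta_{m,k}\bLambda_{m,h}^k)\wb^{k,j-1,n}_{m,h} + \eta_{m,k}\bb_{m,h}^k + \sqrt{2\eta_{m,k}\beta_{m,k}^{-1}}\,\bepsilon_{m,h}^{k,j,n}$. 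An affine image of a Gaussian plus independent Gaussian noise is Gaussian, so by induction the law of $\wb^{k,j,n}_{m,h}$ is Gaussian for every $j$, with mean $\bmu_j$ and covariance $\Sigma_j$ obeying $\bmu_j = (\Ib-\eta\bLambda)\bmu_{j-1}+\eta\bb$ and $\Sigma_j = (\Ib-\eta\bLambda)\Sigma_{j-1}(\Ib-\eta\bLambda)+2\eta\beta^{-1}\Ib$. Diagonalizing in the eigenbasis of $\bLambda_{m,h}^k$ and using $\eta_{m,k}=1/(4\lambda_{\max}(\bLambda_{m,h}^k))$ so that $\Ib - \eta\bLambda$ is a contraction, I would show $\bmu_j\to\widehat{\wb}_{m,h}^k$ geometrically and $\Sigma_j$ converges geometrically to $\Sigma_\star$, whose $i$-th eigenvalue is $2\beta^{-1}/(\lambda_i(2-\eta\lambda_i))\to\beta^{-1}/\lambda_i$ as $\eta\to0$. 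Equivalently, one may invoke the non-asymptotic Wasserstein convergence bound for ULA on a strongly-log-concave, smooth target with condition number $\kappa_k$, giving $\mathcal{W}_2(\mathrm{Law}(\wb^{k,J_k,n}_{m,h}),\pi)\le\varepsilon$ once $J_k = \mathcal{O}(\kappa_k\log(1/\varepsilon))$ — exactly the choice $J_k = 2\kappa_k\log(4HKMd)$ used in \Cref{thm:lmc_homo_regret}. Taking $J_k$ large then gives convergence to $\mathcal{N}(\widehat{\wb}_{m,h}^k, \beta_{m,k}^{-1}(\bLambda_{m,h}^k)^{-1})$.

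The main obstacle is disentangling the two error sources hidden in the word ``converges'': the transient error from the finite number of iterations $J_k$ (killed by geometric contraction, hence the $\kappa_k\log(\cdot)$ iteration count) and the discretization bias of ULA in the covariance, since the stationary covariance $\Sigma_\star$ of the discrete chain equals $\beta_{m,k}^{-1}(\bLambda_{m,h}^k)^{-1}$ only up to the $\mathcal{O}(\eta_{m,k})$ factor $2-\eta\lambda_i$ per eigendirection. The claimed limit is therefore recovered in the regime where $\eta_{m,k}$ is taken small (as it is, being inversely proportional to $\lambda_{\max}(\bLambda_{m,h}^k)$), and the statement should be read as convergence of the iterate law to the target as $J_k\to\infty$ with a vanishing step size; controlling this bias quantitatively, rather than only in the limit, is the delicate part and is exactly what the cited analysis of \citet{xu2022langevin} supplies. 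Finally I would note that the warm-start initialization $\wb^{k,0,n}_{m,h}=\wb^{k-1,J_{k-1},n}_{m,h}$ does not affect the limiting law and only improves the transient constant.
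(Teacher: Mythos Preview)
Your proposal is correct. The paper itself does not give a proof for this proposition, deferring entirely to \citet{xu2022langevin}; however, the closed-form affine--Gaussian recursion for the mean and covariance that you derive is exactly the content of \Cref{pro:gauss_param} in the paper, and sending $J_k\to\infty$ there (killing the $\Ab_k^{J_k}$-weighted history terms) recovers the claimed limit, so your route is essentially the one implicitly available in the paper. Your remark that the stationary covariance of the discrete chain matches $\beta_{m,k}^{-1}(\bLambda_{m,h}^k)^{-1}$ only up to an $\mathcal{O}(\eta_{m,k})$ factor per eigendirection is well taken and is actually more careful than the informal wording of the proposition.
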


\Cref{proposition:equivalent_to_ts,proposition:converge_to_ts} indicate that the results of our two randomized exploration strategies are closely related to CoopTS. As we have mentioned above, in CoopTS, the estimated parameter $\bar{\wb}_{m,h}^k$ is sampled from the normal distribution $\mathcal{N}(\widehat{\wb}_{m,h}^k, \sigma^2(\bLambda_{m,h}^k)^{-1})$. However, in practice, this sampling is often executed in this way: we sample $\bbeta \sim \mathcal{N}(\zero, \Ib)$ first, then we calculate $\bar{\wb}_{m,h}^{k} = \widehat{\wb}_{m,h}^k + \sigma (\bLambda_{m,h}^k)^{-\frac{1}{2}} \bbeta$ and obtain the estimated parameter. Nevertheless, computing $\big(\bLambda_{m,h}^k\big)^{-\frac{1}{2}}$ can be computationally expensive, often requiring at least $\mathcal{O}(d^3)$ operations with the Cholesky decomposition, making it impractical for high-dimensional machine learning challenges. Additionally, the Gaussian distribution used in Thompson Sampling may not effectively approximate the posterior distribution in more complex bandit models than the linear MDP due to their intricate structures.

Moreover, as pointed out by recent work \citep{ chapelle2011empirical,riquelme2018deep,kveton2020randomized,xu2022langevin}, the Laplace approximation-based Thompson Sampling exhibits a constant approximation error in the estimation of the posterior distribution. Therefore, it necessitates a careful redesign of the covariance matrix to ensure effective performance.

\paragraph{Advantages of PHE and LMC}
As mentioned above, computing $\big(\bLambda_{m,h}^k\big)^{-\frac{1}{2}}$ can be computationally expensive. However, Perturbed-History exploration and Langevin Monte Carlo exploration can avoid this. For PHE, by only adding i.i.d random Gaussian noise to perturb reward and regularizer, its performance will be equivalent to TS. For LMC, by only performing noisy gradient descent, we can do the randomized exploration, resulting in similar performance compared with TS. Additionally, these two methods can easily be implemented to general function class while Thompson Sampling usually cannot be generalized except for the linear setting. In summary, these two methods are both flexible in design and easy to implement in practice.

\paragraph{Communication cost}
We emphasize that agents can just send compressed statistics to the server under the linear setting, which can largely reduce communication cost. In the linear function class, we can calculate the closed-form solution of the regression problem \eqref{equ:closed_form_solution}. In this case, when synchronization process is met, all the agents will only need to send their calculated local statistics $^{\text {loc}} \bLambda_{m, h}^k$ and $^{\text {loc}} \bb_{m, h}^k$ to help solve the regression problem. This communication cost is much smaller because $\bLambda$ is only a $d \times d$ matrix and $\bb$ is only a $d$-dimensional vector, where $d$ is the feature dimension in linear MDP assumption. This can also avoid privacy disclosure through communications.

Nevertheless, in the general function class setting, our proposed algorithms still require sharing all the collected datasets, which will cause relatively large communication cost. Additionally, in \Cref{sec:appendix_mario}, we also propose a federated setting algorithm \Cref{algo:federated_general_framework}. In this setting, instead of sharing collected datasets, agents can just share the weight of the collected estimated $Q$ functions, which can largely reduce the communication cost.

\section{Analysis of the Communication Complexity of \Cref{algo:general_framework}} \label{sec:analysis_communication_complexity}

The proof of the communication complexity is largely inspired by that in \citep{dubey2021provably}. However, we provide a refined analysis here, and thus obtain an improved communication complexity $\widetilde{\mathcal{O}}(dHM^2)$, in contrast with the $\widetilde{\mathcal{O}}(dHM^3)$ complexity in their paper. We also discussed this in \Cref{rmk:commu_comp_analysis} and showed that our result matches that of a recently proposed asynchronous algorithm. Moreover, we do a careful calculation of the total number of transferred random bits and show it only has a dependence on the number of episodes $K$.

\begin{proof}[Proof of \Cref{lem:communication_complexity}]
We assume $\sigma=\{\sigma_1, \ldots, \sigma_n\}$ as the synchronization episodes, where $\sigma_i \in [K]$, we also denote $\sigma_0=0$. To bound the number of synchronization $n$, we separate $\sigma$ into two parts with an undetermined term $\alpha$
\begin{align*}
    I_1 &= \{i\in[n]|\sigma_i-\sigma_{i-1} \leq \alpha\},\\
    I_2 &= \{i\in[n]|\sigma_i-\sigma_{i-1} > \alpha\}.
\end{align*}
Then we have $n=|I_1|+|I_2|$. Note that 
\begin{align*}
    K \geq \sigma_n = \sum_{i=1}^n (\sigma_i - \sigma_{i-1}) \geq \sum_{i\in I_2} (\sigma_i - \sigma_{i-1}) > |I_2|\alpha.
\end{align*}
Then we have $|I_2| < K/\alpha$. Then note that
\begin{align} \label{equ:communication_complexity_term1}
   \sum_{i=1}^n \log\bigg(\frac{\operatorname{det}(\bLambda_{m, h}^{\sigma_i})}{\operatorname{det}(\bLambda_{m,h}^{\sigma_{i-1}})}\bigg) &\geq \sum_{i \in I_1} \log\bigg(\frac{\operatorname{det}(\bLambda_{m, h}^{\sigma_i})}{\operatorname{det}(\bLambda_{m,h}^{\sigma_{i-1}})}\bigg) \notag\\
   &\geq \sum_{i \in I_1} \frac{\gamma}{\sigma_i-\sigma_{i-1}} \notag\\
   &\geq |I_1|\frac{\gamma}{\alpha}.
\end{align}
Define $\bLambda^K_h=\sum_{m\in \mathcal{M}}\sum_{k=1}^K \bphi\big(z_{m,h}^k\big)\bphi\big(z_{m,h}^k\big)^\top + \lambda \Ib$ where $z_{m,h}^k=\big(s_{m, h}^k, a_{m, h}^k\big)$. On the other hand, we have
\begin{align}  \label{equ:communication_complexity_term2}
    \sum_{i=1}^n \log\bigg(\frac{\operatorname{det}(\bLambda_{m, h}^{\sigma_i})}{\operatorname{det}(\bLambda_{m,h}^{\sigma_{i-1}})}\bigg) &= \log\bigg(\frac{\operatorname{det}(\bLambda_{m, h}^{\sigma_n})}{\operatorname{det}(\bLambda_{m,h}^{\sigma_0})}\bigg) \notag\\
    &\leq \log\bigg(\frac{\operatorname{det}(\bLambda_{h}^{K})}{\operatorname{det}(\lambda \Ib)}\bigg) \notag\\
    &\leq d \log(1+MK/d),
\end{align}
where the first inequality holds due to the trivial fact that $\Ab \preccurlyeq \Bb \Rightarrow \operatorname{det}(\Ab) \leq \operatorname{det}(\Bb)$, the second inequality follow from \Cref{lem:Determinant-Trace} and the fact that $\|\bphi(\cdot)\|_2 \leq 1$. Combine \eqref{equ:communication_complexity_term1} and \eqref{equ:communication_complexity_term2}, then we have $|I_1| \leq d\alpha/\gamma \log(1+MK/d)$. Finally, we choose $\alpha = K/d$, then we have
\begin{align*}
    n  \leq \frac{K}{\alpha} + \frac{d\alpha}{\gamma} \log\Big(1+\frac{MK}{d}\Big) = \Big(d + \frac{K}{\gamma}\Big) \log\Big(1+\frac{MK}{d}\Big).
\end{align*}
When one synchronization occurs, communications between agents and the server will occur $M$ times because we have $M$ agents in total. Recall from Lines \ref{line:second_stage_begin}-\ref{line:second_stage_end} in \Cref{algo:general_framework}, also note that in one synchronization episode, communications will happen $H$ times between every agent and the server. Finally, the upper bound of communication complexity is 
\begin{align*}
    \operatorname{CPX}=\widetilde{\mathcal{O}}\big((d+K/\gamma)MH\big).
\end{align*}
Next we consider the total number of transferred random bits. We first calculate the communication bits per round. Under the linear setting, we can calculate the closed-form solution of the regression problem $\hat{\wb}^k_{m,h} = (\bLambda_{m,h}^k)^{-1} \bb_{m,h}^k$, where
\begin{align*} 
    \bLambda_{m,h}^k &= \textstyle\sum_{l=1}^{\mathcal{K}(k)} \bphi\big(s^l,a^l\big)\bphi\big(s^l,a^l\big)^\top + \lambda \Ib, \\
    \bb_{m,h}^k &= \textstyle\sum_{l=1}^{\mathcal{K}(k)}[r_h\big(s^l,a^l\big) + V_{m, h+1}^k({s^\prime}^l)] \bphi\big(s^l,a^l\big).
\end{align*}
Note that $l\in[\mathcal{K}(k)]$ is equivalent to $(s,a,s')\in U_{m,h}(k)$, and the index set $U_{m,h}(k)$ consists of $U_h^{\text{ser}}(k)$ and $U_{m,h}^{\text{loc}}(k)$. Therefore, the empirical covariance matrix $\bLambda_{m, h}^k$ and the vector $\bb_{m, h}^k$ can be decomposed into the summation of the local matrices and vectors on each agent. When the synchronization occurs, agents just need to send their local statistics $^{\text {loc}} \bLambda_{m, h}^k$ and $^{\text {loc}} \bb_{m, h}^k$ to the server to help solve the regression problem on each agent.

For the local empirical covariance matrix $^{\text {loc}} \bLambda_{m, h}^k$
\begin{align*}
    ^{\text {loc}} \bLambda_{m, h}^k \textstyle= \sum_{(s^l,a^l,{s^\prime}^l) \in U_{m,h}^{\text{loc}}(k)} \bphi\big(s^l,a^l\big)\bphi\big(s^l,a^l\big)^\top,   
\end{align*}
this is the summation of up to $K$ $d \times d$ matrices. Note that $\|\bphi(s, a)\| \leq 1$, thus it is easy to see that the entries of each matrix, namely, $\bphi\big(s^l,a^l\big)\bphi\big(s^l,a^l\big)^\top$, are bounded by $1$. Therefore, the entries in $^{\text {loc}} \bLambda_{m, h}^k$ are bounded by $K$. For each entry in this matrix, it suffices to use $\cO(\log K)$ bits to communicate between the server and the agent. Thus in each round, $\cO(d^2 \log K)$ bits are needed to send the matrix $^{\text {loc}} \bLambda_{m, h}^k$.

For the local vector $^{\text {loc}} \bb_{m, h}^k$
\begin{align*}
    ^{\text {loc}} \bb_{m, h}^k \textstyle= \sum_{(s^l,a^l,{s^\prime}^l) \in U_{m,h}^{\text{loc}}(k)} [r_h\big(s^l,a^l\big) + V_{m, h+1}^k({s^\prime}^l)] \bphi\big(s^l,a^l\big), 
\end{align*}
this is a $d$-dimensional vector. Note that $r_h$ is bounded by $1$, $V_{m, h+1}^k$ is bounded by $H$ and is linear with $\bphi$ by definition, which indicates we only need to communicate a $d$-dimensional vector $\bar{\wb}^k_{m,h}$ to obtain $V_{m, h+1}^k$. Similar to the above analysis, in each round, $\cO(d \log(K(H+1)))$ bits are needed to send the vector $^{\text {loc}} \bb_{m, h}^k$. 

Therefore, the total bits of communication still only has a logarithmic dependency on the number of episodes $K$. This completes the proof.
\end{proof}

\section{Proof of the Regret Bound for \algnameLMC}
\label{Parallel MDP Proofs with homogeneity (LMC)}

The general framework for \algnameLMC\ and \algnamePHE\ is closely similar. To make the article more concise, we first prove \algnameLMC\ completely, which is a bit more complicated. Then we can simplify the following similar proof for \algnamePHE\ in \Cref{Parallel MDP Proofs with homogeneity (PHE)}. 

\subsection{Supporting Lemmas}
Before deriving the regret bound for \algnameLMC, we first provide the necessary technical lemmas for our regret analysis. Note that the loop (Line \ref{line:multi_sampling_LMC}-\ref{line:multi_sampling_LMC_end}) in \Cref{algo:exploration_part_LMC_general_function_class} is to do multi-sampling for $N$ times. To simplify the notations, we eliminate the index $n$ before \Cref{lem:est_error} because the previous lemmas have nothing to do with multi-sampling.

\begin{definition}[Model prediction error] \label{def:model_prediction_error}
For any $(m,k,h) \in \mathcal{M} \times [K] \times [H]$, we define the model error associated with the reward $r_h$,
\begin{align*}
   l_{m,h}^k(s,a) = r_{h}(s,a) + \mathbb{P}_{h}V_{m, h+1}^k(s,a) - Q_{m,h}^k(s,a).
\end{align*}
\end{definition}

\begin{definition}[Filtration] \label{def:filtration}
For any $(m, k, h) \in \mathcal{M} \times [K] \times [H]$, we define the filtration $\mathcal{F}_{m,k,h}$ as
{\small\begin{align*}
    \mathcal{F}_{m,k,h} = \sigma\Big(\big\{\big(s_{n,i}^{\tau}, a_{n,i}^{\tau}\big)\big\}_{(n, \tau, i)\in \mathcal{M} \times [k-1] \times [H]} \bigcup  \big\{\big(s_{n,i}^{k}, a_{n,i}^{k}\big)\big\}_{(n, i)\in [m-1]\times [H]} \bigcup \big\{\big(s_{m,i}^{k}, a_{m,i}^{k}\big)\big\}_{i \in [h] }\Big).
\end{align*}}
\end{definition}

\begin{proposition}\label{pro:gauss_param} In \Cref{algo:exploration_part_LMC_general_function_class}, the parameter $\wb_{m,h}^{k, J_k}$ satisfies the Gaussian distribution $\mathcal{N}\big(\bmu_{m,h}^{k,J_k}, \bSigma_{m,h}^{k, J_k}\big)$, where mean vector and the covariance matrix are defined as 
\begin{align*}
    \bmu_{m,h}^{k,J_k} &= \Ab_k^{J_k}...\Ab_1^{J_1}\wb_{m,h}^{1,0} + \sum_{i=1}^k \Ab_k^{J_k}...\Ab_{i+1}^{J_{i+1}}\big(\Ib-\Ab_i^{J_i}\big)\widehat{\wb}_{m,h}^i,\\
    \bSigma_{m,h}^{k, J_k} &=\sum_{i=1}^k\frac{1}{\beta_{m,i}}\Ab_k^{J_k}...\Ab_{i+1}^{J_{i+1}}\big(\Ib-\Ab_i^{2J_i}\big)(\bLambda_{m,h}^i)^{-1}(\Ib+\Ab_i)^{-1}\Ab_{i+1}^{J_{i+1}}... \Ab_{k}^{J_k},
\end{align*}
where $\Ab_i = \Ib - 2 \eta_{m,i} \bLambda_{m,h}^i$ for $i \in [k]$.
\end{proposition}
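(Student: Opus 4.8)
\textbf{Proof proposal for Proposition~\ref{pro:gauss_param}.}

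The plan is to analyze the LMC recursion \eqref{eq:LMC_update} in the linear case, where $\nabla L_{m,h}^k(\wb) = 2\bLambda_{m,h}^k \wb - 2\bb_{m,h}^k$ (since $L_{m,h}^k$ is the $\ell_2$-regularized least-squares loss and $\widehat{\wb}_{m,h}^k = (\bLambda_{m,h}^k)^{-1}\bb_{m,h}^k$). Substituting this gradient into \eqref{eq:LMC_update} gives, for a fixed episode $i$ and iterate $j$,
\begin{align*}
    \wb^{i,j}_{m,h} = \big(\Ib - 2\eta_{m,i}\bLambda_{m,h}^i\big)\wb^{i,j-1}_{m,h} + 2\eta_{m,i}\bLambda_{m,h}^i\widehat{\wb}_{m,h}^i + \sqrt{2\eta_{m,i}\beta_{m,i}^{-1}}\bepsilon_{m,h}^{i,j}.
\end{align*}
Writing $\Ab_i = \Ib - 2\eta_{m,i}\bLambda_{m,h}^i$, this is a linear Gaussian recursion $\wb^{i,j}_{m,h} = \Ab_i \wb^{i,j-1}_{m,h} + (\Ib-\Ab_i)\widehat{\wb}_{m,h}^i + \sqrt{2\eta_{m,i}\beta_{m,i}^{-1}}\bepsilon_{m,h}^{i,j}$. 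First I would unroll this over $j = 1,\dots,J_i$ to get $\wb^{i,J_i}_{m,h} = \Ab_i^{J_i}\wb^{i,0}_{m,h} + (\Ib-\Ab_i^{J_i})\widehat{\wb}_{m,h}^i + \sqrt{2\eta_{m,i}\beta_{m,i}^{-1}}\sum_{j=1}^{J_i}\Ab_i^{J_i-j}\bepsilon_{m,h}^{i,j}$, using the telescoping identity $\sum_{j=0}^{J_i-1}\Ab_i^j(\Ib-\Ab_i) = \Ib-\Ab_i^{J_i}$.

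Next, since \Cref{algo:exploration_part_LMC_general_function_class} sets $\wb^{k,0}_{m,h} = \wb^{k-1,J_{k-1}}_{m,h}$ (warm start across episodes), I would chain the per-episode recursions for $i=1,\dots,k$. This yields
\begin{align*}
    \wb^{k,J_k}_{m,h} = \Ab_k^{J_k}\cdots\Ab_1^{J_1}\wb^{1,0}_{m,h} + \sum_{i=1}^k \Ab_k^{J_k}\cdots\Ab_{i+1}^{J_{i+1}}\big(\Ib-\Ab_i^{J_i}\big)\widehat{\wb}_{m,h}^i + \text{(noise terms)},
\end{align*}
which is exactly the claimed mean $\bmu_{m,h}^{k,J_k}$ once we take expectations (the noise terms are zero-mean and independent across $i,j$). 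For the covariance, each injected noise $\bepsilon_{m,h}^{i,j}$ is pushed through the linear maps $\Ab_k^{J_k}\cdots\Ab_{i+1}^{J_{i+1}}\Ab_i^{J_i-j}$, contributing $2\eta_{m,i}\beta_{m,i}^{-1}\,\Ab_k^{J_k}\cdots\Ab_{i+1}^{J_{i+1}}\big(\sum_{j=1}^{J_i}\Ab_i^{J_i-j}\Ab_i^{J_i-j}\big)\Ab_{i+1}^{J_{i+1}}\cdots\Ab_k^{J_k}$ to the total covariance (all $\Ab$ are symmetric, being polynomials in the symmetric $\bLambda_{m,h}^i$, so transposes are themselves). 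The inner sum is a geometric series: $\sum_{j=1}^{J_i}\Ab_i^{2(J_i-j)} = \sum_{t=0}^{J_i-1}\Ab_i^{2t} = (\Ib-\Ab_i^{2J_i})(\Ib-\Ab_i^2)^{-1}$, and factoring $\Ib-\Ab_i^2 = (\Ib-\Ab_i)(\Ib+\Ab_i)$ with $\Ib-\Ab_i = 2\eta_{m,i}\bLambda_{m,h}^i$ converts the $2\eta_{m,i}$ prefactor and this denominator into $(\bLambda_{m,h}^i)^{-1}(\Ib+\Ab_i)^{-1}$, matching the stated $\bSigma_{m,h}^{k,J_k}$.

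The main obstacle is bookkeeping rather than conceptual: one must carefully track the order of the (non-commuting in general, though here all $\Ab_i$ commute only if the $\bLambda_{m,h}^i$ do — which they need not) matrix products when chaining across episodes, and verify the geometric-series manipulation commutes past the outer $\Ab_k^{J_k}\cdots\Ab_{i+1}^{J_{i+1}}$ factors correctly. A clean way is to prove the mean and covariance formulas jointly by induction on $k$: assume $\wb^{k-1,J_{k-1}}_{m,h}\sim\mathcal{N}(\bmu_{m,h}^{k-1,J_{k-1}},\bSigma_{m,h}^{k-1,J_{k-1}})$, then show one more episode's worth of LMC updates (itself an induction on $j$ from $0$ to $J_k$ giving an intermediate affine-Gaussian map) produces the claimed parameters at step $k$, using that an affine map of a Gaussian is Gaussian with the transformed mean and conjugated covariance. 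I would also note explicitly that independence of the noise across $(i,j)$ and from the initialization is what makes the total covariance a clean sum with no cross terms.
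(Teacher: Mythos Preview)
Your proposal is correct and follows essentially the same route as the paper: unroll the LMC recursion within each episode using the telescoping geometric sum, chain across episodes via the warm start $\wb^{k,0}_{m,h}=\wb^{k-1,J_{k-1}}_{m,h}$, and then read off the mean and compute the covariance by summing the independent Gaussian noise contributions with the same factorization $(\Ib-\Ab_i^2)^{-1}=(2\eta_{m,i}\bLambda_{m,h}^i)^{-1}(\Ib+\Ab_i)^{-1}$. The non-commutativity concern you raise is not actually an obstacle, since the geometric-series manipulation only involves powers of a single $\Ab_i$ and the outer $\Ab_k^{J_k}\cdots\Ab_{i+1}^{J_{i+1}}$ factors are simply carried along on both sides without any commutation needed.
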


\begin{lemma}\label{lem:est_para_bound_PHE}
For any $(m,k,h) \in \mathcal{M} \times [K] \times [H]$, the unperturbed estimated parameter $\widehat{\wb}_{m,h}^k$ satisfies
\begin{align*}
    \big\|\widehat{\wb}_{m,h}^k\big\| \leq 2H \sqrt{Mkd/ \lambda}.
\end{align*}
\end{lemma}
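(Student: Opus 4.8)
\textbf{Proof proposal for Lemma~\ref{lem:est_para_bound_PHE}.}

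The plan is to bound $\|\widehat{\wb}_{m,h}^k\|$ directly from the closed-form solution $\widehat{\wb}_{m,h}^k = (\bLambda_{m,h}^k)^{-1} \bb_{m,h}^k$ given in \eqref{equ:closed_form_solution}. First I would recall that $\bLambda_{m,h}^k = \sum_{l=1}^{\mathcal{K}(k)} \bphi(s^l,a^l)\bphi(s^l,a^l)^\top + \lambda \Ib$ and $\bb_{m,h}^k = \sum_{l=1}^{\mathcal{K}(k)} [r_h(s^l,a^l) + V_{m,h+1}^k({s^\prime}^l)] \bphi(s^l,a^l)$. Taking norms and writing $\widehat{\wb}_{m,h}^k = (\bLambda_{m,h}^k)^{-1}\bb_{m,h}^k$, I would split the bound using the standard trick of inserting $(\bLambda_{m,h}^k)^{1/2}(\bLambda_{m,h}^k)^{-1/2}$: namely
\begin{align*}
    \big\|\widehat{\wb}_{m,h}^k\big\| = \big\|(\bLambda_{m,h}^k)^{-1}\bb_{m,h}^k\big\| \leq \sum_{l=1}^{\mathcal{K}(k)} \big|r_h^l + V_{m,h+1}^k({s^\prime}^l)\big| \cdot \big\|(\bLambda_{m,h}^k)^{-1}\bphi^l\big\|.
\end{align*}

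Next I would control each of the two factors. For the reward-plus-value term, since $r_h \in [0,1]$ and $V_{m,h+1}^k$ is the truncated value function bounded by $H$ (by the truncation in Line~\ref{line:truncate_lmc} of \Cref{algo:exploration_part_LMC_general_function_class}), we have $|r_h^l + V_{m,h+1}^k({s^\prime}^l)| \leq H+1 \leq 2H$. For the second factor, I would use Cauchy--Schwarz in the $(\bLambda_{m,h}^k)^{-1}$-weighted inner product: $\|(\bLambda_{m,h}^k)^{-1}\bphi^l\| = \|(\bLambda_{m,h}^k)^{-1/2} (\bLambda_{m,h}^k)^{-1/2}\bphi^l\| \leq \lambda_{\min}(\bLambda_{m,h}^k)^{-1/2} \cdot \|\bphi^l\|_{(\bLambda_{m,h}^k)^{-1}} \leq \lambda^{-1/2}\|\bphi^l\|_{(\bLambda_{m,h}^k)^{-1}}$, using $\bLambda_{m,h}^k \succeq \lambda\Ib$. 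Then
\begin{align*}
    \big\|\widehat{\wb}_{m,h}^k\big\| \leq \frac{2H}{\sqrt{\lambda}} \sum_{l=1}^{\mathcal{K}(k)} \big\|\bphi^l\big\|_{(\bLambda_{m,h}^k)^{-1}} \leq \frac{2H}{\sqrt{\lambda}}\sqrt{\mathcal{K}(k)}\sqrt{\sum_{l=1}^{\mathcal{K}(k)} \big\|\bphi^l\big\|_{(\bLambda_{m,h}^k)^{-1}}^2},
\end{align*}
where the last step is Cauchy--Schwarz over the sum of $\mathcal{K}(k)$ terms. The point is that $\sum_{l=1}^{\mathcal{K}(k)} \|\bphi^l\|_{(\bLambda_{m,h}^k)^{-1}}^2 = \operatorname{tr}\big((\bLambda_{m,h}^k)^{-1}\sum_l \bphi^l(\bphi^l)^\top\big) \leq \operatorname{tr}\big((\bLambda_{m,h}^k)^{-1}\bLambda_{m,h}^k\big) = d$, which together with $\mathcal{K}(k) \leq Mk$ (the dataset $U_{m,h}(k)$ contains at most $Mk$ transition tuples, one per agent per episode up through episode $k$, as seen from the one-to-one mapping \eqref{def:one-to-one_mapping}) gives $\|\widehat{\wb}_{m,h}^k\| \leq 2H\sqrt{Mkd/\lambda}$, as claimed.

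I do not anticipate a genuine obstacle here; this is a routine bound on a ridge-regression estimator with bounded responses. The only point requiring a little care is confirming the counting $\mathcal{K}(k) = |U_{m,h}(k)| \leq Mk$ — one must check that the re-indexing in \eqref{def:one-to-one_mapping} does not double-count transitions, which follows because the server dataset $U_h^{\text{ser}}(k)$ holds data through episode $k_s$ across all $M$ agents ($M k_s$ tuples) while the local dataset $U_{m,h}^{\text{loc}}(k)$ holds agent $m$'s data from episodes $k_s+1$ through $k-1$ (at most $k-1-k_s$ tuples), so the total is at most $M k_s + (k - 1 - k_s) \leq Mk$. An alternative and perhaps cleaner route avoiding the Cauchy--Schwarz splitting is to bound $\|\widehat{\wb}_{m,h}^k\|$ as the minimizer of the ridge loss: since $\widehat{\wb}_{m,h}^k$ minimizes $L_{m,h}^k(\wb) = \sum_l (r_h^l + V_{m,h+1}^k({s^\prime}^l) - \wb^\top\bphi^l)^2 + \lambda\|\wb\|^2$, comparing with $\wb = \zero$ gives $\lambda\|\widehat{\wb}_{m,h}^k\|^2 \leq L_{m,h}^k(\widehat{\wb}_{m,h}^k) \leq L_{m,h}^k(\zero) = \sum_l (r_h^l + V_{m,h+1}^k({s^\prime}^l))^2 \leq \mathcal{K}(k)(2H)^2 \leq 4H^2 Mk$, hence $\|\widehat{\wb}_{m,h}^k\| \leq 2H\sqrt{Mk/\lambda}$ — actually a slightly sharper bound, so one should double-check which constant and which $\sqrt{d}$ factor the downstream lemmas actually need and present whichever argument matches the stated bound.
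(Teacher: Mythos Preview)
Your proposal is correct and follows essentially the same route as the paper: the paper also starts from $\widehat{\wb}_{m,h}^k = (\bLambda_{m,h}^k)^{-1}\bb_{m,h}^k$, invokes \Cref{lem:inequal_eigen_matrix} (which is exactly your inline ``insert $(\bLambda)^{-1/2}$ and apply Cauchy--Schwarz'' step), bounds the responses by $2H$, uses \Cref{lem:inequal_kappa} for $\sum_l \|\bphi^l\|_{(\bLambda_{m,h}^k)^{-1}}^2 \leq d$, and finishes with $\mathcal{K}(k) = (M-1)k_s + k - 1 \leq Mk$. Your alternative ridge-loss comparison with $\wb = \zero$ is indeed sharper (it drops the $\sqrt{d}$), but the paper states and uses the $\sqrt{d}$ version, so for consistency with the downstream lemmas present the first argument.
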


\begin{lemma} \label{lem:q_bound}
Let $\lambda = 1$ in \Cref{algo:exploration_part_LMC_general_function_class}. For any fixed $0<\delta<1$, with probability at least $1-\delta^2$, for any $(m, k, h) \in \mathcal{M} \times [K] \times [H]$ and for any $(s, a) \in \mathcal{S} \times \mathcal{A} $, we have
\begin{align*}
    \Big|\bphi(s,a)^\top \wb_{m,h}^{k, J_k} -\bphi(s,a)^\top \widehat{\wb}_{m,h}^{k}\Big| \leq \Bigg(5\sqrt{\frac{2d \log (1/\delta)}{3\beta_K}} + \frac{4}{3}\Bigg)\|\bphi(s,a)\|_{(\bLambda_{m,h}^k)^{-1}}.
\end{align*}
\end{lemma}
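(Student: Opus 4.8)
The plan is to control the deviation $\bphi(s,a)^\top\wb_{m,h}^{k,J_k}-\bphi(s,a)^\top\widehat\wb_{m,h}^k$ by splitting it into (i) the bias from running only $J_k$ Langevin steps rather than reaching the stationary distribution, and (ii) the fluctuation of the Gaussian-distributed iterate around its mean. Using \Cref{pro:gauss_param}, the iterate $\wb_{m,h}^{k,J_k}$ is exactly $\mathcal{N}\big(\bmu_{m,h}^{k,J_k},\bSigma_{m,h}^{k,J_k}\big)$, so I would write $\bphi^\top\wb_{m,h}^{k,J_k}-\bphi^\top\widehat\wb_{m,h}^k = \big(\bphi^\top\bmu_{m,h}^{k,J_k}-\bphi^\top\widehat\wb_{m,h}^k\big) + \bphi^\top\big(\wb_{m,h}^{k,J_k}-\bmu_{m,h}^{k,J_k}\big)$ and bound the two pieces separately, then combine with a union bound over $(m,k,h)$ and over an $\varepsilon$-net of $(s,a)$ at the end (the latter so the statement holds for all $(s,a)\in\mathcal{S}\times\mathcal{A}$, per the covering remark in \cref{rmk:problem_fix_comment}).

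For the bias term, the recursion $\Ab_i=\Ib-2\eta_{m,i}\bLambda_{m,h}^i$ has spectral norm $\|\Ab_i\|\le 1-2\eta_{m,i}\lambda_{\min}(\bLambda_{m,h}^i)$, and with the chosen $\eta_{m,k}=1/(4\lambda_{\max}(\bLambda_{m,h}^k))$ and $J_k=2\kappa_k\log(4HKMd)$ one gets $\|\Ab_k^{J_k}\|\le (1-1/(2\kappa_k))^{J_k}\le 1/(4HKMd)$, i.e.\ the contraction is strong enough to kill the initialization/earlier-epoch terms. Plugging this into the formula for $\bmu_{m,h}^{k,J_k}$, the difference $\bmu_{m,h}^{k,J_k}-\widehat\wb_{m,h}^k$ telescopes (since $\sum_i \Ab_k^{J_k}\cdots\Ab_{i+1}^{J_{i+1}}(\Ib-\Ab_i^{J_i}) = \Ib - \Ab_k^{J_k}\cdots\Ab_1^{J_1}$ would be exact if all $\widehat\wb_{m,h}^i$ were equal), so the residual is a sum of contracted differences $\widehat\wb_{m,h}^i-\widehat\wb_{m,h}^{i+1}$; using \Cref{lem:est_para_bound_PHE} to bound $\|\widehat\wb_{m,h}^i\|\le 2H\sqrt{Mkd}$ and the geometric decay, I expect this contributes the $\frac{4}{3}\|\bphi\|_{(\bLambda_{m,h}^k)^{-1}}$-type term (after relating $\|\bphi\|_2$ to $\|\bphi\|_{(\bLambda_{m,h}^k)^{-1}}$ via $\lambda_{\min}$).

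For the fluctuation term, $\bphi^\top(\wb_{m,h}^{k,J_k}-\bmu_{m,h}^{k,J_k})$ is a centered Gaussian with variance $\bphi^\top\bSigma_{m,h}^{k,J_k}\bphi$; I would upper bound $\bSigma_{m,h}^{k,J_k}\preceq \frac{C}{\beta_K}(\bLambda_{m,h}^k)^{-1}$ for an absolute constant $C$, by noting each summand is of the form $\frac{1}{\beta_{m,i}}\Ab_k^{J_k}\cdots(\Ib-\Ab_i^{2J_i})(\bLambda_{m,h}^i)^{-1}(\Ib+\Ab_i)^{-1}\cdots\Ab_k^{J_k}$ with $\Ib-\Ab_i^{2J_i}\preceq\Ib$, $(\Ib+\Ab_i)^{-1}\preceq\Ib$, and the products of $\Ab$'s being contractions, so the whole covariance is dominated by $(\bLambda_{m,h}^k)^{-1}$ times a geometric-series constant and $1/\beta_K$ (using monotonicity of $\beta_{m,k}$ and that later-epoch covariance matrices dominate). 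Then a standard Gaussian tail bound gives $|\bphi^\top(\wb-\bmu)|\le \sqrt{2\bphi^\top\bSigma\bphi\,\log(1/\delta)}\le 5\sqrt{\tfrac{2d\log(1/\delta)}{3\beta_K}}\|\bphi\|_{(\bLambda_{m,h}^k)^{-1}}$ after the covering-number inflation of $\log(1/\delta)$ to absorb the net cardinality and the $\mathcal{M}\times[K]\times[H]$ union bound into the $\widetilde{\mathcal{O}}$ constants, yielding probability $1-\delta^2$. The main obstacle I anticipate is the careful operator-norm bookkeeping on the matrix products in $\bSigma_{m,h}^{k,J_k}$ and $\bmu_{m,h}^{k,J_k}$ — in particular justifying that the non-commuting products $\Ab_k^{J_k}\cdots\Ab_{i+1}^{J_{i+1}}$ interact cleanly with $(\bLambda_{m,h}^i)^{-1}$ versus $(\bLambda_{m,h}^k)^{-1}$, since the $\bLambda_{m,h}^i$ are nested and increasing but the $\Ab_i$ depend on $i$; handling this likely requires exploiting that $\bLambda_{m,h}^i\preceq\bLambda_{m,h}^k$ for $i\le k$ together with the uniform contraction factor, rather than any simultaneous diagonalization.
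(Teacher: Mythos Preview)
Your decomposition into bias $\bphi^\top(\bmu_{m,h}^{k,J_k}-\widehat\wb_{m,h}^k)$ and fluctuation $\bphi^\top(\wb_{m,h}^{k,J_k}-\bmu_{m,h}^{k,J_k})$ is exactly what the paper does, and your treatment of the bias (telescoping into differences $\widehat\wb_{m,h}^i-\widehat\wb_{m,h}^{i+1}$, geometric decay from $\|\Ab_j^{J_j}\|\le\epsilon$, and \Cref{lem:est_para_bound_PHE}) matches the paper's $I_2$ argument.

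The fluctuation term is where you diverge. You propose a scalar Gaussian tail for fixed $\bphi$ followed by an $\varepsilon$-net over $(s,a)$. The paper instead applies a single $d$-dimensional concentration to the whitened vector, $\mathbb{P}\big(\|(\bSigma_{m,h}^{k,J_k})^{-1/2}(\wb_{m,h}^{k,J_k}-\bmu_{m,h}^{k,J_k})\|\ge 2\sqrt{d\log(1/\delta)}\big)\le\delta^2$, and then Cauchy--Schwarz gives $|\bphi^\top(\wb-\bmu)|\le\|\bphi\|_{\bSigma}\,\|\wb-\bmu\|_{\bSigma^{-1}}$ uniformly over every $\bphi$ from this one event---no covering, no Lipschitz bridging, and the $\sqrt{d}$ comes from the Gaussian dimension rather than a net size. (The covering you cite from \Cref{rmk:problem_fix_comment} is used only in the \emph{Optimism} lemma, where an anti-concentration lower bound must hold for all $\bphi$; it is not needed here.) Also, the lemma is for fixed $(m,k,h)$; the union over $\mathcal{M}\times[K]\times[H]$ happens downstream.

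Your covariance sketch has a sign issue: ``later-epoch covariance matrices dominate'' means $\bLambda_{m,h}^i\preceq\bLambda_{m,h}^k$, hence $(\bLambda_{m,h}^i)^{-1}\succeq(\bLambda_{m,h}^k)^{-1}$, which is the \emph{wrong} direction for upper-bounding the $i$-th summand. The paper sidesteps this by telescoping: writing $(\Ib-\Ab_i^{2J_i})(\bLambda_{m,h}^i)^{-1}=(\bLambda_{m,h}^i)^{-1}-\Ab_i^{J_i}(\bLambda_{m,h}^i)^{-1}\Ab_i^{J_i}$ and regrouping over $i$ yields exactly $(\bLambda_{m,h}^k)^{-1}$ minus a negligible initial-epoch term, plus a sum of consecutive differences $(\bLambda_{m,h}^i)^{-1}-(\bLambda_{m,h}^{i+1})^{-1}$, each of which is handled by the Woodbury identity (rank $\le M$) and then killed by the contraction factors. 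Your crude operator-norm route can also be made to close---the $i<k$ summands have operator norm $\le \epsilon^{2(k-i)}/\beta_K$, and converting $\|\bphi\|^2\le(MK{+}1)\|\bphi\|_{(\bLambda_{m,h}^k)^{-1}}^2$ costs a factor that $\epsilon^2=1/(4HMKd)^2$ swallows---but that is not the argument you wrote, and it is the paper's telescoping/Woodbury that produces the specific constants $5$ and $4/3$.
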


\begin{lemma}\label{lem:weight_bound_LMC}
Let $\lambda = 1$ in \Cref{algo:exploration_part_LMC_general_function_class}. For any fixed $0<\delta<1$, with probability at least $1-\delta$, for any $(m, k, h) \in \mathcal{M} \times [K] \times [H]$, we have     
\begin{align*}
    \big\|\wb_{m,h}^{k, J_k}\big\| \leq \frac{16}{3} H d \sqrt{M K}+\sqrt{\frac{2 K}{3 \beta_K \delta}} d^{3 / 2}\overset{\text{def}}{=}B_{\delta},
\end{align*}
\end{lemma}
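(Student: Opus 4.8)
\textbf{Proof plan for Lemma~\ref{lem:weight_bound_LMC}.} The goal is to bound $\|\wb_{m,h}^{k,J_k}\|$ with high probability. By Proposition~\ref{pro:gauss_param}, $\wb_{m,h}^{k,J_k}$ is Gaussian with mean $\bmu_{m,h}^{k,J_k}$ and covariance $\bSigma_{m,h}^{k,J_k}$, so I would split the bound into the deterministic mean part and a stochastic fluctuation part: $\|\wb_{m,h}^{k,J_k}\| \le \|\bmu_{m,h}^{k,J_k}\| + \|\wb_{m,h}^{k,J_k}-\bmu_{m,h}^{k,J_k}\|$. For the mean, I would use the expression $\bmu_{m,h}^{k,J_k} = \Ab_k^{J_k}\cdots\Ab_1^{J_1}\wb_{m,h}^{1,0} + \sum_{i=1}^k \Ab_k^{J_k}\cdots\Ab_{i+1}^{J_{i+1}}(\Ib-\Ab_i^{J_i})\widehat{\wb}_{m,h}^i$, observe that with the chosen learning rate $\eta_{m,i}=1/(4\lambda_{\max}(\bLambda_{m,h}^i))$ each $\Ab_i = \Ib - 2\eta_{m,i}\bLambda_{m,h}^i$ satisfies $\zero \preceq \Ab_i \preceq (1-1/(2\kappa_i))\Ib \prec \Ib$ in operator norm, so all the product operators are contractions; combined with Lemma~\ref{lem:est_para_bound_PHE}, which gives $\|\widehat{\wb}_{m,h}^i\|\le 2H\sqrt{Mid/\lambda}\le 2H\sqrt{MKd}$ (using $\lambda=1$), and the telescoping/triangle-inequality bound $\sum_{i=1}^k \|\Ab_k^{J_k}\cdots\Ab_{i+1}^{J_{i+1}}(\Ib-\Ab_i^{J_i})\| \le$ a constant times something like $d$ (this is where a factor from summing the ``mass'' of $\Ib - \Ab_i^{J_i}$ telescopes, possibly picking up the extra $\sqrt{d}$), I would obtain a mean-part bound of order $Hd\sqrt{MK}$, matching the first term $\tfrac{16}{3}Hd\sqrt{MK}$.

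For the fluctuation part, $\wb_{m,h}^{k,J_k}-\bmu_{m,h}^{k,J_k}$ is a mean-zero Gaussian vector in $\RR^d$ with covariance $\bSigma_{m,h}^{k,J_k}$, so $\|\wb_{m,h}^{k,J_k}-\bmu_{m,h}^{k,J_k}\|^2$ is (stochastically dominated by) $\operatorname{tr}(\bSigma_{m,h}^{k,J_k})$ times a chi-square-type quantity; a crude Markov/Chebyshev bound gives that with probability at least $1-\delta$ (after a union bound over the $MKH$ triples, absorbing the resulting $\log$ factors or, since the statement uses $\sqrt{2K/(3\beta_K\delta)}$ rather than $\log(1/\delta)$, just a direct Markov inequality on $\EE\|\cdot\|^2$), we have $\|\wb_{m,h}^{k,J_k}-\bmu_{m,h}^{k,J_k}\| \lesssim \sqrt{\operatorname{tr}(\bSigma_{m,h}^{k,J_k})/\delta}$. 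It then remains to bound $\operatorname{tr}(\bSigma_{m,h}^{k,J_k})$: from the formula $\bSigma_{m,h}^{k,J_k}=\sum_{i=1}^k \beta_{m,i}^{-1}\Ab_k^{J_k}\cdots\Ab_{i+1}^{J_{i+1}}(\Ib-\Ab_i^{2J_i})(\bLambda_{m,h}^i)^{-1}(\Ib+\Ab_i)^{-1}\Ab_{i+1}^{J_{i+1}}\cdots\Ab_k^{J_k}$, I would bound each summand in operator norm using $\Ab_i\preceq\Ib$, $(\Ib+\Ab_i)^{-1}\preceq\Ib$, $\|(\bLambda_{m,h}^i)^{-1}\|\le 1/\lambda = 1$, and $\|\Ib-\Ab_i^{2J_i}\|\le 1$, so $\operatorname{tr}(\bSigma_{m,h}^{k,J_k}) \le \sum_{i=1}^k \beta_{m,i}^{-1}\cdot d \cdot (\text{contraction factors}) \le \beta_K^{-1} d \cdot k \cdot (\text{something})$; being slightly more careful with the geometric decay of the contractions and keeping the $d^{3/2}$ in the final bound suggests tracking the trace more tightly to get $\operatorname{tr}(\bSigma_{m,h}^{k,J_k})\lesssim \beta_K^{-1} d^2 K$, which after dividing by $\delta$ and square-rooting yields $\sqrt{2K/(3\beta_K\delta)}\,d^{3/2}$.

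Finally I would combine the two parts and take a union bound over $(m,k,h)\in\mathcal{M}\times[K]\times[H]$, noting that the ``high probability'' event in Lemma~\ref{lem:q_bound} (probability $\ge 1-\delta^2$) and any sub-Gaussian/chi-square concentration used here can be made to hold simultaneously with probability $\ge 1-\delta$ by adjusting constants and absorbing the $MKH$ factor into logarithms (or by the Markov-based version that already has the $1/\delta$ in the bound), giving $\|\wb_{m,h}^{k,J_k}\| \le \tfrac{16}{3}Hd\sqrt{MK} + \sqrt{2K/(3\beta_K\delta)}\,d^{3/2} =: B_\delta$.

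\textbf{Main obstacle.} The delicate point is getting the exact powers of $d$ right. The naive operator-norm bounds give $\operatorname{tr}(\bSigma)\lesssim \beta_K^{-1}dK$ (hence only $d^{1/2}K^{1/2}$ in the fluctuation term), whereas the claimed bound has $d^{3/2}$; obtaining the sharper $d^{3/2}$ requires either a careful trace computation that does not throw away the spectral structure of $(\bLambda_{m,h}^i)^{-1}$ together with the contraction operators, or a union bound over a $d$-dimensional covering of the Gaussian that inflates the bound by $\sqrt d$, or tracking $\|\wb_{m,h}^{k,J_k}-\bmu_{m,h}^{k,J_k}\| \le \sqrt{\|\bSigma_{m,h}^{k,J_k}\|_{\mathrm{op}}}\cdot\|\text{standard Gaussian in }\RR^d\|$ and bounding the Gaussian norm by $\sqrt{d}$ plus a concentration term. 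Similarly, the mean-part analysis must confirm that the telescoping sum $\sum_i \|\Ab_k^{J_k}\cdots(\Ib-\Ab_i^{J_i})\|\|\widehat{\wb}_{m,h}^i\|$ does not blow up with $k$ — here the geometric decay induced by $J_k = 2\kappa_k\log(4HKMd)$ is exactly what keeps it bounded, so verifying that $\|\Ab_i^{J_i}\|\le (4HKMd)^{-1}$ (or a similar smallness) and that the residual $\Ib-\Ab_i^{J_i}$ terms sum to $O(1)$ or $O(\sqrt d)$ is the key quantitative check.
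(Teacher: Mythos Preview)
Your plan is correct and follows the same decomposition as the paper: split $\|\wb_{m,h}^{k,J_k}\|\le\|\bmu_{m,h}^{k,J_k}\|+\|\wb_{m,h}^{k,J_k}-\bmu_{m,h}^{k,J_k}\|$, bound the mean via the contraction $\|\Ab_i^{J_i}\|_2\le\epsilon$ (with $\epsilon=1/(4HMKd)$) together with Lemma~\ref{lem:est_para_bound_PHE}, and bound the fluctuation via the Markov-type Gaussian concentration $\mathbb{P}(\|\bxi\|\le\sqrt{\operatorname{tr}(\bSigma)/\delta})\ge 1-\delta$ (Lemma~\ref{lem:gaussian_concentration_property_trace}).

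Your ``main obstacle'' is not an obstacle. Your operator-norm bounds actually give \emph{tighter} powers of $d$ than the stated $B_\delta$: using $\|X\wb\|\le\|X\|_2\|\wb\|$ on the mean gives $\|\bmu_{m,h}^{k,J_k}\|\le\tfrac{16}{3}H\sqrt{d}\sqrt{MK}$ (not $Hd\sqrt{MK}$), and using $\operatorname{tr}(\bSigma)\le d\,\|\bSigma\|_{\mathrm{op}}\le \tfrac{2}{3}\beta_K^{-1}dK$ gives the fluctuation term $d^{1/2}\sqrt{2K/(3\beta_K\delta)}$ (not $d^{3/2}$). Either bound is $\le B_\delta$, so the lemma follows. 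The paper obtains the looser stated constants by (i) passing through the Frobenius norm $\|X\|_F\le\sqrt{d}\,\|X\|_2$ when bounding the mean, and (ii) bounding $\operatorname{tr}(\bSigma_{m,h}^{k,J_k})$ via the Horn--Johnson inequality $\operatorname{tr}(AB)\le\operatorname{tr}(A)\operatorname{tr}(B)$ for PSD matrices (Lemma~\ref{lem:Horn and Johnson}), which turns each of the four matrix factors in $\bSigma$ into a separate trace bounded by $d$, yielding $\operatorname{tr}(\bSigma)\le\tfrac{2}{3}\beta_K^{-1}Kd^3$.

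One small correction: do not union-bound over $(m,k,h)$ here. The lemma is stated and proved pointwise (for any fixed triple); the union bound over $\mathcal{M}\times[K]\times[H]$ and over $n\in[N]$ is performed downstream in Lemma~\ref{lem:est_error}, which is why $B_{\delta/2NMHK}$ appears there.
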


\begin{lemma}\label{lem:est_error}
Let $\lambda=1$ in \Cref{algo:exploration_part_LMC_general_function_class}. For any fixed $0<\delta<1$, with probability at least $1-\delta$, for all $(m, k, h) \in \mathcal{M} \times [K] \times [H]$, we have
\begin{align*}
    \Bigg\|\sum_{(s^l,a^l,{s^\prime}^l) \in U_{m,h}(k)} \bphi\big(s^l,a^l\big)\big[\big(V_{m, h+1}^k-\mathbb{P}_h V_{m, h+1}^k\big)\big(s^l,a^l\big)\big]\Bigg\|_{(\bLambda_{m,h}^k)^{-1}} \leq 3 H \sqrt{d} C_\delta,
\end{align*}
where $C_\delta=\Big[\frac{1}{2} \log (K+1)+\log \Big(\frac{2 \sqrt{2} K B_{\delta/2NMHK}}{H}\Big)+\log \frac{3}{\delta}\Big]^{1 / 2}$ and $B_{\delta}$ is defined in \Cref{lem:weight_bound_LMC}. 
\end{lemma}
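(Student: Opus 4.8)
\textbf{Proof plan for Lemma~\ref{lem:est_error}.} The goal is to bound, uniformly over all $(m,k,h)$, a self-normalized sum of the form $\big\|\sum_{l} \bphi^l\big[(V_{m,h+1}^k - \PP_h V_{m,h+1}^k)(s^l,a^l)\big]\big\|_{(\bLambda_{m,h}^k)^{-1}}$. The plan is to use the standard self-normalized concentration machinery (a Hoeffding- or Azuma-type inequality for vector-valued martingales, i.e.\ \Cref{lem:Determinant-Trace}-style and the self-normalized bound of Abbasi-Yadkori et al.), combined with a covering argument over the class of value functions $V_{m,h+1}^k$ that can be produced by the algorithm, since $V_{m,h+1}^k$ is itself data-dependent and therefore not $\mathcal{F}$-measurable at the right time. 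First I would fix a single function $V$ and observe that, conditioned on the appropriate filtration $\mathcal{F}_{m,k,h}$ (\Cref{def:filtration}), the quantities $(V - \PP_h V)(s^l,a^l)$ form a martingale difference sequence bounded by $H$ in absolute value; applying the self-normalized concentration inequality gives a bound of order $H\sqrt{d\log(\cdot)}$ with the right probability.

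Next I would construct an $\varepsilon$-net over the relevant class of value functions. By \Cref{lem:weight_bound_LMC}, every parameter $\wb_{m,h}^{k,J_k}$ produced across all $(m,k,h)$ and all $N$ multi-samples lies in a Euclidean ball of radius $B_{\delta/2NMHK}$ with high probability, and after truncation (Line~\ref{line:truncate_lmc}) $V_{m,h+1}^k$ has the form $\min\{\max_n \bphi^\top \wb^n, H-h+1\}^+$ with $\|\wb^n\|\le B_{\delta/2NMHK}$; this is a function class of bounded ``effective dimension'' controlled by $d$ and the radius, so its $\varepsilon$-covering number in the $\sup$-norm is $\log \mathcal{N}_\varepsilon = \widetilde{\mathcal{O}}\big(d\log(B_{\delta/2NMHK}/\varepsilon)\big)$. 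I would apply the single-function bound to each element of the net, union bound over the net (and over the $MKH$ choices of $(m,k,h)$, absorbing this into the $\log(3/\delta)$ and $\log$-of-$B$ terms), and then pay the discretization error: since $V_{m,h+1}^k$ is close to some net element $\bar V$ in $\sup$-norm, the difference in the self-normalized sums is at most $\sqrt{\cK(k)}\cdot\varepsilon\cdot\|\bLambda_{m,h}^k{}^{-1}\|^{1/2}$-ish, which is made negligible by choosing $\varepsilon = 1/K$ (hence the $\log(K+1)$ and $\log(2\sqrt2 K B_{\delta/2NMHK}/H)$ terms appearing in $C_\delta$). Collecting the martingale bound $H\sqrt{d\,C_\delta^2}$ plus the lower-order discretization and regularization contributions yields the stated $3H\sqrt{d}\,C_\delta$.

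The main obstacle I anticipate is handling the data-dependence of $V_{m,h+1}^k$ correctly in the multi-agent/synchronized setting: the index set $U_{m,h}(k)$ mixes server data (shared, collected up to $k_s$) with agent~$m$'s local data (collected since $k_s$), and via the one-to-one mapping \eqref{def:one-to-one_mapping} the summands are reindexed, so I must verify that the reordered sequence is still adapted to a valid filtration and that each $(V_{m,h+1}^k - \PP_h V_{m,h+1}^k)(s^l,a^l)$ term is a genuine martingale difference with respect to it — in particular that $V_{m,h+1}^k$ (built from data strictly earlier in the backward-in-$h$, forward-in-$(k,m)$ ordering) is measurable at the point where the increment $s^l \mapsto s'^l$ is revealed. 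The covering-number bound also needs the radius $B_{\delta/2NMHK}$ from \Cref{lem:weight_bound_LMC} to hold on the same high-probability event, so I would intersect the two events and rescale $\delta$ accordingly. Everything else — the self-normalized inequality, the determinant-trace lemma for the $\log\det$ term, and the truncation bound $|V|\le H$ — is routine once the filtration bookkeeping is pinned down.
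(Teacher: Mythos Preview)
Your proposal is correct and follows essentially the same approach as the paper: the paper invokes the self-normalized concentration lemma of \citet{jin2020provably} (their Lemma~D.4, reproduced here as \Cref{lem:jin_D.4}) together with the covering-number bound \Cref{lem:covering_num} for the truncated value-function class, uses the weight bound $B_{\delta/2NMHK}$ from \Cref{lem:weight_bound_LMC} (after a union over $n,m,k,h$) to control the net's radius, and then picks $\varepsilon = H/(2\sqrt{2}k)$ so the discretization term becomes exactly $H$, yielding the $3H\sqrt{d}\,C_\delta$ bound. Your filtration/measurability concern in the multi-agent setting is legitimate bookkeeping that the paper handles only implicitly by direct appeal to \Cref{lem:jin_D.4}.
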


\begin{lemma}\label{lem_event}
Let $\lambda = 1$ in \Cref{algo:exploration_part_LMC_general_function_class}. Under \Cref{def:linear_mdp}, for any fixed $0<\delta<1$, with probability at least $1-\delta$, for all $(m, k, h) \in \mathcal{M} \times [K] \times [H]$ and for any $(s,a) \in \mathcal{S} \times \mathcal{A}$, we have
\begin{align*}
    \Big|\bphi(s,a)^{\top}\widehat{\wb}_{m,h}^k - r_h(s,a) - \mathbb{P}_h V_{m,h+1}^k(s,a)\Big| \leq 5 H \sqrt{d} C_\delta\|\bphi(s, a)\|_{(\bLambda_{m, h}^k)^{-1}}.
\end{align*}
\end{lemma}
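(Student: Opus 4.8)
\textbf{Proof proposal for Lemma~\ref{lem_event}.}
The plan is to decompose $\bphi(s,a)^{\top}\widehat{\wb}_{m,h}^k - r_h(s,a) - \mathbb{P}_h V_{m,h+1}^k(s,a)$ using the linear MDP structure and the closed-form expression \eqref{equ:closed_form_solution} for $\widehat{\wb}_{m,h}^k$. Under \Cref{def:linear_mdp}, both $r_h(s,a)=\bphi(s,a)^\top\btheta_h$ and $\mathbb{P}_h V_{m,h+1}^k(s,a) = \bphi(s,a)^\top \int V_{m,h+1}^k(s')\,\mathrm{d}\bmu_h(s') =: \bphi(s,a)^\top \wb_{m,h}^{k,*}$ are linear in $\bphi(s,a)$, so the quantity to bound is $\bphi(s,a)^\top\big(\widehat{\wb}_{m,h}^k - \btheta_h - \wb_{m,h}^{k,*}\big)$. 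Plugging in $\widehat{\wb}_{m,h}^k = (\bLambda_{m,h}^k)^{-1}\bb_{m,h}^k$ and expanding $\bb_{m,h}^k = \sum_l \bphi^l\big(r_h^l + V_{m,h+1}^k({s'}^l)\big)$, I would rewrite $r_h^l + V_{m,h+1}^k({s'}^l) = \bphi^{l\top}(\btheta_h+\wb_{m,h}^{k,*}) + \big(V_{m,h+1}^k({s'}^l) - \mathbb{P}_h V_{m,h+1}^k(s^l,a^l)\big)$. This yields the standard three-term split: $\widehat{\wb}_{m,h}^k - \btheta_h - \wb_{m,h}^{k,*} = -\lambda(\bLambda_{m,h}^k)^{-1}(\btheta_h+\wb_{m,h}^{k,*}) + (\bLambda_{m,h}^k)^{-1}\sum_l \bphi^l\big(V_{m,h+1}^k({s'}^l) - \mathbb{P}_h V_{m,h+1}^k(s^l,a^l)\big)$.

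Next I would apply Cauchy--Schwarz in the $(\bLambda_{m,h}^k)^{-1}$ norm to each of the two terms after left-multiplying by $\bphi(s,a)^\top$. For the regularization term, $\big|\bphi(s,a)^\top\lambda(\bLambda_{m,h}^k)^{-1}(\btheta_h+\wb_{m,h}^{k,*})\big| \le \sqrt{\lambda}\,\|\btheta_h+\wb_{m,h}^{k,*}\|\,\|\bphi(s,a)\|_{(\bLambda_{m,h}^k)^{-1}}$; since $\|\btheta_h\|\le\sqrt d$ and $\|\wb_{m,h}^{k,*}\|\le H\sqrt d$ (as $V_{m,h+1}^k$ is bounded by $H$ and $\|\bmu_h(\mathcal S)\|\le\sqrt d$), this contributes at most $\mathcal{O}(H\sqrt d)\|\bphi(s,a)\|_{(\bLambda_{m,h}^k)^{-1}}$ when $\lambda=1$. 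For the stochastic term, $\big|\bphi(s,a)^\top(\bLambda_{m,h}^k)^{-1}\sum_l\bphi^l(\cdots)\big| \le \big\|\sum_l\bphi^l(V_{m,h+1}^k-\mathbb{P}_hV_{m,h+1}^k)(s^l,a^l)\big\|_{(\bLambda_{m,h}^k)^{-1}}\|\bphi(s,a)\|_{(\bLambda_{m,h}^k)^{-1}}$, and \Cref{lem:est_error} bounds the first factor by $3H\sqrt d\,C_\delta$ with probability at least $1-\delta$. Combining the two pieces and absorbing the $\mathcal{O}(H\sqrt d)$ constant into $C_\delta$ (which grows like $\sqrt{\log(\cdot)}$ and hence dominates any absolute constant times $H\sqrt d$ once the log factors are accounted for) gives the claimed bound $5H\sqrt d\, C_\delta\|\bphi(s,a)\|_{(\bLambda_{m,h}^k)^{-1}}$.

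The main obstacle is the uniformity over \emph{all} $(s,a)\in\mathcal S\times\mathcal A$, not merely over the state-action pairs visited by the algorithm. The difficulty is that $V_{m,h+1}^k$ is itself data-dependent (it is constructed from $\wb^{k,J_k,\cdot}_{m,h+1}$), so the term $\sum_l\bphi^l(V_{m,h+1}^k-\mathbb{P}_hV_{m,h+1}^k)(s^l,a^l)$ cannot be controlled by a naive martingale concentration argument for a fixed function; this is exactly where \Cref{lem:est_error} does the heavy lifting via an $\varepsilon$-covering over the class of possible value functions, using the weight bound $B_\delta$ from \Cref{lem:weight_bound_LMC} to bound the covering number. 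Once \Cref{lem:est_error} is invoked, the remaining steps are deterministic bounds that hold pointwise in $(s,a)$ because the only $(s,a)$-dependence left is through the linear functionals $\bphi(s,a)^\top(\cdot)$ and the self-normalized norm $\|\bphi(s,a)\|_{(\bLambda_{m,h}^k)^{-1}}$, so no additional covering of the $(s,a)$ space is needed at this stage. A minor point to handle carefully is the union bound over $(m,k,h)\in\mathcal M\times[K]\times[H]$, which is already baked into the failure probability $\delta$ of \Cref{lem:est_error} and contributes only logarithmic factors inside $C_\delta$.
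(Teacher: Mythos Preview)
Your proposal is correct and follows essentially the same approach as the paper's proof: the paper splits into three terms (the martingale term bounded via \Cref{lem:est_error}, a reward term reducing to $-\lambda\bphi^\top(\bLambda_{m,h}^k)^{-1}\btheta_h$, and a regularization term $-\lambda\bphi^\top(\bLambda_{m,h}^k)^{-1}\langle\bmu_h,V_{m,h+1}^k\rangle$), whereas you combine the latter two into a single $-\lambda(\bLambda_{m,h}^k)^{-1}(\btheta_h+\wb_{m,h}^{k,*})$ term, but the bounds and the final combination $3H\sqrt d\,C_\delta + \sqrt d + H\sqrt d \le 5H\sqrt d\,C_\delta$ (using $C_\delta\ge 1$) are identical. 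Your observation that uniformity over $(s,a)$ comes for free once \Cref{lem:est_error} is invoked, and that the union bound over $(m,k,h)$ is already absorbed there, matches the paper exactly.
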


\begin{lemma}[Error bound] \label{lem:error_bound_lmc}
Let $\lambda = 1$ in \Cref{algo:exploration_part_LMC_general_function_class}. Under \Cref{def:linear_mdp}, for any fixed $0<\delta<1$, with probability at least $1-\delta-\delta^2$, for any  $(m, k, h) \in \mathcal{M} \times [K] \times [H]$ and for any $(s,a) \in \mathcal{S} \times \mathcal{A}$, we have
\begin{align*}
    -l_{m,h}^k(s,a) \leq \Bigg(5 H \sqrt{d} C_\delta+5 \sqrt{\frac{2 d \log \big(\sqrt{N} / \delta\big)}{3 \beta_K}} + \frac{4}{3}\Bigg)\|\bphi(s, a)\|_{\big(\bLambda_{m, h}^k\big)-1}, 
\end{align*}
where $C_\delta$ is defined in \Cref{lem:est_error}.
\end{lemma}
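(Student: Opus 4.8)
\textbf{Proof plan for Lemma~\ref{lem:error_bound_lmc}.}
The plan is to bound $-l_{m,h}^k(s,a)$ by decomposing the model prediction error through the unperturbed estimator $\widehat{\wb}_{m,h}^k$ and the truncation in Line~\ref{line:truncate_lmc} of \Cref{algo:exploration_part_LMC_general_function_class}. By \Cref{def:model_prediction_error}, $l_{m,h}^k(s,a) = r_h(s,a) + \PP_h V_{m,h+1}^k(s,a) - Q_{m,h}^k(s,a)$. First I would observe that since $Q_{m,h}^k(s,a) = \min\{\max_{n\in[N]} \bphi(s,a)^\top \wb_{m,h}^{k,J_k,n}, H-h+1\}^+$ and the quantity $r_h + \PP_h V_{m,h+1}^k$ lies in $[0, H-h+1]$ (as $r_h\in[0,1]$ and $V_{m,h+1}^k\le H-h$), the truncation can only help: $-l_{m,h}^k(s,a) \le r_h(s,a) + \PP_h V_{m,h+1}^k(s,a) - \max_{n\in[N]}\bphi(s,a)^\top\wb_{m,h}^{k,J_k,n} \le r_h(s,a) + \PP_h V_{m,h+1}^k(s,a) - \bphi(s,a)^\top\wb_{m,h}^{k,J_k,n}$ for any single fixed copy $n$ (pick any one, since $\max_n \ge$ each). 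This removes the truncation and the $\max$ over $N$ from consideration for this one-sided bound.

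Next I would split the remaining difference via the triangle inequality:
\begin{align*}
    r_h(s,a) + \PP_h V_{m,h+1}^k(s,a) - \bphi(s,a)^\top\wb_{m,h}^{k,J_k,n}
    &\le \big|\bphi(s,a)^\top\widehat{\wb}_{m,h}^k - r_h(s,a) - \PP_h V_{m,h+1}^k(s,a)\big| \\
    &\quad + \big|\bphi(s,a)^\top\wb_{m,h}^{k,J_k,n} - \bphi(s,a)^\top\widehat{\wb}_{m,h}^k\big|.
\end{align*}
The first term is controlled by \Cref{lem_event}, which gives $5H\sqrt{d}\,C_\delta\|\bphi(s,a)\|_{(\bLambda_{m,h}^k)^{-1}}$ with probability $1-\delta$ uniformly over $(m,k,h)$ and all $(s,a)$. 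The second term — the LMC sampling/optimization error for a single chain — is controlled by \Cref{lem:q_bound}, which holds with probability $1-\delta^2$; I would apply it with the confidence parameter adjusted to $\delta/\sqrt{N}$ (so that a union bound over the $N$ independent copies, or equivalently accounting for the multi-sampling, still holds), yielding the term $\big(5\sqrt{2d\log(\sqrt{N}/\delta)/(3\beta_K)} + 4/3\big)\|\bphi(s,a)\|_{(\bLambda_{m,h}^k)^{-1}}$. Adding the two bounds and combining the failure probabilities by a union bound gives the claimed inequality with probability at least $1-\delta-\delta^2$.

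The main obstacle — and the step requiring care — is the handling of the multi-sampling index $n$ and the uniformity over all $(s,a)\in\mathcal{S}\times\mathcal{A}$ rather than just the visited state-action pairs. The one-sided direction is what makes dropping the $\max_{n\in[N]}$ harmless here (for the lower bound on $Q$ we would instead need the $\max$), but I must ensure \Cref{lem:q_bound} is invoked with a covering-net argument baked in so that it genuinely holds for \emph{all} $(s,a)$ simultaneously; this is exactly the $\varepsilon$-covering subtlety flagged in \Cref{rmk:problem_fix_comment}. I would also need to double-check that the rescaling $\delta \mapsto \delta/\sqrt{N}$ in \Cref{lem:q_bound} (whose statement has a $1-\delta^2$ guarantee) propagates correctly to a $\log(\sqrt{N}/\delta)$ inside the square root and a net failure probability that, together with \Cref{lem_event}'s $\delta$, does not exceed $\delta + \delta^2$. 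Everything else is routine substitution and the observation that $C_\delta$ is the same constant as in \Cref{lem:est_error}.
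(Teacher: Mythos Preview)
Your overall plan---split through $\widehat{\wb}_{m,h}^k$, invoke \Cref{lem_event} and \Cref{lem:q_bound}, then union-bound---is exactly the paper's argument, but you have a sign error in the opening step that leads to a false intermediate claim. From \Cref{def:model_prediction_error}, $-l_{m,h}^k(s,a) = Q_{m,h}^k(s,a) - r_h(s,a) - \PP_h V_{m,h+1}^k(s,a)$, not the reverse. Since the truncation in Line~\ref{line:truncate_lmc} gives $Q_{m,h}^k \le \max_{n\in[N]}\bphi^\top\wb_{m,h}^{k,J_k,n}$, the correct chain is
\[
-l_{m,h}^k(s,a) \;\le\; \max_{n\in[N]}\bphi(s,a)^\top\wb_{m,h}^{k,J_k,n} - r_h(s,a) - \PP_h V_{m,h+1}^k(s,a),
\]
and here you \emph{cannot} drop the $\max$ to a single copy: the paper bounds $\max_{n\in[N]}\big|\bphi^\top\wb_{m,h}^{k,J_k,n} - \bphi^\top\widehat{\wb}_{m,h}^k\big|$. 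Your parenthetical has the two directions swapped---it is the optimism lemma (lower bound on $Q$) where one favorable copy suffices, while the error bound (upper bound on $Q$) forces you to control the max over all $n$. Fortunately you already plan to rescale $\delta\mapsto\delta/\sqrt{N}$ in \Cref{lem:q_bound} and union-bound over $n\in[N]$, which is precisely what is needed, so once the sign is fixed the proof goes through with the stated probability $1-\delta-\delta^2$.

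One further remark: your worry about needing a covering argument here is misplaced. The high-probability event in \Cref{lem:q_bound} is on the weight vectors (through $\big\|\wb_{m,h}^{k,J_k}-\bmu_{m,h}^{k,J_k}\big\|_{(\bSigma_{m,h}^{k,J_k})^{-1}}$) and does not involve $(s,a)$, so the bound is automatically uniform over $\cS\times\cA$. The $\varepsilon$-covering technique from \Cref{rmk:problem_fix_comment} is only required in the optimism lemma, not in this error bound.
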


\begin{lemma}[Optimism] \label{lem:optimism_lmc}
Let $\lambda = 1$ in \Cref{algo:exploration_part_LMC_general_function_class} and $c_0^\prime= 1 - \frac{1}{2 \sqrt{2 e \pi}} $. Under \Cref{def:linear_mdp}, for any fixed $0<\delta<1$, with probability at least  
$1-|\mathcal{C}(\varepsilon)| {c_0^\prime}^N - 2\delta$ where $|\mathcal{C}(\varepsilon)|\leq (3/\varepsilon)^d$, for all $(m, h, k) \in \mathcal{M} \times [H] \times [K]$ and for all $(s, a) \in \mathcal{S} \times \mathcal{A}$, we have
\begin{align*}
    l_{m,h}^k(s,a) \leq \alpha_\delta \varepsilon,
\end{align*}
where $\alpha_\delta = \sqrt{MK} \big(2H\sqrt{d}+ B_{\delta/NMHK}\big) $.
\end{lemma}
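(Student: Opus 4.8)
## Proof Proposal for Lemma~\ref{lem:optimism_lmc} (Optimism)

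The plan is to show that for any \emph{fixed} state-action pair $(s,a)$, the probability that the model prediction error $l_{m,h}^k(s,a)$ is small (below some threshold governed by the error bound in Lemma~\ref{lem:error_bound_lmc}) is at least a constant $1-c_0'$, by exploiting the anti-concentration of the Gaussian noise injected into the LMC updates. The multi-sampling trick then boosts this constant probability to $1-{c_0'}^N$, and finally an $\varepsilon$-covering argument promotes the statement to hold uniformly over all $(s,a)\in\mathcal{S}\times\mathcal{A}$.

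First I would work at a single fixed $(s,a)$. Recall from Proposition~\ref{pro:gauss_param} that each sampled parameter $\wb_{m,h}^{k,J_k,n}$ is Gaussian, and for $J_k$ large enough (as chosen in Theorem~\ref{thm:lmc_homo_regret}) its law is close to $\mathcal{N}(\widehat{\wb}_{m,h}^k,\beta_{m,k}^{-1}(\bLambda_{m,h}^k)^{-1})$. Writing $Q_{m,h}^k(s,a)$ (before truncation) as $\max_{n\in[N]} \bphi(s,a)^\top \wb_{m,h}^{k,J_k,n}$, I would lower-bound $Q_{m,h}^k(s,a)$ by $r_h(s,a)+\mathbb{P}_h V_{m,h+1}^k(s,a)$ whenever \emph{at least one} of the $N$ samples places $\bphi(s,a)^\top \wb_{m,h}^{k,J_k,n}$ above that target. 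For a single sample, the relevant deviation $\bphi(s,a)^\top(\wb_{m,h}^{k,J_k,n}-\widehat{\wb}_{m,h}^k)$ is (approximately) a mean-zero Gaussian with standard deviation proportional to $\beta_{m,k}^{-1/2}\|\bphi(s,a)\|_{(\bLambda_{m,h}^k)^{-1}}$; combined with the deterministic estimation error $|\bphi(s,a)^\top\widehat{\wb}_{m,h}^k - r_h - \mathbb{P}_h V_{m,h+1}^k| \le 5H\sqrt d\,C_\delta\|\bphi(s,a)\|_{(\bLambda_{m,h}^k)^{-1}}$ from Lemma~\ref{lem_event}, the choice $1/\sqrt{\beta_{m,k}}=\widetilde{\mathcal{O}}(H\sqrt d)$ makes the noise standard deviation dominate the bias by a constant factor. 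Gaussian anti-concentration ($\Pr[Z\ge t\sigma]\ge$ const for $Z\sim\mathcal{N}(0,\sigma^2)$ and $t=O(1)$) then gives that a single sample overshoots the target with probability at least $\frac{1}{2\sqrt{2e\pi}}$, hence \emph{all} $N$ samples undershoot with probability at most ${c_0'}^N$ where $c_0' = 1-\frac{1}{2\sqrt{2e\pi}}$. On the complement, $Q_{m,h}^k(s,a)\ge r_h(s,a)+\mathbb{P}_h V_{m,h+1}^k(s,a)$, i.e. $l_{m,h}^k(s,a)\le 0$ — and I would also need to check the truncation $\min\{\cdot,H-h+1\}^+$ does not destroy this, which holds since the target value is itself in $[0,H-h+1]$ under the inductive value-function bounds.

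Next I would handle the uniformity over $(s,a)$. Fix an $\varepsilon$-net $\mathcal{C}(\varepsilon)$ of the feature space $\{\bphi(s,a):(s,a)\in\mathcal{S}\times\mathcal{A}\}\subseteq \{x\in\RR^d:\|x\|\le 1\}$, with $|\mathcal{C}(\varepsilon)|\le(3/\varepsilon)^d$ by the standard volumetric covering bound. Applying the single-point argument together with a union bound over the net (and over the $\widetilde{\mathcal{O}}(MHK)$ choices of $(m,h,k)$, absorbed into the $\delta$ terms via Lemmas~\ref{lem:est_error}–\ref{lem_event}) yields $l_{m,h}^k(\bar s,\bar a)\le 0$ for all net points with probability at least $1-|\mathcal{C}(\varepsilon)|{c_0'}^N - 2\delta$. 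For a general $(s,a)$ with nearest net point $(\bar s,\bar a)$, I would bound $|l_{m,h}^k(s,a) - l_{m,h}^k(\bar s,\bar a)|$ by the Lipschitz constant of $l_{m,h}^k$ in $\bphi$ times $\varepsilon$; since $l_{m,h}^k$ is a difference of a linear-in-$\bphi$ reward-plus-transition term and the (truncated, max-of-linear) $Q$ estimate, its slope is controlled by $\|\btheta_h\|+\|\bmu_h(\mathcal{S})\|H + \max_n\|\wb_{m,h}^{k,J_k,n}\| \le 2H\sqrt d + B_{\delta/NMHK}$ using Lemma~\ref{lem:weight_bound_LMC}, and summing the per-episode contributions over the at most $MK$ samples gives the factor $\sqrt{MK}$ in $\alpha_\delta=\sqrt{MK}(2H\sqrt d + B_{\delta/NMHK})$. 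Hence $l_{m,h}^k(s,a)\le l_{m,h}^k(\bar s,\bar a)+\alpha_\delta\varepsilon\le\alpha_\delta\varepsilon$, as claimed.

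The main obstacle I anticipate is making the anti-concentration step rigorous despite the LMC-induced \emph{non-exact} Gaussianity: the actual law of $\wb_{m,h}^{k,J_k,n}$ is only close to the target Gaussian (Proposition~\ref{pro:gauss_param} with finite $J_k$), so I must carry the $\widetilde{\mathcal{O}}(1/(HKMd))$-type approximation error from the choice of $J_k$ through a one-dimensional-Gaussian anti-concentration inequality and verify it only perturbs the constant $c_0'$ negligibly — and similarly verify that the covariance $\bSigma_{m,h}^{k,J_k}$ of Proposition~\ref{pro:gauss_param} is lower-bounded by a constant multiple of $\beta_{m,k}^{-1}(\bLambda_{m,h}^k)^{-1}$ so the noise genuinely dominates the bias in every direction $\bphi(s,a)$. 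The second delicate point, as the authors stress in Remark~\ref{rmk:problem_fix_comment}, is that the covering argument must be set up so the Lipschitz/continuity estimate holds for the \emph{truncated} $Q$ and uses only quantities ($B_{\delta/NMHK}$, $H\sqrt d$) that are already under high-probability control from the supporting lemmas; bookkeeping the $\delta$-budget across all these events without circularity is where most of the care goes.
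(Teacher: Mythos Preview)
Your proposal follows the same three-step strategy as the paper: Gaussian anti-concentration at a fixed $\bphi$, amplification via the $N$ independent samples, then an $\varepsilon$-net to extend to all $(s,a)$. Two clarifications are in order, though neither breaks the argument.

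First, your concern about ``non-exact Gaussianity'' is misplaced: \Cref{pro:gauss_param} already shows that $\wb_{m,h}^{k,J_k,n}$ is \emph{exactly} Gaussian with mean $\bmu_{m,h}^{k,J_k}$ and covariance $\bSigma_{m,h}^{k,J_k}$ (it is an affine image of i.i.d.\ standard normals). The approximation lives only in how close these are to $\widehat{\wb}_{m,h}^k$ and $\beta_K^{-1}(\bLambda_{m,h}^k)^{-1}$. So the work is not in perturbing $c_0'$ but in showing the standardized ratio $|Z_k|<1$: the paper bounds the numerator by $(5H\sqrt d\,C_\delta+\tfrac43)\|\bphi\|_{(\bLambda_{m,h}^k)^{-1}}$ (via \Cref{lem_event} plus the deterministic mean-shift bound $|\bphi^\top(\bmu_{m,h}^{k,J_k}-\widehat{\wb}_{m,h}^k)|\le\tfrac43\|\bphi\|_{(\bLambda_{m,h}^k)^{-1}}$) and lower-bounds the denominator by $\tfrac{1}{4\sqrt{\beta_K}}\|\bphi\|_{(\bLambda_{m,h}^k)^{-1}}$, after which the choice of $\beta_K$ forces $|Z_k|<1$ and the exact anti-concentration lemma applies.

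Second, your explanation of the $\sqrt{MK}$ factor in $\alpha_\delta$ is incorrect: there is no ``summing per-episode contributions'' in a Lipschitz bound on a single $l_{m,h}^k$. In the paper the net $\mathcal{C}(\varepsilon)$ is taken in the $\|\cdot\|_{(\bLambda_{m,h}^k)^{-1}}$-norm (whose unit ball still has covering number at most $(3/\varepsilon)^d$), so the nearest net point satisfies only $\|\Delta\bphi\|_{(\bLambda_{m,h}^k)^{-1}}\le\varepsilon$; converting to Euclidean norm gives $\|\Delta\bphi\|\le\sqrt{\lambda_{\max}(\bLambda_{m,h}^k)}\,\varepsilon\le\sqrt{MK}\,\varepsilon$, and Cauchy--Schwarz with $\|\Delta\wb_{m,h}^k\|\le\|\wb_{m,h}^k\|+\|\wb_{m,h}^{k,J_k,n}\|\le 2H\sqrt d+B_{\delta/NMHK}$ then yields the stated $\alpha_\delta$. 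Your Euclidean-net route would in fact produce the tighter constant $2H\sqrt d+B_{\delta/NMHK}$ with no $\sqrt{MK}$, which proves the lemma a fortiori; the $\sqrt{MK}$ is an artifact of the paper's norm choice for the covering, not of any accumulation over samples.
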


\begin{remark}
Here we point out that in our proofs for both \algnameLMC\ and \algnamePHE, we use a new $\varepsilon$-covering technique to prove that the optimism lemma holds for all $(s,a) \in \mathcal{S} \times \mathcal{A}$ instead of just the state-action pairs encountered by the algorithm, which is essential in applying this lemma to bound the term $\mathbb{E}_{\pi^*}[l_{m,h}^k(s_{m,h}, a_{m,h}) | s_{m,1} = s_{m,1}^k]$ in \eqref{equ:regret_decomposition_lmc} in the regret analysis. %
This was ignored by previous works \citep{cai2020provably,ishfaq2024provable} that use the same regret decomposition technique in the single-agent setting.
\end{remark}

The following lemma gives the upper bound of self-normalized term summation in the multi-agent setting, which is first introduced by Lemma 9 in \citep{dubey2021provably}. To make our analysis complete, we give out the proof in the \Cref{proof:lem:coor_sum_phi_bound_PHE} where we make some necessary modifications compared with Lemma 9 in \citep{dubey2021provably}.

\begin{lemma} \label{lem:coor_sum_phi_bound_PHE}
Let \Cref{algo:exploration_part_PHE_general_function_class} run for any $K > 0$, $M \geq 1$, and $\gamma$ as the communication control factor. Define $\bLambda^K_h=\sum_{m\in \mathcal{M}}\sum_{k=1}^K \bphi\big(s_{m, h}^k, a_{m, h}^k\big)\bphi\big(s_{m, h}^k, a_{m, h}^k\big)^\top + \lambda \Ib $, then we have 
\begin{align*}
      \sum_{m\in \mathcal{M}}\sum_{k=1}^K \big\| \bphi(s_{m,h}^k,a_{m,h}^k)\big\|_{(\bLambda_{m,h}^k)^{-1}} \leq \bigg(\log \bigg(\frac{\operatorname{det}({\bLambda}_h^K)}{\operatorname{det}(\lambda \Ib)}\bigg)+1\bigg) M \sqrt{\gamma}+2 \sqrt{M K \log \bigg(\frac{\operatorname{det}({\bLambda}_h^K)}{\operatorname{det}(\lambda \Ib)}\bigg)}.
\end{align*}
\end{lemma}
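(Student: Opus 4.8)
The plan is to bound the sum $\sum_{m\in\mathcal{M}}\sum_{k=1}^K \|\bphi(s_{m,h}^k,a_{m,h}^k)\|_{(\bLambda_{m,h}^k)^{-1}}$ by splitting each episode $k$ according to whether the ``local determinant ratio'' relative to the last synchronization $k_s(k)$ is large or small. First I would recall that $\bLambda_{m,h}^k = {}^{\text{ser}}\bLambda_h^k + {}^{\text{loc}}\bLambda_{m,h}^k + \lambda\Ib$, where ${}^{\text{ser}}\bLambda_h^k$ contains all data up to episode $k_s$ and ${}^{\text{loc}}\bLambda_{m,h}^k$ contains agent $m$'s fresh data. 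Define $\bLambda_{m,h}^{k_s+1} = {}^{\text{ser}}\bLambda_h^k + \lambda\Ib$ (the matrix right after synchronization) and partition the episodes into the ``good'' set where $\det(\bLambda_{m,h}^k)/\det(\bLambda_{m,h}^{k_s+1}) \le e$ — i.e. the synchronization condition \eqref{equ:synchronize_condition} has not yet triggered too much growth — and the complementary ``bad'' set.

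For the good episodes, the key observation is that $\bLambda_{m,h}^k \preccurlyeq e\,\bLambda_{m,h}^{k_s+1}$ in the appropriate sense (controlled determinant growth implies controlled matrix growth up to the constant $e$, using $\log\det$-type inequalities), so $\|\bphi\|_{(\bLambda_{m,h}^k)^{-1}}^2 \le \min\{1, e\|\bphi\|_{(\bLambda_{m,h}^{k_s+1})^{-1}}^2\}$. Then I would sum over all agents and episodes within each synchronization block: because all agents in a block share the same ${}^{\text{ser}}\bLambda_h^k$, the elliptic-potential (log-determinant) lemma, \Cref{lem:Determinant-Trace}-type bound, applied over the whole multi-agent data stream gives a telescoping sum controlled by $\log(\det(\bLambda_h^K)/\det(\lambda\Ib))$, while the number of synchronization blocks times $M$ contributes the $M\sqrt\gamma$-type term once we use that within a block the condition \eqref{equ:synchronize_condition} caps $\log(\det/\det)\le \gamma/(k-k_s)$, hence $\sum_{k \text{ in block}} \|\bphi\|^2 \lesssim \gamma$ per agent. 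Taking square roots and applying Cauchy–Schwarz over the $MK$ terms produces the $2\sqrt{MK\log(\det(\bLambda_h^K)/\det(\lambda\Ib))}$ term.

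For the bad episodes — those where the determinant has grown by a factor larger than $e$ since the last synchronization — I would argue these are rare: each such episode triggers (or contributes to) a synchronization, and the total multiplicative determinant growth across the run is bounded by $\det(\bLambda_h^K)/\det(\lambda\Ib)$, so the number of bad episodes per agent is at most $\log(\det(\bLambda_h^K)/\det(\lambda\Ib))$. For each bad episode we simply bound $\|\bphi\|_{(\bLambda_{m,h}^k)^{-1}} \le \|\bphi\|/\sqrt\lambda \le 1$, or more carefully by $\sqrt\gamma$ using the synchronization threshold, yielding the $(\log(\det(\bLambda_h^K)/\det(\lambda\Ib))+1)M\sqrt\gamma$ contribution after multiplying by $M$ agents. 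Combining the two parts gives the claimed bound.

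The main obstacle I anticipate is the bookkeeping around the asynchronous-looking-but-synchronous block structure: carefully relating $\bLambda_{m,h}^k$ (which mixes shared and local data) to the purely shared matrix at synchronization points, and ensuring the log-determinant telescoping is applied to a consistent data ordering across agents so that the final $\log(\det(\bLambda_h^K)/\det(\lambda\Ib))$ genuinely dominates the sum of per-block potentials. A secondary subtlety is handling the within-block per-agent bound: one must invoke that the synchronization condition \eqref{equ:synchronize_condition} fails at every step of the block to get $\sum_{k\in\text{block}}\log(\det(\bLambda_{m,h}^k)/\det(\bLambda_{m,h}^{k_s+1})) \le \gamma$ (via $\sum_{j\ge 1}\gamma/j$ being the wrong bound — instead one uses that the ratio itself, not its increments, stays below the threshold), which requires the right monotonicity argument. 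This is where I expect the modifications relative to Lemma 9 of \citep{dubey2021provably} to be concentrated.
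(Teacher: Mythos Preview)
Your high-level structure---splitting into ``good'' and ``bad'' regimes, applying the elliptic-potential lemma to the good part, and counting the bad part via total log-determinant growth---matches the paper's. The gap is in your good-episode comparison. You propose bounding $\|\bphi\|_{(\bLambda_{m,h}^k)^{-1}}^2$ by $e\,\|\bphi\|_{(\bLambda_{m,h}^{k_s+1})^{-1}}^2$, where $\bLambda_{m,h}^{k_s+1}={}^{\text{ser}}\bLambda_h^k+\lambda\Ib$ is the shared matrix right after the last sync. Since $\bLambda_{m,h}^{k_s+1}\preccurlyeq\bLambda_{m,h}^k$, this inequality is trivially true (no factor $e$ needed), but it is useless for telescoping: within a synchronization block $\bLambda_{m,h}^{k_s+1}$ is \emph{fixed}, so the elliptic-potential lemma---which needs the normalizing matrix to be the running sum---does not apply to $\sum_{m,k}\|\bphi\|_{(\bLambda_{m,h}^{k_s+1})^{-1}}^2$. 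Your auxiliary claim that ``controlled determinant growth implies $\bLambda_{m,h}^k\preccurlyeq e\,\bLambda_{m,h}^{k_s+1}$'' is also false in general (e.g.\ $B=\Ib$, $A=\operatorname{diag}(e^2,e^{-1},1,\dots,1)$ has $\det A/\det B=e$ but $A\not\preccurlyeq eB$).

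The device you are missing is to pass in the \emph{opposite} direction, to a matrix that \emph{dominates} $\bLambda_{m,h}^k$. The paper introduces the global running matrix $\bar\bLambda_h^\tau=\lambda\Ib+\sum_{u=1}^{\tau M}\bphi(z_{\nu_M(u),h}^{\nu_K(u)})\bphi(\cdot)^\top$ containing \emph{all} agents' data through episode $\tau$, so $\bLambda_{m,h}^k\preccurlyeq\bar\bLambda_h^k$. \Cref{lem12_abbasi} then gives the nontrivial direction $\|\bphi\|_{(\bLambda_{m,h}^k)^{-1}}\le\sqrt{\det(\bar\bLambda_h^k)/\det(\bLambda_{m,h}^k)}\,\|\bphi\|_{(\bar\bLambda_h^k)^{-1}}$. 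The partition is then on \emph{blocks} $[\sigma_{i-1},\sigma_i)$, not on individual episodes, according to whether the \emph{global} ratio $\det(\bar\bLambda_h^{\sigma_i})/\det(\bar\bLambda_h^{\sigma_{i-1}})$ is at most $3$. On $I_1$ blocks the sandwich $\bar\bLambda_h^{\sigma_{i-1}}\preccurlyeq\bLambda_{m,h}^k\preccurlyeq\bar\bLambda_h^k\preccurlyeq\bar\bLambda_h^{\sigma_i}$ makes the above ratio at most $3$, so $\|\bphi\|_{(\bLambda_{m,h}^k)^{-1}}\le 2\|\bphi\|_{(\bar\bLambda_h^k)^{-1}}$, and now the elliptic-potential lemma on the single global stream yields $2\sqrt{MK\log(\det\bLambda_h^K/\det(\lambda\Ib))}$ after Cauchy--Schwarz. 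On $I_2$ blocks, per-agent elliptic potential \emph{within the block} (where $\bLambda_{m,h}^k$ does update) plus Cauchy--Schwarz and the fact that the sync condition \eqref{equ:synchronize_condition} has not triggered at step $\sigma_i-1$ give $\sum_{k=\sigma_{i-1}}^{\sigma_i-1}\|\bphi\|_{(\bLambda_{m,h}^k)^{-1}}\le\sqrt{\gamma}$; since each $I_2$ block contributes at least $\log 3$ to $\log(\det\bLambda_h^K/\det(\lambda\Ib))$, there are at most $\lceil\log(\det\bLambda_h^K/\det(\lambda\Ib))\rceil$ of them, giving the $(\log(\cdot)+1)M\sqrt{\gamma}$ term. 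Your heuristic that bad episodes ``trigger or contribute to a synchronization'' does not deliver this count, because a per-agent determinant ratio exceeding $e$ is neither necessary nor sufficient for the sync condition to fire.
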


The following lemma shows that we can decompose the regret of \Cref{algo:exploration_part_PHE_general_function_class} into three different components. The proof of this lemma closely resembles Lemma 4.2 in \citep{cai2020provably} for the single-agent setting. When we fix the agent $m \in \mathcal{M}$, it is totally same as Lemma 4.2 in \citep{cai2020provably}.

\begin{lemma} \citep[Lemma 4.2]{cai2020provably}
\label{lem:regret_decomposition_PHE} 
Define the operators and the following terms:
\begin{align} \label{equ:cai_def}
(\mathbb{J}_{m,h}f)(s) &= \big\langle f(s,\cdot), \pi_{m,h}^*(\cdot| s) \big\rangle, \quad (\mathbb{J}_{m,k,h}f)(s) = \bigl \langle f(s,\cdot), \pi_{m,h}^k(\cdot| s) \bigr \rangle, \notag \\
D_{m,k,h,1}&=\big(\mathbb{J}_{m,k,h}\big(Q_{m,h}^k - Q_{m,h}^{\pi_{m,k}}\big)\big)\big(s^k_{m, h}\big) - \big(Q_{m,h}^k - Q_{m,h}^{\pi_{m,k}}\big)\big(s_{m,h}^k, a_{m,h}^k\big),  \\ 
D_{m,k,h,2}&=\big(\mathbb{P}_{m,h}\big(V_{m,h+1}^k - V_{m,h+1}^{\pi_{m,k}}\big)\big)\big(s_{m,h}^k, a_{m,h}^k\big) - \big(V_{m,h+1}^k - V_{m,h+1}^{\pi_{m,k}}\big)\big(s_{m,h+1}^k\big).  \notag 
\end{align}
Then we can decompose the regret into the following form:
\begin{align*}
    \operatorname{Regret}(K) 
    &= \sum_{m \in \mathcal{M}}\sum_{k=1}^{K}V^*_{m,1}\big(s^k_{m,1}\big) - V^{\pi_{m}^k}_{m,1}\big(s^k_{m,1}\big) \notag\\
    &=  \underbrace{\sum_{m \in \mathcal{M}}\sum_{k=1}^{K}\sum_{h=1}^{H} \mathbb{E}_{\pi^*} \big[\bigl \langle Q_{m,h}^k(s_{m,h}, \cdot), \pi_{m,h}^*(\cdot,|s_{m,h}) - \pi_{m,h}^k(\cdot | s_{m,h}) \bigr \rangle | s_{m,1} = s_{m,1}^k \big]}_{(i)} \notag \\ 
    &\qquad+ \underbrace{\sum_{m \in \mathcal{M}}\sum_{k=1}^{K}\sum_{h=1}^{H} (D_{m,k,h,1} +  D_{m,k,h,2})}_{(ii)}  \notag \\ 
    &\qquad+ \underbrace{\sum_{m \in \mathcal{M}}\sum_{k=1}^{K}\sum_{h=1}^{H} \big(\mathbb{E}_{\pi^*}\big[l_{m,h}^k(s_{m,h}, a_{m,h}) | s_{m,1} = s_{m,1}^k\big] - l_{m,h}^k\big(s_{m,h}^k, a_{m,h}^k\big)\big)}_{(iii)}.
\end{align*}
\end{lemma}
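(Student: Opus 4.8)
The statement to prove is Lemma~\ref{lem:regret_decomposition_PHE}, the three-term regret decomposition. Since the paper itself attributes this to \citet[Lemma 4.2]{cai2020provably} and remarks that fixing the agent $m$ reduces it exactly to the single-agent case, the plan is to reduce the multi-agent claim to $M$ independent instances of the single-agent decomposition and then sum over $m\in\mathcal{M}$.

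\textbf{Plan.} First I would fix an arbitrary agent $m\in\mathcal{M}$ and write $\mathrm{Regret}_m(K)=\sum_{k=1}^K\big[V^*_{m,1}(s^k_{m,1})-V^{\pi_m^k}_{m,1}(s^k_{m,1})\big]$, so that $\mathrm{Regret}(K)=\sum_{m\in\mathcal{M}}\mathrm{Regret}_m(K)$ by definition of the group regret. For this fixed $m$, the quantities $Q^k_{m,h}$, $V^k_{m,h}$, the policies $\pi^k_{m,h}$, $\pi^*_{m,h}$, the transition $\mathbb{P}_{m,h}$, and the model prediction error $l^k_{m,h}$ (Definition~\ref{def:model_prediction_error}) are exactly the single-agent objects for the $m$th MDP, and the operators $\mathbb{J}_{m,h}$, $\mathbb{J}_{m,k,h}$ and the martingale-difference-type terms $D_{m,k,h,1}$, $D_{m,k,h,2}$ in \eqref{equ:cai_def} are the single-agent quantities of \citet{cai2020provably}. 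The next step is the performance-difference / value-difference identity: for any $h$, $(V^k_{m,h}-V^{\pi_m^k}_{m,h})(s^k_{m,h})$ can be written via $\langle Q^k_{m,h}(s^k_{m,h},\cdot),\pi^*_{m,h}-\pi^k_{m,h}\rangle$ plus the conditional-expectation term $\mathbb{E}_{\pi^*}[\,\cdot\,]$, the Bellman-error term $l^k_{m,h}$, and the two difference terms $D_{m,k,h,1},D_{m,k,h,2}$; I would expand this recursively in $h$ from $1$ to $H$, using $V^k_{m,H+1}=V^{\pi_m^k}_{m,H+1}=0$ at the end and telescoping the $D_{m,k,h,2}$ terms through the transition.

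\textbf{Key steps in order.} (1) Decompose $V^*_{m,1}(s^k_{m,1})-V^{\pi^k_m}_{m,1}(s^k_{m,1})=[V^*_{m,1}-V^k_{m,1}](s^k_{m,1})+[V^k_{m,1}-V^{\pi^k_m}_{m,1}](s^k_{m,1})$. (2) For the first bracket, apply the standard single-agent identity that $V^*_{m,1}(s^k_{m,1})-V^k_{m,1}(s^k_{m,1})=\sum_{h=1}^H\mathbb{E}_{\pi^*}\big[\langle Q^k_{m,h}(s_{m,h},\cdot),\pi^*_{m,h}-\pi^k_{m,h}\rangle - l^k_{m,h}(s_{m,h},a_{m,h})\mid s_{m,1}=s^k_{m,1}\big]$, which follows by induction on $h$ using the definition of $l^k_{m,h}$ and $V^k_{m,h}=\langle Q^k_{m,h},\pi^k_{m,h}\rangle$. (3) For the second bracket, expand $V^k_{m,h}-V^{\pi^k_m}_{m,h}$ along the trajectory actually played under $\pi^k_m$, producing $\sum_{h}\big(D_{m,k,h,1}+D_{m,k,h,2}+l^k_{m,h}(s^k_{m,h},a^k_{m,h})\big)$ after telescoping. (4) Add the two brackets: the $-l^k_{m,h}$ from step (2) and the $+l^k_{m,h}$ from step (3) combine to give term $(iii)$ in the stated form, the $\langle Q^k_{m,h},\cdot\rangle$ part gives $(i)$, and the $D$'s give $(ii)$. (5) Sum over $k=1,\dots,K$ and then over $m\in\mathcal{M}$ to obtain exactly the three displayed sums.

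\textbf{Main obstacle.} There is no deep obstacle — the content is entirely the single-agent argument of \citet{cai2020provably} applied agent-by-agent — but the step requiring the most care is the bookkeeping in (2)-(3): making sure the conditional expectations $\mathbb{E}_{\pi^*}[\cdot\mid s_{m,1}=s^k_{m,1}]$ are taken along the \emph{optimal} policy's trajectory for the first bracket while the terms in $(ii)$ are evaluated along the \emph{executed} trajectory $\{(s^k_{m,h},a^k_{m,h})\}$, and that the telescoping of $D_{m,k,h,2}$ uses $V^k_{m,H+1}-V^{\pi^k_m}_{m,H+1}=0$ to close. Since the agents' MDPs and randomness are mutually independent and the group regret is an additive sum, no cross-agent interaction terms appear, so the sum over $m$ is immediate once the per-agent identity is in hand.
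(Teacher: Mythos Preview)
Your approach is correct and exactly mirrors the paper's: the paper does not give a proof but simply remarks that fixing $m\in\mathcal{M}$ reduces the statement to Lemma~4.2 of \citet{cai2020provably}, after which one sums over agents. One bookkeeping caution: the signs on $l^k_{m,h}$ in your steps (2) and (3) are swapped --- expanding $V^*_{m,1}-V^k_{m,1}$ along $\pi^*$ produces $+\mathbb{E}_{\pi^*}[l^k_{m,h}]$ while expanding $V^k_{m,1}-V^{\pi^k_m}_{m,1}$ along the executed trajectory produces $-l^k_{m,h}(s^k_{m,h},a^k_{m,h})$, and these combine to term $(iii)$ with the stated sign.
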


\subsection{Regret Analysis} \label{regret_analysis_LMC}

In this part, we give out the proof of \Cref{thm:lmc_homo_regret}, the regret bound for \algnameLMC.

\begin{proof}[Proof of \Cref{thm:lmc_homo_regret}]
Based on the result from \Cref{lem:regret_decomposition_PHE}, we do the regret decomposition first
\begin{align}\label{equ:regret_decomposition_lmc}
    \operatorname{Regret}(K) &= \sum_{m \in \mathcal{M}}\sum_{k=1}^{K}V^*_{m,1}\big(s^k_{m,1}\big) - V^{\pi_{m}^k}_{m,1}\big(s^k_{m,1}\big) \notag\\
    &=  \underbrace{\sum_{m \in \mathcal{M}}\sum_{k=1}^{K}\sum_{h=1}^{H} \mathbb{E}_{\pi^*} \big[\bigl \langle Q_{m,h}^k(s_{m,h}, \cdot), \pi_{m,h}^*(\cdot,|s_{m,h}) - \pi_{m,h}^k(\cdot | s_{m,h}) \bigr \rangle | s_{m,1} = s_{m,1}^k \big]}_{(i)} \notag \\ 
    &\qquad+ \underbrace{\sum_{m \in \mathcal{M}}\sum_{k=1}^{K}\sum_{h=1}^{H} (D_{m,k,h,1} +  D_{m,k,h,2})}_{(ii)}  \notag \\ 
    &\qquad+  \underbrace{\sum_{m \in \mathcal{M}}\sum_{k=1}^{K}\sum_{h=1}^{H} \big(\mathbb{E}_{\pi^*}\big[l_{m,h}^k(s_{m,h}, a_{m,h}) | s_{m,1} = s_{m,1}^k\big] - l_{m,h}^k\big(s_{m,h}^k, a_{m,h}^k\big)\big)}_{(iii)}.
\end{align}
Next, we will bound the above three terms, respectively.

\textbf{Bounding Term (i) in \eqref{equ:regret_decomposition_lmc}:} for the policy $\pi_{m,h}^k$, we have
\begin{align}
    \sum_{m \in \mathcal{M}}\sum_{k=1}^{K}\sum_{h=1}^{H} \mathbb{E}_{\pi^*} \big[\bigl \langle Q_{m,h}^k(s_{m,h}, \cdot), \pi_{m,h}^*(\cdot,|s_{m,h}) - \pi_{m,h}^k(\cdot | s_{m,h}) \bigr \rangle | s_{m,1} = s_{m,1}^k\big] \leq 0. \label{regret_analysis_negative_term_lmc}
\end{align}
This is because by definition $\pi_{m,h}^k$ is the greedy policy for $Q_{m,h}^k$.

\textbf{Bounding Term (ii)  in \eqref{equ:regret_decomposition_lmc}:} note that $0 \leq Q_{m,h}^k \leq H-h+1 \leq H$, based on \eqref{equ:cai_def}, for any $(m, k,h) \in \mathcal{M} \times [K] \times [H]$, we have $|D_{m,k,h,1}| \leq 2H$ and $|D_{m,k,h,2}| \leq 2H$. Note that $D_{m,k,h,1}$ is a martingale difference sequence $\mathbb{E}[D_{m,k,h,1}|\mathcal{F}_{m,k,h}] = 0$. By applying Azuma-Hoeffding inequality, with probability at least $1 - \delta/3$, we have
\begin{align*}
    \sum_{m\in \mathcal{M}}\sum_{k=1}^K\sum_{h=1}^H D_{m,k,h,1} \leq 2\sqrt{2MH^3K\log(6/\delta)}.
\end{align*}
Note that $D_{m,k,h,2}$ is also a martingale difference sequence. By applying Azuma-Hoeffding inequality, with probability at least $1 - \delta/3$, we have  
\begin{align*}
    \sum_{m\in \mathcal{M}}\sum_{k=1}^K\sum_{h=1}^H D_{m,k,h,2} \leq 2\sqrt{2MH^3K\log(6/\delta)}.
\end{align*}
By taking union bound, with probability at least $1 - 2\delta/3$, we have
\begin{align} \label{equ:azuma_hoeffding_term_lmc}
    \sum_{m\in \mathcal{M}}\sum_{k=1}^K\sum_{h=1}^H D_{m,k,h,1} +  \sum_{m\in \mathcal{M}}\sum_{k=1}^K\sum_{h=1}^H D_{m,k,h,2}\leq 4\sqrt{2MH^3K\log(6/\delta)}.
\end{align}
\textbf{Bounding Term (iii) in \eqref{equ:regret_decomposition_lmc}:} based on \Cref{lem:error_bound_lmc,lem:optimism_lmc}, by taking union bound, with probability at least $1-|\mathcal{C}(\varepsilon)| {c_0^\prime}^N -2\delta^\prime - MHK( \delta^\prime + {\delta^\prime}^2)$, we have
\begin{align*}
&\sum_{m\in \mathcal{M}}\sum_{k=1}^K\sum_{h=1}^H\big(\mathbb{E}_{\pi^*}\big[l_{m,h}^k(s_{m,h}, a_{m,h}) | s_{m,1} = s_{m,1}^k\big] - l_{m,h}^k\big(s_{m,h}^k, a_{m,h}^k\big)\big) \notag \\
& \leq \sum_{m\in \mathcal{M}}\sum_{k=1}^K\sum_{h=1}^H \big(\alpha_{\delta^\prime} \varepsilon- l_{m,h}^k\big(s_{m,h}^k, a_{m,h}^k\big)\big) \notag \\
& \leq HMK \alpha_{\delta^\prime} \varepsilon + \sum_{m\in \mathcal{M}}\sum_{k=1}^K\sum_{h=1}^H\Bigg(5 H \sqrt{d} C_{\delta^{\prime}}+5 \sqrt{\frac{2 d \log (\sqrt{N} / \delta^{\prime})}{3 \beta_K}}+\frac{4}{3}\Bigg)\big\| \bphi(s_{m,h}^k,a_{m,h}^k)\big\|_{(\bLambda_{m,h}^k)^{-1}} \notag \\
&= HMK \alpha_{\delta^\prime} \varepsilon + \Bigg(5 H \sqrt{d} C_{\delta^{\prime}}+5 \sqrt{\frac{2 d \log (\sqrt{N} / \delta^{\prime})}{3 \beta_K}}+\frac{4}{3}\Bigg)\sum_{h=1}^H\sum_{m\in \mathcal{M}}\sum_{k=1}^K\big\| \bphi(s_{m,h}^k,a_{m,h}^k)\big\|_{(\bLambda_{m,h}^k)^{-1}} \notag \\
& \leq HMK \alpha_{\delta^\prime} \varepsilon + \Bigg(5 H \sqrt{d} C_{\delta^{\prime}}+5 \sqrt{\frac{2 d \log (\sqrt{N} / \delta^{\prime})}{3 \beta_K}}+\frac{4}{3}\Bigg) \\
&\qquad \times \sum_{h=1}^H \bigg(\log \bigg(\frac{\operatorname{det}({\bLambda}_h^K)}{\operatorname{det}(\lambda \Ib)}\bigg)+1\bigg) M \sqrt{\gamma}+2 \sqrt{M K \log \bigg(\frac{\operatorname{det}({\bLambda}_h^K)}{\operatorname{det}(\lambda \Ib)}\bigg)} \notag \\
& \leq HMK \alpha_{\delta^\prime} \varepsilon + \Bigg(5 H \sqrt{d} C_{\delta^{\prime}}+5 \sqrt{\frac{2 d \log (\sqrt{N} / \delta^{\prime})}{3 \beta_K}}+\frac{4}{3}\Bigg) \\
&\qquad \times H\Big( d(\log(1+MK/d)+1)M\sqrt{\gamma} + 2\sqrt{MK d\log(1+MK/d)}\Big). \notag\\
\end{align*}
The first inequality follows from \Cref{lem:optimism_lmc}, the second inequality follows from \Cref{lem:error_bound_lmc}, the third inequality follows from \Cref{lem:coor_sum_phi_bound_PHE}, the last inequality holds due to \Cref{lem:Determinant-Trace} and the fact that $\|\bphi(\cdot)\|_2 \leq 1$. 

Here we choose $\varepsilon = dH\sqrt{d/MK}/ 
\alpha_{\delta^\prime} = \widetilde{\mathcal{O}}(\sqrt{1/dHM^3K^4N})$ and choose $\frac{1}{\sqrt{\beta_K}}=20 H \sqrt{d} C_{\delta^\prime}+\frac{16}{3}$, we have
{\small\begin{align} \label{equ:lmc_proof_term3}
\sum_{m\in \mathcal{M}}\sum_{k=1}^K\sum_{h=1}^H\big(\mathbb{E}_{\pi^*}\big[l_{m,h}^k(s_{m,h}, a_{m,h}) | s_{m,1} = s_{m,1}^k\big] - l_{m,h}^k\big(s_{m,h}^k, a_{m,h}^k\big)\big) \leq \widetilde{\mathcal{O}}\big(dH^2\big(dM\sqrt{\gamma}+\sqrt{dMK}\big)\big),
\end{align}}%
occurs with probability at least $1-|\mathcal{C}(\varepsilon)| {c_0^\prime}^N -2\delta^\prime - MHK( \delta^\prime + {\delta^\prime}^2)$.

We set $\delta^\prime = \delta/12(MHK+1)$ and choose $N = \bar{C}\log(\delta)/\log(c_0^\prime)$ where $\bar{C}=\widetilde{\mathcal{O}}(d)$, then we have 
\begin{align*}
    1-|\mathcal{C}(\varepsilon)| {c_0^\prime}^N -2\delta^\prime - MHK( \delta^\prime + {\delta^\prime}^2) \geq 1 - \delta/3.
\end{align*}
\textbf{Combining Terms (i)(ii)(iii) together:} Based on \eqref{regret_analysis_negative_term_lmc}, \eqref{equ:azuma_hoeffding_term_lmc} and \eqref{equ:lmc_proof_term3}. By taking union bound, we get that the final regret bound for \algnameLMC\ is $\widetilde{\mathcal{O}}\big(dH^2\big(dM\sqrt{\gamma}+\sqrt{dMK}\big)\big)$ with probability at least $1-\delta$.
\end{proof}

\section{Proof of Supporting Lemmas in \Cref{Parallel MDP Proofs with homogeneity (LMC)}} 
\label{sec:proof_supporting_lemmas_lmc}

\subsection{Proof of \Cref{pro:gauss_param}}

Recall from \Cref{algo:exploration_part_LMC_general_function_class}, the LMC update rule is
\begin{align*}
    \wb_{m,h}^{k,j} =\wb_{m,h}^{k,j-1} - \eta_{m,k} \nabla L_{m, h}^k\big(\wb_{m,h}^{k,j-1}\big) + \sqrt{2\eta_{m,k} \beta_{m,k}^{-1}}\bepsilon_{m,h}^{k,j},
\end{align*}
where we have $\nabla L_{m, h}^k\big(\wb_{m,h}^{k,j-1}\big) = 2 \big(\bLambda_{m,h}^k \wb_{m,h}^{k,j-1} - \bb_{m,h}^k\big)$. Plug in the above formula, then we can calculate that
\begin{align*}
    \wb_{m,h}^{k, J_k} &= \wb_{m,h}^{k,J_k-1} - 2\eta_{m,k} \big(\bLambda_{m,h}^k \wb_{m,h}^{k, J_k -1} - \bb_{m,h}^k\big) + \sqrt{2\eta_{m,k} \beta_{m,k}^{-1}}\bepsilon_{m,h}^{k,J_k} \\
    &= \big(\Ib - 2\eta_{m,k} \bLambda_{m,h}^{k}\big)\wb_{m,h}^{k,J_k -1} +  2\eta_{m,k} \bb_{m,h}^k + \sqrt{2\eta_{m,k} \beta_{m,k}^{-1}}\bepsilon_{m,h}^{k,J_k}\\
    &= \big(\Ib - 2\eta_{m,k} \bLambda_{m,h}^{k}\big)^{J_k}\wb_{m,h}^{k,0} +  \sum_{l=0}^{J_k-1} (\Ib - 2\eta_{m,k} \bLambda_{m,h}^{k})^l\Big(2\eta_{m,k} \bb_{m,h}^k + \sqrt{2\eta_{m,k} \beta_{m,k}^{-1}}\bepsilon_{m,h}^{k,J_k-l}\Big)\\
    &=\big(\Ib - 2\eta_{m,k} \bLambda_{m,h}^{k}\big)^{J_k}\wb_{m,h}^{k,0} +2\eta_{m,k}  \sum_{l=0}^{J_k-1} \big(\Ib - 2\eta_{m,k} \bLambda_{m,h}^{k}\big)^l \bb_{m,h}^k \\
    &\qquad+ \sqrt{2\eta_{m,k} \beta_{m,k}^{-1}}\sum_{l=0}^{J_k-1} \big(\Ib - 2\eta_{m,k} \bLambda_{m,h}^{k}\big)^l \bepsilon_{m,h}^{k, J_k-l},
\end{align*}
where the third equality follows from iteration.
Denote that $\Ab_i = \Ib - 2 \eta_{m,i} \bLambda_{m,h}^i$. Moreover, we choose the step size such that $0 < \eta_{m,i} < 1/\big(2\lambda_{\max}\big(\bLambda_{m,h}^i\big)\big)$. Thus we have
\begin{align*}
    \wb_{m,h}^{k, J_k} &= \Ab_k^{J_k}\wb_{m,h}^{k-1,J_{k-1}} +2\eta_{m,k}  \sum_{l=0}^{J_k-1}\Ab_k^l\bLambda_{m,h}^k \widehat{\wb}^k_{m,h}+ \sqrt{2\eta_{m,k} \beta_{m,k}^{-1}}\sum_{l=0}^{J_k-1}\Ab_k^l \bepsilon_{m,h}^{k, J_k-l}\\
    &= \Ab_k^{J_k}\wb_{m,h}^{k-1,J_{k-1}} +(\Ib-\Ab_k)\big(\Ib + \Ab_k + ... + \Ab_k^{J_k-1}\big)\widehat{\wb}_{m,h}^k +\sqrt{2\eta_{m,k} \beta_{m,k}^{-1}}\sum_{l=0}^{J_k-1}\Ab_k^l \bepsilon_{m,h}^{k, J_k-l}\\
    &= \Ab_k^{J_k}\wb_{m,h}^{k-1,J_{k-1}} +\big(\Ib-\Ab_k^{J_k}\big)\widehat{\wb}_{m,h}^k +\sqrt{2\eta_{m,k} \beta_{m,k}^{-1}}\sum_{l=0}^{J_k-1}\Ab_k^l \bepsilon_{m,h}^{k, J_k-l}\\
    &= \Ab_k^{J_k}...\Ab_1^{J_1}\wb_{m,h}^{1,0} + \sum_{i=1}^k \Ab_k^{J_k}...\Ab_{i+1}^{J_{i+1}}\big(\Ib-\Ab_i^{J_i}\big)\widehat{\wb}_{m,h}^i \\
    &\qquad+ \sum_{i=1}^k\sqrt{2\eta_{m,i} \beta_{m,i}^{-1}}\Ab_k^{J_k}...\Ab_{i+1}^{J_{i+1}}\sum_{l=0}^{J_i-1}\Ab_i^l \bepsilon_{m,h}^{i, J_i-l},
\end{align*}
where the first equality holds because $\bb_{m,h}^k = \bLambda_{m,h}^k \widehat{\wb}_{m,h}^k$ and $\wb_{m,h}^{k-1,J_{k-1}} = \wb_{m,h}^{k,0}$, the third equality follows from the fact that $\Ib + \Ab + ... + \Ab^{n-1} = (\Ib-\Ab^n)(\Ib-\Ab)^{-1}$, and the fourth equality holds because of iteration.

Note that $\bepsilon_{m,h}^{i, J_i-l} \sim \mathcal{N}(\zero, \Ib)$, based on the property of multivariate Gaussian distribution, we have $\wb_{m,h}^{k, J_k} \sim \mathcal{N}\big(\bmu_{m,h}^{k,J_k}, \bSigma_{m,h}^{k, J_k}\big)$. Then we can directly get the mean vector
\begin{align*}
    \bmu_{m,h}^{k,J_k} = \Ab_k^{J_k}...\Ab_1^{J_1}\wb_{m,h}^{1,0} + \sum_{i=1}^k \Ab_k^{J_k}...\Ab_{i+1}^{J_{i+1}}\big(\Ib-\Ab_i^{J_i}\big)\widehat{\wb}_{m,h}^i.
\end{align*}
Next we will calculate the covariance matrix $\bSigma_{m,h}^{k, J_k}$. For simplicity, we define $\Mb_i = \sqrt{2 \eta_{m,i}\beta_{m,i}^{-1}}\Ab_k^{J_k}...\Ab_{i+1}^{J_{i+1}}$, thus we have
\begin{align*}
    \Mb_i \sum_{l=0}^{J_i -1}\Ab_i^l\bepsilon_{m,h}^{i, J_i -l} \sim \mathcal{N}\Bigg(\zero, \sum_{l=0}^{J_i -1}\Mb_i\Ab_i^l\big(\Mb_i\Ab_i^l\big)^\top\Bigg) \sim \cN\Bigg(\zero, \Mb_i\Bigg(\sum_{l=0}^{J_i - 1}\Ab_i^{2l}\Bigg)\Mb_i^\top \Bigg).  
\end{align*}
Thus we get the covariance matrix $\bSigma_{m,h}^{k, J_k}$,
\begin{align*}
    \bSigma_{m,h}^{k, J_k} =& \sum_{i=1}^k \Mb_i \Bigg(\sum_{l=0}^{J_i-1}\Ab_i^{2l} \Bigg)\Mb_i^\top\\=&\sum_{i=1}^k 2\eta_{m,i}\beta_{m,i}^{-1}\Ab_k^{J_k}...\Ab_{i+1}^{J_{i+1}}\Bigg(\sum_{l=0}^{J_i-1}\Ab_i^{2l} \Bigg)\Ab_{i+1}^{J_{i+1}}... \Ab_{k}^{J_k}\\=& \sum_{i=1}^k 2\eta_{m,i}\beta_{m,i}^{-1}\Ab_k^{J_k}...\Ab_{i+1}^{J_{i+1}}\big(\Ib-\Ab_i^{2J_i}\big)(\Ib-\Ab_i^2)^{-1}\Ab_{i+1}^{J_{i+1}}... \Ab_{k}^{J_k}\\=&\sum_{i=1}^k\frac{1}{\beta_{m,i}}\Ab_k^{J_k}...\Ab_{i+1}^{J_{i+1}}\big(\Ib-\Ab_i^{2J_i}\big)\big(\bLambda_{m,h}^i\big)^{-1}(\Ib+\Ab_i)^{-1}\Ab_{i+1}^{J_{i+1}}... \Ab_{k}^{J_k},
\end{align*}
where the third equality follows from the fact that $\Ib + \Ab + ... + \Ab^{n-1} = (\Ib-\Ab^n)(\Ib-\Ab)^{-1}$. Here we complete the proof.

\subsection{Proof of \Cref{lem:est_para_bound_PHE}}
\begin{proof}
Note that $\widehat{\wb}_{m,h}^k = \big(\bLambda_{m,k}^k\big)^{-1} \bb_{m,h}^k$, we can calculate that
\begin{align*}
    \big\|\widehat{\wb}_{m,h}^k\big\| &= \Big\|\big(\bLambda_{m,h}^k\big)^{-1} \bb_{m,h}^k\Big\|\\ 
    &= \Bigg\|\big(\bLambda_{m,h}^k\big)^{-1} \sum_{(s^l,a^l,{s^\prime}^l) \in U_{m,h}(k)}\big[r_h\big(s^l,a^l\big) + V_{m, h+1}^k({s^\prime}^l)\big] \bphi\big(s^l,a^l\big)\Bigg\| \\
    &\leq \frac{1}{\sqrt{\lambda}}\sqrt{\mathcal{K}(k)}\Bigg(\sum_{(s^l,a^l,{s^\prime}^l) \in U_{m,h}(k)}\big\|\big[r_h\big(s^l,a^l\big) + V_{m, h+1}^k({s^\prime}^l)\big] \bphi\big(s^l,a^l\big)\big\|^2_{(\bLambda_{m,h}^k)^{-1}}\Bigg)^{1/2}\\ 
    &\leq \frac{2H}{\sqrt{\lambda}}\sqrt{\mathcal{K}(k)}\Bigg(\sum_{(s^l,a^l,{s^\prime}^l) \in U_{m,h}(k)}\big\|\bphi(s^l, a^l)\big\|^2_{(\bLambda_{m,h}^k)^{-1}}\Bigg)^{1/2} \\ 
    &\leq 2H \sqrt{\mathcal{K}(k)d/ \lambda}\\ 
    &\leq 2H \sqrt{Mkd/ \lambda},
\end{align*}
where the first inequality follows from \Cref{lem:inequal_eigen_matrix}, the second inequality is due to $0 \leq V_{m,h}^k \leq H-h+1$, $0 \leq r_h \leq 1$ and $\|\bphi(s,a)\| \leq 1$, the third inequality follows from \Cref{lem:inequal_kappa}, and the last inequality holds because $\mathcal{K}(k) = (M-1)k_s + k-1 \leq Mk$.
\end{proof}

\subsection{Proof of \Cref{lem:q_bound}} \label{proof:lem:q_bound}

\begin{proof}
We separate the error into two terms and bound them, respectively,
\begin{align}\label{eq:phi_omega_diff}
    \Big|\bphi(s,a)^\top \wb_{m,h}^{k, J_k} - \bphi(s,a)^\top \widehat{\wb}_{m,h}^k\Big| \leq \underbrace{\Big|\bphi(s,a)^\top\Big(\wb_{m,h}^{k, J_k} - \bmu_{m,h}^{k,J_k}\Big)\Big|}_{I_1} + \underbrace{\Big|\bphi(s,a)^\top\Big(\bmu_{m,h}^{k,J_k} - \widehat{\wb}_{m,h}^k\Big)\Big|}_{I_2}. 
\end{align}
\textbf{Bounding Term $I_1$ in \eqref{eq:phi_omega_diff}:} by Cauchy-Schwarz inequality, we have
\begin{align*}
    \Big|\bphi(s,a)^\top\Big(\wb_{m,h}^{k, J_k} - \bmu_{m,h}^{k,J_k}\Big)\Big| \leq \Big\|\bphi(s,a)\Big\|_{\bSigma_{m,h}^{k, J_k}} \cdot \Big\|\wb_{m,h}^{k, J_k} - \bmu_{m,h}^{k,J_k} \Big\|_{(\bSigma_{m,h}^{k, J_k})^{-1}}.
\end{align*}
By choosing $\eta_{m,k} \leq 1/(4\lambda_{\max}(\bLambda_{m,h}^k))$ for all $k$ and $m$, then we have
\begin{align}\label{eq:A_k_inequal}
    \frac{1}{2}\Ib &\preccurlyeq \Ab_k = \Ib - 2\eta_{m,k} \bLambda_{m,h}^k \preccurlyeq (1 - 2 \eta_{m,k} \lambda_{\min}(\bLambda_{m,h}^k))\Ib, \\
    \frac{3}{2}\Ib &\preccurlyeq \Ib+ \Ab_k = 2\Ib - 2\eta_{m,k} \bLambda_{m,h}^k \preccurlyeq 2\Ib. \notag
\end{align}
Recall the definition of $\bSigma_{m,h}^{k, J_k}$ in \Cref{pro:gauss_param}. By choosing $\beta_{m,i} = \beta_K$ for all $i \in [k]$ and $m \in \mathcal{M}$, then we have
\begin{align*}
    &\bphi(s,a)^\top\bSigma_{m,h}^{k, J_k}\bphi(s,a) \\
    &= \sum_{i=1}^k\frac{1}{\beta_{m,i}}\bphi(s,a)^\top \Ab_k^{J_k}...\Ab_{i+1}^{J_{i+1}}\big(\Ib-\Ab_i^{2J_i}\big)\big(\bLambda_{m,h}^i\big)^{-1}(\Ib+\Ab_i)^{-1}\Ab_{i+1}^{J_{i+1}}... \Ab_{k}^{J_k}\bphi(s,a)\\ 
    &\leq \frac{2}{3\beta_{m,i}}\sum_{i=1}^k \bphi(s,a)^\top \Ab_k^{J_k}...\Ab_{i+1}^{J_{i+1}} \Big(\big(\bLambda_{m,h}^i\big)^{-1} - \Ab_i^{J_i}\big(\bLambda_{m,h}^i\big)^{-1}\Ab_i^{J_i}\Big)\Ab_{i+1}^{J_{i+1}}... \Ab_{k}^{J_k}\bphi(s,a) \\
    &= \frac{2}{3\beta_K}\sum_{i=1}^{k-1} \bphi(s,a)^\top \Ab_k^{J_k}...\Ab_{i+1}^{J_{i+1}} \Big(\big(\bLambda_{m,h}^i\big)^{-1} - \big(\bLambda_{m,h}^{i+1}\big)^{-1}\Big)\Ab_{i+1}^{J_{i+1}}... \Ab_{k}^{J_k}\bphi(s,a) \\ 
    &\qquad -\frac{2}{3\beta_K}\bphi(s,a)^\top \Ab_k^{J_k}...\Ab_1^{J_1}(\bLambda_{m,h}^1)^{-1}\Ab_1^{J_1}...\Ab_k^{J_k}\bphi(s,a) \\
    &\qquad+ \frac{2}{3\beta_K}\bphi(s,a)^\top(\bLambda_{m,h}^k)^{-1}\bphi(s,a),
\end{align*}
where the first inequality follows from \eqref{eq:A_k_inequal}. By the definition of $\bLambda_{m,h}^i$ and Woodbury formula, we have 
\begin{align*}
        \big(\bLambda_{m,h}^i\big)^{-1} - \big(\bLambda_{m,h}^{i+1}\big)^{-1} &= \big(\bLambda_{m,h}^i\big)^{-1} - \Bigg(\bLambda_{m,h}^i + \sum_{(s^l,a^l,{s^\prime}^l) \in U_{m,h}(k)}\bphi\big(s^l,a^l\big)\bphi\big(s^l,a^l\big)^\top\Bigg)^{-1}\\ 
    &= (\bLambda_{m,h}^i)^{-1}\bvarphi(\Ib_n + \bvarphi^\top(\bLambda_{m,h}^i)^{-1}\bvarphi)^{-1}\bvarphi^\top (\bLambda_{m,h}^i)^{-1},
\end{align*}
where $\bvarphi$ is a matrix with the dimension of $d \times n$, $n$ is the number difference of $\bphi\big(s^l,a^l\big)$ between $\big(\bLambda_{m,h}^i\big)^{-1}$ and $\big(\bLambda_{m,h}^{i+1}\big)^{-1}$ (\textit{i.e.} we concatenate all $\bphi\big(s^l,a^l\big)$ into the matrix $\bvarphi$). Note that $n \leq M$, then we have
\begin{align*}
    &\bphi(s, a)^\top  \Ab_k^{J_k} ... \Ab_{i+1}^{J_{i+1}}\Big(\big(\bLambda_{m,h}^i\big)^{-1} - \big(\bLambda_{m,h}^{i+1}\big)^{-1}\Big)\Ab_{i+1}^{J_{i+1}}... \Ab_k^{J_k} \bphi(s, a)\\ 
    &=\bphi(s, a)^\top \Ab_k^{J_k} ... \Ab_{i+1}^{J_{i+1}}\Big(\big(\bLambda_{m,h}^i\big)^{-1}\bvarphi(\Ib_n + \bvarphi^\top\big(\bLambda_{m,h}^i\big)^{-1}\bvarphi)^{-1}\bvarphi^\top \big(\bLambda_{m,h}^i\big)^{-1}\Big)\Ab_{i+1}^{J_{i+1}}... \Ab_k^{J_k} \bphi(s, a) \\ 
    &\leq \bphi(s, a)^\top \Ab_k^{J_k} ... \Ab_{i+1}^{J_{i+1}}\big(\bLambda_{m,h}^i\big)^{-1}\bvarphi\bvarphi^\top \big(\bLambda_{m,h}^i\big)^{-1}\Ab_{i+1}^{J_{i+1}}... \Ab_k^{J_k} \bphi(s, a) \\ 
    &= \big\|\bphi(s,a)^\top \Ab_k^{J_k} ... \Ab_{i+1}^{J_i +1}\big(\bLambda_{m,h}^{i}\big)^{-1} \bvarphi\big\|_2^2 \\ 
    &\leq \big\|\Ab_k^{J_k} ... \Ab_{i+1}^{J_i +1}(\bLambda_{m,h}^{i})^{-1/2}\bphi(s,a)\big\|^2_2 \cdot \big\|(\bLambda_{m,h}^{i})^{-1/2}\bvarphi\big\|^2_F \\ 
    &\leq \prod_{j=i+1}^k \Big( 1-2\eta_{m,j} \lambda_{\min} \big(\bLambda_{m,h}^j\big) \Big)^{2J_j} \operatorname{tr}\big(\bvarphi^\top\big(\bLambda_{m,h}^i\big)^{-1}\bvarphi\big) \|\bphi(s,a)\|^2_{(\bLambda_{m,h}^i)^{-1}},
\end{align*}
where $\norm{\cdot}_F$ is Frobenius norm and the last inequality is due to $\norm{\bLambda^{-\frac{1}{2}}\Xb}_F^2=\operatorname{tr}(\Xb^\top\bLambda^{-1}\Xb)$ and \eqref{eq:A_k_inequal}. Thus we have
{\small\begin{align*}
    \big\|\bphi(s,a)\big\|^2_{\bSigma_{m,h}^{k, J_k}} &\leq \frac{2}{3\beta_K}\sum_{i=1}^k \prod_{j=i+1}^k \Big( 1-2\eta_{m,j} \lambda_{\min} \big(\bLambda_{m,h}^j\big) \Big)^{2J_j}\operatorname{tr}\big(\bvarphi^\top\big(\bLambda_{m,h}^i\big)^{-1}\bvarphi\big) \norm{\bphi(s,a)}^2_{(\bLambda_{m,h}^i)^{-1}} \\
    &\qquad+ \frac{2}{3\beta_K}\|\bphi(s,a)\|^2_{(\bLambda_{m,h}^k)^{-1}}.
\end{align*}}%
Using the inequality $\sqrt{a^2 + b^2} \leq a + b$ for $a, b >0$, we get
{\small\begin{align*}
    \big\|\bphi(s,a)\big\|_{\bSigma_{m,h}^{k, J_k}} &\leq \sqrt{\frac{2}{3\beta_K}} \bigg(\sum_{i=1}^k \prod_{j=i+1}^k \Big( 1-2\eta_{m,j} \lambda_{\min} (\bLambda_{m,h}^j) \Big)^{J_j}\operatorname{tr}(\bvarphi^\top(\bLambda_{m,h}^i)^{-1}\bvarphi)^{{\frac{1}{2}}} \norm{\bphi(s,a)}_{(\bLambda_{m,h}^i)^{-1}} \\
    &\qquad+ \|\bphi(s,a)\|_{(\bLambda_{m,h}^k)^{-1}} \bigg) \\
    & \overset{\text{def}}{=} \widehat{g}_{m,h}^k(\bphi(s,a)).
\end{align*}}%
Note that $\Big(\bSigma_{m,h}^{k, J_k}\Big)^{-1/2}\Big(\wb_{m,h}^{k, J_k}- \bmu_{m,h}^{k,J_k}\Big) \sim \mathcal{N}(\zero, \Ib_d)$. By the Gaussian concentration property, we have
\begin{align*} 
    \mathbb{P}\bigg(\Big\|\Big(\bSigma_{m,h}^{k, J_k}\Big)^{-1/2}\Big(\wb_{m,h}^{k, J_k}- \bmu_{m,h}^{k,J_k}\Big)\Big\| \geq \sqrt{4d \log(1/ \delta)}\bigg) \leq \delta^2.
\end{align*}
Then we have
\begin{align} \label{eq:delta_bound}
    &\mathbb{P} \Big(\Big|\bphi(s,a)^\top \wb_{m,h}^{k, J_k} - \bphi(s,a)^\top \bmu_{m,h}^{k,J_k}\Big| \geq 2\widehat{g}_{m,h}^k(\bphi(s,a))\sqrt{d \log (1/\delta)} \Big) \notag\\ 
    &\leq  \mathbb{P} \Big(\Big|\bphi(s,a)^\top \wb_{m,h}^{k, J_k} - \bphi(s,a)^\top \bmu_{m,h}^{k,J_k}\Big| \geq 2\sqrt{d \log (1/\delta)}\|\bphi(s,a)\|_{\bSigma_{m,h}^{k, J_k}} \Big) \notag\\ 
    &\leq \mathbb{P} \Big(\Big\|\bphi(s,a)\Big\|_{\bSigma_{m,h}^{k, J_k}} \cdot \Big\|\wb_{m,h}^{k, J_k} - \bmu_{m,h}^{k,J_k} \Big\|_{(\bSigma_{m,h}^{k, J_k})^{-1}}  \geq 2\sqrt{d \log (1/\delta)}\|\bphi(s,a)\|_{\bSigma_{m,h}^{k, J_k}} \Big) \notag\\ 
    &= \mathbb{P} \Big(\Big\|\big(\bSigma_{m,h}^{k, J_k}\big)^{-1/2} \big(\wb_{m,h}^{k, J_k} - \bmu_{m,h}^{k,J_k}\big)\Big\|  \geq 2\sqrt{d \log (1/\delta)}\Big) \notag\\ 
    &\leq \delta^2.
\end{align}
\textbf{Bounding Term $I_2$ in \eqref{eq:phi_omega_diff}:} Recall from \Cref{pro:gauss_param}, we have
\begin{align*}
     \bmu_{m,h}^{k,J_k} &= \Ab_k^{J_k}...\Ab_1^{J_1}\wb_{m,h}^{1,0} + \sum_{i=1}^k \Ab_k^{J_k}...\Ab_{i+1}^{J_{i+1}}\big(\Ib-\Ab_i^{J_i}\big)\widehat{\wb}_{m,h}^i \\ &= \Ab_k^{J_k}...\Ab_1^{J_1}\wb_{m,h}^{1,0} + \sum_{i=1}^{k-1} \Ab_k^{J_k}...\Ab_{i+1}^{J_{i+1}}(\widehat{\wb}_{m,h}^i - \widehat{\wb}_{m,h}^{i+1}) - \Ab_k^{J_k}...\Ab_1^{J_1}\widehat{\wb}_{m,h}^1 + \widehat{\wb}_{m,h}^k \\ 
     &= \Ab_k^{J_k}...\Ab_1^{J_1}(\wb_{m,h}^{1,0} - \widehat{\wb}_{m,h}^1) + \sum_{i=1}^{k-1}\Ab_k^{J_k}...\Ab_{i+1}^{J_{i+1}}(\widehat{\wb}_{m,h}^i - \widehat{\wb}_{m,h}^{i+1})+ \widehat{\wb}_{m,h}^k. 
\end{align*}
Then we can get
\begin{align*}
    &\bphi(s,a)^\top(\bmu_{m,h}^{k,J_k}- \widehat{\wb}_{m,h}^k) \\&= \underbrace{\bphi(s,a)^\top \Ab_k^{J_k}...\Ab_1^{J_1}(\wb_{m,h}^{1,0} - \widehat{\wb}_{m,h}^1)}_{I_{21}} + \underbrace{\bphi(s,a)^\top \sum_{i=1}^{k-1}\Ab_k^{J_k}...\Ab_{i+1}^{J_{i+1}}(\widehat{\wb}_{m,h}^i - \widehat{\wb}_{m,h}^{i+1})}_{I_{22}}. 
\end{align*}
In \Cref{algo:exploration_part_LMC_general_function_class}, we choose $\wb_{m,h}^{1,0} = \zero$ and $\widehat{\wb}_{m,h}^1 = (\bLambda_{m,h}^1)^{-1}\bb_{m,h}^1 = \zero$. Thus we have $I_{21} = 0$. To bound term $I_{22}$, we use the inequalities in \eqref{eq:A_k_inequal} and \Cref{lem:est_para_bound_PHE}, we have
\begin{align*}
    I_{22} &\leq \Big|\sum_{i=1}^{k-1}\bphi(s,a)^\top \Ab_k^{J_k}...\Ab_{i+1}^{J_{i+1}}(\widehat{\wb}_{m,h}^i - \widehat{\wb}_{m,h}^{i+1})\Big| \\ 
    &\leq \sum_{i=1}^{k-1} \prod_{j=i+1}^k \Big( 1-2\eta_{m,j} \lambda_{\min} (\bLambda_{m,h}^j) \Big)^{J_j}\|\bphi(s,a)\| (\|\widehat{\wb}_{m,h}^i\| +\|\widehat{\wb}_{m,h}^{i+1} \|) \\ 
    &\leq \sum_{i=1}^{k-1} \prod_{j=i+1}^k \Big( 1-2\eta_{m,j} \lambda_{\min} (\bLambda_{m,h}^j) \Big)^{J_j}\|\bphi(s,a)\| \big(2H\sqrt{Mid/\lambda} + 2H\sqrt{M(i+1)d/\lambda}\big) \\
    &\leq 4H \sqrt{MKd/\lambda}\sum_{i=1}^{k-1} \prod_{j=i+1}^k \Big( 1-2\eta_{m,j} \lambda_{\min} (\bLambda_{m,h}^j) \Big)^{J_j}\|\bphi(s,a)\|.
\end{align*}
Thus we get 
\begin{align} \label{eq:phi_mu_omega_bound}
   \bphi(s,a)^\top\big(\bmu_{m,h}^{k,J_k}- \widehat{\wb}_{m,h}^k\big) \leq 4H \sqrt{MKd/\lambda}\sum_{i=1}^{k-1} \prod_{j=i+1}^k \Big( 1-2\eta_{m,j} \lambda_{\min} (\bLambda_{m,h}^j) \Big)^{J_j}\|\bphi(s,a)\|.
\end{align}
Substituting \eqref{eq:delta_bound} and \eqref{eq:phi_mu_omega_bound} into \eqref{eq:phi_omega_diff}, with probability at least $1-\delta^2$, we have
{\small\begin{align}\label{eq:phi_omega_diff2}
     &\Big|\bphi(s,a)^\top \wb_{m,h}^{k, J_k} -\bphi(s,a)^\top \widehat{\wb}_{m,h}^{k}\Big|\notag\\
     &\leq 4H \sqrt{MKd/\lambda}\sum_{i=1}^{k-1} \prod_{j=i+1}^k \Big( 1-2\eta_{m,j} \lambda_{\min} (\bLambda_{m,h}^j) \Big)^{J_j}\|\bphi(s,a)\| + 2\sqrt{\frac{2d\log(1/\delta)}{3\beta_K}}\|\bphi(s,a)\|_{(\bLambda_{m,h}^k)^{-1}}\notag\\
     &\qquad + 2\sqrt{\frac{2d\log(1/\delta)}{3\beta_K}}\sum_{i=1}^k \prod_{j=i+1}^k \Big(1-2\eta_{m,j} \lambda_{\min} (\bLambda_{m,h}^j) \Big)^{J_j}\operatorname{tr}\big(\bvarphi^\top\big(\bLambda_{m,h}^i\big)^{-1}\bvarphi\big)^{\frac{1}{2}}\norm{\bphi(s,a)}_{(\bLambda_{m,h}^i)^{-1}}\notag \\
     & \overset{\text{def}}{=} W.
\end{align}}%
Here we choose $\eta_{m,j} = 1/(4 \lambda_{\max}(\bLambda_{m,h}^j))$ and set $\kappa_j = \lambda_{\max}\big(\bLambda_{m,h}^j\big)/ \lambda_{\min}\big(\bLambda_{m,h}^j\big)$, then we have
\begin{align*}
    \Big(1-2\eta_{m,j} \lambda_{\min}\big(\bLambda_{m,h}^j\big) \Big)^{J_j} = (1-1/2\kappa_j)^{J_j}.
\end{align*}
We want to have $(1-1/2\kappa_j)^{J_j} < \epsilon$, it suffices to choose $J_j$ such that 
\begin{align*}
    J_j \geq \frac{\log (1/\epsilon)}{\log\big(\frac{1}{1- 1/2\kappa_j}\big)}. 
\end{align*}
Note that $1/2\kappa_j \leq 1/2$, we have $\log(1/(1-1/2\kappa_j)) \geq 1/2\kappa_j$ because $e^{-x} > 1 - x$ for $0<x<1$. Therefore, we only need to pick $J_j \geq 2\kappa_j \log (1/\epsilon)$.

Also note that $1 \geq {\|\bphi(s,a)\| \geq \sqrt{\lambda}\|\bphi(s,a)\|_{(\bLambda_{m,h}^i)^{-1}}}$ and $\operatorname{tr}\big(\bvarphi^\top\big(\bLambda_{m,h}^i\big)^{-1}\bvarphi\big) \leq M$ due to the fact that $n \leq M$. By setting $\epsilon = 1/(4HMKd)$ and $\lambda = 1$, we obtain
{\small\begin{align*}
    W &\leq \sum_{i=1}^{k-1}\epsilon^{k-i}4H\sqrt{MKd/\lambda}\|\bphi(s,a)\| +  2\sqrt{\frac{2d\log(1/\delta)}{3\beta_K}}\bigg(\|\bphi(s,a)\|_{(\bLambda_{m,h}^k)^{-1}} + \sum_{i=1}^{k-1} \epsilon^{k-i} \sqrt{M} \|\bphi(s,a)\|\bigg) \\ 
    &\leq \sum_{i=1}^{k-1}\epsilon^{k-i}4H\sqrt{MKd/\lambda}\sqrt{MK}\|\bphi(s,a)\|_{(\bLambda_{m,h}^k)^{-1}} \\
    &\qquad+ 2\sqrt{\frac{2d\log(1/\delta)}{3\beta_K}}\bigg(\|\bphi(s,a)\|_{(\bLambda_{m,h}^k)^{-1}} + \sum_{i=1}^{k-1} \epsilon^{k-i} M\sqrt{K}\|\bphi(s,a)\|_{(\bLambda_{m,h}^k)^{-1}}\bigg) \\ 
    &\leq \sum_{i=1}^{k-1} \epsilon^{k-i-1}\|\bphi(s,a)\|_{(\bLambda_{m,h}^k)^{-1}} + 2\sqrt{\frac{2d\log(1/\delta)}{3\beta_K}}\bigg(\|\bphi(s,a)\|_{(\bLambda_{m,h}^k)^{-1}} +   \sum_{i=1}^{k-1} \epsilon^{k-i-1}\|\bphi(s,a)\|_{(\bLambda_{m,h}^k)^{-1}}\bigg) \\ 
    &\leq \bigg(5\sqrt{\frac{2d\log(1/\delta)}{3\beta_K}} + \frac{4}{3}  \bigg)\|\bphi(s,a)\|_{(\bLambda_{m,h}^k)^{-1}},
\end{align*}}%
where the second inequality follows from $\|\bphi(s,a)\|_{(\bLambda_{m,h}^k)^{-1}} \geq 1/\sqrt{\mathcal{K}(k)+1}\|\bphi(s,a)\| \geq 1/\sqrt{MK}\|\bphi(s,a)\|$, the fourth inequality follows from $\sum_{i=1}^{k-1} \epsilon^{k-i-1} = \sum_{i=0}^{k-2}\epsilon^i < 1/(1-\epsilon) \leq 4/3$. Finally we have
\begin{align*}
    &\mathbb{P} \bigg(\Big|\bphi(s,a)^\top \wb_{m,h}^{k, J_k} - \bphi(s,a)^\top \widehat{\wb}_{m,h}^{k}\Big| \leq \bigg(5\sqrt{\frac{2d\log(1/\delta)}{3\beta_K}} + \frac{4}{3}  \bigg)\|\bphi(s,a)\|_{(\bLambda_{m,h}^k)^{-1}}\bigg)\\ 
    &\geq \mathbb{P}\Big(\Big|\bphi(s,a)^\top \wb_{m,h}^{k, J_k} - \bphi(s,a)^\top \widehat{\wb}_{m,h}^{k}\Big| \leq W \Big) \\ 
    &\geq 1- \delta^2.
\end{align*}
This completes the proof.
\end{proof}

\subsection{Proof of \Cref{lem:weight_bound_LMC}}
\begin{proof}
Recall that $\wb_{m,h}^{k, J_k} \sim \mathcal{N}\big(\bmu_{m,h}^{k,J_k}, \bSigma_{m,h}^{k, J_k}\big)$. Let $\bxi_{m,h}^{k, J_k} = \wb_{m,h}^{k, J_k} - \bmu_{m,h}^{k,J_k} \sim \mathcal{N}(\zero, \bSigma_{m,h}^{k, J_k})$, thus we have
\begin{align} \label{equ:weight_2_terms}
    \big\|\wb_{m,h}^{k, J_k}\big\|=\big\|\bmu_{m,h}^{k,J_k}+\bxi_{m,h}^{k, J_k}\big\| \leq\big\|\bmu_{m,h}^{k,J_k}\big\| + \big\|\bxi_{m,h}^{k, J_k}\big\|.
\end{align}
\textbf{Bounding $\big\|\bmu_{m,h}^{k,J_k}\big\|$ in \eqref{equ:weight_2_terms}:}  Based on \Cref{pro:gauss_param}, we have
\begin{align*}
    \big\|\bmu_{m,h}^{k,J_k}\big\| & =\Big\|\Ab_k^{J_k} \ldots \Ab_1^{J_1} \wb_{m,h}^{1,0}+\sum_{i=1}^k \Ab_k^{J_k} \ldots \Ab_{i+1}^{J_{i+1}}\big(\Ib-\Ab_i^{J_i}\big) \widehat{\wb}_{m,h}^i\Big\| \\
    & \leq \sum_{i=1}^k\big\|\Ab_k^{J_k} \ldots \Ab_{i+1}^{J_{i+1}}\big(\Ib-\Ab_i^{J_i}\big)\big\|_F\cdot\big\|\widehat{\wb}_{m,h}^i\big\| \\
    & \leq 2 H \sqrt{M K d/\lambda} \sum_{i=1}^k\big\|\Ab_k^{J_k} \ldots \Ab_{i+1}^{J_{i+1}}\big(\Ib-\Ab_i^{J_i}\big)\big\|_F \\
    & \leq 2 H d \sqrt{M K/\lambda} \sum_{i=1}^k\|\Ab_k\|_2^{J_k} \ldots\|\Ab_{i+1}\|_2^{J_{i+1}}\big\|\big(\Ib-\Ab_i^{J_i}\big)\big\|_2 \\
    & \leq 2 H d \sqrt{M K/\lambda} \sum_{i=1}^k \prod_{j=i+1}^k\big(1-2 \eta_{m,j} \lambda_{\min }\big(\bLambda_{m,h}^j\big)\big)^{J_j}\big(\|\Ib\|_2+\big\|\Ab_i\big\|_2^{J_i}\big) \\
    & \leq 2 H d \sqrt{M K/\lambda} \sum_{i=1}^k \prod_{j=i+1}^k\big(1-2 \eta_{m,j} \lambda_{\min }\big(\bLambda_{m,h}^j\big)\big)^{J_j}\big(1+\big(1-2 \eta_{m,i} \lambda_{\min }\big(\bLambda_{m,h}^i\big)\big)^{J_j}\big),
\end{align*}
where the second inequality holds from \Cref{lem:est_para_bound_PHE}, the third inequality follows from the fact that $\operatorname{rank}\big(\Ab_k^{J_k} \ldots \Ab_{i+1}^{J_{i+1}}\big(\Ib-\Ab_i^{J_i}\big)\big) \leq d$ and $\norm{\Xb}_2 \leq \norm{\Xb}_F \leq \operatorname{rank}(\Xb) \norm{\Xb}_2$ where $\|\Xb\|_2=\sigma_{\text{max}} (\Xb)$.

Recall that in \Cref{lem:q_bound}, we set $J_j \geq 2 \kappa_j \log (1 / \epsilon)$ where $\kappa_j=\lambda_{\max }\big(\bLambda_{m,h}^j\big) / \lambda_{\min }\big(\bLambda_{m,h}^j\big)$, $\epsilon=1 /(4 H M K d)$ and $\lambda=1$, thus we get
\begin{align*}
    \big\|\bmu_{m,h}^{k,J_k}\big\|  & \leq 2 H d \sqrt{MK/\lambda} \sum_{i=1}^k(\epsilon^{k-i}+\epsilon^{k-i+1}) \\
    & \leq 4 H d \sqrt{MK/\lambda} \sum_{i=0}^{\infty} \epsilon^i \\
    & \leq \frac{16}{3} H d \sqrt{MK}.
\end{align*}
\textbf{Bounding $\big\|\bxi_{m,h}^{k, J_k}\big\|$ in \eqref{equ:weight_2_terms}:} Note that $\bxi_{m,h}^{k, J_k} \sim \mathcal{N}\big(\zero, \bSigma_{m,h}^{k, J_k}\big)$, using Gaussian concentration \Cref{lem:gaussian_concentration_property_trace}, we have
\begin{align*}
   \mathbb{P}\bigg(\big\|\bxi_{m,h}^{k, J_k}\big\| \leq \sqrt{\frac{1}{\delta} \operatorname{tr}\big(\bSigma_{m,h}^{k, J_k}\big)}\bigg) \geq 1-\delta.
\end{align*}
Recall from \Cref{pro:gauss_param}, we have
\begin{align*}
    \operatorname{tr}\big(\bSigma_{m,h}^{k, J_k}\big) &= \sum_{i=1}^k \frac{1}{\beta_{m,i}} \operatorname{tr}\big(\Ab_k^{J_k} \ldots \Ab_{i+1}^{J_{i+1}}\big(\Ib-\Ab_i^{2 J_i}\big)(\bLambda_{m,h}^i)^{-1}(\Ib+\Ab_i)^{-1} \Ab_{i+1}^{J_{i+1}} \ldots \Ab_k^{J_k}\big) \\
    &\leq \sum_{i=1}^k \frac{1}{\beta_{m,i}} \operatorname{tr}\big(\Ab_k^{J_k}\big) \ldots \operatorname{tr}\big(\Ab_{i+1}^{J_{i+1}}\big) \operatorname{tr}\big(\Ib-\Ab_i^{2 J_i}\big) \operatorname{tr}\big(\big(\bLambda_{m,h}^i\big)^{-1}\big) \operatorname{tr}\big(\big(\Ib+\Ab_i)^{-1}\big)\\
    &\qquad\times \operatorname{tr}\big(\Ab_{i+1}^{J_{i+1}}\big) \ldots \operatorname{tr}\big(\Ab_k^{J_k}\big),
\end{align*}
where the inequality holds due to \Cref{lem:Horn and Johnson}. Recall from \eqref{eq:A_k_inequal} that, when $\eta_{m,k} \leq 1 /(4 \lambda_{\max }(\bLambda_{m,h}^k))$ for all $k$ and $m$, we have $\Ab_i^{J_i} \preccurlyeq (1-2 \eta_{m,k} \lambda_{\min }(\bLambda_{m,h}^k))^{J_j} \Ib$, set $\lambda=1$, then we obtain
\begin{align*}
    \operatorname{tr}(\Ab_i^{J_i}) \leq \operatorname{tr}\Big(\big(1-2 \eta_{m,k} \lambda_{\min }\big(\bLambda_{m,h}^k\big)\big)^{J_j} \Ib\Big) \leq d\big(1-2 \eta_{m,k} \lambda_{\min }\big(\bLambda_{m,h}^k\big)\big)^{J_j} \leq d \epsilon \leq 1. 
\end{align*}
Similarly, we have $\Ib-\Ab_i^{2 J_i} \preccurlyeq \big(1-\frac{1}{2^{2 J_i}}\big) \Ib$, then we get
\begin{align*}
    \operatorname{tr}(\Ib-\Ab_i^{2 J_i}) \leq\bigg(1-\frac{1}{2^{2 J_i}}\bigg) d<d.
\end{align*}
Also, based on $(\Ib+\Ab_i)^{-1} \preccurlyeq \frac{2}{3} \Ib$, we have
\begin{align*}
    \operatorname{tr}\big((\Ib+\Ab_i)^{-1}\big) \leq \frac{2}{3} d.
\end{align*}
Note that $\lambda_{\text{max}}\big(\big(\bLambda_{m,h}^i\big)^{-1}\big) \leq 1$, we have
\begin{align*}
    \operatorname{tr}\big(\big(\bLambda_{m,h}^i\big)^{-1}\big) \leq \sum \lambda\big(\big(\bLambda_{m,h}^i\big)^{-1}\big) \leq d.
\end{align*}
Combine the above results together and choose $\beta_{m, i}=\beta_K$ for all $i \in[K]$ and $m \in \mathcal{M}$, we have
\begin{align*}
    \operatorname{tr}\big(\bSigma_{m,h}^{k, J_k}\big) \leq \sum_{i=1}^K \frac{1}{\beta_{m,i}} \cdot \frac{2}{3} \cdot d^3=\frac{2}{3 \beta_K} K d^3 .
\end{align*}
Then we have
\begin{align*}
    \mathbb{P}\bigg(\big\|\bxi_{m,h}^{k, J_k}\big\| \leq \sqrt{\frac{1}{\delta} \cdot \frac{2}{3 \beta_K} K d^3}\bigg) \geq \mathbb{P}\bigg(\big\|\bxi_{m,h}^{k, J_k}\big\| \leq \sqrt{\frac{1}{\delta} \operatorname{tr}\big(\bSigma_{m,h}^{k, J_k}\big)}\bigg) \geq 1-\delta .
\end{align*}
\textbf{Combine above results together:} with probability at least $1-\delta$, we have
\begin{align*}
    \big\|\wb_{m,h}^{k, J_k}\big\| \leq \frac{16}{3} H d \sqrt{M K}+\sqrt{\frac{2 K}{3 \beta_K \delta}} d^{3 / 2}.
\end{align*}
This completes the proof.
\end{proof}

\subsection{Proof of \Cref{lem:est_error}} \label{proof:lem:est_error}
\begin{proof}
Based on \Cref{lem:weight_bound_LMC}, for any fixed $n \in [N]$, with probability at least $1-\delta$, for any $(m, k, h) \in \mathcal{M} \times [K] \times [H]$, we have
\begin{align*}
    \big\|\wb_{m,h}^{k, J_k, n}\big\| \leq \frac{16}{3} H d \sqrt{M K}+\sqrt{\frac{2 K}{3 \beta_K \delta}} d^{3 / 2}.
\end{align*}
By taking union over $n, m, k, h$, we have for all $(m, k, h) \in \mathcal{M} \times [K] \times [H]$ and for all $n \in [N]$, with probability $1-\delta / 2$, we have
\begin{align} \label{equ:def_B_delta/2}
    \big\|\wb_{m,h}^{k, J_k, n}\big\| \leq \frac{16}{3} H d \sqrt{M K}+\sqrt{\frac{4 N M H K^2}{3 \beta_K \delta}} d^{3 / 2} = B_{\delta/2NMHK} .    
\end{align}
Based on \Cref{lem:jin_D.4} and \Cref{lem:covering_num}, we have that for any $\varepsilon>0$ and $\delta>0$, with probability at least $1-\delta / 2$,
\begin{align*}
    & \Bigg\|\sum_{(s^l,a^l,{s^\prime}^l) \in U_{m,h}(k)} \bphi\big(s^l,a^l\big)\big[\big(V_{m, h+1}^k-\mathbb{P}_h V_{m, h+1}^k\big)\big(s^l,a^l\big)\big]\Bigg\|_{(\bLambda_{m,h}^k)^{-1}} \notag\\
    & \leq\bigg(4 H^2\bigg[\frac{d}{2} \log \bigg(\frac{k+\lambda}{\lambda}\bigg)+d \log \bigg( \frac{B_{\delta/2NMHK}}{\varepsilon}\bigg) +\log \frac{3}{\delta}\bigg]+\frac{8 k^2 \varepsilon^2}{\lambda}\bigg)^{1 / 2} \notag\\
    & \leq 2 H\bigg[\frac{d}{2} \log \bigg(\frac{k+\lambda}{\lambda}\bigg)+d \log \bigg( \frac{B_{\delta/2NMHK}}{\varepsilon}\bigg) +\log \frac{3}{\delta}\bigg]^{1 / 2}+\frac{2 \sqrt{2} k \varepsilon}{\sqrt{\lambda}} .
\end{align*}
Here we set $\lambda=1, \varepsilon=\frac{H}{2\sqrt{2}k}$, with probability at least $1-\delta / 2$, we have
\begin{align} \label{equ:lem:est_error_final_bound}
    & \Bigg\|\sum_{(s^l,a^l,{s^\prime}^l) \in U_{m,h}(k)} \bphi\big(s^l,a^l\big)\big[\big(V_{m, h+1}^k-\mathbb{P}_h V_{m, h+1}^k\big)\big(s^l,a^l\big)\big]\Bigg\|_{(\bLambda_{m,h}^k)^{-1}} \notag\\
    & \leq 2 H \sqrt{d}\bigg[\frac{1}{2} \log (k+1)+\log \bigg(\frac{B_{\delta/2NMHK}}{\frac{H}{2 \sqrt{2} k}}\bigg)+\log \frac{3}{\delta}\bigg]^{1 / 2}+H \notag\\
    & \leq 3 H \sqrt{d}\bigg[\frac{1}{2} \log (K+1)+\log \bigg(\frac{2 \sqrt{2} K B_{\delta/2NMHK}}{H}\bigg)+\log \frac{3}{\delta}\bigg]^{1 / 2} .
\end{align}
By applying union bound between \eqref{equ:def_B_delta/2} and \eqref{equ:lem:est_error_final_bound}, and define that $C_\delta=\Big[\frac{1}{2} \log (K+1)+\log \frac{3}{\delta}+\log \Big(\frac{2 \sqrt{2} K B_{\delta/2NMHK}}{H}\Big)\Big]^{1 / 2}$, finally we obtain that for all $(m, k, h) \in \mathcal{M} \times [K] \times [H]$,
\begin{align*}
    \Bigg\|\sum_{(s^l,a^l,{s^\prime}^l) \in U_{m,h}(k)} \bphi\big(s^l,a^l\big)\big[\big(V_{m, h+1}^k-\mathbb{P}_h V_{m, h+1}^k\big)\big(s^l,a^l\big)\big]\Bigg\|_{(\bLambda_{m,h}^k)^{-1}} \leq 3 H \sqrt{d} C_\delta,
\end{align*}
with probability at least $1-\delta$.
\end{proof}

\subsection{Proof of \Cref{lem_event}}
\label{proof:lem_event}
\begin{proof}
We denote the inner product over $\mathcal{S}$ by $\langle\cdot, \cdot\rangle_{\mathcal{S}}$. Based on $\mathbb{P}_h(\cdot|s,a) = \bigl \langle  \bphi(s,a), \mu_h(\cdot) \bigr \rangle_{\mathcal{S}}$ in \Cref{def:linear_mdp}, we have
{\small\begin{align} \label{equ:PV_lmc}
    \mathbb{P}_h V_{m,h+1}^k(s,a) &= \bphi(s,a)^\top\big\langle\bmu_h, V_{m,h+1}^k\big\rangle_{\mathcal{S}} \notag \\
    &= \bphi(s,a)^\top \big(\bLambda_{m,h}^k\big)^{-1} \big(\bLambda_{m,h}^k\big) \big\langle\bmu_h, V_{m,h+1}^k\big\rangle_{\mathcal{S}} \notag \\
    &= \bphi(s,a)^\top \big(\bLambda_{m,h}^k\big)^{-1} \Bigg( \sum_{(s^l,a^l,{s^\prime}^l) \in U_{m,h}(k)}\bphi\big(s^l,a^l\big)\bphi\big(s^l,a^l\big)^\top + \lambda \Ib\Bigg)\big\langle\bmu_h, V_{m,h+1}^k\big\rangle_{\mathcal{S}} \notag \\
    &= \bphi(s,a)^\top \big(\bLambda_{m,h}^k\big)^{-1} \Bigg( \sum_{(s^l,a^l,{s^\prime}^l) \in U_{m,h}(k)}\bphi\big(s^l,a^l\big)\big(\mathbb{P}_h V_{m,h+1}^k\big)\big(s^l,a^l\big) + \lambda \Ib\big\langle\bmu_h, V_{m,h+1}^k\big\rangle_{\mathcal{S}}\Bigg).
\end{align}}%
Here the last equality uses $\mathbb{P}_h(\cdot|s,a) = \bigl \langle  \bphi(s,a), \bmu_h(\cdot) \bigr \rangle_{\mathcal{S}}$ again. Then we can separate the following error into three parts,
\begin{align}  \label{eq:bellman_error_decomposition_lmc}
    & \bphi(s,a)^\top \widehat{\wb}_{m,h}^k - r_h(s,a) - \mathbb{P}_h V_{m,h+1}^k(s,a) \notag\\
    &= \bphi(s, a)^{\top}\big(\bLambda_{m,h}^k\big)^{-1} \sum_{(s^l,a^l,{s^\prime}^l) \in U_{m,h}(k)}\big[r_h\big(s^l,a^l\big)+V_{m,h+1}^k({s^\prime}^l)\big] \bphi\big(s^l,a^l\big)-r_h(s, a) \notag\\
    &\qquad- \bphi(s, a)^{\top}\big(\bLambda_{m,h}^k\big)^{-1}\Bigg(\sum_{(s^l,a^l,{s^\prime}^l) \in U_{m,h}(k)} \bphi\big(s^l,a^l\big)\big(\mathbb{P}_h V_{m, h+1}^k\big)\big(s^l,a^l\big)+\lambda \Ib\big\langle\bmu_h, V_{m, h+1}^k\big\rangle_{\mathcal{S}}\Bigg) \notag\\
    &=  \underbrace{\bphi(s, a)^{\top}\big(\bLambda_{m,h}^k\big)^{-1}\Bigg(\sum_{(s^l,a^l,{s^\prime}^l) \in U_{m,h}(k)} \bphi\big(s^l,a^l\big)\big[\big(V_{m, h+1}^k-\mathbb{P}_h V_{m, h+1}^k\big)\big(s^l,a^l\big)\big]\Bigg)}_{\text {(i) }} \notag\\
    &\qquad+ \underbrace{\bphi(s, a)^{\top}\big(\bLambda_{m,h}^k\big)^{-1}\Bigg(\sum_{(s^l,a^l,{s^\prime}^l) \in U_{m,h}(k)} r_h\big(s^l,a^l\big) \bphi\big(s^l,a^l\big)\Bigg)-r_h(s, a)}_{\text {(ii) }} \notag\\  
    &\qquad- \underbrace{\lambda \bphi(s, a)^{\top}\big(\bLambda_{m,h}^k\big)^{-1}\big\langle\bmu_h, V_{m, h+1}^k\big\rangle_{\mathcal{S}}}_{\text {(iii) }}.
\end{align}
Here the first equality holds due to \eqref{equ:PV_lmc}. We now provide an upper bound for each of the terms in \eqref{eq:bellman_error_decomposition_lmc}.

\textbf{Bounding Term (i) in \eqref{eq:bellman_error_decomposition_lmc}:} using Cauchy-Schwarz inequality and \Cref{lem:est_error}, with probability at least $1-\delta$, for all $(m, k, h) \in \mathcal{M} \times [K] \times [H]$ and for any $(s,a) \in \mathcal{S} \times \mathcal{A}$, we have
\begin{align} \label{eq:term_1_result_lmc}
    & \bphi(s, a)^{\top}\big(\bLambda_{m,h}^k\big)^{-1}\Bigg(\sum_{(s^l,a^l,{s^\prime}^l) \in U_{m,h}(k)} \bphi\big(s^l,a^l\big)\big[\big(V_{m,h+1}^k-\mathbb{P}_h V_{m,h+1}^k\big)\big(s^l,a^l\big)\big]\Bigg) \notag \\ 
    & \leq\Bigg\|\sum_{(s^l,a^l,{s^\prime}^l) \in U_{m,h}(k)} \bphi\big(s^l,a^l\big)\big[\big(V_{m,h+1}^k-\mathbb{P}_h V_{m,h+1}^k\big)\big(s^l,a^l\big)\big]\Bigg\|_{(\bLambda_{m,h}^k)^{-1}}\|\bphi(s, a)\|_{(\bLambda_{m,h}^k)^{-1}}  \notag\\
    & \leq 3 H \sqrt{d} C_\delta \|\bphi(s, a)\|_{(\bLambda_{m,h}^k)^{-1}}.
\end{align}
\textbf{Bounding Term (ii) in \eqref{eq:bellman_error_decomposition_lmc}:} we first note that
{\small
\begin{align} \label{eq:term_2_result_lmc}
    & \bphi(s, a)^{\top}\big(\bLambda_{m,h}^k\big)^{-1}\Bigg(\sum_{(s^l,a^l,{s^\prime}^l) \in U_{m,h}(k)} r_h\big(s^l,a^l\big) \bphi\big(s^l,a^l\big)\Bigg)-r_h(s, a) \notag\\
    & =\bphi(s, a)^{\top}\big(\bLambda_{m,h}^k\big)^{-1}\Bigg(\sum_{(s^l,a^l,{s^\prime}^l) \in U_{m,h}(k)} r_h\big(s^l,a^l\big) \bphi\big(s^l,a^l\big)\Bigg)-\bphi(s, a)^{\top} \btheta_h \notag\\
    & =\bphi(s, a)^{\top}\big(\bLambda_{m,h}^k\big)^{-1}\Bigg(\sum_{(s^l,a^l,{s^\prime}^l) \in U_{m,h}(k)} r_h\big(s^l,a^l\big) \bphi\big(s^l,a^l\big)-\bLambda_{m,h}^k \btheta_{h}\Bigg) \notag\\
    & =\bphi(s, a)^{\top}\big(\bLambda_{m,h}^k\big)^{-1}\Bigg(\sum_{(s^l,a^l,{s^\prime}^l) \in U_{m,h}(k)} r_h\big(s^l,a^l\big) \bphi\big(s^l,a^l\big)-\sum_{(s^l,a^l,{s^\prime}^l) \in U_{m,h}(k)} \bphi\big(s^l,a^l\big) \bphi\big(s^l,a^l\big)^{\top} \btheta_h-\lambda \Ib \btheta_h\Bigg) \notag\\
    & =\bphi(s, a)^{\top}\big(\bLambda_{m,h}^k\big)^{-1}\Bigg(\sum_{(s^l,a^l,{s^\prime}^l) \in U_{m,h}(k)} r_h\big(s^l,a^l\big) \bphi\big(s^l,a^l\big)-\sum_{(s^l,a^l,{s^\prime}^l) \in U_{m,h}(k)} \bphi\big(s^l,a^l\big) r_h\big(s^l,a^l\big)-\lambda \btheta_h\Bigg) \notag\\
    & =-\lambda \bphi(s, a)^{\top}\big(\bLambda_{m,h}^k\big)^{-1} \btheta_h,
\end{align}}%
where the first and fourth equality holds due to the definition $r_h(s, a)=\big\langle\bphi(s, a), \btheta_h\big\rangle$ from \Cref{def:linear_mdp}, the third equality uses the definition of $\bLambda_{m,h}^k$. Next we can obtain that
\begin{align} \label{eq:term_2_bound_lmc}
    -\lambda \bphi(s, a)^{\top}\big(\bLambda_{m,h}^k\big)^{-1} \btheta_h & \leq \lambda\|\bphi(s, a)\|_{(\bLambda_{m,h}^k)^{-1}}\|\btheta_h\|_{(\bLambda_{m,h}^k)^{-1}} \notag\\
    & \leq \sqrt{\lambda}\|\bphi(s, a)\|_{(\bLambda_{m,h}^k)^{-1}}\|\btheta_h\| \notag\\
    & \leq \sqrt{\lambda d}\|\bphi(s, a)\|_{(\bLambda_{m,h}^k)^{-1}},
\end{align}
where we use the fact that $\lambda_{\text{max}}\big(\big(\bLambda_{m,h}^k\big)^{-1}\big) \leq 1 / \lambda$ and $\|\btheta_h\| \leq \sqrt{d}$ from \Cref{def:linear_mdp}. By Combining \eqref{eq:term_2_result_lmc} and \eqref{eq:term_2_bound_lmc}, we obtain
\begin{align} \label{eq:term_2_final_result_lmc}
    \bphi(s, a)^{\top}\big(\bLambda_{m,h}^k\big)^{-1}\Bigg(\sum_{(s^l,a^l,{s^\prime}^l) \in U_{m,h}(k)} r_h\big(s^l,a^l\big) \bphi\big(s^l,a^l\big)\Bigg)-r_h(s, a) \leq \sqrt{\lambda d}\|\bphi(s, a)\|_{(\bLambda_{m,h}^k)^{-1}} .
\end{align}
\textbf{Bounding Term (iii) in \eqref{eq:bellman_error_decomposition_lmc}:} we have
\begin{align} \label{eq:term_3_bound_lmc}
    \lambda \bphi(s, a)^{\top}\big(\bLambda_{m,h}^k\big)^{-1}\big\langle\bmu_h, V_{m,h+1}^k\big\rangle_{\mathcal{S}} & \leq \lambda\|\bphi(s, a)\|_{(\bLambda_{m,h}^k)^{-1}}\big\|\big\langle\bmu_h, V_{m,h+1}^k\big\rangle_{\mathcal{S}}\big\|_{(\bLambda_{m,h}^k)^{-1}} \notag\\
    & \leq \sqrt{\lambda}\|\bphi(s, a)\|_{(\bLambda_{m,h}^k)^{-1}}\big\|\big\langle\bmu_h, V_{m,h+1}^k\big\rangle_{\mathcal{S}}\big\| \notag\\
    & \leq H\sqrt{\lambda}\|\bphi(s, a)\|_{(\bLambda_{m,h}^k)^{-1}} \|\bmu_h\| \notag\\
    & \leq H \sqrt{\lambda d}\|\bphi(s, a)\|_{(\bLambda_{m,h}^k)^{-1}}, 
\end{align}
where the second inequality holds due to the fact that $\lambda_{\text{max}}\big(\big(\bLambda_{m,h}^k\big)^{-1}\big) \leq 1 / \lambda$, the third inequality uses the fact that $V_{m,h+1}^k \leq H$ and the last inequality follows from $\|\bmu_h\| \leq \sqrt{d}$ in \Cref{def:linear_mdp}. 

\textbf{Combine Terms (i)(ii)(iii) together:} combine \eqref{eq:term_1_result_lmc}, \eqref{eq:term_2_final_result_lmc} and \eqref{eq:term_3_bound_lmc}, then set $\lambda=1$, with probability at least $1-\delta$, we get
\begin{align*}
     \Big|\bphi(s, a)^{\top} \widehat{\wb}_{m,h}^k-r_h(s, a)-\mathbb{P}_h V_{m,h+1}^k(s, a)\Big| &\leq \Big(3 H C_\delta+\sqrt{\lambda d}+H \sqrt{\lambda d}\Big)\|\bphi(s, a)\|_{(\bLambda_{m,h}^k)^{-1}} \\
    &\leq 5 H \sqrt{d} C_\delta\|\bphi(s, a)\|_{(\bLambda_{m, h}^k)^{-1}},
\end{align*}
This completes the proof.
\end{proof}

\subsection{Proof of \Cref{lem:error_bound_lmc}}
\label{proof:lem:error_bound_lmc}

\begin{proof}
Recall from \Cref{def:model_prediction_error},
\begin{align*}
   -l_{m,h}^k(s,a) &= Q_{m,h}^k(s,a) - r_{h}(s,a) - \mathbb{P}_{h}V_{m, h+1}^k(s,a) \\
   &= \min\Big\{\max_{n \in [N]} \bphi(s,a)^\top \wb_{m,h}^{k, J_k, n}, H-h+1\Big\}^+ - r_{h}(s,a) - \mathbb{P}_{h}V_{m, h+1}^k(s,a) \\ 
   &\leq \max_{n \in [N]} \bphi(s,a)^\top \wb_{m,h}^{k, J_k, n}- r_{h}(s,a) - \mathbb{P}_{h}V_{m, h+1}^k(s,a) \\
   &= \max_{n \in [N]}\bphi(s,a)^\top \wb_{m,h}^{k, J_k, n} - \bphi(s,a)^\top \widehat{\wb}_{m,h}^{k} +\bphi(s,a)^\top \widehat{\wb}_{m,h}^{k}
   - r_{h}(s,a) - \mathbb{P}_{h}V_{m, h+1}^k(s,a) \\ 
   &\leq \underbrace{\max_{n \in [N]}\big|\bphi(s,a)^\top \wb_{m,h}^{k, J_k, n} - \bphi(s,a)^\top \widehat{\wb}_{m,h}^{k}\big|}_{I_1} + \underbrace{\big|\bphi(s,a)^\top \widehat{\wb}_{m,h}^{k} - r_{h}(s,a) - \mathbb{P}_{h}V_{m, h+1}^k(s,a) \big|}_{I_2}.
\end{align*}
\textbf{Bounding Term $I_1$:} based on \Cref{lem:q_bound}, for any fixed $n \in [N]$, for any $(m,h,k) \in \mathcal{M} \times [H] \times [K]$ and for any $(s,a) \in \mathcal{S} \times \mathcal{A}$, with probability at least $1-\delta^2$, we have 
\begin{align*}
    \big|\bphi(s,a)^\top \wb_{m,h}^{k, J_k, n} -\bphi(s,a)^\top \widehat{\wb}_{m,h}^{k}\big| \leq \Bigg(5\sqrt{\frac{2d \log (1/ \delta)}{3\beta_K}} + \frac{4}{3}\Bigg)\|\bphi(s,a)\|_{(\bLambda_{m,h}^k)^{-1}}.
\end{align*}
By taking union bound over $n$, we have for all $n \in [N]$, with probability $1-\delta^2$, we have
\begin{align*}
    \big|\bphi(s,a)^\top \wb_{m,h}^{k, J_k, n} -\bphi(s,a)^\top \widehat{\wb}_{m,h}^{k}\big| \leq \Bigg(5\sqrt{\frac{2d \log \big(\sqrt{N}/ \delta\big)}{3\beta_K}} + \frac{4}{3}\Bigg)\|\bphi(s,a)\|_{(\bLambda_{m,h}^k)^{-1}}.
\end{align*}
This indicates, for any $(m,h,k) \in \mathcal{M} \times [H] \times [K]$ and $(s,a) \in \mathcal{S} \times \mathcal{A}$, with probability at least $1-\delta^2$, we have 
\begin{align} \label{equ:term(i)_lem:error_bound_lmc}
    \textbf{\text{Term}}\ I_1 = \max_{n \in [N]} \big|\bphi(s,a)^\top \wb_{m,h}^{k, J_k, n} -\bphi(s,a)^\top \widehat{\wb}_{m,h}^{k}\big| \leq \Bigg(5\sqrt{\frac{2d \log \big(\sqrt{N}/ \delta\big)}{3\beta_K}} + \frac{4}{3}\Bigg)\|\bphi(s,a)\|_{(\bLambda_{m,h}^k)^{-1}}.
\end{align}
\textbf{Bounding Term $I_2$:} based on \Cref{lem_event}, with probability at least $1-\delta$, for any $(m, h, k) \in \mathcal{M}\times [H] \times [K]$ and $(s,a) \in \mathcal{S} \times \mathcal{A}$, we have
\begin{align*}
    \big|\bphi(s, a)^{\top} \widehat{\wb}_{m,h}^k-r_h^k(s, a)-\mathbb{P}_h V_{m, h+1}^k(s, a)\big| \leq 5 H \sqrt{d} C_\delta\|\bphi(s, a)\|_{(\bLambda_{m, h}^k)^{-1}}.
\end{align*}
Combine the two result above, by taking union bound, with probability at least $1-\delta-\delta^2$, for any $(m, h, k) \in \mathcal{M}\times [H] \times [K]$ and $(s,a) \in \mathcal{S} \times \mathcal{A}$, we have
\begin{align*}
    -l_{m,h}^k(s, a) \leq \Bigg(5 H \sqrt{d} C_\delta +5 \sqrt{\frac{2 d \log \big(\sqrt{N} / \delta\big)}{3 \beta_K}} + \frac{4}{3}\Bigg)\|\bphi(s, a)\|_{(\bLambda_{m,h}^k)^{-1}} .
\end{align*}
This completes the proof.
\end{proof}

\subsection{Proof of \Cref{lem:optimism_lmc}} \label{proof:lem:optimism_lmc}
\begin{proof}
Recall from \Cref{def:model_prediction_error},
\begin{align*}
   l_{m,h}^k(s,a) &= r_{h}(s,a) + \mathbb{P}_{h}V_{m, h+1}^k(s,a) - Q_{m,h}^k(s,a).
\end{align*}
Note that
\begin{align*}
    Q_{m,h}^k(s, a)=\min \Big\{\max_{n \in [N]} \bphi(s, a)^{\top} \wb_{m,h}^{k, J_k, n}, H-h+1\Big\}^+ \leq \max_{n \in [N]} \bphi(x, a)^{\top} \wb_{m,h}^{k, J_k, n} .
\end{align*}
Note that $\|\bphi(s, a)\|_{(\bLambda_{m, h}^k)^{-1}} \leq  \sqrt{1/\lambda} \|\bphi(s, a)\| \leq 1$ for all $\bphi(s, a)$. Define $\mathcal{C}(\varepsilon)$ to be a $\varepsilon$-cover of $\big\{\bphi \mid\|\bphi\|_{(\bLambda_{m, h}^k)^{-1}} \leq 1\big\}$. Based on \Cref{lem:vershynin2018high}, we have $|\mathcal{C}(\varepsilon)| \leq (3/\varepsilon)^d$.

First, for any fixed $\bphi(s,a) \in \mathcal{C}(\varepsilon)$, based on the results in \Cref{pro:gauss_param}, we have that $\bphi(s, a)^{\top} \wb_{m,h}^{k, J_k, n} \sim \mathcal{N}\Big(\bphi(s, a)^{\top} \bmu_{m,h}^{k,J_k}, \bphi(s, a)^{\top} \bSigma_{m,h}^{k, J_k} \bphi(s, a)\Big)$ for any fixed $n \in [N]$. Now we define 
\begin{align*}
  Z_k=\frac{r_h(s, a)+\mathbb{P}_h V_{m, h+1}^k(s, a)-\bphi(s, a)^{\top} \bmu_{m,h}^{k,J_k}}{\sqrt{\bphi(s, a)^{\top} \bSigma_{m,h}^{k, J_k} \bphi(s, a)}}.  
\end{align*}
When $|Z_k|<1$, by Gaussian concentration \Cref{lem:gaussian_concentration}, we have
\begin{align*}
    & \mathbb{P}\Big(\bphi(s, a)^{\top} \wb_{m,h}^{k, J_k, n} \geq r_h(s, a)+\mathbb{P}_h V_{m,h+1}^k(s, a)\Big) \\
    &= \mathbb{P}\Bigg(\frac{\bphi(s, a)^{\top} \wb_{m,h}^{k, J_k, n}-\bphi(s, a)^{\top} \bmu_{m,h}^{k,J_k}}{\sqrt{\bphi(s, a)^{\top} \bSigma_{m,h}^{k, J_k} \bphi(s, a)}} \geq \frac{r_h(s, a)+\mathbb{P}_h V_{m,h+1}^k(s, a)-\bphi(s, a)^{\top} \bmu_{m,h}^{k,J_k}}{\sqrt{\bphi(s, a)^{\top} \bSigma_{m,h}^{k, J_k} \bphi(s, a)}}\Bigg) \\
    &= \mathbb{P}\Bigg(\frac{\bphi(s, a)^{\top} \wb_{m,h}^{k, J_k, n}-\bphi(s, a)^{\top} \bmu_{m,h}^{k,J_k}}{\sqrt{\bphi(s, a)^{\top} \bSigma_{m,h}^{k, J_k} \bphi(s, a)}} \geq Z_k \Bigg) \\
    &\geq \frac{1}{2 \sqrt{2 \pi}} \exp (-Z_k^2 / 2) \\
    &\geq \frac{1}{2 \sqrt{2 e \pi}}.
\end{align*}
\textbf{Consider the numerator of $Z_k$:}
\begin{align*}
    &\big|r_h(s, a)+\mathbb{P}_h V_{m,h+1}^k(s, a)-\bphi(s, a)^{\top} \bmu_{m,h}^{k,J_k}\big| \\
    &\leq \big|r_h(s, a)+\mathbb{P}_h V_{m,h+1}^k(s, a)-\bphi(s, a)^{\top} \widehat{\wb}_{m,h}^k\big|+\big|\bphi(s, a)^{\top} \widehat{\wb}_{m,h}^k-\bphi(s, a)^{\top} \bmu_{m,h}^{k,J_k}\big| .
\end{align*}
Based on \Cref{lem_event}, with probablity at least $1-\delta$, we have
\begin{align*}
    |r_h(s, a)+\mathbb{P}_h V_{m,h+1}^k(s, a)-\bphi(s, a)^{\top} \widehat{\wb}_{m,h}^k| \leq 5 H \sqrt{d} C_\delta\|\bphi(s, a)\|_{(\bLambda_{m,h}^k)^{-1}},
\end{align*}
From \eqref{eq:phi_mu_omega_bound}, we have
\begin{align*} 
    \bphi(s,a)^\top\big(\bmu_{m,h}^{k,J_k}- \widehat{\wb}_{m,h}^k\big) \leq 4H \sqrt{MKd/\lambda}\sum_{i=1}^{k-1} \prod_{j=i+1}^k \Big( 1-2\eta_{m,j} \lambda_{\min} (\bLambda_{m,h}^j) \Big)^{J_j}\|\bphi(s,a)\|.
\end{align*}
Recall the proof of \Cref{lem:q_bound}, we set $\eta_{m,j}=1 /(4 \lambda_{\max }(\bLambda_{m,h}^j)), J_j \geq 2 \kappa_j \log (1 / \epsilon)$, then we have for all $j \in[K]$, $(1-2 \eta_{m,j} \lambda_{\min }(\bLambda_{m,h}^j))^{J_j} \leq \epsilon$, set $\epsilon=1 /4 H M K d$ and $\lambda=1$, we have
\begin{align*}
    \big|\bphi(s, a)^{\top} \widehat{\wb}_{m,h}^k-\bphi(s, a)^{\top} \bmu_{m,h}^{k,J_k}\big| & \leq 4 H \sqrt{M K d} \sum_{i=1}^{k-1} \epsilon^{k-i}\|\bphi(s, a)\| \\
    & \leq \sum_{i=1}^{k-1} \epsilon^{k-i-1} \frac{1}{4 M H K d} 4 H  \sqrt{M K d} \sqrt{MK}\|\bphi(s, a)\|_{(\bLambda_{m,h}^k)^{-1}} \\
    & \leq \frac{4}{3}\|\bphi(s, a)\|_{(\bLambda_{m,h}^k)^{-1}}.
\end{align*}
So, with probablity at least $1-\delta$, we have
\begin{align} \label{equ:bellman_error_optimism}
    \big|r_h(s, a)+\mathbb{P}_h V_{m, h+1}^k(s, a)-\bphi(s, a)^{\top} \bmu_{m,h}^{k,J_k}\big| \leq\bigg(5 H \sqrt{d} C_\delta+\frac{4}{3}\bigg)\|\bphi(s, a)\|_{(\bLambda_{m,h}^k)^{-1}} .
\end{align}
\textbf{Consider the denominator of $Z_k$:} recall from the definition of $\bSigma_{m,h}^{k, J_k}$ from \Cref{pro:gauss_param}, then we have
{\small
\begin{align*}
    &\bphi(s, a)^{\top} \bSigma_{m,h}^{k, J_k} \bphi(s, a) \\
    & =\sum_{i=1}^k \frac{1}{\beta_{m,i}} \bphi(s, a)^{\top} \Ab_k^{J_k} \ldots \Ab_{i+1}^{J_{i+1}}\big(\Ib-\Ab^{2 J_i}\big)\big(\bLambda_{m,h}^i\big)^{-1}(\Ib+\Ab_i)^{-1} \Ab_{i+1}^{J_{i+1}} \ldots \Ab_k^{J_k} \bphi(s, a) \\
    & \geq \sum_{i=1}^k \frac{1}{2 \beta_{m,i}} \bphi(s, a)^{\top} \Ab_k^{J_k} \ldots \Ab_{i+1}^{J_{i+1}}\big(\Ib-\Ab^{2 J_i}\big)\big(\bLambda_{m,h}^i\big)^{-1} \Ab_{i+1}^{J_{i+1}} \ldots \Ab_k^{J_k} \bphi(s, a),
\end{align*}}%
where we used the fact that $\frac{1}{2} \Ib \preccurlyeq (\Ib+\Ab_k)^{-1}$. Then we have
{\small
\begin{align*}
    &\bphi(s, a)^{\top} \bSigma_{m,h}^{k, J_k} \bphi(s, a)\\
    & \geq \sum_{i=1}^k \frac{1}{2 \beta_{m,i}} \bphi(s, a)^{\top} \Ab_k^{J_k} \ldots \Ab_{i+1}^{J_{i+1}}\big(\big(\bLambda_{m,h}^i\big)^{-1}-\Ab_i^{J_i}\big(\bLambda_{m,h}^i\big)^{-1} \Ab_i^{J_i}\big) \Ab_{i+1}^{J_{i+1}} \ldots \Ab_k^{J_k} \bphi(s, a) \\
    & =\frac{1}{2 \beta_K} \sum_{i=1}^{k-1} \bphi(s, a)^{\top} \Ab_k^{J_k} \ldots \Ab_{i+1}^{J_{i+1}}\big(\big(\bLambda_{m,h}^i\big)^{-1}-\big(\bLambda_{m,h}^{i+1}\big)^{-1}\big) \Ab_{i+1}^{J_{i+1}} \ldots \Ab_k^{J_k} \bphi(s, a) \\
    & \qquad-\frac{1}{2 \beta_K} \bphi(s, a)^{\top} \Ab_k^{J_k} \ldots \Ab_1^{J_1}\big(\bLambda_{m,h}^1\big)^{-1} \Ab_1^{J_1} \ldots \Ab_k^{J_k} \bphi(s, a)\\
    &\qquad+\frac{1}{2 \beta_K} \bphi(s, a)^{\top}\big(\bLambda_{m,h}^k\big)^{-1} \bphi(s, a).
\end{align*}}%
By the definition of $\bLambda_{m,h}^i$ and Woodbury formula, we have 
\begin{align*}
    \big(\bLambda_{m,h}^i\big)^{-1} - \big(\bLambda_{m,h}^{i+1}\big)^{-1} &= \big(\bLambda_{m,h}^i\big)^{-1} - \Bigg(\bLambda_{m,h}^i + \sum_{(s^l,a^l,{s^\prime}^l) \in U_{m,h}(k)}\bphi\big(s^l,a^l\big)\bphi\big(s^l,a^l\big)^\top\Bigg)^{-1}\\ 
    &= (\bLambda_{m,h}^i)^{-1}\bvarphi \big(\Ib_n + \bvarphi^\top(\bLambda_{m,h}^i)^{-1}\bvarphi\big)^{-1}\bvarphi^\top (\bLambda_{m,h}^i)^{-1},
\end{align*}
where $\bvarphi$ is a matrix with the dimension of $d \times n$, $n$ is the number difference of $\bphi\big(s^l,a^l\big)$ between $\big(\bLambda_{m,h}^i\big)^{-1}$ and $\big(\bLambda_{m,h}^{i+1}\big)^{-1}$ (i.e. we concatenate all $\bphi\big(s^l,a^l\big)$ in to the matrix $\bvarphi$). Note that $n \leq M$, we have
\begin{align*}
    &\bphi(s, a)^\top  \Ab_k^{J_k} ... \Ab_{i+1}^{J_{i+1}}\Big(\big(\bLambda_{m,h}^i\big)^{-1} - \big(\bLambda_{m,h}^{i+1}\big)^{-1}\Big)\Ab_{i+1}^{J_{i+1}}... \Ab_k^{J_k} \bphi(s, a)\\ 
    &=\bphi(s, a)^\top \Ab_k^{J_k} ... \Ab_{i+1}^{J_{i+1}}\Big(\big(\bLambda_{m,h}^i\big)^{-1}\bvarphi(\Ib_n + \bvarphi^\top\big(\bLambda_{m,h}^i\big)^{-1}\bvarphi)^{-1}\bvarphi^\top \big(\bLambda_{m,h}^i\big)^{-1}\Big)\Ab_{i+1}^{J_{i+1}}... \Ab_k^{J_k} \bphi(s, a) \\ 
    &\leq \bphi(s, a)^\top \Ab_k^{J_k} ... \Ab_{i+1}^{J_{i+1}}\big(\bLambda_{m,h}^i\big)^{-1}\bvarphi\bvarphi^\top \big(\bLambda_{m,h}^i\big)^{-1}\Ab_{i+1}^{J_{i+1}}... \Ab_k^{J_k} \bphi(s, a) \\ 
    &= \big\|\bphi(s,a)^\top \Ab_k^{J_k} ... \Ab_{i+1}^{J_i +1}\big(\bLambda_{m,h}^{i}\big)^{-1} \bvarphi\big\|_2^2 \\ 
    &\leq \big\|\Ab_k^{J_k} ... \Ab_{i+1}^{J_i +1}(\bLambda_{m,h}^{i})^{-1/2}\bphi(s,a)\big\|^2_2 \cdot \big\|(\bLambda_{m,h}^{i})^{-1/2}\bvarphi\big\|^2_F \\ 
    &\leq \prod_{j=i+1}^k \Big( 1-2\eta_{m,j} \lambda_{\min} \big(\bLambda_{m,h}^j\big) \Big)^{2J_j} \operatorname{tr}\big(\bvarphi^\top\big(\bLambda_{m,h}^i\big)^{-1}\bvarphi\big) \|\bphi(s,a)\|^2_{(\bLambda_{m,h}^i)^{-1}},
\end{align*}
where $\norm{\cdot}_F$ is Frobenius norm and the last inequality is due to $\norm{\bLambda^{-\frac{1}{2}}\Xb}_F^2=\operatorname{tr}(\Xb^\top\bLambda^{-1}\Xb)$ and \eqref{eq:A_k_inequal}. Therefore, we have
\begin{align*}
    &\bphi(s, a)^{\top} \bSigma_{m,h}^{k, J_k} \bphi(s, a)\\
    & \geq \frac{1}{2 \beta_K}\|\bphi(s, a)\|_{(\bLambda_{m,h}^k)^{-1}}^2-\frac{1}{2 \beta_K} \prod_{i=1}^k\big(1-2 \eta_{m,i} \lambda_{\min }\big(\bLambda_{m,h}^i\big)\big)^{2 J_i}\|\bphi(s, a)\|_{(\bLambda_{m,h}^1)^{-1}}^2 \\
    & \qquad-\frac{1}{2 \beta_K} \sum_{i=1}^{k-1} \prod_{j=i+1}^k\big(1-2 \eta_{m,j} \lambda_{\min }\big(\bLambda_{m,h}^j\big)\big)^{2 J_j}\operatorname{tr}\big(\bvarphi^\top\big(\bLambda_{m,h}^i\big)^{-1}\bvarphi\big)\|\bphi(s, a)\|_{(\bLambda_{m,h}^i)^{-1}}^2 .
\end{align*}
Similar to the proof of \Cref{lem:q_bound}, note that $\operatorname{tr}\big(\bvarphi^\top(\bLambda_{m,h}^i)^{-1}\bvarphi\big) \leq M$, when we choose $J_j \geq 2\kappa_j \log (3 k M )$, we have
\begin{align} \label{equ:phi_lower_bound}
    \|\bphi(s, a)\|_{\bSigma_{m,h}^{k, J_k}} & \geq \frac{1}{2\sqrt{\beta_K}}\bigg(\|\bphi(s, a)\|_{(\bLambda_{m,h}^k)^{-1}}-\frac{\|\bphi(s, a)\|}{(3KM)^k}-\sum_{i=1}^{k-1} \frac{\sqrt{M}}{(3 k M)^{k-i}}\|\bphi(s, a)\|\bigg) \notag \\
    & \geq \frac{1}{2 \sqrt{\beta_K}}\bigg(\|\bphi(s, a)\|_{(\bLambda_{m,h}^k)^{-1}}-\frac{1}{3 \sqrt{kM}}\|\bphi(s, a)\|-\frac{1}{6 \sqrt{kM}}\|\bphi(s, a)\|\bigg) \notag\\
    & \geq \frac{1}{4 \sqrt{\beta_K}}\|\bphi(s, a)\|_{(\bLambda_{m,h}^k)^{-1}},
\end{align}
where we used the fact that $\lambda_{\min }\big(\big(\bLambda_{m,h}^k\big)^{-1}\big) \geq 1 / kM$ and $\|\bphi(s,a)\|_{(\bLambda_{m,h}^k)^{-1}} \geq 1/\sqrt{kM}\|\bphi(s,a)\|$. Therefore, according to \eqref{equ:bellman_error_optimism} and \eqref{equ:phi_lower_bound}, with probablity at least $1-\delta$, it holds that
\begin{align*}
    |Z_k| & =\Bigg|\frac{r_h(s, a)+\mathbb{P}_h V_{m, h+1}^k(s, a)-\bphi(s, a)^{\top} \bmu_{m,h}^{k,J_k}}{\sqrt{\bphi(s, a)^{\top} \bSigma_{m,h}^{k, J_k} \bphi(s, a)}}\Bigg| \\
    & \leq \frac{5 H \sqrt{d} C_\delta+\frac{4}{3}}{\frac{1}{4 \sqrt{\beta_K}}},
\end{align*}
which implies $|Z_k|<1$ when $\frac{1}{\sqrt{\beta_K}}=20 H \sqrt{d} C_\delta+\frac{16}{3}$.

Till now we have proved that for any fixed $\bphi(s,a) \in \mathcal{C}(\varepsilon)$ and for all $(m, h, k) \in \mathcal{M}\times [H] \times [K]$, for any fixed $n \in [N]$, with probablity at least $1-\delta$, we have
\begin{align*}
    \mathbb{P}\Big(\bphi(s, a)^{\top} \wb_{m,h}^{k, J_k, n} - r_h(s, a) - \mathbb{P}_h V_{m,h+1}^k(s, a) \geq 0\Big) \geq \frac{1}{2 \sqrt{2 e \pi}}.
\end{align*}
By taking union bound over $n \in [N]$, with probablity at least $1-\delta$, we have
\begin{align*}
    \mathbb{P}\Big(\max_{n \in [N]} \big\{ \bphi(s, a)^{\top} \wb_{m,h}^{k, J_k, n} - r_h(s, a) - \mathbb{P}_h V_{m,h+1}^k(s, a) \big \} \geq 0 \Big) \geq 1 - \Big(1-\frac{1}{2 \sqrt{2 e \pi}}\Big)^N = 1 - {c_0^\prime}^N,
\end{align*}
where $c_0^\prime= 1 - \frac{1}{2 \sqrt{2 e \pi}} $. Therefore, for any fixed $\bphi(s,a) \in \mathcal{C}(\varepsilon)$ and for all $(m, h, k) \in \mathcal{M}\times [H] \times [K]$, with probability at least $(1-\delta)\big(1 - {c_0^\prime}^N\big) > 1 - \delta - {c_0^\prime}^N$, we have
\begin{align} \label{equ:lmc_optimism_union_bound_result}
    \max_{n \in [N]} \big\{ \bphi(s, a)^{\top} \wb_{m,h}^{k, J_k, n} - r_h(s, a) - \mathbb{P}_h V_{m,h+1}^k(s, a) \big \} \geq 0.
\end{align}
Next for any $\bphi=\bphi(s, a)$, we can find $\bphi^{\prime} \in \mathcal{C}(\varepsilon)$ such that $\|\bphi-\bphi^{\prime}\|_{(\bLambda_{m,h}^k)^{-1}} \leq \varepsilon$. We define $\Delta \bphi=\bphi-\bphi^{\prime}$. Recall from \Cref{def:linear_mdp}, we have
\begin{align*}
    r_h(s, a)+\mathbb{P}_h V_{m, h+1}^k(s, a) = \bphi(s,a)^\top \btheta_h+ \bphi(s,a)^\top\big\langle\bmu_h, V_{m,h+1}^k\big\rangle_{\mathcal{S}} \overset{\text{def}}{=} \bphi(s,a)^\top \wb_{m, h}^k,
\end{align*}
where $\wb_{m, h}^k = \btheta_h + \big\langle\bmu_h, V_{m,h+1}^k\big\rangle_{\mathcal{S}}$. Note that $ \max \{\|\bmu_h(\mathcal{S})\|, \|\btheta_h\|\} \leq \sqrt{d}$ and $V_{m,h+1}^k \leq H-h \leq H$, thus we have
\begin{align*}
    \big\|\wb_{m, h}^k\big\| \leq \|\btheta_h\| + \big\|\big\langle\bmu_h, V_{m,h+1}^k\big\rangle_{\mathcal{S}}\big\| \leq \sqrt{d} + H \sqrt{d} \leq 2H\sqrt{d}.
\end{align*}
Then we define the regression error $\Delta \wb_{m, h}^k=\wb_{m, h}^k-\wb_{m,h}^{k, J_k, n}$. Thus we have
\begin{align*}
    \max_{n \in [N]} \big\{ \bphi(s, a)^{\top} \wb_{m,h}^{k, J_k, n} - r_h(s, a) - \mathbb{P}_h V_{m,h+1}^k(s, a) \big \} = \max_{n \in [N]}\big\{ - \bphi(s, a)^{\top} \Delta \wb_{m, h}^k \big \}.
\end{align*}
Then by Cauchy-Schwarz inequality, we have
\begin{align*}
    \bphi^{\top} \Delta \wb_{m, h}^k &= {\bphi^\prime}^{\top} \Delta \wb_{m, h}^k + \Delta \bphi^{\top} \Delta \wb_{m, h}^k \\
    &\geq {\bphi^\prime}^{\top} \Delta \wb_{m, h}^k - \|\Delta\bphi\| \cdot \big\|\Delta\wb_{m, h}^k\big\| \\
    &\geq {\bphi^\prime}^{\top} \Delta \wb_{m, h}^k - \sqrt{MK} \varepsilon \big\|\Delta\wb_{m, h}^k\big\|.
\end{align*}
By triangle inequality, with probability at least $1-\delta$, we have
\begin{align*}
    \big\|\Delta\wb_{m, h}^k\big\| \leq \big\|\wb_{m, h}^k\big\| + \big\|\wb_{m,h}^{k, J_k, n}\big\| \leq 2H\sqrt{d} + B_{\delta/NMHK}
\end{align*}
Denote $\alpha_\delta = \sqrt{MK} \big(2H\sqrt{d}+ B_{\delta/NMHK}\big) $. Then, for all $(m, h, k) \in \mathcal{M} \times [H] \times [K]$, with probability at least $1-\delta$, we have
\begin{align*}
    \max _{n \in[N]} \big\{ \bphi^{\top} \Delta \wb_{m, h}^k \big\} \geq  \max _{n \in[N]} \big\{ {\bphi^\prime}^{\top} \Delta \wb_{m, h}^k \big\}  - \alpha_\delta \varepsilon .
\end{align*}
Recall from \eqref{equ:lmc_optimism_union_bound_result}, by taking union bound, with probability at least  
$1-|\mathcal{C}(\varepsilon)|{ c_0^\prime}^N -2\delta$, for all $(m, h, k) \in \mathcal{M} \times [H] \times [K]$ and for all $(s, a) \in \mathcal{S} \times \mathcal{A}$, we have
\begin{align*}
    \max _{n \in[N]} \big\{\bphi^{\top} \Delta \wb_{m, h}^k \big\} \geq  - \alpha_\delta \varepsilon .
\end{align*}
Finally, with probability at least  
$1-|\mathcal{C}(\varepsilon)| {c_0^\prime}^N - 2\delta$, for all $(m, h, k) \in \mathcal{M} \times [H] \times [K]$ and for all $(s, a) \in \mathcal{S} \times \mathcal{A}$, we have
\begin{align*}
    l_{m,h}^k(s,a) \leq \alpha_\delta \varepsilon.
\end{align*}
This completes the proof.
\end{proof}

\subsection{Proof of \Cref{lem:coor_sum_phi_bound_PHE}} \label{proof:lem:coor_sum_phi_bound_PHE}

\begin{proof}
For simplicity, we denote $(s_{m,h}^k,a_{m,h}^k)$ as $z_{m, h}^k$. Then we consider the following mappings $(\nu_M, \nu_K):[M K] \rightarrow[M] \times[K]$,
\begin{align*}
    \nu_M(\tau)=\tau(\bmod M) ,\qquad \nu_K=\bigg\lceil\frac{\tau}{M}\bigg\rceil,
\end{align*}
where we set $\nu_M(\tau)=M$ if $M | \tau$. Next, for any $\tau \geq 0 $, we define 
\begin{align*}
    \bar{\bLambda}_h^\tau&=\lambda \Ib+\sum_{u=1}^{\tau M} \bphi\Big(z_{\nu_M(u), h}^{\nu_K(u)}\Big) \bphi\Big(z_{\nu_M(u), h}^{\nu_K(u)}\Big)^{\top},\quad \text{for } \tau>0,\\
    \bar{\bLambda}_h^0&=\lambda \Ib,\quad \text{for } \tau=0.
\end{align*}
We denote ${\sigma}=\{\sigma_1, \ldots, \sigma_n\}$ as the synchronization episodes, where $\sigma_i \in [K]$, we also denote $\sigma_0=0$. Then we separate the episodes $k= 1,\dots,K$ into two groups based on the following condition,
\begin{align} \label{eq:det/det_bound}
    1 \leq \frac{\operatorname{det}(\bar{{\bLambda}}_h^{\sigma_i})}{\operatorname{det}(\bar{{\bLambda}}_h^{\sigma_{i-1}})} \leq 3.
\end{align}
Note that the left inequality always holds due to $\bar{{\bLambda}}_h^{\sigma_{i-1}} \preccurlyeq  \bar{{\bLambda}}_h^{\sigma_i}$ and the trivial fact that $\Ab \preccurlyeq \Bb \Rightarrow \operatorname{det}(\Ab) \leq \operatorname{det}(\Bb)$. Then we define that $I_1=\{k \in \mathbb{N}^+, k \in [\sigma_{i-1}, \sigma_i), \forall i \in [n]| \eqref{eq:det/det_bound} \text{ is true} \}$ and $I_2=\{k \in \mathbb{N}^+, k \in [\sigma_{i-1}, \sigma_i), \forall i \in [n]| \eqref{eq:det/det_bound} \text{ is false} \}$, then $ [K] = I_1 \cup I_2 \cup \{K\} $. For any $k \in[\sigma_{i-1}, \sigma_i)$ and $k \in I_1$, note that $\bar{{\bLambda}}_h^{\sigma_{i-1}} \preccurlyeq {\bLambda}_{m, h}^k \preccurlyeq \bar{{\bLambda}}_h^k \preccurlyeq \bar{{\bLambda}}_h^{\sigma_i}$, thus for any $m \in \mathcal{M}$, we have
\begin{align} \label{equ:I_1_hold}
    \big\|\bphi\big(z_{m, h}^k\big)\big\|_{({\bLambda}_{m, h}^k)^{-1}} 
    & \leq\big\|\bphi\big(z_{m, h}^k\big)\big\|_{(\bar{{\bLambda}}_h^k)^{-1}} \sqrt{\frac{\operatorname{det}(\bar{{\bLambda}}_h^k)}{\operatorname{det}({\bLambda}_{m, h}^k)}} \notag \\
    & \leq\big\|\bphi\big(z_{m, h}^k\big)\big\|_{(\bar{{\bLambda}}_h^k)^{-1}} \sqrt{\frac{\operatorname{det}(\bar{{\bLambda}}_h^{\sigma_i})}{\operatorname{det}(\bar{{\bLambda}}_h^{\sigma_{i-1}})}} \notag\\
    & \leq 2\big\|\bphi\big(z_{m, h}^k\big)\big\|_{(\bar{{\bLambda}}_h^k)^{-1}},
\end{align}
where the first inequality follows from \Cref{lem12_abbasi}, the second inequality follows from the trivial fact that $\Ab \preccurlyeq \Bb \Rightarrow \operatorname{det}(\Ab) \leq \operatorname{det}(\Bb)$, and the final inequality holds because $k \in I_1$. Then we will bound the summation for $k \in I_1$ and $k \in I_2$, respectively.
\begin{align*}
    \sum_{k \in I_1 \cup \{K\}} \sum_{m \in \mathcal{M}}\|\bphi(z_{m, h}^k)\|_{({\bLambda}_{m, h}^k)^{-1}} & \leq \sqrt{M K \sum_{m \in \mathcal{M}} \sum_{k \in I_1 \cup \{K\}}\|\bphi(z_{m, h}^k)\|_{({\bLambda}_{m, h}^k)^{-1}}^2} \\
    & \leq 2 \sqrt{M K \sum_{m \in \mathcal{M}} \sum_{k \in I_1 \cup \{K\}}\|\bphi(z_{m, h}^k)\|_{(\bar{{\bLambda}}_h^k)^{-1}}^2} \\
    & \leq 2 \sqrt{M K \sum_{m \in \mathcal{M}} \sum_{k=1}^K\|\bphi(z_{m, h}^k)\|_{(\bar{{\bLambda}}_h^k)^{-1}}^2} \\
    & \leq 2 \sqrt{M K \log \bigg(\frac{\operatorname{det}({\bLambda}_h^K)}{\operatorname{det}(\lambda \Ib)}\bigg)},
\end{align*}
where the first inequality follows from Cauchy-Schwarz inequality, the second inequality holds due to \eqref{equ:I_1_hold}, the final equality follows from \Cref{lem:elliptical} and $\bLambda^K_h=\sum_{m\in \mathcal{M}}\sum_{k=1}^K \bphi\big(s_{m, h}^k, a_{m, h}^k\big)\bphi\big(s_{m, h}^k, a_{m, h}^k\big)^\top + \lambda \Ib $.

For any interval $[\sigma_{i-1}, \sigma_i)$, define $\Delta_i=\sigma_i-\sigma_{i-1}-1$, we calculate that
\begin{align*}
    \sum_{k=\sigma_{i-1}}^{\sigma_i-1}\big\|\bphi(z_{m, h}^k)\big\|_{({\bLambda}_{m, h}^k)^{-1}} &\leq \sqrt{\Delta_{i} \sum_{k=\sigma_{i-1}}^{\sigma_i-1}\big\|\bphi(z_{m, h}^k)\big\|_{({\bLambda}_{m, h}^k)^{-1}}^2} \\
    &\leq \sqrt{\Delta_{i} \log \Bigg(\frac{\operatorname{det}({\bLambda}_{m, h}^{\sigma_i-1})}{\operatorname{det}({\bLambda}_{m, h}^{\sigma_{i-1}})}\Bigg)} \\ 
    &\leq \sqrt{\gamma},
\end{align*}
where the last inequality follows from the synchronization condition \eqref{equ:synchronize_condition}.

Define $R_h=\Big\lceil\log \Big(\frac{\operatorname{det}({\bLambda}_h^K)}{\operatorname{det}(\lambda \Ib)}\Big)\Big\rceil$, note that $\sigma_n \leq K$, then we can find that
\begin{align*}
    R_h \geq \log\bigg(\frac{\operatorname{det}(\bar{\bLambda}_h^{\sigma_n})}{\operatorname{det}(\bar{\bLambda}_h^{\sigma_0})}\bigg) = \sum_{i=1}^n\log\bigg(\frac{\operatorname{det}(\bar{\bLambda}_h^{\sigma_i})}{\operatorname{det}(\bar{\bLambda}_h^{\sigma_{i-1}})}\bigg).
\end{align*}
We can claim that $I_2$ has at most $R_h$ synchronization episodes, otherwise
\begin{align*}
    R_h \geq \sum_{i=1}^n\log\bigg(\frac{\operatorname{det}(\bar{\bLambda}_h^{\sigma_i})}{\operatorname{det}(\bar{\bLambda}_h^{\sigma_{i-1}})}\bigg) \geq \sum_{i \in \{i|\sigma_{i-1} \in I_2\} } \log\bigg(\frac{\operatorname{det}(\bar{\bLambda}_h^{\sigma_i})}{\operatorname{det}(\bar{\bLambda}_h^{\sigma_{i-1}})}\bigg) \geq R_h \log 3,
\end{align*}
which causes the contradiction. Thus $I_2$ has at most $R_h$ intervals, then we get
\begin{align*}
    \sum_{k \in I_2} \sum_{m \in \mathcal{M}}\big\|\bphi(z_{m, h}^k)\big\|_{({\bLambda}_{m, h}^k)^{-1}} \leq R_h M \sqrt{\gamma} \leq \bigg(\log \bigg(\frac{\operatorname{det}({\bLambda}_h^K)}{\operatorname{det}(\lambda \Ib)}\bigg)+1\bigg) M \sqrt{\gamma}.
\end{align*}
Finally, we can bound the total summation,
\begin{align*}
    \sum_{m \in \mathcal{M}} \sum_{k=1}^K\big\|\bphi(z_{m, h}^k)\big\|_{({\bLambda}_{m, h}^k)^{-1}} & \leq \sum_{m \in \mathcal{M}} \sum_{k \in I_2}\big\|\bphi(z_{m, h}^k)\big\|_{({\bLambda}_{m, h}^k)^{-1}} + \sum_{m \in \mathcal{M}} \sum_{k \in I_1 \cup \{K\}}\|\bphi(z_{m, h}^k)\|_{({\bLambda}_{m, h}^k)^{-1}} \\
    &\leq\bigg(\log \bigg(\frac{\operatorname{det}({\bLambda}_h^K)}{\operatorname{det}(\lambda \Ib)}\bigg)+1\bigg) M \sqrt{\gamma}+2 \sqrt{M K \log \bigg(\frac{\operatorname{det}({\bLambda}_h^K)}{\operatorname{det}(\lambda \Ib)}\bigg)}.
\end{align*}
This completes the proof.
\end{proof}

\section{Proof of the Regret Bound for \algnameLMC\ in Misspecified Setting}
In this section, we prove the regret bound for \algnameLMC\  in the misspecified setting. The regret analysis, the essential supporting lemmas and their corresponding proofs are almost same as what we have presented in \Cref{Parallel MDP Proofs with homogeneity (LMC)} and \cref{sec:proof_supporting_lemmas_lmc}. Here we mainly point out the differences of proof between these two settings.

\subsection{Supporting Lemmas}
\begin{definition}[Model prediction error] \label{def:model_prediction_error_mis}
For any $(m,k,h) \in \mathcal{M} \times [K] \times [H]$, we define the model error associated with the reward $r_{m,h}$,
\begin{align*}
   l_{m,h}^k(s,a) = r_{m,h}(s,a) + \mathbb{P}_{m,h}V_{m, h+1}^k(s,a) - Q_{m,h}^k(s,a).
\end{align*}
\end{definition}

\begin{lemma} \label{lem_event_mis}
Let $\lambda = 1$ in \Cref{algo:exploration_part_LMC_general_function_class}. Under \Cref{def:misspecified_linear_mdp}, for any fixed $0<\delta<1$, with probability at least $1-\delta$, for all $(m, k, h) \in \mathcal{M} \times [K] \times [H]$ and for any $(s,a) \in \mathcal{S} \times \mathcal{A}$, we have
\begin{align*}
    &\Big|\bphi(s,a)^{\top}\widehat{\wb}_{m,h}^k - r_{m,h}(s,a) - \mathbb{P}_{m,h} V_{m,h+1}^k(s,a)\Big| \notag\\
    &\leq \big(5 H \sqrt{d} C_\delta + 3H\zeta \sqrt{MKd}\big)\|\bphi(s, a)\|_{(\bLambda_{m,h}^k)^{-1}} + 3H\zeta,
\end{align*}
where $C_\delta$ is defined in \Cref{lem:est_error}.
\end{lemma}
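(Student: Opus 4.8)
The plan is to replay the three–term Bellman–error decomposition used in the proof of \Cref{lem_event} (namely \eqref{eq:bellman_error_decomposition_lmc}), but now carefully tracking the two extra error sources present in \Cref{def:misspecified_linear_mdp}: neither $r_{m,h}$ nor $\mathbb{P}_{m,h}$ is exactly linear, and the pooled dataset $U_{m,h}(k)$ mixes transitions collected by \emph{different}, heterogeneous agents. First I would fix the common linear anchors $\btheta_h$ and $\bmu_h$ from \Cref{def:misspecified_linear_mdp}; the key point is that they do not depend on the agent index, so for every agent $n$ and every $(s,a)$ one simultaneously has $|r_{n,h}(s,a)-\langle\bphi(s,a),\btheta_h\rangle|\le\zeta$ and, integrating against $V_{m,h+1}^k$ with $0\le V_{m,h+1}^k\le H$, $|\mathbb{P}_{n,h}V_{m,h+1}^k(s,a)-\bphi(s,a)^\top\langle\bmu_h,V_{m,h+1}^k\rangle_{\mathcal{S}}|\le 2H\zeta$ by the total–variation definition used in the paper.

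Writing $\widehat{\wb}_{m,h}^k=(\bLambda_{m,h}^k)^{-1}\bb_{m,h}^k$ and substituting the anchors into $\bb_{m,h}^k$, the quantity $\bphi(s,a)^\top\widehat{\wb}_{m,h}^k-r_{m,h}(s,a)-\mathbb{P}_{m,h}V_{m,h+1}^k(s,a)$ splits into (i) the stochastic term $\bphi(s,a)^\top(\bLambda_{m,h}^k)^{-1}\sum_l\bphi(s^l,a^l)\big[V_{m,h+1}^k({s'}^l)-\mathbb{P}_{n(l),h}V_{m,h+1}^k(s^l,a^l)\big]$, where $n(l)$ is the agent that generated datum $l$; (ii) the ridge/regression term built from $\btheta_h$; (iii) the ridge term built from $\langle\bmu_h,V_{m,h+1}^k\rangle_{\mathcal{S}}$; plus two new misspecification blocks — one gathering the per-datum reward errors ($\le\zeta$) and transition errors ($\le 2H\zeta$) inside $\bphi(s,a)^\top(\bLambda_{m,h}^k)^{-1}\sum_l(\cdot)\bphi(s^l,a^l)$, the other gathering the direct errors ($\le\zeta$ and $\le 2H\zeta$) evaluated at the queried $(s,a)$. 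Term (i) is still a martingale-difference sum with respect to the filtration of \Cref{def:filtration}, since ${s'}^l\sim\mathbb{P}_{n(l),h}(\cdot\mid s^l,a^l)$ gives $\mathbb{E}[V_{m,h+1}^k({s'}^l)\mid\mathcal{F}]=\mathbb{P}_{n(l),h}V_{m,h+1}^k(s^l,a^l)$ exactly; hence the self-normalized concentration of \Cref{lem:est_error}, which only uses the weight bound of \Cref{lem:weight_bound_LMC} and the covering argument (none of which invoke the exact linear-MDP assumption), applies verbatim and bounds (i) by $3H\sqrt{d}C_\delta\|\bphi(s,a)\|_{(\bLambda_{m,h}^k)^{-1}}$. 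Terms (ii), (iii) are controlled exactly as in \Cref{lem_event} by $\sqrt{\lambda d}\,\|\bphi(s,a)\|_{(\bLambda_{m,h}^k)^{-1}}$ and $H\sqrt{\lambda d}\,\|\bphi(s,a)\|_{(\bLambda_{m,h}^k)^{-1}}$ using $\max\{\|\btheta_h\|,\|\bmu_h\|\}\le\sqrt{d}$ and $\lambda_{\max}((\bLambda_{m,h}^k)^{-1})\le1/\lambda$.

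For the two misspecification blocks, the direct-error part is $\le\zeta+2H\zeta\le 3H\zeta$ (using $H\ge1$). The pooled-error part is handled by Cauchy–Schwarz together with the elliptical-potential/trace estimate: $\big\|\sum_l\varepsilon_l\bphi(s^l,a^l)\big\|_{(\bLambda_{m,h}^k)^{-1}}\le\sqrt{\mathcal{K}(k)}\big(\sum_l\varepsilon_l^2\|\bphi(s^l,a^l)\|_{(\bLambda_{m,h}^k)^{-1}}^2\big)^{1/2}\le(2H\zeta)\sqrt{\mathcal{K}(k)\,d}\le 2H\zeta\sqrt{MKd}$, since $\mathcal{K}(k)\le MK$ and $\sum_l\|\bphi(s^l,a^l)\|_{(\bLambda_{m,h}^k)^{-1}}^2\le d$ (same trace bound invoked in \Cref{lem:est_para_bound_PHE}); the reward contribution $\zeta\sqrt{MKd}\,\|\bphi(s,a)\|_{(\bLambda_{m,h}^k)^{-1}}$ is absorbed into the same $3H\zeta\sqrt{MKd}\,\|\bphi(s,a)\|_{(\bLambda_{m,h}^k)^{-1}}$. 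Finally, collecting everything and setting $\lambda=1$: the non-misspecified contributions combine into $5H\sqrt{d}C_\delta\|\bphi(s,a)\|_{(\bLambda_{m,h}^k)^{-1}}$ exactly as in the homogeneous case, and the misspecified contributions combine into at most $3H\zeta\sqrt{MKd}\,\|\bphi(s,a)\|_{(\bLambda_{m,h}^k)^{-1}}+3H\zeta$, which is the claimed bound.

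The main obstacle I anticipate is the heterogeneity bookkeeping in term (i): one must ensure the conditional mean subtracted from $V_{m,h+1}^k({s'}^l)$ is the \emph{per-datum} kernel $\mathbb{P}_{n(l),h}$ and that the gap between $\mathbb{P}_{n(l),h}V_{m,h+1}^k$ and the common linear anchor is routed into the misspecification block rather than the concentration term — i.e. cleanly decoupling "which agent generated the transition" from "which agent is currently running LSVI". A secondary point, which should be stated explicitly, is that \Cref{lem:est_para_bound_PHE}, \Cref{lem:weight_bound_LMC}, and \Cref{lem:est_error} never use the exact linear-MDP assumption, so $B_\delta$, $C_\delta$, and the covering construction remain valid verbatim under \Cref{def:misspecified_linear_mdp}.
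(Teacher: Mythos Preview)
Your proposal is correct and follows essentially the same four-term decomposition as the paper's proof, tracking the misspecification errors $\Delta_{m,1}$ (transition) and $\Delta_{m,2}$ (reward) through the ridge-regression identity and bounding the pooled misspecification via Cauchy--Schwarz plus \Cref{lem:inequal_kappa}. Your per-datum centering at $\mathbb{P}_{n(l),h}$ in term~(i) is in fact more careful than the paper's presentation, which writes $\mathbb{P}_{m,h}$ uniformly and thereby glosses over exactly the heterogeneity bookkeeping you flag as the main obstacle; your routing of the gap $|\mathbb{P}_{n(l),h}V-\bphi^{l\top}\langle\bmu_h,V\rangle|\le 2H\zeta$ into the misspecification block is the right fix and yields the same final constants.
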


\begin{proof}[Proof of \Cref{lem_event_mis}]
Recall from \Cref{def:misspecified_linear_mdp}, we have
\begin{align*}
    \big|\mathbb{P}_{m,h} V_{m,h+1}^k(s,a)-\bphi(s,a)^\top\big\langle\bmu_h, V_{m,h+1}^k\big\rangle_{\mathcal{S}}\big| &\leq \big\|\mathbb{P}_{m,h}(\cdot \mid s, a)-\big\langle\bphi(s, a), \bmu_h(\cdot)\big\rangle\big\|_1 \|V_{m,h+1}^k\|_\infty \notag\\
    & \leq 2H \big\|\mathbb{P}_{m,h}(\cdot \mid s, a)-\big\langle\bphi(s, a), \bmu_h(\cdot)\big\rangle\big\|_{\text{TV}} \notag\\
    & \leq 2H\zeta,
\end{align*}
where the first inequality follows from Cauchy-Schwarz inequality, the second inequality follows from the fact that $\|V_{m,h+1}^k\|_\infty \leq H$ and $P_2$, $\|P_1-P_2\|_{\text{TV}}=\frac{1}{2} \sum_{\xb \in \bomega
}|P_1(\xb)-P_2(\xb)| = \frac{1}{2} \|P_1-P_2\|_1$ for two distributions $P_1$ and $P_2$, note that here we regard distribution as infinite dimensional vector, the third inequality follows from \Cref{def:misspecified_linear_mdp}. Define $\Delta_{m,1}=\mathbb{P}_{m,h} V_{m,h+1}^k(s,a)-\bphi(s,a)^\top\big\langle\bmu_h, V_{m,h+1}^k\big\rangle_{\mathcal{S}}$, thus $|\Delta_{m,1}| \leq 2H\zeta$. Then we have 
\begin{align} \label{equ:PV_lmc_mis}
    &\mathbb{P}_{m,h} V_{m,h+1}^k(s,a) \\
    &= \bphi(s,a)^\top\big\langle\bmu_h, V_{m,h+1}^k\big\rangle_{\mathcal{S}} + \Delta_{m,1} \notag \\
    &= \bphi(s,a)^\top \big(\bLambda_{m,h}^k\big)^{-1} \Bigg( \sum_{(s^l,a^l,{s^\prime}^l) \in U_{m,h}(k)}\bphi\big(s^l,a^l\big)\bphi\big(s^l,a^l\big)^\top + \lambda \Ib\Bigg)\big\langle\bmu_h, V_{m,h+1}^k\big\rangle_{\mathcal{S}} +\Delta_{m,1} \notag \\
    &= \bphi(s,a)^\top \big(\bLambda_{m,h}^k\big)^{-1} \Bigg( \sum_{(s^l,a^l,{s^\prime}^l) \in U_{m,h}(k)}\bphi\big(s^l,a^l\big)\bphi\big(s^l,a^l\big)^\top\big\langle\bmu_h, V_{m,h+1}^k\big\rangle_{\mathcal{S}}\Bigg)  \notag \\
    &\qquad + \lambda\bphi(s,a)^\top \big(\bLambda_{m,h}^k\big)^{-1} \big\langle\bmu_h, V_{m,h+1}^k\big\rangle_{\mathcal{S}} +\Delta_{m,1} \notag\\
    &= \bphi(s,a)^\top \big(\bLambda_{m,h}^k\big)^{-1} \Bigg( \sum_{(s^l,a^l,{s^\prime}^l) \in U_{m,h}(k)}\bphi\big(s^l,a^l\big)\big(\mathbb{P}_{m,h} V_{m,h+1}^k\big)\big(s^l,a^l\big) \Bigg) \notag\\
    &\qquad - \bphi(s,a)^\top \big(\bLambda_{m,h}^k\big)^{-1} \Bigg( \sum_{(s^l,a^l,{s^\prime}^l) \in U_{m,h}(k)} \Delta_{m,1} \bphi\big(s^l,a^l\big) \Bigg) \notag\\
    &\qquad +\lambda\bphi(s,a)^\top \big(\bLambda_{m,h}^k\big)^{-1} \big\langle\bmu_h, V_{m,h+1}^k\big\rangle_{\mathcal{S}} + \Delta_{m,1}.
\end{align}
Based on \eqref{equ:PV_lmc_mis}, we can separate the following error into four parts,
\begin{align}  \label{eq:bellman_error_decomposition_LMC_mis}
    & \bphi(s,a)^\top \widehat{\wb}_{m,h}^k - r_{m,h}(s,a) - \mathbb{P}_{m,h} V_{m,h+1}^k(s,a) \notag\\
    &= \bphi(s, a)^{\top}\big(\bLambda_{m,h}^k\big)^{-1} \sum_{(s^l,a^l,{s^\prime}^l) \in U_{m,h}(k)}\big[r_{m,h}\big(s^l,a^l\big)+V_{m,h+1}^k({s^\prime}^l)\big] \bphi\big(s^l,a^l\big)-r_{m,h}(s, a) \notag\\
    &\qquad- \bphi(s,a)^\top \big(\bLambda_{m,h}^k\big)^{-1} \Bigg( \sum_{(s^l,a^l,{s^\prime}^l) \in U_{m,h}(k)}\bphi\big(s^l,a^l\big)\big(\mathbb{P}_{m,h} V_{m,h+1}^k\big)\big(s^l,a^l\big) \Bigg) \notag\\ 
    &\qquad + \Delta_{m,1} \bphi(s,a)^\top \big(\bLambda_{m,h}^k\big)^{-1} \Bigg( \sum_{(s^l,a^l,{s^\prime}^l) \in U_{m,h}(k)} \bphi\big(s^l,a^l\big) \Bigg) \notag\\
    &\qquad - \lambda\bphi(s,a)^\top \big(\bLambda_{m,h}^k\big)^{-1} \big\langle\bmu_h, V_{m,h+1}^k\big\rangle_{\mathcal{S}} - \Delta_{m,1} \notag\\
    &=  \underbrace{\bphi(s, a)^{\top}\big(\bLambda_{m,h}^k\big)^{-1}\Bigg(\sum_{(s^l,a^l,{s^\prime}^l) \in U_{m,h}(k)} \bphi\big(s^l,a^l\big)\big[\big(V_{m, h+1}^k-\mathbb{P}_{m,h} V_{m, h+1}^k\big)\big(s^l,a^l\big)\big]\Bigg)}_{\text {(i) }} \notag\\
    &\qquad+ \underbrace{\bphi(s, a)^{\top}\big(\bLambda_{m,h}^k\big)^{-1}\Bigg(\sum_{(s^l,a^l,{s^\prime}^l) \in U_{m,h}(k)} r_{m,h}\big(s^l,a^l\big) \bphi\big(s^l,a^l\big)\Bigg)-r_{m,h}(s, a)}_{\text {(ii) }} \notag\\  
    &\qquad- \underbrace{\lambda \bphi(s, a)^{\top}\big(\bLambda_{m,h}^k\big)^{-1}\big\langle\bmu_h, V_{m, h+1}^k\big\rangle_{\mathcal{S}}}_{\text {(iii) }} \notag\\
    &\qquad + \underbrace{\Delta_{m,1} \bphi(s,a)^\top \big(\bLambda_{m,h}^k\big)^{-1} \Bigg( \sum_{(s^l,a^l,{s^\prime}^l) \in U_{m,h}(k)} \bphi\big(s^l,a^l\big) \Bigg) - \Delta_{m,1}}_{\text {(iv) }}.
\end{align}
We now provide an upper bound for each of the terms in \eqref{eq:bellman_error_decomposition_LMC_mis}.

\textbf{Bounding Term (i) in \eqref{eq:bellman_error_decomposition_LMC_mis}:} same as \eqref{eq:term_1_result_lmc} in \Cref{proof:lem_event}, with probability at least $1-\delta$, we have
\begin{align} \label{eq:term_1_result_LMC_mis}
    |\text{\textbf{Term (i)}}| \leq 3 H \sqrt{d} C_\delta \|\bphi(s, a)\|_{(\bLambda_{m,h}^k)^{-1}}.
\end{align}
\textbf{Bounding Term (ii) + Term (iv) in \eqref{eq:bellman_error_decomposition_LMC_mis}:} define $\Delta_{m,2}=r_{m,h}(s, a)-\bphi(s, a)^{\top} \btheta_h$, then we have $|\Delta_{m,2}|\leq \zeta$ due to \Cref{def:misspecified_linear_mdp}. Next we have
{\small\begin{align} \label{eq:term_2_result_LMC_mis}
    & \bphi(s, a)^{\top}\big(\bLambda_{m,h}^k\big)^{-1}\Bigg(\sum_{(s^l,a^l,{s^\prime}^l) \in U_{m,h}(k)} r_{m,h}\big(s^l,a^l\big) \bphi\big(s^l,a^l\big)\Bigg)-r_{m,h}(s, a) \notag\\
    & =\bphi(s, a)^{\top}\big(\bLambda_{m,h}^k\big)^{-1}\Bigg(\sum_{(s^l,a^l,{s^\prime}^l) \in U_{m,h}(k)} r_{m,h}\big(s^l,a^l\big) \bphi\big(s^l,a^l\big)\Bigg)-\bphi(s, a)^{\top} \btheta_h -\Delta_{m,2} \notag\\
    & =\bphi(s, a)^{\top}\big(\bLambda_{m,h}^k\big)^{-1}\Bigg(\sum_{(s^l,a^l,{s^\prime}^l) \in U_{m,h}(k)} r_{m,h}\big(s^l,a^l\big) \bphi\big(s^l,a^l\big)-\bLambda_{m,h}^k \btheta_{h}\Bigg) -\Delta_{m,2} \notag\\
    & =\bphi(s, a)^{\top}\big(\bLambda_{m,h}^k\big)^{-1}\Bigg(\sum_{(s^l,a^l,{s^\prime}^l) \in U_{m,h}(k)} \bphi\big(s^l,a^l\big)r_{m,h}\big(s^l,a^l\big) \notag \\
    &\qquad-\sum_{(s^l,a^l,{s^\prime}^l) \in U_{m,h}(k)} \bphi\big(s^l,a^l\big) \bphi\big(s^l,a^l\big)^{\top} \btheta_h-\lambda \Ib \btheta_h\Bigg) -\Delta_{m,2} \notag \\
    & =\bphi(s, a)^{\top}\big(\bLambda_{m,h}^k\big)^{-1}\Bigg(\sum_{(s^l,a^l,{s^\prime}^l) \in U_{m,h}(k)} \bphi\big(s^l,a^l\big) \Delta_{m,2} - \lambda \Ib \btheta_h\Bigg) -\Delta_{m,2} \notag\\
    & =-\lambda \bphi(s, a)^{\top}\big(\bLambda_{m,h}^k\big)^{-1} \btheta_h + \underbrace{\Delta_{m,2} \bphi(s,a)^\top \big(\bLambda_{m,h}^k\big)^{-1} \Bigg( \sum_{(s^l,a^l,{s^\prime}^l) \in U_{m,h}(k)} \bphi\big(s^l,a^l\big) \Bigg) - \Delta_{m,2}}_{\text{(v)}} ,
\end{align}}%
where the third equality uses the definition of $\bLambda_{m,h}^k$. By Combining \eqref{eq:term_2_result_LMC_mis} and \eqref{eq:term_2_bound_lmc} in \Cref{proof:lem_event}, we obtain
\begin{align} \label{eq:term_2_final_result_LMC_mis}
    |\text{\textbf{Term (ii)}}+\text{\textbf{Term (iv)}}| \leq \sqrt{\lambda d}\|\bphi(s, a)\|_{(\bLambda_{m,h}^k)^{-1}} + |\text{\textbf{Term (iv)}}+\text{\textbf{Term (v)}}| .
\end{align}
Then we calculate that
\begin{align} \label{equ:term2+term4_lmc}
    &|\text{\textbf{Term (iv)}}+\text{\textbf{Term (v)}}| \notag\\
    &= \Bigg|(\Delta_{m,1}+\Delta_{m,2}) \bphi(s,a)^\top \big(\bLambda_{m,h}^k\big)^{-1} \Bigg( \sum_{(s^l,a^l,{s^\prime}^l) \in U_{m,h}(k)} \bphi\big(s^l,a^l\big) \Bigg) - (\Delta_{m,1}+\Delta_{m,2})\Bigg| \notag\\
    & \leq |\Delta_{m,1}+\Delta_{m,2}|\cdot\Bigg|\bphi(s,a)^\top \big(\bLambda_{m,h}^k\big)^{-1} \Bigg( \sum_{(s^l,a^l,{s^\prime}^l) \in U_{m,h}(k)} \bphi\big(s^l,a^l\big) \Bigg)\Bigg| + |\Delta_{m,1}+\Delta_{m,2}| \notag\\
    & \leq 3H\zeta \|\bphi(s, a)\|_{(\bLambda_{m,h}^k)^{-1}} \sum_{(s^l,a^l,{s^\prime}^l) \in U_{m,h}(k)} \big\|\bphi(s^l, a^l)\big\|_{(\bLambda_{m,h}^k)^{-1}} + 3H\zeta \notag \\
    &\leq 3H\zeta \|\bphi(s, a)\|_{(\bLambda_{m,h}^k)^{-1}} \Bigg(\mathcal{K}(k)\sum_{(s^l,a^l,{s^\prime}^l) \in U_{m,h}(k)} \big\|\bphi(s^l, a^l)\big\|^2_{(\bLambda_{m,h}^k)^{-1}}\Bigg)^{\frac{1}{2}} + 3H\zeta\notag \\
    &\leq 3H\zeta \sqrt{MKd} \|\bphi(s, a)\|_{(\bLambda_{m,h}^k)^{-1}} +3H\zeta,
\end{align}
where the second inequality follows from Cauchy-Schwarz inequality and the fact that $|\Delta_{m,1}+\Delta_{m,2}| \leq |\Delta_{m,1}|+|\Delta_{m,2}| \leq 2H\zeta+\zeta \leq 3H\zeta$, the third inequality holds because of Cauchy-Schwarz inequality, and the last inequality holds because $\mathcal{K}(k) \leq MK$ and \Cref{lem:inequal_kappa}. Substitute \eqref{equ:term2+term4_lmc} into \eqref{eq:term_2_final_result_LMC_mis}, we have 
\begin{align} \label{eq:term_2_final_result_LMC_mis_final}
    |\text{\textbf{Term (ii)}}+\text{\textbf{Term (iv)}}| \leq \big(3H\zeta \sqrt{MKd}+\sqrt{\lambda d}\big)\|\bphi(s, a)\|_{(\bLambda_{m,h}^k)^{-1}} +3H\zeta.
\end{align}
\textbf{Bounding Term (iii) in \eqref{eq:bellman_error_decomposition_LMC_mis}:} same as \eqref{eq:term_3_bound_lmc} in \Cref{proof:lem_event}, we have
\begin{align} \label{eq:term_3_bound_LMC_mis}
    |\text{\textbf{Term (iii)}}| \leq H \sqrt{\lambda d}\|\bphi(s, a)\|_{(\bLambda_{m,h}^k)^{-1}}. 
\end{align}
\textbf{Combine all the terms in \eqref{eq:bellman_error_decomposition_LMC_mis} together:} by using triangle inequality in \eqref{eq:bellman_error_decomposition_LMC_mis}, we combine \eqref{eq:term_1_result_LMC_mis}, \eqref{eq:term_2_final_result_LMC_mis_final} and \eqref{eq:term_3_bound_LMC_mis}, then set $\lambda=1$, with probability at least $1-\delta$, we get
\begin{align*}
    &\Big|\bphi(s, a)^{\top} \widehat{\wb}_{m,h}^k-r_{m,h}(s, a)-\mathbb{P}_{m,h} V_{m,h+1}^k(s, a)\Big|\\
    &\leq \bigg(3 H \sqrt{d} C_\delta + \sqrt{d} + H\sqrt{d} + 3H\zeta \sqrt{MKd} \bigg)\|\bphi(s, a)\|_{(\bLambda_{m,h}^k)^{-1}} + 3H\zeta \\
    &\leq \big(5 H \sqrt{d} C_\delta + 3H\zeta \sqrt{MKd}\big)\|\bphi(s, a)\|_{(\bLambda_{m,h}^k)^{-1}} + 3H\zeta.
\end{align*}
This completes the proof.
\end{proof}

\begin{lemma}[Error bound] \label{lem:error_bound_lmc_mis}
Let $\lambda = 1$ in \Cref{algo:exploration_part_LMC_general_function_class}. Under \Cref{def:misspecified_linear_mdp}, for any fixed $0<\delta<1$, with probability at least $1-\delta-\delta^2$, for any  $(m, k, h) \in \mathcal{M} \times [K] \times [H]$ and for any $(s,a) \in \mathcal{S} \times \mathcal{A}$, we have
\begin{align*}
    -l_{m,h}^k(s, a) \leq \Bigg(5 H \sqrt{d} C_\delta + 3H\zeta \sqrt{MKd} +5 \sqrt{\frac{2 d \log \big(\sqrt{N} / \delta\big)}{3 \beta_K}}+\frac{4}{3}\Bigg)\|\bphi(s, a)\|_{(\bLambda_{m,h}^k)^{-1}} + 3H\zeta,
\end{align*}
where $C_\delta$ is defined in \Cref{lem:est_error}.
\end{lemma}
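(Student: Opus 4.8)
The plan is to reproduce the two-term decomposition used in the proof of \Cref{lem:error_bound_lmc}, the only change being that the role of \Cref{lem_event} is now played by its misspecified counterpart \Cref{lem_event_mis}, which is the sole place where the approximation error $\zeta$ enters. First I would recall from \Cref{def:model_prediction_error_mis} that $-l_{m,h}^k(s,a)=Q_{m,h}^k(s,a)-r_{m,h}(s,a)-\mathbb{P}_{m,h}V_{m,h+1}^k(s,a)$. Writing $Y:=\max_{n\in[N]}\bphi(s,a)^\top\wb_{m,h}^{k,J_k,n}$ and $T:=r_{m,h}(s,a)+\mathbb{P}_{m,h}V_{m,h+1}^k(s,a)\ge 0$, and using $Q_{m,h}^k(s,a)=\min\{Y,\,H-h+1\}^+$, I would split into two cases: if $Y\ge 0$ then $Q_{m,h}^k(s,a)=\min\{Y,H-h+1\}\le Y$, so $-l_{m,h}^k(s,a)\le Y-T$; if $Y<0$ then $Q_{m,h}^k(s,a)=0$ and $-l_{m,h}^k(s,a)=-T\le 0$, which is already dominated by the nonnegative right-hand side of the lemma. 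Hence it suffices to bound $Y-T$, and after adding and subtracting $\bphi(s,a)^\top\widehat{\wb}_{m,h}^k$ this is at most $I_1+I_2$ with
\[
I_1=\max_{n\in[N]}\big|\bphi(s,a)^\top\wb_{m,h}^{k,J_k,n}-\bphi(s,a)^\top\widehat{\wb}_{m,h}^k\big|,\qquad I_2=\big|\bphi(s,a)^\top\widehat{\wb}_{m,h}^k-r_{m,h}(s,a)-\mathbb{P}_{m,h}V_{m,h+1}^k(s,a)\big|.
\]

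For $I_1$ I would invoke \Cref{lem:q_bound} essentially verbatim: that lemma bounds the deviation of the Langevin iterates from the ridge estimator, and its proof relies only on the quadratic structure of $L_{m,h}^k$ in \eqref{loss_function_general}, the step-size/update-number choices, and the crude bound $\|\widehat{\wb}_{m,h}^k\|\le 2H\sqrt{Mkd/\lambda}$ of \Cref{lem:est_para_bound_PHE} — none of which uses the linear-MDP assumption — so it carries over unchanged to the misspecified setting. Applying it for each fixed $n\in[N]$ with confidence parameter $\delta/\sqrt N$ and taking a union bound over $n$ gives, with probability at least $1-\delta^2$ and uniformly over all $(m,k,h)$ and $(s,a)$, $I_1\le\big(5\sqrt{2d\log(\sqrt N/\delta)/(3\beta_K)}+\tfrac43\big)\|\bphi(s,a)\|_{(\bLambda_{m,h}^k)^{-1}}$. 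For $I_2$ I would apply \Cref{lem_event_mis} directly, which gives, with probability at least $1-\delta$ (again uniform over all $(m,k,h)$ and $(s,a)$), $I_2\le\big(5H\sqrt d\,C_\delta+3H\zeta\sqrt{MKd}\big)\|\bphi(s,a)\|_{(\bLambda_{m,h}^k)^{-1}}+3H\zeta$.

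Finally I would take a union bound of the two events above, so that with probability at least $1-\delta-\delta^2$ both hold simultaneously; summing $I_1$ and $I_2$ and collecting the coefficient of $\|\bphi(s,a)\|_{(\bLambda_{m,h}^k)^{-1}}$ yields the claimed inequality, with the residual additive term $3H\zeta$ coming from $I_2$. The only genuinely new ingredient relative to the homogeneous case is \Cref{lem_event_mis}, whose proof is already supplied in the excerpt; there the substantive work is controlling the two misspecification perturbations $\Delta_{m,1}$ (the total-variation gap, $|\Delta_{m,1}|\le 2H\zeta$) and $\Delta_{m,2}$ (the reward gap, $|\Delta_{m,2}|\le\zeta$), whose propagation through the normal equations produces the extra $3H\zeta\sqrt{MKd}\,\|\bphi(s,a)\|_{(\bLambda_{m,h}^k)^{-1}}$ term (via Cauchy--Schwarz over the at most $MK$ data points together with \Cref{lem:inequal_kappa}) and the additive $3H\zeta$. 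Within the present lemma, therefore, there is no essential obstacle: the only care needed is the bookkeeping of the two probabilistic events and of the $\sqrt N$ factor arising from the union bound over the multi-sampling index.
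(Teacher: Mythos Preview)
Your proposal is correct and follows essentially the same route as the paper: decompose $-l_{m,h}^k(s,a)$ into $I_1+I_2$, bound $I_1$ via \Cref{lem:q_bound} with a union bound over $n\in[N]$ (noting, as you do, that this lemma does not rely on the linear-MDP assumption), bound $I_2$ via \Cref{lem_event_mis}, and combine the two events. Your explicit case split on the sign of $Y$ is in fact slightly more careful than the paper's one-line inequality $\min\{Y,H-h+1\}^+\le Y$, but the argument is otherwise identical.
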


\begin{proof}[Proof of \Cref{lem:error_bound_lmc_mis}]
We do the same process as that in \Cref{proof:lem:error_bound_lmc}, and we have
\begin{align*}
   -l_{m,h}^k(s,a) &\leq \underbrace{\max_{n \in [N]}\big|\bphi(s,a)^\top \wb_{m,h}^{k, J_k, n} - \bphi(s,a)^\top \widehat{\wb}_{m,h}^{k}\big|}_{(i)} \\
   &\qquad+ \underbrace{\big|\bphi(s,a)^\top \widehat{\wb}_{m,h}^{k} - r_{m,h}(s,a) - \mathbb{P}_{m,h}V_{m, h+1}^k(s,a) \big|}_{(ii)}.
\end{align*}
\textbf{Bounding Term (i):} based on \eqref{equ:term(i)_lem:error_bound_lmc}, for any $(m,h,k) \in \mathcal{M} \times [H] \times [K]$ and $(s,a) \in \mathcal{S} \times \mathcal{A}$, with probability at least $1-\delta^2$, we have 
\begin{align*}
    \max_{n \in [N]} \big|\bphi(s,a)^\top \wb_{m,h}^{k, J_k, n} -\bphi(s,a)^\top \widehat{\wb}_{m,h}^{k}\big| \leq \Bigg(5\sqrt{\frac{2d \log \big(\sqrt{N}/ \delta\big)}{3\beta_K}} + \frac{4}{3}\Bigg)\|\bphi(s,a)\|_{(\bLambda_{m,h}^k)^{-1}}.
\end{align*}
\textbf{Bounding Term (ii):} based on \Cref{lem_event_mis}, for all $(m, h, k) \in \mathcal{M}\times [H] \times [K]$ and $(s,a) \in \mathcal{S} \times \mathcal{A}$, we have
{\small\begin{align*}
    \big|\bphi(s, a)^{\top} \widehat{\wb}_{m,h}^k-r_h^k(s, a)-\mathbb{P}_h V_{m, h+1}^k(s, a)\big| \leq \big(5 H \sqrt{d} C_\delta + 3H\zeta \sqrt{MKd}\big)\|\bphi(s, a)\|_{(\bLambda_{m,h}^k)^{-1}} + 3H\zeta.  
\end{align*}}%
Combine the two result above, by taking union bound, with probability at least $1-\delta-\delta^2$, we have
\begin{align*}
    -l_{m,h}^k(s, a) \leq \Bigg(5 H \sqrt{d} C_\delta + 3H\zeta \sqrt{MKd} +5 \sqrt{\frac{2 d \log \big(\sqrt{N} / \delta\big)}{3 \beta_K}}+\frac{4}{3}\Bigg)\|\bphi(s, a)\|_{(\bLambda_{m,h}^k)^{-1}} + 3H\zeta.
\end{align*}
This completes the proof.
\end{proof}

\begin{lemma}[Optimism] \label{lem:optimism_lmc_mis}
Let $\lambda = 1$ in \Cref{algo:exploration_part_LMC_general_function_class} and $c_0^\prime= 1 - \frac{1}{2 \sqrt{2 e \pi}} $. Under \Cref{def:misspecified_linear_mdp}, for any fixed $0<\delta<1$, with probability at least  
$1-|\mathcal{C}(\varepsilon)| {c_0^\prime}^N - 2\delta$ where $|\mathcal{C}(\varepsilon)|\leq (3/\varepsilon)^d$, for all $(m, h, k) \in \mathcal{M} \times [H] \times [K]$ and for all $(s, a) \in \mathcal{S} \times \mathcal{A}$, we have
\begin{align*}
    l_{m,h}^k(s,a) \leq \alpha_\delta \varepsilon + 3H\zeta,
\end{align*}
where $\alpha_\delta = \sqrt{MK} \big(2H\sqrt{d}+ B_{\delta/NMHK}\big) $.
\end{lemma}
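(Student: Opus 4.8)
\textit{Proof sketch.} The plan is to follow the proof of \Cref{lem:optimism_lmc} essentially verbatim, the only new ingredient being that the Bellman target is no longer exactly linear, so we prove optimism against its best linear proxy and absorb the misspecification slack at the very end. Set $\wb_{m,h}^{\star,k}:=\btheta_h+\langle\bmu_h,V_{m,h+1}^k\rangle_{\mathcal{S}}$. Exactly as in the proof of \Cref{lem_event_mis}, $r_{m,h}(s,a)+\mathbb{P}_{m,h}V_{m,h+1}^k(s,a)=\bphi(s,a)^\top\wb_{m,h}^{\star,k}+\Delta_{m,h}^k(s,a)$, where $\|\wb_{m,h}^{\star,k}\|\le 2H\sqrt{d}$ and $|\Delta_{m,h}^k(s,a)|=|\Delta_{m,1}+\Delta_{m,2}|\le 3H\zeta$. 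Since $r_{m,h}+\mathbb{P}_{m,h}V_{m,h+1}^k\in[0,H-h+1]$, the truncation in Line~\ref{line:truncate_lmc} does not hurt, so it suffices to establish that with probability at least $1-|\mathcal{C}(\varepsilon)|{c_0^\prime}^N-2\delta$, for all $(m,h,k)\in\mathcal{M}\times[H]\times[K]$ and all $(s,a)\in\mathcal{S}\times\mathcal{A}$,
\begin{align*}
\max_{n\in[N]}\bphi(s,a)^\top\wb_{m,h}^{k,J_k,n}\ \ge\ \bphi(s,a)^\top\wb_{m,h}^{\star,k}-\alpha_\delta\varepsilon,
\end{align*}
after which the displayed identity gives $l_{m,h}^k(s,a)\le\alpha_\delta\varepsilon+3H\zeta$.

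For this relative-optimism claim I would reuse \Cref{proof:lem:optimism_lmc} line by line. Take an $\varepsilon$-cover $\mathcal{C}(\varepsilon)$ of $\{\bphi:\|\bphi\|_{(\bLambda_{m,h}^k)^{-1}}\le 1\}$ with $|\mathcal{C}(\varepsilon)|\le(3/\varepsilon)^d$. For a fixed cover point, $\bphi^\top\wb_{m,h}^{k,J_k,n}\sim\mathcal{N}(\bphi^\top\bmu_{m,h}^{k,J_k},\bphi^\top\bSigma_{m,h}^{k,J_k}\bphi)$ by \Cref{pro:gauss_param}, and I consider $Z_k:=(\bphi^\top\wb_{m,h}^{\star,k}-\bphi^\top\bmu_{m,h}^{k,J_k})/\sqrt{\bphi^\top\bSigma_{m,h}^{k,J_k}\bphi}$. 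The denominator bound $\|\bphi\|_{\bSigma_{m,h}^{k,J_k}}\ge\tfrac{1}{4\sqrt{\beta_K}}\|\bphi\|_{(\bLambda_{m,h}^k)^{-1}}$ (Woodbury identity together with $J_j\ge 2\kappa_j\log(3kM)$) is unchanged. For the numerator I split $|\bphi^\top\wb_{m,h}^{\star,k}-\bphi^\top\bmu_{m,h}^{k,J_k}|\le|\bphi^\top\wb_{m,h}^{\star,k}-\bphi^\top\widehat{\wb}_{m,h}^k|+|\bphi^\top\widehat{\wb}_{m,h}^k-\bphi^\top\bmu_{m,h}^{k,J_k}|$; combining \Cref{lem_event_mis} with the identity above bounds the first term by $(5H\sqrt{d}C_\delta+3H\zeta\sqrt{MKd})\|\bphi\|_{(\bLambda_{m,h}^k)^{-1}}+6H\zeta$, and the second term is bounded by $\tfrac43\|\bphi\|_{(\bLambda_{m,h}^k)^{-1}}$ exactly as in \eqref{eq:phi_mu_omega_bound}. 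Hence $|Z_k|<1$ once $1/\sqrt{\beta_K}$ is taken of order $H\sqrt{d}C_\delta+H\sqrt{MKd}\zeta$, which is precisely the enlarged choice $1/\sqrt{\beta_{m,k}}=\widetilde{\mathcal{O}}(H\sqrt d+H\sqrt{MKd}\zeta)$ in \Cref{thm:mis_regret_lmc}; Gaussian anti-concentration then gives $\mathbb{P}(\bphi^\top\wb_{m,h}^{k,J_k,n}\ge\bphi^\top\wb_{m,h}^{\star,k})\ge\tfrac{1}{2\sqrt{2e\pi}}$, and a union over $n\in[N]$ upgrades this to probability at least $1-{c_0^\prime}^N$ for the maximum over $n$.

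The covering step is identical to the homogeneous case: for arbitrary $\bphi(s,a)$ pick $\bphi'\in\mathcal{C}(\varepsilon)$ with $\|\bphi-\bphi'\|_{(\bLambda_{m,h}^k)^{-1}}\le\varepsilon$, use $\|\bphi-\bphi'\|\le\sqrt{MK}\,\varepsilon$ together with $\|\wb_{m,h}^{k,J_k,n}\|\le B_{\delta/NMHK}$ (from \Cref{lem:weight_bound_LMC}, union over $n,m,k,h$) and $\|\wb_{m,h}^{\star,k}\|\le 2H\sqrt d$ to transfer the inequality at a cost of $\alpha_\delta\varepsilon$ with $\alpha_\delta=\sqrt{MK}(2H\sqrt d+B_{\delta/NMHK})$. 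A final union bound over the cover ($|\mathcal{C}(\varepsilon)|{c_0^\prime}^N$), the weight-bound event ($\delta$), and the event of \Cref{lem_event_mis} ($\delta$) yields the stated probability, and on this event the relative-optimism display holds uniformly; adding back the $3H\zeta$ misspecification slack completes the proof. The main obstacle is the numerator estimate in the $Z_k$ step: unlike the homogeneous setting, \Cref{lem_event_mis} contributes an additive $\mathcal{O}(H\zeta)$ term that does not scale with $\|\bphi\|_{(\bLambda_{m,h}^k)^{-1}}$, so one must both inflate the inverse temperature by the $H\sqrt{MKd}\zeta$ term and treat separately the features of very small $(\bLambda_{m,h}^k)^{-1}$-norm — for those, $r_{m,h}+\mathbb{P}_{m,h}V_{m,h+1}^k$ and $Q_{m,h}^k$ are themselves $\mathcal{O}(\zeta)$, so $l_{m,h}^k$ is automatically below the target bound. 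This is the one genuinely new difficulty relative to both \Cref{lem:optimism_lmc} and the UCB-based misspecified analyses.
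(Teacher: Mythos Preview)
Your overall architecture matches the paper's: the same $Z_k$ centered at the linear surrogate $\bphi^\top\wb_{m,h}^{\star,k}$, the same denominator bound \eqref{equ:phi_lower_bound}, the same $\varepsilon$-cover transfer with cost $\alpha_\delta\varepsilon$, and the $3H\zeta$ slack added back at the end. The divergence, and the real issue, is your numerator bound.

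You bound $|\bphi^\top\wb_{m,h}^{\star,k}-\bphi^\top\widehat{\wb}_{m,h}^k|$ by invoking \Cref{lem_event_mis} as a black box and then triangle-inequality with $|\Delta_{m,1}+\Delta_{m,2}|\le 3H\zeta$, landing on an additive $6H\zeta$ that does not scale with $\|\bphi\|_{(\bLambda_{m,h}^k)^{-1}}$. This is exactly the ``small-norm feature'' obstruction you flag. But the paper does not face this obstruction at all: instead of quoting \Cref{lem_event_mis}, it redoes the four-term decomposition \eqref{eq:bellman_error_decomposition_LMC_mis} for the quantity $\bphi^\top\widehat{\wb}_{m,h}^k-r_{m,h}-\mathbb{P}_{m,h}V_{m,h+1}^k+(\Delta_{m,1}+\Delta_{m,2})$. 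With the extra $+(\Delta_{m,1}+\Delta_{m,2})$ inserted, the free-standing constants in Terms~(iv) and~(v) cancel exactly, and one obtains
\[
\big|\bphi^\top\wb_{m,h}^{\star,k}-\bphi^\top\widehat{\wb}_{m,h}^k\big|\le\big(5H\sqrt{d}C_\delta+3H\zeta\sqrt{MKd}\big)\|\bphi\|_{(\bLambda_{m,h}^k)^{-1}},
\]
with \emph{no} additive $H\zeta$ term. Then $|Z_k|<1$ follows directly from the enlarged $1/\sqrt{\beta_K}=20H\sqrt{d}C_\delta+12H\zeta\sqrt{MKd}+\tfrac{16}{3}$, uniformly over all $\bphi$, and no case split on $\|\bphi\|_{(\bLambda_{m,h}^k)^{-1}}$ is needed.

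Your proposed workaround for small-norm features is not correct as stated: $r_{m,h}+\mathbb{P}_{m,h}V_{m,h+1}^k$ and $Q_{m,h}^k$ are \emph{not} ``themselves $\mathcal{O}(\zeta)$'' just because $\|\bphi\|_{(\bLambda_{m,h}^k)^{-1}}$ is small; both can be of order $H$. What you would actually need is a bound on their difference, and even then one must handle the positive-part truncation in $Q_{m,h}^k$ (if $\max_n\bphi^\top\wb_{m,h}^{k,J_k,n}<0$ then $Q_{m,h}^k=0$ and $l_{m,h}^k=r_{m,h}+\mathbb{P}_{m,h}V_{m,h+1}^k$ can be large). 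So this branch of your argument has a gap. The fix is not a case analysis but the cancellation above: simply re-run the proof of \Cref{lem_event_mis} with the misspecification offsets moved to the left-hand side, and the additive term vanishes.
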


\begin{proof}[Proof of \Cref{lem:optimism_lmc_mis}]
This proof is similar to the proof in \Cref{proof:lem:optimism_lmc}, we just prove the part that for fixed $\bphi \in \mathcal{C}(\varepsilon)$. Recall from \Cref{def:model_prediction_error_mis},
\begin{align*}
   l_{m,h}^k(s,a) &= r_{m,h}(s,a) + \mathbb{P}_{m,h}V_{m, h+1}^k(s,a) - Q_{m,h}^k(s,a).
\end{align*}
Note that
\begin{align*}
    Q_{m,h}^k(s, a)=\min \Big\{\max_{n \in [N]} \bphi(s, a)^{\top} \wb_{m,h}^{k, J_k, n}, H-h+1\Big\}^+ \leq \max_{n \in [N]} \bphi(x, a)^{\top} \wb_{m,h}^{k, J_k, n} .
\end{align*}
Here we define 
\begin{align*}
    Z_k=\frac{r_{m,h}(s, a)+\mathbb{P}_{m,h} V_{m, h+1}^k(s, a)-\bphi(s, a)^{\top} \bmu_{m,h}^{k,J_k} - (\Delta_{m,1}+\Delta_{m,2})}{\sqrt{\bphi(s, a)^{\top} \bSigma_{m,h}^{k, J_k} \bphi(s, a)}},
\end{align*}
where $\Delta_{m,1} = \mathbb{P}_{m,h} V_{m,h+1}^k(s,a)-\bphi(s,a)^\top\big\langle\bmu_h, V_{m,h+1}^k\big\rangle_{\mathcal{S}}, \Delta_{m,2} =r_{m,h}(s, a)-\bphi(s, a)^{\top} \btheta_h$. Based on the results in \Cref{pro:gauss_param}, we have that $\bphi(s, a)^{\top} \wb_{m,h}^{k, J_k, n} \sim \mathcal{N}\Big(\bphi(s, a)^{\top} \bmu_{m,h}^{k,J_k}, \bphi(s, a)^{\top} \bSigma_{m,h}^{k, J_k} \bphi(s, a)\Big)$, for any fixed $n \in [N]$. When $|Z_k|<1$, by Gaussian concentration \Cref{lem:gaussian_concentration}, we have
{\small
\begin{align*}
    & \mathbb{P}\Big(r_{m,h}(s, a)+\mathbb{P}_{m,h} V_{m,h+1}^k(s, a) - \bphi(s, a)^{\top} \wb_{m,h}^{k, J_k, n} \leq (\Delta_{m,1}+\Delta_{m,2}) \Big) \\
    &= \mathbb{P}\Big(\bphi(s, a)^{\top} \wb_{m,h}^{k, J_k, n} \geq r_{m,h}(s, a)+\mathbb{P}_{m,h} V_{m,h+1}^k(s, a) - (\Delta_{m,1}+\Delta_{m,2}) \Big)\\
    &= \mathbb{P}\Bigg(\frac{\bphi(s, a)^{\top} \wb_{m,h}^{k, J_k, n}-\bphi(s, a)^{\top} \bmu_{m,h}^{k,J_k}}{\sqrt{\bphi(s, a)^{\top} \bSigma_{m,h}^{k, J_k} \bphi(s, a)}} \geq \frac{r_{m,h}(s, a)+\mathbb{P}_{m,h} V_{m,h+1}^k(s, a) - (\Delta_{m,1}+\Delta_{m,2}) - \bphi(s, a)^{\top} \bmu_{m,h}^{k,J_k}}{\sqrt{\bphi(s, a)^{\top} \bSigma_{m,h}^{k, J_k} \bphi(s, a)}}\Bigg) \\
    &= \mathbb{P}\Bigg(\frac{\bphi(s, a)^{\top} \wb_{m,h}^{k, J_k, n}-\bphi(s, a)^{\top} \bmu_{m,h}^{k,J_k}}{\sqrt{\bphi(s, a)^{\top} \bSigma_{m,h}^{k, J_k} \bphi(s, a)}} \geq Z_k\Bigg)\\
    &\geq \frac{1}{2 \sqrt{2 \pi}} \exp (-Z_k^2 / 2) \\
    &\geq \frac{1}{2 \sqrt{2 e \pi}}.
\end{align*}}%
\textbf{Consider the numerator of $Z_k$:}
\begin{align} \label{equ:numerator_Z_k}
    & \big|r_{m,h}(s, a)+\mathbb{P}_{m,h} V_{m,h+1}^k(s, a)-\bphi(s, a)^{\top} \bmu_{m,h}^{k,J_k} -(\Delta_{m,1}+\Delta_{m,2}) \big| \notag \\
    &\leq \underbrace{\big|r_{m,h}(s, a)+\mathbb{P}_{m,h} V_{m,h+1}^k(s, a)-\bphi(s, a)^{\top} \widehat{\wb}_{m,h}^k -(\Delta_{m,1}+\Delta_{m,2}) \big|}_{I_1} \notag \\
    &\qquad+ \underbrace{\big|\bphi(s, a)^{\top} \widehat{\wb}_{m,h}^k-\bphi(s, a)^{\top} \bmu_{m,h}^{k,J_k}\big|}_{I_2} .
\end{align}
\textbf{Bounding Term $I_1$ in \eqref{equ:numerator_Z_k}:} recall the proof of \Cref{lem_event_mis}, we do the almost same error decomposition as \eqref{eq:bellman_error_decomposition_LMC_mis} with the only difference of adding term $(\Delta_{m,1}+\Delta_{m,2})$
\begin{align}  \label{eq:bellman_error_decomposition_lmc_mis_optimism}
    & \bphi(s,a)^\top \widehat{\wb}_{m,h}^k - r_{m,h}(s,a) - \mathbb{P}_{m,h} V_{m,h+1}^k(s,a) + (\Delta_{m,1}+\Delta_{m,2}) \notag\\
    &=  \underbrace{\bphi(s, a)^{\top}\big(\bLambda_{m,h}^k\big)^{-1}\Bigg(\sum_{(s^l,a^l,{s^\prime}^l) \in U_{m,h}(k)} \bphi\big(s^l,a^l\big)\big[\big(V_{m, h+1}^k-\mathbb{P}_{m,h} V_{m, h+1}^k\big)\big(s^l,a^l\big)\big]\Bigg)}_{\text {(i) }} \notag\\
    &\qquad+ \underbrace{\bphi(s, a)^{\top}\big(\bLambda_{m,h}^k\big)^{-1}\Bigg(\sum_{(s^l,a^l,{s^\prime}^l) \in U_{m,h}(k)} r_{m,h}\big(s^l,a^l\big) \bphi\big(s^l,a^l\big)\Bigg)-r_{m,h}(s, a)}_{\text {(ii) }} \notag\\  
    &\qquad- \underbrace{\lambda \bphi(s, a)^{\top}\big(\bLambda_{m,h}^k\big)^{-1}\big\langle\bmu_h, V_{m, h+1}^k\big\rangle_{\mathcal{S}}}_{\text {(iii) }} \notag\\
    &\qquad + \underbrace{\Delta_{m,1} \bphi(s,a)^\top \big(\bLambda_{m,h}^k\big)^{-1} \Bigg( \sum_{(s^l,a^l,{s^\prime}^l) \in U_{m,h}(k)} \bphi\big(s^l,a^l\big) \Bigg) + \Delta_{m,2}}_{\text {(iv) }}.
\end{align}
We now provide an upper bound for each of the terms in \eqref{eq:bellman_error_decomposition_lmc_mis_optimism}.

\textbf{Bounding Term (i) in \eqref{eq:bellman_error_decomposition_lmc_mis_optimism}}: almost same as \eqref{eq:term_1_result_lmc} in \Cref{proof:lem_event} with the only difference between $\mathbb{P}_h$ and $\mathbb{P}_{m,h}$, with probability at least $1-\delta$, we have
\begin{align} \label{eq:term_1_result_lmc_mis_optimism}
    |\text{\textbf{Term (i)}}| \leq 3 H \sqrt{d} C_\delta \|\bphi(s, a)\|_{(\bLambda_{m,h}^k)^{-1}}.
\end{align}
\textbf{Bounding Term (ii) + Term (iv) in \eqref{eq:bellman_error_decomposition_lmc_mis_optimism}:} we do the same calculation as that in the proof of \Cref{lem_event_mis}, based on \eqref{eq:term_2_result_LMC_mis}, we have
{\small\begin{align} \label{eq:term_2_result_lmc_mis_optimism}
    \textbf{\text{Term (ii)}} &= \bphi(s, a)^{\top}\big(\bLambda_{m,h}^k\big)^{-1}\Bigg(\sum_{(s^l,a^l,{s^\prime}^l) \in U_{m,h}(k)} r_{m,h}\big(s^l,a^l\big) \bphi\big(s^l,a^l\big)\Bigg)-r_{m,h}(s, a) \notag\\
    & =-\lambda \bphi(s, a)^{\top}\big(\bLambda_{m,h}^k\big)^{-1} \btheta_h + \underbrace{\Delta_{m,2} \bphi(s,a)^\top \big(\bLambda_{m,h}^k\big)^{-1} \Bigg( \sum_{(s^l,a^l,{s^\prime}^l) \in U_{m,h}(k)} \bphi\big(s^l,a^l\big) \Bigg) - \Delta_{m,2}}_{\text{(v)}}.
\end{align}}%
By Combining \eqref{eq:term_2_result_lmc_mis_optimism} and \eqref{eq:term_2_bound_lmc} in \Cref{proof:lem_event}, we obtain
\begin{align} \label{eq:term_2_final_result_lmc_mis_optimism}
    |\text{\textbf{Term (ii)}}+\text{\textbf{Term (iv)}}| \leq \sqrt{\lambda d}\|\bphi(s, a)\|_{(\bLambda_{m,h}^k)^{-1}} + |\text{\textbf{Term (iv)}}+\text{\textbf{Term (v)}}| .
\end{align}
Then we calculate that
\begin{align} \label{equ:term2+term4_optimism}
    |\text{\textbf{Term (iv)}}+\text{\textbf{Term (v)}}| &= |\Delta_{m,1}+\Delta_{m,2}|\cdot\Bigg|\bphi(s,a)^\top \big(\bLambda_{m,h}^k\big)^{-1} \Bigg( \sum_{(s^l,a^l,{s^\prime}^l) \in U_{m,h}(k)} \bphi\big(s^l,a^l\big) \Bigg)\Bigg| \notag\\
    & \leq 3H\zeta \|\bphi(s, a)\|_{(\bLambda_{m,h}^k)^{-1}} \sum_{(s^l,a^l,{s^\prime}^l) \in U_{m,h}(k)} \big\|\bphi(s^l, a^l)\big\|_{(\bLambda_{m,h}^k)^{-1}} \notag \\
    &\leq 3H\zeta \|\bphi(s, a)\|_{(\bLambda_{m,h}^k)^{-1}} \Bigg(\mathcal{K}(k)\sum_{(s^l,a^l,{s^\prime}^l) \in U_{m,h}(k)} \big\|\bphi(s^l, a^l)\big\|^2_{(\bLambda_{m,h}^k)^{-1}}\Bigg)^{\frac{1}{2}} \notag \\
    &\leq 3H\zeta \sqrt{MKd} \|\bphi(s, a)\|_{(\bLambda_{m,h}^k)^{-1}},
\end{align}
where the first inequality follows from Cauchy-Schwarz inequality and the fact that $|\Delta_{m,1}+\Delta_{m,2}| \leq 3H\zeta$, the second inequality holds because of Cauchy-Schwarz inequality, and the last inequality holds because $\mathcal{K}(k) \leq MK$ and \Cref{lem:inequal_kappa}. Substitute \eqref{equ:term2+term4_optimism} into \eqref{eq:term_2_final_result_lmc_mis_optimism}, we have 
\begin{align} \label{eq:term_2_final_result_lmc_mis_final_optimism}
    |\text{\textbf{Term (ii)}}+\text{\textbf{Term (iv)}}| \leq \big(3H\zeta \sqrt{MKd}+\sqrt{\lambda d}\big)\|\bphi(s, a)\|_{(\bLambda_{m,h}^k)^{-1}}.
\end{align}
\textbf{Bounding Term (iii) in \eqref{eq:bellman_error_decomposition_lmc_mis_optimism}:} same as \eqref{eq:term_3_bound_lmc} in \Cref{proof:lem_event}, we have
\begin{align} \label{eq:term_3_bound_lmc_mis_optimism}
    |\text{\textbf{Term (iii)}}| \leq H \sqrt{\lambda d}\|\bphi(s, a)\|_{(\bLambda_{m,h}^k)^{-1}}. 
\end{align}
\textbf{Combine all the terms in \eqref{eq:bellman_error_decomposition_lmc_mis_optimism} together:} by using triangle inequality in \eqref{eq:bellman_error_decomposition_lmc_mis_optimism}, we combine \eqref{eq:term_1_result_lmc_mis_optimism}, \eqref{eq:term_2_final_result_lmc_mis_final_optimism} and \eqref{eq:term_3_bound_lmc_mis_optimism}, then set $\lambda=1$, with probability at least $1-\delta$, we get
\begin{align*} 
    &\Big|\bphi(s, a)^{\top} \widehat{\wb}_{m,h}^k-r_{m,h}(s, a)-\mathbb{P}_{m,h} V_{m,h+1}^k(s, a) + (\Delta_{m,1}+\Delta_{m,2}) \Big| \\
    &\leq \big(5 H \sqrt{d} C_\delta + 3H\zeta \sqrt{MKd}\big)\|\bphi(s, a)\|_{(\bLambda_{m,h}^k)^{-1}}.
\end{align*}
\textbf{Bounding Term $I_2$ in \eqref{equ:numerator_Z_k}:} same as the proof in \Cref{proof:lem:optimism_lmc}, we have
\begin{align*}
    \big|\bphi(s, a)^{\top} \widehat{\wb}_{m,h}^k-\bphi(s, a)^{\top} \bmu_{m,h}^{k,J_k}\big| \leq \frac{4}{3}\|\bphi(s, a)\|_{(\bLambda_{m,h}^k)^{-1}}.
\end{align*}
So, with probability at least $1-\delta$, we have
{\small\begin{align} \label{equ:bellman_error_optimism_mis}
    \big|r_{m,h}(s, a)+\mathbb{P}_{m,h} V_{m, h+1}^k(s, a)-\bphi(s, a)^{\top} \bmu_{m,h}^{k,J_k}\big| \leq \bigg(5 H \sqrt{d} C_\delta + 3H\zeta \sqrt{MKd} + \frac{4}{3} \bigg)\|\bphi(s, a)\|_{(\bLambda_{m,h}^k)^{-1}}.
\end{align}}%
\textbf{Consider the denominator of $Z_k$:} same as the proof in \Cref{proof:lem:optimism_lmc}, with \eqref{equ:phi_lower_bound}, we have
\begin{align} \label{equ:phi_lower_bound_mis}
    \|\bphi(s, a)\|_{\bSigma_{m,h}^{k, J_k}} \geq \frac{1}{4 \sqrt{\beta_K}}\|\bphi(s, a)\|_{(\bLambda_{m,h}^k)^{-1}},
\end{align}
where we used the fact that $\lambda_{\min }\big(\big(\bLambda_{m,h}^k\big)^{-1}\big) \geq 1 / k$ and $\|\bphi(s,a)\|_{(\bLambda_{m,h}^k)^{-1}} \geq 1/\sqrt{k}\|\bphi(s,a)\|_2$. Therefore, according to \eqref{equ:bellman_error_optimism_mis} and \eqref{equ:phi_lower_bound_mis}, with probability at least $1-\delta$, it holds that
\begin{align*}
    |Z_k| & =\Bigg|\frac{r_{m,h}(s, a)+\mathbb{P}_{m,h} V_{m, h+1}^k(s, a)-\bphi(s, a)^{\top} \bmu_{m,h}^{k,J_k}}{\sqrt{\bphi(s, a)^{\top} \bSigma_{m,h}^{k, J_k} \bphi(s, a)}}\Bigg| \\
    & \leq \frac{\Big(5 H \sqrt{d} C_\delta + 3H\zeta \sqrt{MKd} + \frac{4}{3} \Big)\|\bphi(s, a)\|_{(\bLambda_{m,h}^k)^{-1}} }{\frac{1}{4 \sqrt{\beta_K}}\|\bphi(s, a)\|_{(\bLambda_{m,h}^k)^{-1}}}\\
    &= \frac{5 H \sqrt{d} C_\delta + 3H\zeta \sqrt{MKd} + \frac{4}{3} }{\frac{1}{4 \sqrt{\beta_K}}},
\end{align*}
which implies $|Z_k|<1$ when $\frac{1}{\sqrt{\beta_K}}=20 H \sqrt{d} C_\delta + 12H\zeta \sqrt{MKd} + \frac{16}{3}$. 

Now we have already proved that, for any fixed $n \in [N]$, with probability at least $1-\delta$, we have
\begin{align*}
    \mathbb{P}\Big(r_{m,h}(s, a)+\mathbb{P}_{m,h} V_{m,h+1}^k(s, a) - \bphi(s, a)^{\top} \wb_{m,h}^{k, J_k, n} \leq (\Delta_{m,1}+\Delta_{m,2}) \Big) \geq \frac{1}{2 \sqrt{2 e \pi}}.
\end{align*}
By taking union bound over $n \in [N]$, with probablity at least $1-\delta$, we have
\begin{align*}
    &\mathbb{P}\Big(\max_{n \in [N]} \big\{ \bphi(s, a)^{\top} \wb_{m,h}^{k, J_k, n} - r_{m,h}(s, a) - \mathbb{P}_{m,h} V_{m,h+1}^k(s, a) \big \} \geq -(\Delta_{m,1}+\Delta_{m,2}) \Big)\\
    &\geq 1 - \Big(1-\frac{1}{2 \sqrt{2 e \pi}}\Big)^N \\
    &= 1 - {c_0^\prime}^N,
\end{align*}
where $c_0^\prime= 1 - \frac{1}{2 \sqrt{2 e \pi}} $. Finally, with probability at least $(1-\delta)\big(1 - {c_0^\prime}^N\big)$, for all $(s, a) \in \mathcal{S} \times \mathcal{A}$, we have
\begin{align*}
l_{m, h}^k(s, a) \leq 3 H \zeta.    
\end{align*} 
Till now we have completed the proof of fixed $\bphi \in \mathcal{C}(\varepsilon)$. Follow the proof in \Cref{proof:lem:optimism_lmc}, we can get the final result.
\end{proof}

\subsection{Regret Analysis}

In this part, we give out the proof of \Cref{thm:mis_regret_lmc}, the regret bound for \algnameLMC\ in the misspecified setting.

\begin{proof}[Proof of \Cref{thm:mis_regret_lmc}]
This proof is almost same as the proof in \Cref{regret_analysis_LMC}. We do the same regret decomposition \eqref{equ:regret_decomposition_lmc} and obtain the same bound for \textbf{Term(i)} \eqref{regret_analysis_negative_term_lmc} and \textbf{Term(ii)} \eqref{equ:azuma_hoeffding_term_lmc}. Next we bound \textbf{Term (iii)} with new lemmas in the misspecified setting.

\textbf{Bounding Term (iii) in \eqref{equ:regret_decomposition_lmc}:} based on \Cref{lem:error_bound_lmc_mis} and \Cref{lem:optimism_lmc_mis}, by taking union bound, with probability at least $1-|\mathcal{C}(\varepsilon)| {c_0^\prime}^N -2\delta^\prime - MHK( \delta^\prime + {\delta^\prime}^2)$, we have
\begin{align} \label{equ:remark:regret_analysis_LMC_mis}
&\sum_{m\in \mathcal{M}}\sum_{k=1}^K\sum_{h=1}^H\big(\mathbb{E}_{\pi^*}\big[l_{m,h}^k(s_{m,h}, a_{m,h}) | s_{m,1} = s_{m,1}^k\big] - l_{m,h}^k\big(s_{m,h}^k, a_{m,h}^k\big)\big) \notag \\
& \leq \sum_{m\in \mathcal{M}}\sum_{k=1}^K\sum_{h=1}^H \Big(- l_{m,h}^k\big(s_{m,h}^k, a_{m,h}^k\big) + \alpha_{\delta^\prime}\varepsilon + 3H\zeta \Big) \notag \\
&\leq \sum_{m\in \mathcal{M}}\sum_{k=1}^K\sum_{h=1}^H\Bigg(\Bigg(5 H \sqrt{d} C_{\delta^{\prime}} + 3H\zeta \sqrt{MKd} + 5 \sqrt{\frac{2 d \log \big(\sqrt{N} / \delta^{\prime}\big)}{3 \beta_K}}+\frac{4}{3}\Bigg)\big\| \bphi(s_{m,h}^k,a_{m,h}^k)\big\|_{(\bLambda_{m,h}^k)^{-1}} \notag\\ 
&\qquad+ \alpha_{\delta^\prime}\varepsilon + 6H\zeta \Bigg)\notag \\
&= HMK\alpha_{\delta^\prime}\varepsilon + 6H^2MK\zeta + \Bigg(5 H \sqrt{d} C_{\delta^{\prime}} + 3H\zeta \sqrt{MKd} + 5 \sqrt{\frac{2 d \log \big(\sqrt{N} / \delta^{\prime}\big)}{3 \beta_K}}+\frac{4}{3}\Bigg) \notag \\
&\qquad \times \sum_{h=1}^H\sum_{m\in \mathcal{M}}\sum_{k=1}^K\big\| \bphi(s_{m,h}^k,a_{m,h}^k)\big\|_{(\bLambda_{m,h}^k)^{-1}} \notag \\
& \leq HMK\alpha_{\delta^\prime}\varepsilon + 6H^2MK\zeta + \Bigg(5 H \sqrt{d} C_{\delta^{\prime}} + 3H\zeta \sqrt{MKd} + 5 \sqrt{\frac{2 d \log \big(\sqrt{N} / \delta^{\prime}\big)}{3 \beta_K}}+\frac{4}{3}\Bigg) \notag\\
&\qquad\times \sum_{h=1}^H \bigg(\log \bigg(\frac{\operatorname{det}({\bLambda}_h^K)}{\operatorname{det}(\lambda \Ib)}\bigg)+1\bigg) M \sqrt{\gamma}+2 \sqrt{M K \log \bigg(\frac{\operatorname{det}({\bLambda}_h^K)}{\operatorname{det}(\lambda \Ib)}\bigg)}  \notag \\
& \leq HMK\alpha_{\delta^\prime}\varepsilon + 6H^2MK\zeta + \Bigg(5 H \sqrt{d} C_{\delta^{\prime}} + 3H\zeta\sqrt{MKd} + 5 \sqrt{\frac{2 d \log \big(\sqrt{N} / \delta^{\prime}\big)}{3 \beta_K}}+\frac{4}{3}\Bigg) \notag\\
&\qquad \times H\Big( d(\log(1+MK/d)+1)M\sqrt{\gamma} + 2\sqrt{MK d\log(1+MK/d)} \Big) \notag\\
& = \widetilde{\mathcal{O}}\Big(d^{\frac{3}{2}}H^2\sqrt{M}\big(\sqrt{dM\gamma}+\sqrt{K}\big) + d^{\frac{3}{2}} H^2 M\sqrt{K} \big(\sqrt{dM\gamma}+\sqrt{K}\big)\zeta \Big).
\end{align}
The first inequality follows from \Cref{lem:optimism_lmc_mis}, the second inequality follows from \Cref{lem:error_bound_lmc_mis}, the third inequality follows from \Cref{lem:coor_sum_phi_bound_PHE}, the last inequality holds due to \Cref{lem:Determinant-Trace} and the fact that $\|\bphi(\cdot)\|_2 \leq 1$, the last equality follows from $\frac{1}{\sqrt{\beta_K}}=20 H \sqrt{d} C_{\delta^\prime} + 12H\zeta \sqrt{MKd} + \frac{16}{3}$, which we define in \Cref{lem:optimism_lmc_mis}. 

The probability calculation is same as that in \Cref{regret_analysis_LMC}. By combining \textbf{Terms (i)(ii)(iii)} together, we get that the final regret bound for \algnameLMC\ in misspecified setting is 
\begin{align*}
    \widetilde{\mathcal{O}}\Big(d^{\frac{3}{2}}H^2\sqrt{M}\big(\sqrt{dM\gamma}+\sqrt{K}\big) + d^{\frac{3}{2}} H^2 M\sqrt{K} \big(\sqrt{dM\gamma}+\sqrt{K}\big)\zeta \Big),
\end{align*}
with probability at least $1-\delta$. Here we finish the proof.
\end{proof}

\section{Proof of the Regret Bound for \algnamePHE} \label{Parallel MDP Proofs with homogeneity (PHE)}

Before getting the regret bound for \algnamePHE, we first present some essential technical lemmas required for our analysis.

\subsection{Supporting Lemmas}

\begin{proposition} \label{prop:equivalent_to_TS}
The difference between the perturbed estimated parameter $\widetilde{\wb}_{m,h}^{k, n}$ and unperturbed estimated parameter $\widehat{\wb}_{m,h}^k$ satisfies the Gaussian distribution,
\begin{align*} 
    \bzeta_{m,h}^{k, n}=\widetilde{\wb}_{m,h}^{k, n}-\widehat{\wb}_{m,h}^k \sim \mathcal{N}\Big(\zero, \sigma^2\big(\bLambda_{m,h}^k\big)^{-1}\Big),  
\end{align*}
where $\widehat{\wb}_{m,h}^k=\big(\bLambda_{m,h}^k\big)^{-1}\Big(\sum_{(s^l,a^l,{s^\prime}^l) \in U_{m,h}(k)}\big[r_{h}+V_{m,h+1}^k\big({s^\prime}^l\big)\big] \bphi\big(s^l, a^l\big)\Big)$ is the unperturbed estimated parameter.
\end{proposition}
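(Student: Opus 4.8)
The plan is to show that the perturbed least-squares minimizer differs from the unperturbed minimizer by exactly the Gaussian correction induced by the injected noise. First I would write out the randomized loss $\widetilde{L}_{m,h}^{k,n}(\wb)$ from \eqref{equ:loss_function_phe} with the linear model $f(\wb;\bphi^l)=\wb^\top\bphi^l$ and the $\ell_2$ loss, i.e.
\begin{align*}
    \widetilde{L}_{m,h}^{k,n}(\wb) = \sum_{l=1}^{\mathcal{K}(k)}\big((r_h^l+\epsilon_h^{k,l,n})+V_{m,h+1}^k({s^\prime}^l) - \wb^\top\bphi^l\big)^2 + \lambda\|\wb+\bxi_h^{k,n}\|^2.
\end{align*}
Since this is a strictly convex quadratic in $\wb$ (the Hessian is $2\bLambda_{m,h}^k\succ 0$), the minimizer is unique and given by setting the gradient to zero.

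Next I would compute the stationarity condition $\nabla\widetilde{L}_{m,h}^{k,n}(\wb)=0$, which gives
\begin{align*}
    \Big(\sum_{l=1}^{\mathcal{K}(k)}\bphi^l(\bphi^l)^\top + \lambda\Ib\Big)\wb = \sum_{l=1}^{\mathcal{K}(k)}\big((r_h^l+\epsilon_h^{k,l,n})+V_{m,h+1}^k({s^\prime}^l)\big)\bphi^l - \lambda\bxi_h^{k,n}.
\end{align*}
Recognizing the left-hand matrix as $\bLambda_{m,h}^k$ and splitting the right-hand side into the unperturbed part $\bb_{m,h}^k = \sum_l(r_h^l+V_{m,h+1}^k({s^\prime}^l))\bphi^l$ plus the noise contribution, I get
\begin{align*}
    \widetilde{\wb}_{m,h}^{k,n} = \big(\bLambda_{m,h}^k\big)^{-1}\bb_{m,h}^k + \big(\bLambda_{m,h}^k\big)^{-1}\Big(\sum_{l=1}^{\mathcal{K}(k)}\epsilon_h^{k,l,n}\bphi^l - \lambda\bxi_h^{k,n}\Big) = \widehat{\wb}_{m,h}^k + \bzeta_{m,h}^{k,n},
\end{align*}
where $\bzeta_{m,h}^{k,n}:=\big(\bLambda_{m,h}^k\big)^{-1}\big(\sum_l\epsilon_h^{k,l,n}\bphi^l - \lambda\bxi_h^{k,n}\big)$, using \eqref{equ:closed_form_solution} for the first term.

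Finally I would identify the distribution of $\bzeta_{m,h}^{k,n}$. Conditioning on the filtration (so that $\bphi^l$ and $\bLambda_{m,h}^k$ are fixed), $\bzeta_{m,h}^{k,n}$ is a fixed linear transformation of the jointly Gaussian vector $\big(\{\epsilon_h^{k,l,n}\}_l,\bxi_h^{k,n}\big)$, hence Gaussian with mean zero. For the covariance I would compute, using independence of the $\epsilon_h^{k,l,n}\sim\mathcal{N}(0,\sigma^2)$ and $\bxi_h^{k,n}\sim\mathcal{N}(\zero,\sigma^2\Ib)$,
\begin{align*}
    \mathrm{Cov}(\bzeta_{m,h}^{k,n}) &= \big(\bLambda_{m,h}^k\big)^{-1}\Big(\sigma^2\sum_{l=1}^{\mathcal{K}(k)}\bphi^l(\bphi^l)^\top + \lambda^2\sigma^2\Ib\Big)\big(\bLambda_{m,h}^k\big)^{-1} \\
    &= \sigma^2\big(\bLambda_{m,h}^k\big)^{-1}\Big(\sum_{l=1}^{\mathcal{K}(k)}\bphi^l(\bphi^l)^\top + \lambda\Ib\Big)\big(\bLambda_{m,h}^k\big)^{-1},
\end{align*}
where I used $\lambda^2\Ib = \lambda\cdot\lambda\Ib$ and the fact that with $\lambda=1$ we have $\lambda^2\Ib=\lambda\Ib$ — more carefully, I should track that the regularizer term $\lambda\|\wb+\bxi\|^2$ contributes $\lambda\bxi$ to the linear term, so the noise covariance from it is $\lambda^2\sigma^2\Ib$; combined with the empirical part this telescopes to $\sigma^2(\bLambda_{m,h}^k)^{-1}\bLambda_{m,h}^k(\bLambda_{m,h}^k)^{-1}=\sigma^2(\bLambda_{m,h}^k)^{-1}$ precisely when $\lambda=1$ (the setting of \Cref{thm:ts_homo_regret}). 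The main obstacle, then, is simply being careful with this $\lambda$ bookkeeping in the regularizer's noise term so that the sandwiched matrix collapses cleanly to $\sigma^2(\bLambda_{m,h}^k)^{-1}$; everything else is routine Gaussian algebra.
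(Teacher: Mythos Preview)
Your proposal is correct and follows the same approach as the paper: derive the closed-form perturbed least-squares minimizer, separate out the unperturbed estimator $\widehat{\wb}_{m,h}^k$, and compute the Gaussian covariance of the residual noise term. Your careful tracking of the $\lambda$ factor is in fact more precise than the paper's own proof, which writes the inner covariance directly as $\sigma^2\big(\sum_l\bphi^l(\bphi^l)^\top + \lambda\Ib\big)$ without remarking that the regularizer's contribution is $\lambda^2\sigma^2\Ib$ and hence the collapse to $\sigma^2\bLambda_{m,h}^k$ relies on $\lambda=1$.
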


Next we will define some good events that hold with high probability to help prove the critical lemmas in this section.

\begin{lemma}[Good events] \label{lem:good_events_PHE}
For any fixed $0<\delta<1$, with some constant $c > 0$, we define the following random events
\begin{align*}
    \mathcal{G}_{m, h}^k(\bzeta, \delta) &\stackrel{\text{def}}{=} \Big\{\max_{n \in [N]}\big\|\bzeta_{m, h}^{k, n}\big\|_{\bLambda_{m, h}^k} \leq c_1 \sigma \sqrt{d} \Big\}, \\
    \mathcal{G}(M, K, H, \delta) &\stackrel{\text{def}}{=} \bigcap_{m \in \mathcal{M}} \bigcap_{k \leq K} \bigcap_{h \leq H} \mathcal{G}_{m, h}^k(\bzeta, \delta),
\end{align*}
where $c_1 = c\sqrt{\log(dNMKH/\delta)}$. Then the event $\mathcal{G}(M, K, H, \delta)$ occurs with probability at least $1-\delta$.
\end{lemma}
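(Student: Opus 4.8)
The plan is to reduce the statement to a concentration inequality for the single Gaussian vector $\bzeta_{m,h}^{k,n}$, then combine over $n$, $m$, $k$, $h$ by a union bound. First I would invoke \Cref{prop:equivalent_to_TS}, which tells us that $\bzeta_{m,h}^{k,n}\sim\mathcal{N}\big(\zero,\sigma^2(\bLambda_{m,h}^k)^{-1}\big)$. Then I would whiten: the vector $\gb := \sigma^{-1}(\bLambda_{m,h}^k)^{1/2}\bzeta_{m,h}^{k,n}$ is a standard Gaussian in $\RR^d$, and by construction $\|\bzeta_{m,h}^{k,n}\|_{\bLambda_{m,h}^k}^2 = \bzeta_{m,h}^{k,n\top}\bLambda_{m,h}^k\bzeta_{m,h}^{k,n} = \sigma^2\|\gb\|_2^2$. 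So controlling $\|\bzeta_{m,h}^{k,n}\|_{\bLambda_{m,h}^k}$ amounts to controlling the norm of a standard Gaussian vector in $\RR^d$.

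Next I would apply a standard Gaussian norm concentration bound: for $\gb\sim\mathcal{N}(\zero,\Ib_d)$, with probability at least $1-\delta'$ one has $\|\gb\|_2 \le \sqrt{d} + \sqrt{2\log(1/\delta')}$ (or, more crudely but sufficiently, $\|\gb\|_2 \le c_0\sqrt{d\log(1/\delta')}$ for an absolute constant $c_0$); this can be cited from the Gaussian concentration lemmas already referenced in the paper (e.g.\ \Cref{lem:gaussian_concentration_property_trace} or its analogues). Hence $\|\bzeta_{m,h}^{k,n}\|_{\bLambda_{m,h}^k} \le c_0\,\sigma\sqrt{d\log(1/\delta')}$ with probability at least $1-\delta'$ for each fixed triple $(m,k,h)$ and index $n$. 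I would then take a union bound over the $n\in[N]$ inner samples to handle the $\max_{n\in[N]}$, and then a further union bound over all $(m,k,h)\in\mathcal{M}\times[K]\times[H]$, which involves $NMKH$ events in total. Setting $\delta' = \delta/(NMKH)$ absorbs these counts into the logarithm: the bound becomes $\|\bzeta_{m,h}^{k,n}\|_{\bLambda_{m,h}^k} \le c_0\sigma\sqrt{d\log(NMKH/\delta)}$, and with the definition $c_1 = c\sqrt{\log(dNMKH/\delta)}$ (absorbing $c_0$ and the harmless extra $d$ inside the log into the constant $c$) this is exactly $c_1\sigma\sqrt{d}$. Therefore $\mathcal{G}_{m,h}^k(\bzeta,\delta)$ holds simultaneously for all $(m,k,h)$, i.e.\ $\mathcal{G}(M,K,H,\delta)=\bigcap_{m}\bigcap_{k}\bigcap_{h}\mathcal{G}_{m,h}^k(\bzeta,\delta)$ holds, with probability at least $1-\delta$.

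This lemma is essentially routine, so I do not expect a genuine obstacle; the only points requiring a little care are (i) making sure the whitening step is legitimate — $\bLambda_{m,h}^k = (\text{PSD sum}) + \lambda\Ib$ with $\lambda=1$ is strictly positive definite, so $(\bLambda_{m,h}^k)^{1/2}$ and its inverse exist — and (ii) checking that the noises $\bzeta_{m,h}^{k,n}$ across different $n$ are independent (they are, since the perturbations $\{\epsilon_h^{k,l,n}\}_l$ and $\bxi_h^{k,n}$ are drawn i.i.d.\ across $n$ in \Cref{algo:exploration_part_PHE_general_function_class}), which justifies the union bound over $n$; independence across $(m,k,h)$ is not even needed since a union bound requires none. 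The bookkeeping of constants (folding $c_0$, the $\sqrt{2}$ factors, and the $d$ inside the log into a single constant $c$) is the kind of detail I would state but not belabor.
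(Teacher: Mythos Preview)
Your proposal is correct and essentially identical to the paper's proof: both invoke \Cref{prop:equivalent_to_TS} to identify the law of $\bzeta_{m,h}^{k,n}$, apply a Gaussian concentration bound for $\|\bzeta_{m,h}^{k,n}\|_{\bLambda_{m,h}^k}$ (the paper cites \Cref{lem:good_event_probability_PHE} rather than doing the whitening by hand, but the content is the same), and then union-bound over $n\in[N]$ and $(m,k,h)\in\mathcal{M}\times[K]\times[H]$. The only cosmetic difference is that the paper's cited lemma already carries a $\log(d/\delta)$ factor, which explains the $d$ inside the log in $c_1$ without having to ``absorb'' it as you do.
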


\begin{lemma}\label{lem:est_error_PHE}
Let $\lambda=1$ in \Cref{algo:exploration_part_PHE_general_function_class}. For any fixed $0<\delta<1$, conditioned on the event $\mathcal{G}(M, K, H, \delta)$, with probability $1-\delta$, for all $(m, k, h) \in \mathcal{M} \times [K] \times [H]$, we have 
\begin{align*}
    \Bigg\|\sum_{(s^l,a^l,{s^\prime}^l) \in U_{m,h}(k)} \bphi\big(s^l,a^l\big)\big[\big(V_{m, h+1}^k-\mathbb{P}_h V_{m, h+1}^k\big)\big(s^l,a^l\big)\big]\Bigg\|_{(\bLambda_{m,h}^k)^{-1}} \leq 3H\sqrt{d}D_\delta,
\end{align*}
where we define $D_\delta=\Big[\frac{1}{2} \log (K+1)+  \log \Big(\frac{6\sqrt{2}K (2H \sqrt{MKd} + c_1 \sigma \sqrt{d})}{H}\Big) +\log \frac{1}{\delta}\Big]^{1 / 2}$.
\end{lemma}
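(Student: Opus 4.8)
\textbf{Proof proposal for Lemma~\ref{lem:est_error_PHE}.} The plan is to mirror the argument used for \Cref{lem:est_error} in the LMC analysis (\Cref{proof:lem:est_error}), since the quantity being bounded is formally identical --- only the source of randomness in $V_{m,h+1}^k$ differs (perturbed-history noise rather than Langevin noise). First I would obtain a uniform bound on the perturbed weight vectors $\widetilde{\wb}_{m,h}^{k,n}$ analogous to \Cref{lem:weight_bound_LMC}. Using \Cref{prop:equivalent_to_TS}, we have $\widetilde{\wb}_{m,h}^{k,n} = \widehat{\wb}_{m,h}^k + \bzeta_{m,h}^{k,n}$, so $\|\widetilde{\wb}_{m,h}^{k,n}\| \le \|\widehat{\wb}_{m,h}^k\| + \|\bzeta_{m,h}^{k,n}\|$. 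The first term is bounded by $2H\sqrt{MKd/\lambda}$ via \Cref{lem:est_para_bound_PHE}; the second term is controlled on the good event $\mathcal{G}(M,K,H,\delta)$ from \Cref{lem:good_events_PHE}, which gives $\|\bzeta_{m,h}^{k,n}\|_{\bLambda_{m,h}^k} \le c_1\sigma\sqrt{d}$ and hence $\|\bzeta_{m,h}^{k,n}\| \le c_1\sigma\sqrt{d}$ since $\bLambda_{m,h}^k \succcurlyeq \lambda\Ib = \Ib$. This yields the uniform bound $\|\widetilde{\wb}_{m,h}^{k,n}\| \le 2H\sqrt{MKd} + c_1\sigma\sqrt{d}$, which is exactly the quantity appearing inside the logarithm in the definition of $D_\delta$.

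Next I would invoke the covering-number / self-normalized concentration machinery. The key observation is that $V_{m,h+1}^k(\cdot) = \max_a Q_{m,h+1}^k(\cdot,a)$ where $Q_{m,h+1}^k$ is a truncation of $\max_{n}\bphi(\cdot,\cdot)^\top\widetilde{\wb}_{m,h+1}^{k,n}$, so $V_{m,h+1}^k$ lies in a function class parametrized by $N$ weight vectors each of norm at most $2H\sqrt{MKd} + c_1\sigma\sqrt{d}$. Applying \Cref{lem:jin_D.4} (the uniform self-normalized bound over a function class) together with \Cref{lem:covering_num} (the covering number of this class of truncated max-of-linear functions), for any $\varepsilon>0$ we get with probability at least $1-\delta$, simultaneously over all $(m,k,h)$,
\begin{align*}
    \Bigg\|\sum_{(s^l,a^l,{s^\prime}^l) \in U_{m,h}(k)} \bphi\big(s^l,a^l\big)\big[\big(V_{m, h+1}^k-\mathbb{P}_h V_{m, h+1}^k\big)\big(s^l,a^l\big)\big]\Bigg\|_{(\bLambda_{m,h}^k)^{-1}}
    \le 2H\Big[\tfrac{d}{2}\log\tfrac{k+\lambda}{\lambda} + d\log\tfrac{\text{(weight bound)}}{\varepsilon} + \log\tfrac{1}{\delta}\Big]^{1/2} + \tfrac{2\sqrt{2}k\varepsilon}{\sqrt{\lambda}}.
\end{align*}
Then I would choose $\varepsilon = H/(6\sqrt{2}k)$ (slightly different from the LMC proof's choice to produce the constant $6\sqrt{2}$ in $D_\delta$, reflecting the $N$ weight vectors and the good-event conditioning), set $\lambda=1$, and simplify: the $\tfrac{2\sqrt{2}k\varepsilon}{\sqrt\lambda}$ term becomes $H/3$, and absorbing constants gives the bound $3H\sqrt{d}D_\delta$ with $D_\delta$ as stated. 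Finally, a union bound combines the good event $\mathcal{G}(M,K,H,\delta)$ from \Cref{lem:good_events_PHE} (which is needed for the weight bound to hold deterministically inside the covering argument) with the self-normalized concentration event, adjusting $\delta$ by constant factors.

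The main obstacle I anticipate is the bookkeeping around the conditioning structure: the weight bound $\|\widetilde{\wb}_{m,h}^{k,n}\| \le 2H\sqrt{MKd} + c_1\sigma\sqrt{d}$ holds only on the good event $\mathcal{G}(M,K,H,\delta)$, but the self-normalized / covering argument of \Cref{lem:jin_D.4} is typically stated for a fixed deterministic function class. One must be careful that the perturbation noise $\bzeta_{m,h+1}^{k,n}$ entering $V_{m,h+1}^k$ is independent of (or at least appropriately adapted to) the filtration against which the self-normalized bound is applied --- the noise $\bzeta$ used to define $V_{m,h+1}^k$ is generated at step $h+1$ and is independent of the transition noise at step $h$ that drives the martingale $\sum_l \bphi(s^l,a^l)[(V-\mathbb{P}V)(s^l,a^l)]$. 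The standard resolution, as in the single-agent PHE analysis, is to first condition on $\mathcal{G}$ to fix the radius of the parameter ball, build the $\varepsilon$-net of that (now deterministic-radius) ball, apply a union bound over the net together with the self-normalized Hoeffding inequality, and only then take the outer union bound over $\mathcal{G}$ --- exactly the two-layer union bound structure used in \Cref{proof:lem:est_error}. The multi-agent aspect (the dataset $U_{m,h}(k)$ aggregating data across agents via the one-to-one mapping \eqref{def:one-to-one_mapping}) adds no real difficulty here since the concentration is stated per $(m,k,h)$ and the total sample count $\mathcal{K}(k) \le MK$ is already what appears.
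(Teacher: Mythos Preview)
Your proposal is correct and follows essentially the same route as the paper: bound $\|\widetilde{\wb}_{m,h}^{k,n}\|$ on the good event $\mathcal{G}$ via \Cref{lem:est_para_bound_PHE} and \Cref{prop:equivalent_to_TS}, then feed this radius into \Cref{lem:jin_D.4} together with the covering bound \Cref{lem:covering_num}, and finally tune $\varepsilon$ and $\lambda=1$. One small correction on the constants: the paper takes $\varepsilon = H/(2\sqrt{2}k)$ (the same choice as in the LMC proof), not $H/(6\sqrt{2}k)$ --- the $6\sqrt{2}$ in $D_\delta$ is simply $3\times 2\sqrt{2}$, where the factor $3$ comes from the covering-number bound $\mathcal{N}_\varepsilon \le (3B/\varepsilon)^d$ in \Cref{lem:covering_num} and $2\sqrt{2}$ from $1/\varepsilon$; it is unrelated to the $N$ samples or the good-event conditioning.
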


\begin{lemma}\label{lem:event_bellman_error_PHE}
Let $\lambda = 1$ in \Cref{algo:exploration_part_PHE_general_function_class}. Under \Cref{def:linear_mdp}, for any fixed $0<\delta<1$, conditioned on the event $\mathcal{G}(M, K, H, \delta)$, with probability $1-\delta$, for all $(m, k, h) \in \mathcal{M} \times [K] \times [H]$ and for any $(s, a) \in \mathcal{S} \times \mathcal{A}$, we have
\begin{align} \label{equ:regression_error}
    \Big|\bphi(s,a)^{\top}\widehat{\wb}_{m,h}^k - r_h(s,a) - \mathbb{P}_h V_{m,h+1}^k(s,a)\Big| \leq 5H\sqrt{d}D_\delta\|\bphi(s,a)\|_{(\bLambda_{m,h}^k)^{-1}}.
\end{align}
\end{lemma}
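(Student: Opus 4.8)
\textbf{Proof proposal for \Cref{lem:event_bellman_error_PHE}.}

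The plan is to mirror the proof of \Cref{lem_event} in the LMC analysis, since \algnamePHE\ in the linear function class uses the same unperturbed least-squares estimator $\widehat{\wb}_{m,h}^k = (\bLambda_{m,h}^k)^{-1}\bb_{m,h}^k$. First I would use the linear MDP structure (\Cref{def:linear_mdp}) to write $\mathbb{P}_h V_{m,h+1}^k(s,a) = \bphi(s,a)^\top \langle \bmu_h, V_{m,h+1}^k\rangle_{\mathcal{S}}$, and then insert the identity $\langle\bmu_h, V_{m,h+1}^k\rangle_{\mathcal{S}} = (\bLambda_{m,h}^k)^{-1}\bLambda_{m,h}^k\langle\bmu_h, V_{m,h+1}^k\rangle_{\mathcal{S}}$, expanding $\bLambda_{m,h}^k$ into the sum over $U_{m,h}(k)$ plus $\lambda\Ib$ exactly as in \eqref{equ:PV_lmc}. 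Substituting the closed form of $\widehat{\wb}_{m,h}^k$ and the linearity of the reward $r_h(s,a) = \langle\bphi(s,a),\btheta_h\rangle$, the Bellman error decomposes into three pieces:
\begin{align*}
\text{(i)}\ &\bphi(s, a)^{\top}\big(\bLambda_{m,h}^k\big)^{-1}\!\!\!\sum_{(s^l,a^l,{s^\prime}^l) \in U_{m,h}(k)} \bphi\big(s^l,a^l\big)\big[\big(V_{m, h+1}^k-\mathbb{P}_h V_{m, h+1}^k\big)\big(s^l,a^l\big)\big], \\
\text{(ii)}\ &\bphi(s, a)^{\top}\big(\bLambda_{m,h}^k\big)^{-1}\!\!\!\sum_{(s^l,a^l,{s^\prime}^l) \in U_{m,h}(k)} r_h\big(s^l,a^l\big) \bphi\big(s^l,a^l\big)-r_h(s, a), \\
\text{(iii)}\ &-\lambda \bphi(s, a)^{\top}\big(\bLambda_{m,h}^k\big)^{-1}\big\langle\bmu_h, V_{m, h+1}^k\big\rangle_{\mathcal{S}}.
\end{align*}

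Next I would bound each term. For term (i), apply Cauchy--Schwarz in the $(\bLambda_{m,h}^k)^{-1}$ norm and then invoke \Cref{lem:est_error_PHE}, which—conditioned on the good event $\mathcal{G}(M,K,H,\delta)$—gives the self-normalized bound $3H\sqrt{d}D_\delta\|\bphi(s,a)\|_{(\bLambda_{m,h}^k)^{-1}}$ with probability at least $1-\delta$. For term (ii), the same algebraic cancellation as in \eqref{eq:term_2_result_lmc} collapses it to $-\lambda\bphi(s,a)^\top(\bLambda_{m,h}^k)^{-1}\btheta_h$, which is bounded by $\sqrt{\lambda}\|\bphi(s,a)\|_{(\bLambda_{m,h}^k)^{-1}}\|\btheta_h\| \le \sqrt{\lambda d}\,\|\bphi(s,a)\|_{(\bLambda_{m,h}^k)^{-1}}$ using $\lambda_{\max}((\bLambda_{m,h}^k)^{-1})\le 1/\lambda$ and $\|\btheta_h\|\le\sqrt d$. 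For term (iii), using $V_{m,h+1}^k\le H$ and $\|\bmu_h(\mathcal{S})\|\le\sqrt d$ yields the bound $H\sqrt{\lambda d}\,\|\bphi(s,a)\|_{(\bLambda_{m,h}^k)^{-1}}$.

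Finally I would combine the three bounds via the triangle inequality, set $\lambda = 1$, and note that $3H\sqrt d D_\delta + \sqrt d + H\sqrt d \le 5H\sqrt d D_\delta$ (since $D_\delta \ge 1$ and $H\ge 1$), which gives the claimed inequality with probability at least $1-\delta$ conditioned on $\mathcal{G}(M,K,H,\delta)$. I do not anticipate a serious obstacle here: the argument is essentially identical to \Cref{proof:lem_event}, and the only substantive input specific to \algnamePHE\ is \Cref{lem:est_error_PHE}, which replaces \Cref{lem:est_error} and already accounts for the PHE-specific perturbation parameters through the quantity $D_\delta$ (the weight bound $2H\sqrt{MKd} + c_1\sigma\sqrt d$ entering $D_\delta$ comes from \Cref{prop:equivalent_to_TS} and \Cref{lem:good_events_PHE}). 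The one point requiring mild care is making sure the high-probability statement is stated conditionally on $\mathcal{G}(M,K,H,\delta)$ and that the union bound over $(m,k,h)$ is absorbed into the definition of $D_\delta$, exactly as in the covering-number argument used for \Cref{lem:est_error_PHE}.
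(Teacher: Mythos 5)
Your proposal is correct and follows essentially the same route as the paper: the paper's proof of \Cref{lem:event_bellman_error_PHE} explicitly reuses the decomposition \eqref{eq:bellman_error_decomposition_lmc} from the LMC analysis and only swaps the bound on Term (i) for the PHE self-normalized bound $3H\sqrt{d}D_\delta\|\bphi(s,a)\|_{(\bLambda_{m,h}^k)^{-1}}$, then combines $3H\sqrt{d}D_\delta + H\sqrt{d} + \sqrt{d} \le 5H\sqrt{d}D_\delta$ exactly as you do. The only discrepancy is that the paper's proof cites \Cref{lem:error_bound_PHE} for Term (i) where it clearly means \Cref{lem:est_error_PHE}; you invoke the correct lemma.
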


\begin{lemma}[Optimism]\label{lem:optimism_PHE}
Let $\lambda = 1$ in \Cref{algo:exploration_part_PHE_general_function_class} and set $c_0 = \Phi(1)$. Under \Cref{def:linear_mdp}, conditioned on the event $\mathcal{G}(M, K, H, \delta)$,
with probability at least $1-|\mathcal{C}(\varepsilon)| c_0^N -\delta$ where $|\mathcal{C}(\varepsilon)|\leq (3/\varepsilon)^d$, for all $(m, k, h) \in \mathcal{M} \times [K] \times [H]$ and for all $(s, a) \in \mathcal{S} \times \mathcal{A}$, we have
\begin{align*}
    l_{m,h}^k(s,a) \leq A_\delta \varepsilon,
\end{align*}
where $A_\delta = c_1 \sigma \sqrt{d} + 5H\sqrt{d}D_\delta = \widetilde{\mathcal{O}} (Hd) $.
\end{lemma}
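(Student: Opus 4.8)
The plan is to mirror the structure of the proof of \Cref{lem:optimism_lmc}, but to exploit that in \algnamePHE\ the perturbed estimator has the \emph{exact} Gaussian law $\widetilde{\wb}_{m,h}^{k,n}=\widehat{\wb}_{m,h}^k+\bzeta_{m,h}^{k,n}$ with $\bzeta_{m,h}^{k,n}\sim\mathcal{N}(\zero,\sigma^2(\bLambda_{m,h}^k)^{-1})$ (\Cref{prop:equivalent_to_TS}), which replaces the delicate mean/covariance bookkeeping of the LMC iterates. First I would reduce the statement to a one-sided claim: since $V_{m,h+1}^k\le H-h$ by the truncation in Line~\ref{line:truncate_PHE}, we have $0\le r_h(s,a)+\mathbb{P}_hV_{m,h+1}^k(s,a)\le H-h+1$, so the clipping $\{\min\{\cdot,H-h+1\}\}^{+}$ in the definition of $Q_{m,h}^k$ can only increase it against $r_h+\mathbb{P}_hV_{m,h+1}^k$; consequently it suffices to show that, with the stated probability, $\max_{n\in[N]}\bphi(s,a)^\top\widetilde{\wb}_{m,h}^{k,n}\ge r_h(s,a)+\mathbb{P}_hV_{m,h+1}^k(s,a)-A_\delta\varepsilon$ for all $(m,k,h)$ and all $(s,a)$, from which $l_{m,h}^k(s,a)\le A_\delta\varepsilon$ follows by a two-line case split (either $r_h+\mathbb{P}_hV_{m,h+1}^k\ge A_\delta\varepsilon$, and the clipping is inactive, or it is $<A_\delta\varepsilon$ and $Q_{m,h}^k\ge0$ already suffices).

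Next I would carry out the anti-concentration step on a cover. Write $r_h+\mathbb{P}_hV_{m,h+1}^k=\bphi(s,a)^\top\wb_{m,h}^k$ with $\wb_{m,h}^k=\btheta_h+\langle\bmu_h,V_{m,h+1}^k\rangle_{\mathcal S}$, which is well defined for every vector and satisfies $\|\wb_{m,h}^k\|\le 2H\sqrt d$ by \Cref{def:linear_mdp}. Inspecting the proof of \Cref{lem:event_bellman_error_PHE} actually yields the vector-form bound $\|\widehat{\wb}_{m,h}^k-\wb_{m,h}^k\|_{\bLambda_{m,h}^k}\le 5H\sqrt d D_\delta$ (each of its three terms is bounded by Cauchy--Schwarz after controlling a $\bLambda$-norm), hence $|\bphi^\top(\widehat{\wb}_{m,h}^k-\wb_{m,h}^k)|\le 5H\sqrt d D_\delta\|\bphi\|_{(\bLambda_{m,h}^k)^{-1}}$ for \emph{all} $\bphi$. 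Since $\lambda=1$ gives $\bLambda_{m,h}^k\succeq\Ib$, every $\bphi(s,a)$ lies in the unit ball $\{\bphi:\|\bphi\|_{(\bLambda_{m,h}^k)^{-1}}\le1\}$, whose $\varepsilon$-cover $\mathcal C(\varepsilon)$ has cardinality $\le(3/\varepsilon)^d$ by \Cref{lem:vershynin2018high}. Conditioning on all data and on the perturbation noise at steps $>h$ (so that $\widehat{\wb}_{m,h}^k$, $V_{m,h+1}^k$, $\bLambda_{m,h}^k$ are frozen and $\bzeta_{m,h}^{k,n}$ is a fresh $\mathcal{N}(\zero,\sigma^2(\bLambda_{m,h}^k)^{-1})$ vector), for a fixed cover point $\bphi'$ the scalar $\bphi'^\top\bzeta_{m,h}^{k,n}$ is $\mathcal{N}(0,\sigma^2\|\bphi'\|_{(\bLambda_{m,h}^k)^{-1}}^2)$, so with $\sigma$ calibrated so that $5H\sqrt d D_\delta/\sigma\le1$ (the initialization in \Cref{thm:ts_homo_regret}) each copy satisfies $\bphi'^\top\bzeta_{m,h}^{k,n}\ge 5H\sqrt d D_\delta\|\bphi'\|_{(\bLambda_{m,h}^k)^{-1}}$ with probability at least $1-\Phi(1)=1-c_0$; by independence over $n$ the maximum over $[N]$ satisfies this with probability at least $1-c_0^N$. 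A union bound over $\mathcal C(\varepsilon)$ and over $(m,k,h)$, intersected with $\mathcal{G}(M,K,H,\delta)$ (\Cref{lem:good_events_PHE}) and the event of \Cref{lem:event_bellman_error_PHE}, gives the claimed $1-|\mathcal C(\varepsilon)|c_0^N-\delta$ confidence, the extra $MKH$ factor being absorbed into the choice $N=\widetilde C\log(\delta)/\log c_0$ with $\widetilde C=\widetilde{\mathcal{O}}(d)$.

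Finally I would transfer the cover-point estimate to arbitrary $(s,a)$ by the $\varepsilon$-net argument. Given $\bphi(s,a)$, pick $\bphi'\in\mathcal C(\varepsilon)$ with $\|\bphi(s,a)-\bphi'\|_{(\bLambda_{m,h}^k)^{-1}}\le\varepsilon$; then
\[
\max_{n}\bphi(s,a)^\top\widetilde{\wb}_{m,h}^{k,n}-\bphi(s,a)^\top\wb_{m,h}^k=\bphi(s,a)^\top(\widehat{\wb}_{m,h}^k-\wb_{m,h}^k)+\max_n\bphi(s,a)^\top\bzeta_{m,h}^{k,n},
\]
where the first term is $\ge-5H\sqrt d D_\delta\|\bphi(s,a)\|_{(\bLambda_{m,h}^k)^{-1}}$, and, splitting $\bphi(s,a)=\bphi'+(\bphi(s,a)-\bphi')$ and using $\max_n\|\bzeta_{m,h}^{k,n}\|_{\bLambda_{m,h}^k}\le c_1\sigma\sqrt d$ on $\mathcal{G}$, the second is $\ge\max_n\bphi'^\top\bzeta_{m,h}^{k,n}-\varepsilon c_1\sigma\sqrt d\ge 5H\sqrt d D_\delta\|\bphi'\|_{(\bLambda_{m,h}^k)^{-1}}-\varepsilon c_1\sigma\sqrt d$. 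Using $\|\bphi'\|_{(\bLambda_{m,h}^k)^{-1}}\ge\|\bphi(s,a)\|_{(\bLambda_{m,h}^k)^{-1}}-\varepsilon$ and $\|\bphi(s,a)\|_{(\bLambda_{m,h}^k)^{-1}}\le1$, the two $5H\sqrt d D_\delta$ contributions leave a residual of at most $(5H\sqrt d D_\delta+c_1\sigma\sqrt d)\varepsilon=A_\delta\varepsilon$, so $\max_n\bphi(s,a)^\top\widetilde{\wb}_{m,h}^{k,n}\ge r_h+\mathbb{P}_hV_{m,h+1}^k-A_\delta\varepsilon$ and the first-paragraph reduction concludes. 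I expect the main obstacle to be precisely this last step: arranging the cover so that the data-dependent norm $\|\cdot\|_{(\bLambda_{m,h}^k)^{-1}}$ is handled uniformly, so that the regression error evaluated at the true $(s,a)$ cancels the anti-concentration lower bound evaluated at $\bphi'$, while keeping the conditioning order consistent (fresh step-$h$ noise for anti-concentration versus the magnitude constraint baked into $\mathcal{G}$) and controlling the union bound over the exponentially large cover against the multi-sampling failure $c_0^N$ — this is the ``new $\varepsilon$-covering technique'' flagged after the lemma, and it is what forces $N=\widetilde{\mathcal{O}}(d)\log(1/\delta)$.
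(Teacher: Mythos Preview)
Your proposal is correct and follows essentially the same route as the paper: the same reduction to showing $\max_{n}\bphi^\top\bzeta_{m,h}^{k,n}\ge\bphi^\top\Delta\wb_{m,h}^k-A_\delta\varepsilon$, Gaussian anti-concentration at cover points of the $\|\cdot\|_{(\bLambda_{m,h}^k)^{-1}}$ unit ball, and the $\varepsilon$-net transfer using the event $\mathcal{G}(M,K,H,\delta)$. The only cosmetic differences are that the paper derives $\|\Delta\wb_{m,h}^k\|_{\bLambda_{m,h}^k}\le 5H\sqrt d\,D_\delta$ via a short self-bounding argument inside the optimism proof (rather than reading the vector-form bound off \Cref{lem:event_bellman_error_PHE}) and takes the anti-concentration threshold to be exactly $\sigma\|\bphi'\|_{(\bLambda_{m,h}^k)^{-1}}$ (one standard deviation) rather than your $5H\sqrt d\,D_\delta\|\bphi'\|_{(\bLambda_{m,h}^k)^{-1}}$; since $\sigma\ge 5H\sqrt d\,D_\delta$ these lead to the same $A_\delta$.
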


\begin{lemma}[Error bound]\label{lem:error_bound_PHE}
Let $\lambda = 1$ in \Cref{algo:exploration_part_PHE_general_function_class}. Under \Cref{def:linear_mdp}, for any fixed $0<\delta<1$, conditioned on the event $\mathcal{G}(M, K, H, \delta)$, with probability $1-\delta$, for all $(m, k, h) \in \mathcal{M} \times [K] \times [H]$ and for any $(s, a) \in \mathcal{S} \times \mathcal{A}$, we have
\begin{align*}
    -l_{m,h}^k(s, a) & \leq c_2 Hd \|\bphi(s,a)\|_{(\bLambda_{m,h}^k)^{-1}},
\end{align*}
where $c_2 =\widetilde{\mathcal{O}}(1)$.
\end{lemma}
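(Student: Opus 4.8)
The plan is to mirror the structure of the LMC error bound (\Cref{lem:error_bound_lmc}), replacing the Langevin-specific arguments with the exact-Gaussian characterization from \Cref{prop:equivalent_to_TS}. First I would start from the definition of the model prediction error (\Cref{def:model_prediction_error}): since $Q_{m,h}^k(s,a) = \min\{\max_{n\in[N]} \bphi(s,a)^\top \widetilde{\wb}_{m,h}^{k,n},\, H-h+1\}^+$, truncation only decreases the value, so
\begin{align*}
-l_{m,h}^k(s,a) &= Q_{m,h}^k(s,a) - r_h(s,a) - \mathbb{P}_h V_{m,h+1}^k(s,a)\\
&\le \max_{n\in[N]}\bphi(s,a)^\top\widetilde{\wb}_{m,h}^{k,n} - r_h(s,a) - \mathbb{P}_h V_{m,h+1}^k(s,a).
\end{align*}
I would then insert $\pm\,\bphi(s,a)^\top\widehat{\wb}_{m,h}^k$ and split into two terms: a perturbation term $I_1 = \max_{n\in[N]}|\bphi(s,a)^\top(\widetilde{\wb}_{m,h}^{k,n}-\widehat{\wb}_{m,h}^k)|$ and a regression-error term $I_2 = |\bphi(s,a)^\top\widehat{\wb}_{m,h}^k - r_h(s,a) - \mathbb{P}_h V_{m,h+1}^k(s,a)|$.

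For $I_2$, I would directly invoke \Cref{lem:event_bellman_error_PHE}, which gives $I_2 \le 5H\sqrt{d}D_\delta\,\|\bphi(s,a)\|_{(\bLambda_{m,h}^k)^{-1}}$ conditioned on the good event $\mathcal{G}(M,K,H,\delta)$. For $I_1$, this is where the conditioning on $\mathcal{G}(M,K,H,\delta)$ is essential: by \Cref{prop:equivalent_to_TS} we have $\widetilde{\wb}_{m,h}^{k,n}-\widehat{\wb}_{m,h}^k = \bzeta_{m,h}^{k,n} \sim \mathcal{N}(\zero,\sigma^2(\bLambda_{m,h}^k)^{-1})$, so by Cauchy--Schwarz $|\bphi(s,a)^\top\bzeta_{m,h}^{k,n}| \le \|\bphi(s,a)\|_{(\bLambda_{m,h}^k)^{-1}}\,\|\bzeta_{m,h}^{k,n}\|_{\bLambda_{m,h}^k}$, and on the event $\mathcal{G}_{m,h}^k(\bzeta,\delta)$ the second factor is bounded by $c_1\sigma\sqrt{d}$ uniformly over $n\in[N]$; hence $I_1 \le c_1\sigma\sqrt{d}\,\|\bphi(s,a)\|_{(\bLambda_{m,h}^k)^{-1}}$. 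Combining the two bounds, recalling $c_1 = c\sqrt{\log(dNMKH/\delta)} = \widetilde{\mathcal{O}}(1)$ and $D_\delta = \widetilde{\mathcal{O}}(1)$, and choosing $\sigma = \widetilde{\mathcal{O}}(H\sqrt{d})$ as in the analysis (so that $\sigma\sqrt{d}$ is of order $Hd$ up to logs), I get $-l_{m,h}^k(s,a) \le c_2 Hd\,\|\bphi(s,a)\|_{(\bLambda_{m,h}^k)^{-1}}$ with $c_2 = \widetilde{\mathcal{O}}(1)$, and a union bound over the failure probabilities of $\mathcal{G}(M,K,H,\delta)$ and of \Cref{lem:event_bellman_error_PHE} gives the claimed high-probability guarantee.

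The main subtlety — and the only place that requires care beyond bookkeeping — is ensuring that the bound holds \emph{simultaneously for all} $(s,a)\in\mathcal{S}\times\mathcal{A}$ and all $(m,k,h)$, not just for state-action pairs visited by the algorithm. For $I_1$ this is automatic once we condition on $\mathcal{G}(M,K,H,\delta)$, because the bound $\|\bzeta_{m,h}^{k,n}\|_{\bLambda_{m,h}^k}\le c_1\sigma\sqrt{d}$ is a statement about the random vector $\bzeta_{m,h}^{k,n}$ alone and the Cauchy--Schwarz step then works for every $\bphi(s,a)$; the $\varepsilon$-net argument is not needed here (unlike in the optimism lemma \Cref{lem:optimism_PHE}, where the one-sided Gaussian anti-concentration must be controlled over a cover). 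For $I_2$, the uniformity is already built into \Cref{lem:event_bellman_error_PHE} via its own covering argument on the value-function class. So the proof is essentially a two-line combination of \Cref{prop:equivalent_to_TS}, the definition of $\mathcal{G}(M,K,H,\delta)$ in \Cref{lem:good_events_PHE}, and \Cref{lem:event_bellman_error_PHE}, followed by a union bound and the choice of $\sigma$.
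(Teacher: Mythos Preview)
Your proposal is correct and follows essentially the same approach as the paper's proof: decompose $-l_{m,h}^k(s,a)$ via $\pm\,\bphi(s,a)^\top\widehat{\wb}_{m,h}^k$, bound the regression-error term by \Cref{lem:event_bellman_error_PHE}, bound the perturbation term by Cauchy--Schwarz plus the good event $\mathcal{G}(M,K,H,\delta)$, and absorb constants using $\sigma=\widetilde{\mathcal{O}}(H\sqrt{d})$. One small note: the lemma is stated \emph{conditioned} on $\mathcal{G}(M,K,H,\delta)$, so no union bound with the failure probability of $\mathcal{G}$ is needed here---the remaining $1-\delta$ comes solely from \Cref{lem:event_bellman_error_PHE}.
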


\subsection{Regret Analysis} \label{regret_analysis_PHE}

In this part, we give out the proof of \Cref{thm:ts_homo_regret}, the regret bound for \algnamePHE.

\begin{proof}[Proof of \Cref{thm:ts_homo_regret}]
Based on the result from \Cref{lem:regret_decomposition_PHE}, we do the regret decomposition first
\begin{align}\label{equ:regret_decomposition_PHE}
    \operatorname{Regret}(K) &= \sum_{m \in \mathcal{M}}\sum_{k=1}^{K}V^*_{m,1}\big(s^k_{m,1}\big) - V^{\pi_{m}^k}_{m,1}\big(s^k_{m,1}\big) \notag\\
    &=  \underbrace{\sum_{m \in \mathcal{M}}\sum_{k=1}^{K}\sum_{h=1}^{H} \mathbb{E}_{\pi^*} \big[\bigl \langle Q_{m,h}^k(s_{m,h}, \cdot), \pi_{m,h}^*(\cdot,|s_{m,h}) - \pi_{m,h}^k(\cdot | s_{m,h}) \bigr \rangle | s_{m,1} = s_{m,1}^k \big]}_{(i)} \notag \\ 
    &\qquad+ \underbrace{\sum_{m \in \mathcal{M}}\sum_{k=1}^{K}\sum_{h=1}^{H} (D_{m,k,h,1} +  D_{m,k,h,2})}_{(ii)}  \notag \\ 
    &\qquad+  \underbrace{\sum_{m \in \mathcal{M}}\sum_{k=1}^{K}\sum_{h=1}^{H} \big(\mathbb{E}_{\pi^*}\big[l_{m,h}^k(s_{m,h}, a_{m,h}) | s_{m,1} = s_{m,1}^k\big] - l_{m,h}^k\big(s_{m,h}^k, a_{m,h}^k\big)\big)}_{(iii)}.
\end{align}
Next, we will bound the above three terms, respectively.

\textbf{Bounding Term (i) in \eqref{equ:regret_decomposition_PHE}:} for the policy $\pi_{m,h}^k$, we have
\begin{align}
    \sum_{m \in \mathcal{M}}\sum_{k=1}^{K}\sum_{h=1}^{H} \mathbb{E}_{\pi^*} \big[\bigl \langle Q_{m,h}^k(s_{m,h}, \cdot), \pi_{m,h}^*(\cdot,|s_{m,h}) - \pi_{m,h}^k(\cdot | s_{m,h}) \bigr \rangle | s_{m,1} = s_{m,1}^k\big] \leq 0. \label{regret_analysis_negative_term_PHE}
\end{align}
This is because by definition $\pi_{m,h}^k$ is the greedy policy for $Q_{m,h}^k$.

\textbf{Bounding Term (ii)  in \eqref{equ:regret_decomposition_PHE}:} note that $0 \leq Q_{m,h}^k \leq H-h+1 \leq H$, based on \eqref{equ:cai_def}, for any $(m, k,h) \in \mathcal{M} \times [K] \times [H]$, we have $|D_{m,k,h,1}| \leq 2H$ and $|D_{m,k,h,2}| \leq 2H$. Note that $D_{m,k,h,1}$ is a martingale difference sequence $\mathbb{E}[D_{m,k,h,1}|\mathcal{F}_{m,k,h}] = 0$. By applying Azuma-Hoeffding inequality, with probability at least $1 - \delta/3$, we have
\begin{align*}
    \sum_{m\in \mathcal{M}}\sum_{k=1}^K\sum_{h=1}^H D_{m,k,h,1} \leq 2\sqrt{2MH^3K\log(6/\delta)}.
\end{align*}
Note that $D_{m,k,h,2}$ is also a martingale difference sequence. By applying Azuma-Hoeffding inequality, with probability at least $1 - \delta/3$, we have  
\begin{align*}
    \sum_{m\in \mathcal{M}}\sum_{k=1}^K\sum_{h=1}^H D_{m,k,h,2} \leq 2\sqrt{2MH^3K\log(6/\delta)}.
\end{align*}
By taking union bound, with probability at least $1 - 2\delta/3$, we have
\begin{align} \label{equ:azuma_hoeffding_term_PHE}
    \sum_{m\in \mathcal{M}}\sum_{k=1}^K\sum_{h=1}^H D_{m,k,h,1} +  \sum_{m\in \mathcal{M}}\sum_{k=1}^K\sum_{h=1}^H D_{m,k,h,2}\leq 4\sqrt{2MH^3K\log(6/\delta)}.
\end{align}
\textbf{Bounding Term (iii) in \eqref{equ:regret_decomposition_PHE}:} conditioned on the event $\mathcal{G}(M, K, H, {\delta^{\prime}})$, based on \Cref{lem:error_bound_PHE} and \Cref{lem:optimism_PHE}, by taking union bound, with probability at least $1-|\mathcal{C}(\varepsilon)| c_0^N -\delta^\prime - MHK \delta^\prime$, we have
\begin{align*} 
&\sum_{m\in \mathcal{M}}\sum_{k=1}^K\sum_{h=1}^H\big(\mathbb{E}_{\pi^*}\big[l_{m,h}^k(s_{m,h}, a_{m,h}) | s_{m,1} = s_{m,1}^k\big] - l_{m,h}^k\big(s_{m,h}^k, a_{m,h}^k\big)\big) \notag \\
& \leq \sum_{m\in \mathcal{M}}\sum_{k=1}^K\sum_{h=1}^H \big(A_{\delta^\prime} \varepsilon- l_{m,h}^k\big(s_{m,h}^k, a_{m,h}^k\big)\big) \notag \\
&\leq HMKA_{\delta^\prime} \varepsilon + \sum_{m\in \mathcal{M}}\sum_{k=1}^K\sum_{h=1}^H  c_2 dH \big\| \bphi(s_{m,h}^k,a_{m,h}^k)\big\|_{(\bLambda_{m,h}^k)^{-1}} \notag \\
& \leq  HMKA_{\delta^\prime} \varepsilon + c_2 dH \sum_{h=1}^H \bigg(\log \bigg(\frac{\operatorname{det}({\bLambda}_h^K)}{\operatorname{det}(\lambda \Ib)}\bigg)+1\bigg) M \sqrt{\gamma}+2 \sqrt{M K \log \bigg(\frac{\operatorname{det}({\bLambda}_h^K)}{\operatorname{det}(\lambda \Ib)}\bigg)} \notag \\
& \leq HMKA_{\delta^\prime} \varepsilon +  c_2 dH \cdot H\Big( d(\log(1+MK/d)+1)M\sqrt{\gamma} + 2\sqrt{MK d\log(1+MK/d)}\Big).
\end{align*}
The first inequality follows from \Cref{lem:optimism_PHE}, the second inequality holds due to \Cref{lem:error_bound_PHE}, the third inequality follows from \Cref{lem:coor_sum_phi_bound_PHE}, the last inequality holds due to \Cref{lem:Determinant-Trace} and the fact that $\|\bphi(\cdot)\|_2 \leq 1$. 

Here we choose $\varepsilon = dH\sqrt{d/MK}/ 
A_{\delta^\prime} = \widetilde{\mathcal{O}}(\sqrt{d/MK})$. Conditioned on the event $\mathcal{G}(M, K, H, \delta^{\prime})$, we have
{\small\begin{align} \label{equ:phe_final_regret_term}
\sum_{m\in \mathcal{M}}\sum_{k=1}^K\sum_{h=1}^H\big(\mathbb{E}_{\pi^*}\big[l_{m,h}^k(s_{m,h}, a_{m,h}) | s_{m,1} = s_{m,1}^k\big] - l_{m,h}^k\big(s_{m,h}^k, a_{m,h}^k\big)\big) \leq \widetilde{\mathcal{O}}\big(dH^2\big(dM\sqrt{\gamma}+\sqrt{dMK}\big)\big),
\end{align}}%
with probability at least $1-|\mathcal{C}(\varepsilon)| c_0^N -\delta^\prime - MHK \delta^\prime$. Based on \Cref{lem:good_events_PHE}, the event $\mathcal{G}(M, K, H, \delta^{\prime})$ occurs with probability at least $1-\delta^{\prime}$. Therefore, \eqref{equ:phe_final_regret_term} occurs with probability at least 
\begin{align*}
    \big(1-\delta^\prime\big)\big(1-|\mathcal{C}(\varepsilon)| c_0^N -\delta^\prime - MHK \delta^\prime\big).
\end{align*}
We set $\delta^\prime = \delta/6(MHK+2)$ and choose $N = \widetilde{C}\log(\delta)/\log(c_0)$ where $\widetilde{C}=\widetilde{\mathcal{O}}(d)$, then we have 
\begin{align*}
    \big(1-\delta^\prime\big)\big(1-|\mathcal{C}(\varepsilon)| c_0^N -\delta^\prime - MHK \delta^\prime\big) \geq 1 - \delta/3.
\end{align*}
\textbf{Combining Terms (i)(ii)(iii) together:} Based on \eqref{regret_analysis_negative_term_PHE}, \eqref{equ:azuma_hoeffding_term_PHE} and \eqref{equ:phe_final_regret_term}. By taking union bound, we get that the final regret bound for \algnamePHE\ is $\widetilde{\mathcal{O}}\big(dH^2\big(dM\sqrt{\gamma}+\sqrt{dMK}\big)\big)$ with probability at least $1-\delta$. 
\end{proof}

\section{Proof of Supporting Lemmas in \Cref{Parallel MDP Proofs with homogeneity (PHE)}} \label{sec:proof_supporting_lemmas_phe}

\subsection{Proof of \Cref{prop:equivalent_to_TS}}
\begin{proof}
Based on \eqref{equ:loss_function_phe}, we can calculate that
\begin{align} \label{equ:prop_phe_term}
    \widetilde{\wb}_{m,h}^{k, n} &= \big(\bLambda_{m,h}^k\big)^{-1}\Bigg(\sum_{(s^l,a^l,{s^\prime}^l) \in U_{m,h}(k)}\big[\big(r_h\big(s^l,a^l\big)+\epsilon_h^{k, l, n}\big)+V_{m,h+1}^k\big({s^\prime}^l\big)\big] \bphi\big(s^l, a^l\big) -\lambda \bxi_h^{k,n} \Bigg) \notag\\
    & =\widehat{\wb}_{m, h}^k+\big(\bLambda_{m, h}^k\big)^{-1}\Bigg(\sum_{(s^l,a^l,{s^\prime}^l) \in U_{m,h}(k)}\epsilon_h^{k, l, n} \bphi\big(s^l,a^l\big)-\lambda\bxi_{h}^{k, n}\Bigg),
\end{align}
where $\widehat{\wb}_{m,h}^k=\big(\bLambda_{m,h}^k\big)^{-1}\Big(\sum_{(s^l,a^l,{s^\prime}^l) \in U_{m,h}(k)}\big[r_{h}+V_{m,h+1}^k\big({s^\prime}^l\big)\big] \bphi\big(s^l, a^l\big)\Big)$ is the unperturbed estimated parameter. Since $\epsilon_h^{k, l, n} \sim \mathcal{N}(0, \sigma^2)$, for $l \in [\mathcal{K}(k)]$,  based on the property of Gaussian distribution, we have
\begin{align*}
    \epsilon_h^{k, l, n} \bphi\big(s^l,a^l\big) \sim \mathcal{N}\Big(0, \sigma^2 \bphi\big(s^l,a^l\big) \bphi\big(s^l,a^l\big)^{\top}\Big),
\end{align*}
Since $\bxi_{h}^{k, n} \sim \mathcal{N}(\zero, \sigma^2 \Ib)$, we can calculate the covariance matrix of the second term in \eqref{equ:prop_phe_term},
\begin{align*}
    &\big(\bLambda_{m, h}^k\big)^{-1}\operatorname{Cov}\Bigg(\sum_{(s^l,a^l,{s^\prime}^l) \in U_{m,h}(k)}\epsilon_h^{k, l, n} \bphi\big(s^l,a^l\big)-\lambda\bxi_{h}^{k, n}\Bigg)\big(\bLambda_{m, h}^k\big)^{-1} \\
    &= \big(\bLambda_{m, h}^k\big)^{-1} \sigma^2 \Bigg(\sum_{(s^l,a^l,{s^\prime}^l) \in U_{m,h}(k)} \bphi\big(s^l,a^l\big) \bphi\big(s^l,a^l\big)^{\top}+\lambda \Ib\Bigg) \big(\bLambda_{m, h}^k\big)^{-1} \\
    & = \sigma^2 \big(\bLambda_{m, h}^k\big)^{-1} \bLambda_{m, h}^k \big(\bLambda_{m, h}^k\big)^{-1} \\
    & = \sigma^2\big(\bLambda_{m, h}^k\big)^{-1} .
\end{align*}
It is obvious that the mean of the second term in \eqref{equ:prop_phe_term} is $\zero$. Thus, we have
\begin{align*}
    \bzeta_{m, h}^{k, n}=\widetilde{\wb}_{m, h}^{k, n}-\widehat{\wb}_{m, h}^k \sim \mathcal{N}\Big(0, \sigma^2\big(\bLambda_{m, h}^k\big)^{-1}\Big).
\end{align*} 
This completes the proof. 
\end{proof}

\subsection{Proof of \Cref{lem:good_events_PHE}}
\begin{proof}
Recall that in \Cref{prop:equivalent_to_TS}, we have 
\begin{align*}
    \big\{\bzeta_{m, h}^{k, n}\big\} \sim \mathcal{N}\Big(\zero, \sigma^2 \big(\bLambda_{m, h}^k\big)^{-1}\Big) .
\end{align*}
By \Cref{lem:gaussian_concentration}, for fixed $n \in [N]$, with probability at least $1-\delta$, we have 
\begin{align*}
    \big\|\bzeta_{m, h}^{k, n}\big\|_{\bLambda_{m, h}^k} \leq c\sqrt{d \sigma^2 \log (d / \delta)}.
\end{align*}
By applying union bound over $N$ samples, we have
\begin{align*}
    \mathbb{P}\Big(\max_{n \in [N]}\big\|\bzeta_{m, h}^{k, n}\big\|_{\bLambda_{m, h}^k} \leq c\sqrt{d \sigma^2 \log (d / \delta)}\Big) \geq 1 - N\delta.
\end{align*}
Now we define $c_1 = c\sqrt{\log(dNMKH/\delta)}$, and we define the event
\begin{align*}
    \mathcal{G}_{m, h}^k(\bzeta, \delta) \stackrel{\text{def}}{=} \Big\{\max_{n \in [N]}\big\|\bzeta_{m, h}^{k, n}\big\|_{\bLambda_{m, h}^k} \leq c_1 \sigma \sqrt{d} \Big\}.
\end{align*}
Thus for any fixed $m$, $h$ and $k$, the event $ \mathcal{G}_{m, h}^k(\bzeta, \delta) $ occurs with a probability of at least $1 - \delta/MKH$. By taking union bound over all $(m, h, k) \in\mathcal{M} \times [H] \times[K]$, we have
\begin{align*}
    \mathbb{P}\big(\mathcal{G}(M, K, H, \delta)\big)=\mathbb{P}\Bigg(\bigcap_{m \in \mathcal{M}} \bigcap_{k \leq K} \bigcap_{h \leq H} \mathcal{G}_{m, h}^k(\bzeta, \delta)\Bigg) \geq 1 - \delta.
\end{align*}
This completes the proof.    
\end{proof}

\subsection{Proof of \Cref{lem:est_error_PHE}} \label{proof:lem:est_error_PHE}
\begin{proof}
Based on the result in \Cref{lem:est_para_bound_PHE}, for any $(m,h,k) \in \mathcal{M} \times [H] \times [K]$, we have 
\begin{align*}
    \big\|\widehat{\wb}_{m,h}^k\big\| \leq 2H \sqrt{Mkd/ \lambda}.
\end{align*}
By recalling the construction of $\bLambda_{m, h}^k$, it is trivial to find that $\lambda_{\text{min}}\big(\bLambda_{m, h}^k\big) \geq \lambda$. Conditioned on the event $\mathcal{G}(M, K, H, \delta)$, we have
\begin{align*}
    \sqrt{\lambda}\big\|\bzeta_{m,h}^{k,n}\big\| \leq \big\|\bzeta_{m,h}^{k,n}\big\|_{\bLambda_{m, h}^k} \leq c_1 \sigma \sqrt{d}.
\end{align*}
Then by triangle inequality, for all $n \in [N]$, we obtain the upper bound 
\begin{align*}
    \big\|\widetilde{\wb}_{m, h}^{k,n}\big\| = \big\|\widehat{\wb}_{m,h}^k + \bzeta_{m,h}^{k,n}\big\| \leq 2H \sqrt{Mkd/ \lambda} + c_1 \sigma \sqrt{d/\lambda}.
\end{align*}
Based on the result from \Cref{lem:jin_D.4} and \Cref{lem:covering_num}, we have that, for any $\varepsilon>0$, and for all $(m, k, h) \in \mathcal{M} \times [K] \times [H]$, with probability at least $1-\delta$, we have
\begin{align*}
    & \Bigg\|\sum_{(s^l,a^l,{s^\prime}^l) \in U_{m,h}(k)} \bphi\big(s^l,a^l\big)\big[\big(V_{m, h+1}^k-\mathbb{P}_h V_{m, h+1}^k\big)\big(s^l,a^l\big)\big]\Bigg\|_{(\bLambda_{m,h}^k)^{-1}} \notag\\
    & \leq\Bigg(4 H^2\Bigg[\frac{d}{2} \log \bigg(\frac{k+\lambda}{\lambda}\bigg)+d \log \Bigg(\frac{3 \big(2H \sqrt{Mkd/ \lambda} + c_1 \sigma \sqrt{d/\lambda}\big)}{\varepsilon}\Bigg) +\log \frac{1}{\delta}\Bigg]+\frac{8 k^2 \varepsilon^2}{\lambda}\Bigg)^{1 / 2} \notag\\
    & \leq 2 H\Bigg[\frac{d}{2} \log \bigg(\frac{k+\lambda}{\lambda}\bigg)+ d \log \Bigg(\frac{3 \big(2H \sqrt{Mkd/ \lambda} + c_1 \sigma \sqrt{d/\lambda}\big)}{\varepsilon}\Bigg) +\log \frac{1}{\delta}\Bigg]^{1 / 2}+\frac{2 \sqrt{2} k \varepsilon}{\sqrt{\lambda}} .
\end{align*}
Here we set $\lambda=1, \varepsilon=\frac{H}{2\sqrt{2}k}$, with probability at least $1-\delta$, we have
\begin{align*}
    & \Bigg\|\sum_{(s^l,a^l,{s^\prime}^l) \in U_{m,h}(k)} \bphi\big(s^l,a^l\big)\big[\big(V_{m, h+1}^k-\mathbb{P}_h V_{m, h+1}^k\big)\big(s^l,a^l\big)\big]\Bigg\|_{(\bLambda_{m,h}^k)^{-1}} \\
    & \leq 2 H \sqrt{d}\Bigg[\frac{1}{2} \log (K+1)+  \log \Bigg(\frac{6\sqrt{2}K \big(2H \sqrt{MKd} + c_1 \sigma \sqrt{d}\big)}{H}\Bigg) +\log \frac{1}{\delta}\Bigg]^{1 / 2} + H \\
    & \leq 3 H\sqrt{d} D_\delta,
\end{align*}
where we define $D_\delta=\Big[\frac{1}{2} \log (K+1)+  \log \Big(\frac{6\sqrt{2}K (2H \sqrt{MKd} + c_1 \sigma \sqrt{d})}{H}\Big) +\log \frac{1}{\delta}\Big]^{1 / 2}$. Here we finish the proof.
\end{proof}

\subsection{Proof of \Cref{lem:event_bellman_error_PHE}} \label{proof:est_error_phe}    
\begin{proof}
This proof is almost same as the proof of \Cref{lem_event} in \Cref{proof:lem_event}. The only difference is the \textbf{Term (i)} in \eqref{eq:bellman_error_decomposition_lmc}. Here based on \Cref{lem:error_bound_PHE}, conditioned on the event $\mathcal{G}(M, K, H, \delta)$, with probability $1-\delta$, we have
\begin{align*}
    \textbf{\text{Term (i)}} \leq 3 H \sqrt{d} D_\delta \|\bphi(s, a)\|_{(\bLambda_{m,h}^k)^{-1}}.
\end{align*}
Finally, conditioned on the event $\mathcal{G}(M, K, H, \delta)$, with probability $1-\delta$, for all $(m, k, h) \in \mathcal{M} \times [K] \times [H]$ and for any $(s, a) \in \mathcal{S} \times \mathcal{A}$, we have
\begin{align*}
    \Big|\bphi(s, a)^{\top} \widehat{\wb}_{m,h}^k-r_h(s, a)-\mathbb{P}_h V_{m,h+1}^k(s, a)\Big| & \leq \big(3 H \sqrt{d} D_\delta + H\sqrt{d} + \sqrt{d}\big)\|\bphi(s, a)\|_{(\bLambda_{m,h}^k)^{-1}}\\
    & \leq 5 H \sqrt{d} D_\delta \|\bphi(s, a)\|_{(\bLambda_{m, k}^k)^{-1}}.
\end{align*}
Here we finish the proof.
\end{proof}

\subsection{Proof of \Cref{lem:optimism_PHE}} \label{proof:lem:optimism_PHE}

\begin{proof}
Recall from \Cref{def:linear_mdp}, we have
\begin{align*}
    r_h(s, a)+\mathbb{P}_h V_{m, h+1}^k(s, a) = \bphi(s,a)^\top \btheta_h+ \bphi(s,a)^\top\big\langle\bmu_h, V_{m,h+1}^k\big\rangle_{\mathcal{S}} \overset{\text{def}}{=} \bphi(s,a)^\top \wb_{m, h}^k,
\end{align*}
where $\wb_{m, h}^k = \btheta_h + \big\langle\bmu_h, V_{m,h+1}^k\big\rangle_{\mathcal{S}}$. Note that $ \max \{\|\bmu_h(\mathcal{S})\|, \|\btheta_h\|\} \leq \sqrt{d}$ and $V_{m,h+1}^k \leq H-h \leq H$, thus we have
\begin{align*}
    \big\|\wb_{m, h}^k\big\| &\leq \|\btheta_h\| + \big\|\big\langle\bmu_h, V_{m,h+1}^k\big\rangle_{\mathcal{S}}\big\| \\
    &\leq \sqrt{d} + H \sqrt{d} \\
    &\leq 2H\sqrt{d}.
\end{align*}
Then we define the regression error $\Delta \wb_{m, h}^k=\wb_{m, h}^k-\widehat{\wb}_{m, h}^k$. For any $(m, h, k) \in \mathcal{M} \times [H] \times [K]$ and any $(s, a) \in \mathcal{S} \times \mathcal{A}$, we have 
\begin{align}
    l_{m, h}^k(s, a) & =r_h(s, a)+\mathbb{P}_h V_{m, h+1}^k(s, a)-Q_{m, h}^k(s, a) \notag\\
    & =r_h(s, a)+\mathbb{P}_h V_{m, h+1}^k(s, a)-\min \Big\{H-h+1, \max _{n \in[N]} \bphi(s, a)^{\top}\Big(\widehat{\wb}_{m, h}^k+\bzeta_{m, h}^{k, n}\Big)\Big\}^{+} \notag\\
    & \leq \max\Big\{ \bphi(s, a)^{\top} \wb_{m, h}^k- (H-h+1), \bphi(s, a)^{\top} \wb_{m, h}^k - \max _{n \in[N]} \bphi(s, a)^{\top}\Big(\widehat{\wb}_{m, h}^k+\bzeta_{m, h}^{k, n}\Big)\Big\} \notag\\
    & \leq \max \Big\{0, \bphi(s, a)^{\top} \Delta \wb_{m, h}^k-\max _{n \in[N]} \bphi(s, a)^{\top} \bzeta_{m, h}^{k, n}\Big\}, \label{equ:goal_equation_optimisim}
\end{align}
where the last inequality holds because $|r_h| \leq 1$ and $V_{m,h+1}^k \leq H-h$, this indicates $r_h(s,a)+\mathbb{P}_h V_{m, h+1}^k(s,a)=\bphi(s,a)^{\top} \wb_{m, h}^k \leq H-h+1$. 
Note that $\|\bphi(s, a)\|_{(\bLambda_{m, h}^k)^{-1}} \leq  \sqrt{1/\lambda} \|\bphi(s, a)\| \leq 1$ for all $\bphi(s, a)$. Define $\mathcal{C}(\varepsilon)$ to be a $\varepsilon$-cover of $\big\{\bphi \mid\|\bphi\|_{(\bLambda_{m, h}^k)^{-1}} \leq 1\big\}$. Based on \Cref{lem:vershynin2018high}, we have $|\mathcal{C}(\varepsilon)| \leq (3/\varepsilon)^d$.

First, for any fixed $\bphi(s,a) \in \mathcal{C}(\varepsilon)$, we have
\begin{align*}
    \big\{\bphi^{\top} \bzeta_{m, h}^{k, n}\big\} \sim \mathcal{N}\Big(\zero, \sigma^2\|\bphi\|_{(\bLambda_{m, h}^k)^{-1}}^2\Big) .
\end{align*}
Use the property of Gaussian distribution, we obtain 
\begin{align*}
    \mathbb{P}\Big(\bphi^{\top} \bzeta_{m, h}^{k, n}-\sigma\|\bphi\|_{(\bLambda_{m, h}^k)^{-1}} \geq 0\Big)=\Phi(-1).
\end{align*}
By taking union bound over $n \in [N]$, we obtain
\begin{align*} 
    \mathbb{P}\Big(\max _{n \in[N]} \big\{ \bphi^{\top} \bzeta_{m,h}^{k, n}-\sigma\|\bphi\|_{(\bLambda_{m,h}^k)^{-1}} \big\} \geq 0\Big)  \geq 1-(1-\Phi(-1))^N =1-\Phi(1)^N =1-c_0^N.
\end{align*}
By applying union bound over $\mathcal{C}(\varepsilon)$, with probability $1-|\mathcal{C}(\varepsilon)| c_0^N$, for all $\bphi \in \mathcal{C}(\varepsilon)$, we have 
\begin{align} \label{equ:phe_optimism_union_bound_result} 
    \max _{n \in[N]} \big\{ \bphi^{\top} \bzeta_{m,h}^{k, n}-\sigma\|\bphi\|_{(\bLambda_{m,h}^k)^{-1}} \big\} \geq 0.
\end{align}
Then, for any $\bphi=\bphi(s, a)$, we can find $\bphi^{\prime} \in \mathcal{C}(\varepsilon)$ such that $\|\bphi-\bphi^{\prime}\|_{(\bLambda_{m,h}^k)^{-1}} \leq \varepsilon$. Define $\Delta \bphi=\bphi-\bphi^{\prime}$, we have
\begin{align} \label{equ:phe_optimism_term}
    \bphi^{\top} \bzeta_{m, h}^{k, n}-\bphi^{\top} \Delta \wb_{m, h}^k &= {\bphi^\prime}^{\top} \bzeta_{m, h}^{k, n}-{\bphi^\prime}^{\top} \Delta \wb_{m, h}^k + \Delta\bphi^{\top} \bzeta_{m, h}^{k, n}-\Delta\bphi^{\top} \Delta \wb_{m, h}^k \notag\\
    & \geq {\bphi^\prime}^{\top} \bzeta_{m, h}^{k, n}-\|\bphi^\prime\|_{(\bLambda_{m,h}^k)^{-1}} \big\|\Delta\wb_{m, h}^k\big\|_{\bLambda_{m,h}^k} - \|\Delta\bphi\|_{(\bLambda_{m,h}^k)^{-1}} \big\|\bzeta_{m, h}^{k, n}\big\|_{\bLambda_{m,h}^k} \notag\\
    &\qquad- \|\Delta\bphi\|_{(\bLambda_{m,h}^k)^{-1}} \big\|\Delta\wb_{m, h}^k\big\|_{\bLambda_{m,h}^k} \notag\\
    &\geq  {\bphi^\prime}^{\top} \bzeta_{m, h}^{k, n}-\|\bphi^\prime\|_{(\bLambda_{m,h}^k)^{-1}} \big\|\Delta\wb_{m, h}^k\big\|_{\bLambda_{m,h}^k} \notag\\
    &\qquad- \varepsilon \big(\big\|\bzeta_{m, h}^{k, n}\big\|_{\bLambda_{m,h}^k} + \big\|\Delta\wb_{m, h}^k\big\|_{\bLambda_{m,h}^k} \big).
\end{align}
Conditioned on the event $\mathcal{G}(M, K, H, \delta)$, we have 
\begin{align*}
    \big\|\bzeta_{m, h}^{k, n}\big\|_{\bLambda_{m, h}^k} \leq c_1 \sigma \sqrt{d}.
\end{align*}
For any vector $\xb \in \mathbb{R}^d$, we have
\begin{align*}
    \xb^\top \Delta\wb_{m, h}^k &= \xb^\top \big(\wb_{m, h}^k-\widehat{\wb}_{m, h}^k\big) \\
    &= \xb^\top \big(\bLambda_{m,h}^k\big)^{-1} \bigg(\bLambda_{m,h}^k \wb_{m, h}^k - \bigg(\sum_{(s^l,a^l,{s^\prime}^l) \in U_{m,h}(k)}\big[r_{h}+V_{m,h+1}^k\big({s^\prime}^l\big)\big] \bphi\big(s^l, a^l\big)\bigg) \bigg) \\
    &= \xb^\top \big(\bLambda_{m,h}^k\big)^{-1} \bigg(\sum_{l=1}^{\mathcal{K}(k)}\bphi\big(s^l, a^l\big)\bphi\big(s^l, a^l\big)^\top \wb_{m, h}^k + \lambda \wb_{m, h}^k \\
    &\qquad- \bigg(\sum_{l=1}^{\mathcal{K}(k)}\big[r_{h}+V_{m,h+1}^k\big({s^\prime}^l\big)\big] \bphi\big(s^l, a^l\big)\bigg) \bigg) \\
     &= \xb^\top \big(\bLambda_{m,h}^k\big)^{-1} \bigg( \wb_{m, h}^k + \bigg(\sum_{l=1}^{\mathcal{K}(k)}\big[\mathbb{P}_h V_{m,h+1}^k-V_{m,h+1}^k\big({s^\prime}^l\big)\big] \bphi\big(s^l, a^l\big)\bigg) \bigg),
\end{align*}
where the third equality holds due to the definition of $\bLambda_{m,h}^k$. We set $\xb = \bLambda_{m,h}^k \Delta\wb_{m, h}^k$. By using Cauchy-Schwarz inequality, we have
\begin{align*}
    \big\|\Delta\wb_{m, h}^k\big\|_{\bLambda_{m,h}^k}^2 &=  {\Delta\wb_{m, h}^k}^\top \bigg( \wb_{m, h}^k + \bigg(\sum_{l=1}^{\mathcal{K}(k)}\big[\mathbb{P}_h V_{m,h+1}^k-V_{m,h+1}^k\big({s^\prime}^l\big)\big] \bphi\big(s^l, a^l\big)\bigg) \bigg) \\
    &\leq \big\|\Delta\wb_{m, h}^k\big\|_{\bLambda_{m,h}^k} \cdot \bigg\| \wb_{m, h}^k + \bigg(\sum_{l=1}^{\mathcal{K}(k)}\big[\mathbb{P}_h V_{m,h+1}^k-V_{m,h+1}^k\big({s^\prime}^l\big)\big] \bphi\big(s^l, a^l\big)\bigg) \bigg\|_{(\bLambda_{m,h}^k)^{-1}}.
\end{align*}
This indicates that with probability at least $1-\delta$, for all $(m, h, k) \in \mathcal{M} \times [H] \times [K]$, we have
\begin{align*}
    \big\|\Delta\wb_{m, h}^k\big\|_{\bLambda_{m,h}^k}
    &\leq  \big\|\wb_{m, h}^k\big\|_{(\bLambda_{m,h}^k)^{-1}} + \bigg\|\sum_{l=1}^{\mathcal{K}(k)}\big[\mathbb{P}_h V_{m,h+1}^k-V_{m,h+1}^k\big({s^\prime}^l\big)\big] \bphi\big(s^l, a^l\big)\bigg\|_{(\bLambda_{m,h}^k)^{-1}}\\
    &\leq \big\|\wb_{m, h}^k\big\| + 3H\sqrt{d}D_\delta \\
    &\leq 5H\sqrt{d}D_\delta,
\end{align*}
where the second inequality holds because of \Cref{lem:est_error_PHE}. Then for all $(m, h, k) \in \mathcal{M} \times [H] \times [K]$, with probability at least $1-\delta$, \eqref{equ:phe_optimism_term} becomes 
\begin{align*}
    &\max _{n \in[N]} \big\{\bphi^{\top} \bzeta_{m, h}^{k, n}-\bphi^{\top} \Delta \wb_{m, h}^k \big\} \\
    &\geq \max _{n \in[N]} \big\{ {\bphi^\prime}^{\top} \bzeta_{m, h}^{k, n} - \|\bphi^\prime\|_{(\bLambda_{m,h}^k)^{-1}} \big\|\Delta\wb_{m, h}^k\big\|_{\bLambda_{m,h}^k} \big\} - \varepsilon \big(c_1 \sigma \sqrt{d} + 5H\sqrt{d}D_\delta \big).
\end{align*}
Now we choose $\sigma= \widetilde{\mathcal{O}} (H\sqrt{d})$ and guarantee that $\sigma > 5H\sqrt{d}D_\delta \geq \big\|\Delta\wb_{m, h}^k\big\|_{\bLambda_{m,h}^k}$, this is achievable through calculation. Define $A_\delta = c_1 \sigma \sqrt{d} + 5H\sqrt{d}D_\delta = \widetilde{\mathcal{O}} (Hd) $. Then, for all $(m, h, k) \in \mathcal{M} \times [H] \times [K]$, with probability at least $1-\delta$, we have
\begin{align*}
    \max _{n \in[N]} \big\{\bphi^{\top} \bzeta_{m, h}^{k, n}-\bphi^{\top} \Delta \wb_{m, h}^k \big\} \geq  \max _{n \in[N]} \big\{ {\bphi^\prime}^{\top} \bzeta_{m, h}^{k, n} - \sigma \|\bphi^\prime\|_{(\bLambda_{m,h}^k)^{-1}} \big\}  - A_\delta \varepsilon .
\end{align*}
Recall from \eqref{equ:phe_optimism_union_bound_result}, by taking union bound, with probability at least  
$1-|\mathcal{C}(\varepsilon)| c_0^N -\delta$, for all $(m, h, k) \in \mathcal{M} \times [H] \times [K]$ and for all $(s, a) \in \mathcal{S} \times \mathcal{A}$, we have
\begin{align*}
    \max _{n \in[N]} \big\{\bphi^{\top} \bzeta_{m, h}^{k, n}-\bphi^{\top} \Delta \wb_{m, h}^k \big\} \geq  - A_\delta \varepsilon .
\end{align*}
Finally, recall from \eqref{equ:goal_equation_optimisim}, we have, with probability at least  
$1-|\mathcal{C}(\varepsilon)| c_0^N -\delta$, for all $(m, h, k) \in \mathcal{M} \times [H] \times [K]$ and for all $(s, a) \in \mathcal{S} \times \mathcal{A}$, we have
\begin{align*}
    l_{m,h}^k(s,a) \leq A_\delta \varepsilon.
\end{align*}
This completes the proof.
\end{proof}

\subsection{Proof of \Cref{lem:error_bound_PHE}} \label{proof:lem:error_bound_PHE}
\begin{proof}
Recall the definition of model prediction error in \Cref{def:model_prediction_error}, we get 
\begin{align*}
    -l_{m,h}^k(s, a) & =Q_{m,h}^k(s, a)-r_h(s, a)-\mathbb{P}_h V_{m,h+1}^k(s, a) \\
    & =\min \Big\{\max _{n \in[N]} \bphi(s, a)^{\top}\Big(\widehat{\wb}_{m,h}^k+\bzeta_{m,h}^{k, n}\Big), H-h+1\Big\}^{+} - r_h(s, a)-\mathbb{P}_h V_{m, h+1}^k(s, a) \\
    & \leq \max _{n \in[N]} \bphi(s, a)^{\top}\Big(\widehat{\wb}_{m,h}^k+\bzeta_{m,h}^{k, n}\Big)-r_h(s, a)-\mathbb{P}_h V_{m, h+1}^k(s, a) \\
    & =\max _{n \in[N]} \bphi(s, a)^{\top} \bzeta_{m,h}^{k, n}-\Big(r_h(s, a)+\mathbb{P}_h V_{m, h+1}^k(s, a)-\bphi(s, a)^{\top} \widehat{\wb}_{m,h}^k\Big) \\
    & \leq\Big|r_h(s, a)+\mathbb{P}_h V_{m, h+1}^k(s, a)-\bphi(s, a)^{\top} \widehat{\wb}_{m,h}^k\Big|+\max _{n \in[N]}\big|\bphi(s, a)^{\top} \bzeta_{m,h}^{k, n}\big|.
\end{align*}
Based on \Cref{lem:event_bellman_error_PHE}, conditioned on the event $\mathcal{G}(M, K, H, \delta)$, with probability $1-\delta$, for all $(m, k, h) \in \mathcal{M} \times [K] \times [H]$ and for any $(s, a) \in \mathcal{S} \times \mathcal{A}$, we have
\begin{align} \label{equ:error_bound_part1}
    \Big|\bphi(s,a)^{\top}\widehat{\wb}_{m,h}^k - r_h(s,a) - \mathbb{P}_h V_{m,h+1}^k(s,a)\Big| \leq 5 H \sqrt{d} D_\delta \|\bphi(s,a)\|_{(\bLambda_{m,h}^k)^{-1}}
\end{align}
Conditioned on the event $\mathcal{G}(M, K, H, \delta)$, for all $(m, h, k) \in \mathcal{M} \times [H] \times [K]$ and for any $(s, a) \in \mathcal{S} \times \mathcal{A}$, we have
\begin{align} \label{equ:error_bound_part2}
    \max_{n \in[N]}\big|\bphi(s, a)^{\top} \bzeta_{m, h}^{k, n}\big| \leq c_1 \sigma \sqrt{d}\|\bphi(s, a)\|_{(\bLambda_{m,h}^k)^{-1}}.
\end{align}
Combine \eqref{equ:error_bound_part1} and \eqref{equ:error_bound_part2}, then use $\sigma$ defined in \Cref{lem:optimism_PHE}. Conditioned on the event $\mathcal{G}(M, K, H, \delta)$, with probability $1-\delta$, for all $(m, h, k) \in \mathcal{M} \times [H] \times [K]$ and for any $(s, a) \in \mathcal{S} \times \mathcal{A}$, we get 
\begin{align*}
    -l_{m,h}^k(s, a) & \leq \big(5 H \sqrt{d} D_\delta +  c_1 \sigma \sqrt{d} \big) \|\bphi(s,a)\|_{(\bLambda_{m,h}^k)^{-1}} \\
    & \leq c_2 Hd \|\bphi(s,a)\|_{(\bLambda_{m,h}^k)^{-1}},
\end{align*}
where $c_2 =\widetilde{\mathcal{O}}(1)$. Here we completes the proof.
\end{proof}

\section{Proof of the Regret Bound for \algnamePHE\ in Misspecified Setting}
In this section, we prove the regret bound for \algnamePHE\  in the misspecified setting. The regret analysis, the essential supporting lemmas and their corresponding proofs are very similar to what we have presented in \Cref{Parallel MDP Proofs with homogeneity (PHE)} and \Cref{sec:proof_supporting_lemmas_phe}. Here we mainly point out the differences of proof between these two settings.

\subsection{Supporting Lemmas}

\begin{lemma}\label{lem:event_bellman_error_PHE_mis}
Let $\lambda = 1$ in \Cref{algo:exploration_part_PHE_general_function_class}. Under \Cref{def:misspecified_linear_mdp}, for any fixed $0<\delta<1$, conditioned on the event $\mathcal{G}(M, K, H, \delta)$, with probability $1-\delta$, for all $(m, k, h) \in \mathcal{M} \times [K] \times [H]$ and for any $(s, a) \in \mathcal{S} \times \mathcal{A}$, we have
{\small\begin{align} \label{equ:lem:event_bellman_error_PHE_mis}
    \Big|\bphi(s,a)^{\top}\widehat{\wb}_{m,h}^k - r_h(s,a) - \mathbb{P}_h V_{m,h+1}^k(s,a)\Big| \leq \big(5H\sqrt{d}D_\delta + 3H\zeta \sqrt{MKd}\big)\|\bphi(s, a)\|_{(\bLambda_{m,h}^k)^{-1}}+ 3H\zeta, 
\end{align}}%
where $D_\delta$ is defined in \Cref{lem:est_error_PHE}.
\end{lemma}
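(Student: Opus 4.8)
The plan is to mirror the proof of \Cref{lem_event_mis} for the LMC algorithm (given in \Cref{proof:lem_event}'s misspecified analogue) but adapt it to the PHE setting where we use the covering-based argument and the good event $\mathcal{G}(M,K,H,\delta)$. First I would introduce the two misspecification defects $\Delta_{m,1} = \mathbb{P}_{m,h}V_{m,h+1}^k(s,a) - \bphi(s,a)^\top\langle\bmu_h, V_{m,h+1}^k\rangle_{\mathcal{S}}$ and $\Delta_{m,2} = r_{m,h}(s,a) - \bphi(s,a)^\top\btheta_h$, and bound $|\Delta_{m,1}| \leq 2H\zeta$ (using $\|V_{m,h+1}^k\|_\infty \leq H$, the relation between TV norm and $\ell_1$ norm, and \Cref{def:misspecified_linear_mdp}) and $|\Delta_{m,2}| \leq \zeta$, so that $|\Delta_{m,1}+\Delta_{m,2}| \leq 3H\zeta$. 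Then I would rewrite $\mathbb{P}_{m,h}V_{m,h+1}^k(s,a)$ exactly as in \eqref{equ:PV_lmc_mis}, picking up the extra $\Delta_{m,1}$ terms, and decompose the regression error $\bphi(s,a)^\top\widehat{\wb}_{m,h}^k - r_{m,h}(s,a) - \mathbb{P}_{m,h}V_{m,h+1}^k(s,a)$ into four pieces exactly as in \eqref{eq:bellman_error_decomposition_LMC_mis}: the stochastic error term (i), the reward-fitting term (ii), the regularization term (iii), and the new misspecification term (iv).

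Next I would bound each term. Term (i) is handled by \Cref{lem:est_error_PHE} (the PHE analogue of \Cref{lem:est_error}), conditioned on $\mathcal{G}(M,K,H,\delta)$, giving $3H\sqrt{d}D_\delta\|\bphi(s,a)\|_{(\bLambda_{m,h}^k)^{-1}}$. For term (ii) combined with the misspecification term (iv), I would follow the algebraic manipulation in \eqref{eq:term_2_result_LMC_mis}: substituting $r_{m,h}(s^l,a^l) = \bphi(s^l,a^l)^\top\btheta_h + \Delta_{m,2}^l$ inside the sum, the $\btheta_h$ part collapses to $-\lambda\bphi(s,a)^\top(\bLambda_{m,h}^k)^{-1}\btheta_h$ (bounded by $\sqrt{\lambda d}\|\bphi(s,a)\|_{(\bLambda_{m,h}^k)^{-1}}$ via $\|\btheta_h\|\leq\sqrt{d}$), while the defect part, together with $\Delta_{m,1}$, contributes a term of the form $(\Delta_{m,1}+\Delta_{m,2})\bphi(s,a)^\top(\bLambda_{m,h}^k)^{-1}\sum_l\bphi(s^l,a^l) - (\Delta_{m,1}+\Delta_{m,2})$. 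The latter I would bound via Cauchy--Schwarz exactly as in \eqref{equ:term2+term4_lmc}: $|\Delta_{m,1}+\Delta_{m,2}|\leq 3H\zeta$, then $\|\sum_l\bphi(s^l,a^l)\|_{(\bLambda_{m,h}^k)^{-1}} \leq (\mathcal{K}(k)\sum_l\|\bphi(s^l,a^l)\|^2_{(\bLambda_{m,h}^k)^{-1}})^{1/2} \leq \sqrt{MKd}$ using $\mathcal{K}(k)\leq MK$ and \Cref{lem:inequal_kappa}, yielding $3H\zeta\sqrt{MKd}\|\bphi(s,a)\|_{(\bLambda_{m,h}^k)^{-1}} + 3H\zeta$. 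Term (iii) is bounded by $H\sqrt{\lambda d}\|\bphi(s,a)\|_{(\bLambda_{m,h}^k)^{-1}}$ exactly as in \eqref{eq:term_3_bound_lmc}.

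Finally I would combine the four bounds with the triangle inequality, set $\lambda = 1$, and collect terms: $3H\sqrt{d}D_\delta + \sqrt{d} + H\sqrt{d} + 3H\zeta\sqrt{MKd} \leq 5H\sqrt{d}D_\delta + 3H\zeta\sqrt{MKd}$ (absorbing the lower-order $\sqrt{d}$ and $H\sqrt{d}$ into the $5H\sqrt{d}D_\delta$ slack, since $D_\delta \geq 1$), plus the additive constant $3H\zeta$, which gives \eqref{equ:lem:event_bellman_error_PHE_mis}. Since the whole argument is conditioned on $\mathcal{G}(M,K,H,\delta)$ and the only randomness beyond that event is in term (i), the probability is $1-\delta$ as claimed. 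The main obstacle, though it is routine rather than deep, will be carefully tracking how the misspecification defects $\Delta_{m,1}^l, \Delta_{m,2}^l$ at each data point interact with the least-squares closed form --- one must be careful that the bound on $\|\sum_l\bphi(s^l,a^l)\|_{(\bLambda_{m,h}^k)^{-1}}$ picks up only a $\sqrt{MKd}$ factor and not a worse polynomial dependence, and that the additive (non-$\ell_2$-norm-weighted) $3H\zeta$ term is correctly propagated so that it later produces the $H^2MK\zeta$ contribution in the regret bound of \Cref{thm:mis_regret_phe}.
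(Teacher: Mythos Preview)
Your proposal is correct and follows essentially the same route as the paper: the paper's proof simply states that it is ``almost same as the proof of \Cref{lem_event_mis}, with the only difference in bounding Term (i),'' where \Cref{lem:est_error_PHE} replaces \Cref{lem:est_error} to give $3H\sqrt{d}D_\delta$ instead of $3H\sqrt{d}C_\delta$. You have spelled out exactly this, including the four-term decomposition \eqref{eq:bellman_error_decomposition_LMC_mis} and the handling of Terms (ii)--(iv) via \eqref{eq:term_2_result_LMC_mis}, \eqref{equ:term2+term4_lmc}, and \eqref{eq:term_3_bound_lmc}; your care in distinguishing the per-data-point defects $\Delta_{m,1}^l,\Delta_{m,2}^l$ from the query-point defects is in fact more precise than the paper's own notation.
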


\begin{proof}[Proof of \Cref{lem:event_bellman_error_PHE_mis}]
This proof is almost same as the proof of \Cref{lem_event_mis}, with the only difference in bounding \textbf{Term(i)} in \eqref{eq:bellman_error_decomposition_LMC_mis}. Here \eqref{eq:term_1_result_LMC_mis} becomes
\begin{align*}
    |\text{\textbf{Term(i)}}| \leq 3 H \sqrt{d} D_\delta \|\bphi(s, a)\|_{(\bLambda_{m,h}^k)^{-1}}.
\end{align*}
Finally we can get the desired result.
\end{proof}

\begin{lemma}[Optimism]\label{lem:optimism_PHE_mis}
Let $\lambda = 1$ in \Cref{algo:exploration_part_PHE_general_function_class} and set $c_0 = \Phi(1)$. Under \Cref{def:misspecified_linear_mdp}, conditioned on the event $\mathcal{G}(M, K, H, \delta)$, with probability at least $1-|\mathcal{C}(\varepsilon)| c_0^N -\delta$ where $|\mathcal{C}(\varepsilon)|\leq (3/\varepsilon)^d$, for all $(m, k, h) \in \mathcal{M} \times [K] \times [H]$ and for all $(s, a) \in \mathcal{S} \times \mathcal{A}$, we have
\begin{align*}
    l_{m,h}^k \leq A_\delta \varepsilon + 3H\zeta,
\end{align*}
where $A_\delta = c_1 \sigma \sqrt{d} + 5H\sqrt{d}D_\delta = \widetilde{\mathcal{O}} (Hd) $.
\end{lemma}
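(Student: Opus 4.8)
\textbf{Proof proposal for \Cref{lem:optimism_PHE_mis}.}
The plan is to mirror closely the proof of \Cref{lem:optimism_PHE} in \Cref{proof:lem:optimism_PHE}, inserting the misspecification error $\zeta$ at the two places where the linear-MDP identity $r_h(s,a)+\mathbb{P}_h V_{m,h+1}^k(s,a)=\bphi(s,a)^\top\wb_{m,h}^k$ is used, exactly as was done when passing from \Cref{lem_event} to \Cref{lem_event_mis} and from \Cref{lem:optimism_lmc} to \Cref{lem:optimism_lmc_mis}. First I would set, for each $(m,k,h)$ and $(s,a)$, the two misspecification residuals $\Delta_{m,1}=\mathbb{P}_{m,h}V_{m,h+1}^k(s,a)-\bphi(s,a)^\top\langle\bmu_h,V_{m,h+1}^k\rangle_{\mathcal{S}}$ and $\Delta_{m,2}=r_{m,h}(s,a)-\bphi(s,a)^\top\btheta_h$, and note from \Cref{def:misspecified_linear_mdp} (together with $\|V_{m,h+1}^k\|_\infty\le H$ and the relation between $\|\cdot\|_{\mathrm{TV}}$ and $\|\cdot\|_1$) that $|\Delta_{m,1}|\le 2H\zeta$ and $|\Delta_{m,2}|\le\zeta$, hence $|\Delta_{m,1}+\Delta_{m,2}|\le 3H\zeta$. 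Then I would reproduce the truncation bound \eqref{equ:goal_equation_optimisim}: since $r_{m,h}(s,a)+\mathbb{P}_{m,h}V_{m,h+1}^k(s,a)=\bphi(s,a)^\top\wb_{m,h}^k+(\Delta_{m,1}+\Delta_{m,2})$ with $\|\wb_{m,h}^k\|\le 2H\sqrt d$, the truncation at $H-h+1$ costs at most an additive $3H\zeta$, so $l_{m,h}^k(s,a)\le\max\{0,\ \bphi(s,a)^\top\Delta\wb_{m,h}^k-\max_{n\in[N]}\bphi(s,a)^\top\bzeta_{m,h}^{k,n}\}+3H\zeta$, where now $\Delta\wb_{m,h}^k=\wb_{m,h}^k-\widehat{\wb}_{m,h}^k$.

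Next I would carry over the $\varepsilon$-covering argument verbatim. For a fixed $\bphi\in\mathcal{C}(\varepsilon)$, $\bphi^\top\bzeta_{m,h}^{k,n}\sim\mathcal{N}(0,\sigma^2\|\bphi\|_{(\bLambda_{m,h}^k)^{-1}}^2)$, so $\mathbb{P}(\bphi^\top\bzeta_{m,h}^{k,n}\ge\sigma\|\bphi\|_{(\bLambda_{m,h}^k)^{-1}})=\Phi(-1)$ and a union bound over the $N$ samples and over $\mathcal{C}(\varepsilon)$ (with $|\mathcal{C}(\varepsilon)|\le(3/\varepsilon)^d$ from \Cref{lem:vershynin2018high}) gives, with probability $1-|\mathcal{C}(\varepsilon)|c_0^N$, that $\max_{n\in[N]}\{\bphi^\top\bzeta_{m,h}^{k,n}-\sigma\|\bphi\|_{(\bLambda_{m,h}^k)^{-1}}\}\ge 0$ for all $\bphi\in\mathcal{C}(\varepsilon)$. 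The one genuinely new estimate is the bound on $\|\Delta\wb_{m,h}^k\|_{\bLambda_{m,h}^k}$: repeating the self-bounding computation from \Cref{proof:lem:optimism_PHE} but replacing $V_{m,h+1}^k-\mathbb{P}_h V_{m,h+1}^k$ by $V_{m,h+1}^k-\mathbb{P}_{m,h}V_{m,h+1}^k$ and the reward identity by its $\zeta$-approximate version, and then invoking \Cref{lem:est_error_PHE} for the stochastic part together with the $\zeta$-dependent deterministic terms (bounded through Cauchy–Schwarz, $\mathcal{K}(k)\le MK$, and \Cref{lem:inequal_kappa}) as in the proof of \Cref{lem_event_mis}, yields $\|\Delta\wb_{m,h}^k\|_{\bLambda_{m,h}^k}\le 5H\sqrt d D_\delta+3H\zeta\sqrt{MKd}$ — and here I would re-choose $\sigma=\widetilde{\mathcal{O}}(H\sqrt d+H\sqrt{MKd}\zeta)$ so that $\sigma$ still dominates $\|\Delta\wb_{m,h}^k\|_{\bLambda_{m,h}^k}$, consistent with the $1/\sqrt{\beta_{m,k}}$ choice announced in \Cref{thm:mis_regret_lmc}'s PHE analogue (the extra $\sqrt d$ is absorbed since $A_\delta=c_1\sigma\sqrt d+5H\sqrt d D_\delta$).

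Finally I would run the discretization step: for arbitrary $\bphi=\bphi(s,a)$ pick $\bphi'\in\mathcal{C}(\varepsilon)$ with $\|\bphi-\bphi'\|_{(\bLambda_{m,h}^k)^{-1}}\le\varepsilon$, write $\Delta\bphi=\bphi-\bphi'$, and expand $\bphi^\top\bzeta_{m,h}^{k,n}-\bphi^\top\Delta\wb_{m,h}^k$ as in \eqref{equ:phe_optimism_term}; the cross terms are controlled by $\varepsilon(\|\bzeta_{m,h}^{k,n}\|_{\bLambda_{m,h}^k}+\|\Delta\wb_{m,h}^k\|_{\bLambda_{m,h}^k})\le\varepsilon(c_1\sigma\sqrt d+5H\sqrt d D_\delta+3H\zeta\sqrt{MKd})=A_\delta\varepsilon$ on the good event $\mathcal{G}(M,K,H,\delta)$ (note $\|\Delta\bphi\|_{(\bLambda_{m,h}^k)^{-1}}\le\varepsilon$, not $\sqrt{MK}\varepsilon$, since $\bLambda_{m,h}^k\succeq\Ib$). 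Combining this with the covering-event inequality $\max_{n}\{{\bphi'}^\top\bzeta_{m,h}^{k,n}-\sigma\|\bphi'\|_{(\bLambda_{m,h}^k)^{-1}}\}\ge 0$ gives $\max_n\{\bphi^\top\bzeta_{m,h}^{k,n}-\bphi^\top\Delta\wb_{m,h}^k\}\ge -A_\delta\varepsilon$, and plugging back into the truncation bound yields $l_{m,h}^k(s,a)\le A_\delta\varepsilon+3H\zeta$ for all $(m,h,k)$ and $(s,a)$, with probability at least $1-|\mathcal{C}(\varepsilon)|c_0^N-\delta$. The main obstacle, as in the homogeneous case, is ensuring the $\varepsilon$-net argument is applied uniformly over all $(s,a)$ rather than only the visited pairs, and tracking that the $\zeta$-contributions to $\|\Delta\wb_{m,h}^k\|_{\bLambda_{m,h}^k}$ do not blow up the required $\sigma$ beyond the $\widetilde{\mathcal{O}}(H\sqrt d+H\sqrt{MKd}\zeta)$ scaling that keeps $A_\delta=\widetilde{\mathcal{O}}(Hd+Hd\sqrt{MK}\zeta)$; everything else is a routine transcription of \Cref{proof:lem:optimism_PHE} with the substitutions above.
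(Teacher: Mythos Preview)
Your overall structure matches the paper: derive the truncation inequality with the additive $3H\zeta$, then control $\max_{n}\{\bphi^\top\bzeta_{m,h}^{k,n}-\bphi^\top\Delta\wb_{m,h}^k\}$ via the $\varepsilon$-covering argument. But there is one substantive divergence.

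The paper's proof is essentially a two-line invocation: after obtaining \eqref{equ:goal_equation_optimisim_mis}, it simply cites the homogeneous result from \Cref{proof:lem:optimism_PHE} \emph{verbatim}, i.e.\ $\max_{n}\{\bphi^\top\bzeta_{m,h}^{k,n}-\bphi^\top\Delta\wb_{m,h}^k\}\ge -A_\delta\varepsilon$ with the \emph{same} $\sigma$ and the \emph{same} $A_\delta=c_1\sigma\sqrt d+5H\sqrt d D_\delta=\widetilde{\mathcal{O}}(Hd)$. The rationale is that $\Delta\wb_{m,h}^k=\wb_{m,h}^k-\widehat{\wb}_{m,h}^k$ with $\wb_{m,h}^k=\btheta_h+\langle\bmu_h,V_{m,h+1}^k\rangle_{\mathcal{S}}$ is defined through the \emph{shared} linear parameters $(\btheta_h,\bmu_h)$, so the paper treats this object as identical to the one already bounded in the homogeneous proof and does not re-open the estimate or adjust $\sigma$.

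You instead propose to re-derive $\|\Delta\wb_{m,h}^k\|_{\bLambda_{m,h}^k}$ with a $3H\zeta\sqrt{MKd}$ correction and then inflate $\sigma$ to $\widetilde{\mathcal{O}}(H\sqrt d+H\sqrt{MKd}\,\zeta)$. This is exactly the route taken for \emph{LMC} in \Cref{lem:optimism_lmc_mis} (where $1/\sqrt{\beta_K}$ is enlarged by the $\zeta$-term), but it is \emph{not} what the paper does for PHE. Your route would yield $A_\delta=\widetilde{\mathcal{O}}(Hd+Hd\sqrt{MK}\,\zeta)$, which contradicts the lemma statement $A_\delta=\widetilde{\mathcal{O}}(Hd)$ and, downstream, would produce the LMC-style misspecified regret of \Cref{thm:mis_regret_lmc} rather than the sharper PHE bound in \Cref{thm:mis_regret_phe}. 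In other words, the $\sqrt d$ gap in $\zeta$-tolerance between PHE and LMC that the paper highlights after \Cref{thm:mis_regret_lmc} comes precisely from \emph{not} inflating $\sigma$ here; your proposal would erase that distinction. So while your argument is internally coherent and arguably more careful about where the $\zeta$-terms enter $\|\Delta\wb_{m,h}^k\|_{\bLambda_{m,h}^k}$, to prove the lemma as stated you should follow the paper and reuse the homogeneous bound with $\sigma$ unchanged.
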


\begin{proof}[Proof of \Cref{lem:optimism_PHE_mis}]
This proof is similar to the proof in \Cref{proof:lem:optimism_PHE}. In the previous part, we have defined 
\begin{align*}
    \Delta_{m,1} &= \mathbb{P}_{m,h} V_{m,h+1}^k(s,a)-\bphi(s,a)^\top\big\langle\bmu_h, V_{m,h+1}^k\big\rangle_{\mathcal{S}}, \\
    \Delta_{m,2} &=r_{m,h}(s, a)-\bphi(s, a)^{\top} \btheta_h,
\end{align*}
where $|\Delta_{m,1}| \leq 2H\zeta$ and $|\Delta_{m,2}| \leq \zeta$. Thus we have
\begin{align*}
    r_{m,h}(s, a)+\mathbb{P}_{m,h} V_{m, h+1}^k(s, a)=\bphi(s, a)^{\top} \wb_{m, h}^k + \Delta_{m,1} + \Delta_{m,2},
\end{align*}
where $\wb_{m, h}^k = \langle\bmu_h, V_{m,h+1}^k\big\rangle_{\mathcal{S}} + \btheta_h$. Then we define $\Delta \wb_{m, h}^k=\wb_{m, h}^k-\widehat{\wb}_{m, h}^k$. For any $(m, h, k) \in \mathcal{M} \times [H] \times [K]$ and any $(s, a) \in \mathcal{S} \times \mathcal{A}$, we have 
\begin{align}
    l_{m, h}^k(s, a) & =r_{m,h}(s, a)+\mathbb{P}_{m,h} V_{m, h+1}^k(s, a)-Q_{m, h}^k(s, a) \notag\\
    & =r_{m,h}(s, a)+\mathbb{P}_{m,h} V_{m, h+1}^k(s, a)-\min \Big\{H-h+1, \max _{n \in[N]} \bphi(s, a)^{\top}\Big(\widehat{\wb}_{m, h}^k+\bzeta_{m, h}^{k, n}\Big)\Big\}^{+} \notag\\
    & \leq \max\Big\{ \bphi(s, a)^{\top} \wb_{m, h}^k- (H-h+1), \bphi(s, a)^{\top} \wb_{m, h}^k - \max _{n \in[N]} \bphi(s, a)^{\top}\Big(\widehat{\wb}_{m, h}^k+\bzeta_{m, h}^{k, n}\Big)\Big\}\notag \\
    &\qquad + \Delta_{m,1} + \Delta_{m,2} \notag\\
    & \leq \max \Big\{0, \bphi(s, a)^{\top} \Delta \wb_{m, h}^k-\max _{n \in[N]} \bphi(s, a)^{\top} \bzeta_{m, h}^{k, n}\Big\} + 3H \zeta. \label{equ:goal_equation_optimisim_mis}
\end{align}
In \Cref{proof:lem:optimism_PHE}, we have proved that with probability at least  
$1-|\mathcal{C}(\varepsilon)| c_0^N -\delta$, for all $(m, h, k) \in \mathcal{M} \times [H] \times [K]$ and for all $(s, a) \in \mathcal{S} \times \mathcal{A}$, we have
\begin{align*}
    \max _{n \in[N]} \big\{\bphi^{\top} \bzeta_{m, h}^{k, n}-\bphi^{\top} \Delta \wb_{m, h}^k \big\} \geq  - A_\delta \varepsilon .
\end{align*}
Substitute it into \eqref{equ:goal_equation_optimisim_mis}, we can get the final result.
\end{proof}

\begin{lemma}[Error bound]\label{lem:error_bound_PHE_mis}
Let $\lambda = 1$ in \Cref{algo:exploration_part_PHE_general_function_class}. Under \Cref{def:misspecified_linear_mdp}, for any fixed $0<\delta<1$, conditioned on the event $\mathcal{G}(M, K, H, \delta)$, with probability $1-\delta$, for all $(m, k, h) \in \mathcal{M} \times [K] \times [H]$ and for any $(s, a) \in \mathcal{S} \times \mathcal{A}$, we have
\begin{align*}
    -l_{m,h}^k(s,a) \leq \big(c_2 Hd + 3H\zeta \sqrt{MKd} \big)\|\bphi(s, a)\|_{(\bLambda_h^k)^{-1}} + 3H\zeta, 
\end{align*}
where $c_2 =\widetilde{\mathcal{O}}(1)$ is same as that in \Cref{lem:error_bound_PHE}.
\end{lemma}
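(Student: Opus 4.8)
\textbf{Proof plan for \Cref{lem:error_bound_PHE_mis}.}
The plan is to mirror the proof of \Cref{lem:error_bound_PHE} given in \Cref{proof:lem:error_bound_PHE}, upgrading each linear-MDP bound to its $\zeta$-approximate counterpart exactly as was done when passing from \Cref{lem:error_bound_lmc} to \Cref{lem:error_bound_lmc_mis}. First I would recall from \Cref{def:model_prediction_error_mis} that $-l_{m,h}^k(s,a) = Q_{m,h}^k(s,a) - r_{m,h}(s,a) - \mathbb{P}_{m,h}V_{m,h+1}^k(s,a)$, and use the truncation identity $Q_{m,h}^k(s,a) = \min\{\max_{n\in[N]}\bphi(s,a)^\top(\widehat{\wb}_{m,h}^k + \bzeta_{m,h}^{k,n}),\, H-h+1\}^+ \le \max_{n\in[N]}\bphi(s,a)^\top(\widehat{\wb}_{m,h}^k + \bzeta_{m,h}^{k,n})$ to split the quantity into a regression-error term $|r_{m,h}(s,a) + \mathbb{P}_{m,h}V_{m,h+1}^k(s,a) - \bphi(s,a)^\top\widehat{\wb}_{m,h}^k|$ plus a perturbation term $\max_{n\in[N]}|\bphi(s,a)^\top\bzeta_{m,h}^{k,n}|$.

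Next I would bound the regression-error term using \Cref{lem:event_bellman_error_PHE_mis}: conditioned on the good event $\mathcal{G}(M,K,H,\delta)$, with probability $1-\delta$, it is at most $(5H\sqrt{d}D_\delta + 3H\zeta\sqrt{MKd})\|\bphi(s,a)\|_{(\bLambda_{m,h}^k)^{-1}} + 3H\zeta$. The perturbation term is handled exactly as in \Cref{proof:lem:error_bound_PHE}: on $\mathcal{G}(M,K,H,\delta)$ we have $\max_{n\in[N]}\|\bzeta_{m,h}^{k,n}\|_{\bLambda_{m,h}^k} \le c_1\sigma\sqrt{d}$, so by Cauchy–Schwarz $\max_{n\in[N]}|\bphi(s,a)^\top\bzeta_{m,h}^{k,n}| \le c_1\sigma\sqrt{d}\,\|\bphi(s,a)\|_{(\bLambda_{m,h}^k)^{-1}}$. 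Adding the two bounds, using the choice $\sigma = \widetilde{\mathcal{O}}(H\sqrt{d})$ fixed in \Cref{lem:optimism_PHE} so that $5H\sqrt{d}D_\delta + c_1\sigma\sqrt{d} \le c_2 Hd$ with $c_2 = \widetilde{\mathcal{O}}(1)$, and taking a union bound over the two high-probability events gives, with probability $1-\delta$,
\begin{align*}
    -l_{m,h}^k(s,a) \le \big(c_2 Hd + 3H\zeta\sqrt{MKd}\big)\|\bphi(s,a)\|_{(\bLambda_{m,h}^k)^{-1}} + 3H\zeta,
\end{align*}
for all $(m,k,h)$ and $(s,a)$, which is the claim.

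There is essentially no hard step here: all the genuine work has already been absorbed into \Cref{lem:event_bellman_error_PHE_mis} (whose proof is itself a near-copy of that of \Cref{lem_event_mis}, with the single change of replacing the $3H\sqrt{d}C_\delta$ factor by $3H\sqrt{d}D_\delta$ in bounding the estimation term). The one point requiring a little care is bookkeeping of the additive misspecification term: unlike the homogeneous case, the bound is no longer purely multiplicative in $\|\bphi(s,a)\|_{(\bLambda_{m,h}^k)^{-1}}$, so I would be careful to carry the extra $+3H\zeta$ through cleanly and to verify the constant in front of $H\zeta$ matches what \Cref{lem:event_bellman_error_PHE_mis} delivers (namely $3H\zeta$, with no contribution from the perturbation term since $\bzeta_{m,h}^{k,n}$ is purely multiplicative). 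Aside from that, the proof is a direct transcription, and I would simply write "This proof is almost the same as the proof of \Cref{lem:error_bound_PHE}, replacing \Cref{lem:event_bellman_error_PHE} by \Cref{lem:event_bellman_error_PHE_mis}," then display the two-line combination above.
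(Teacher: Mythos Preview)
Your proposal is correct and follows essentially the same approach as the paper: decompose $-l_{m,h}^k(s,a)$ into the regression-error term and the perturbation term, bound the former via \Cref{lem:event_bellman_error_PHE_mis} and the latter via the good event $\mathcal{G}(M,K,H,\delta)$, then combine using $\sigma=\widetilde{\mathcal{O}}(H\sqrt{d})$ so that $5H\sqrt{d}D_\delta + c_1\sigma\sqrt{d}\le c_2 Hd$. The paper's proof is indeed the two-line combination you describe, citing \eqref{equ:error_bound_part2} and \eqref{equ:lem:event_bellman_error_PHE_mis}.
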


\begin{proof}[Proof of \Cref{lem:error_bound_PHE_mis}]
Similar to the proof in \Cref{proof:lem:error_bound_PHE}, using \eqref{equ:error_bound_part2} in \Cref{proof:lem:error_bound_PHE} and \eqref{equ:lem:event_bellman_error_PHE_mis}, we have 
\begin{align*}
    -l_{m,h}^k(s, a) &\leq\Big|r_h(s, a)+\mathbb{P}_h V_{m, h+1}^k(s, a)-\bphi(s, a)^{\top} \widehat{\wb}_{m,h}^k\Big|+\max _{n \in[N]}\big|\bphi(s, a)^{\top} \bzeta_{m,h}^{k, n}\big| \\
    & \leq\big(5H\sqrt{d}D_\delta + 3H\zeta \sqrt{MKd} + c_1 \sigma \sqrt{d}\big)\|\bphi(s, a)\|_{(\bLambda_h^k)^{-1}} + 3H\zeta \\
    & \leq \big(c_2 Hd + 3H\zeta \sqrt{MKd} \big)\|\bphi(s, a)\|_{(\bLambda_h^k)^{-1}} + 3H\zeta,
\end{align*}
where $c_2 =\widetilde{\mathcal{O}}(1)$ is same as that in \Cref{lem:error_bound_PHE}. Here we completes the proof.
\end{proof}

\subsection{Regret Analysis}

In this part, we give out the proof of \Cref{thm:mis_regret_phe}, the regret bound for \algnamePHE\ in the misspecified setting.

\begin{proof}[Proof of \Cref{thm:mis_regret_phe}]
This proof is almost same as the proof in \Cref{regret_analysis_PHE}. We do the same regret decomposition \eqref{equ:regret_decomposition_PHE} and obtain the same bound for \textbf{Term (i)} \eqref{regret_analysis_negative_term_PHE} and \textbf{Term (ii)} \eqref{equ:azuma_hoeffding_term_PHE}. Next we bound \textbf{Term (iii)} with new lemmas in misspecified setting. 

\textbf{Bounding Term (iii) in \eqref{equ:regret_decomposition_PHE}:} conditioned on the event $\mathcal{G}(M, K, H, {\delta^{\prime}})$, based on \Cref{lem:error_bound_PHE_mis} and \Cref{lem:optimism_PHE_mis}, by taking union bound, with probability at least $1-|\mathcal{C}(\varepsilon)| {c_0^\prime}^N -\delta^\prime - MHK \delta^\prime$, we have
\begin{align*}
&\sum_{m\in \mathcal{M}}\sum_{k=1}^K\sum_{h=1}^H\big(\mathbb{E}_{\pi^*}\big[l_{m,h}^k(s_{m,h}, a_{m,h}) | s_{m,1} = s_{m,1}^k\big] - l_{m,h}^k\big(s_{m,h}^k, a_{m,h}^k\big)\big) \notag \\
& \leq \sum_{m\in \mathcal{M}}\sum_{k=1}^K\sum_{h=1}^H \Big(- l_{m,h}^k\big(s_{m,h}^k, a_{m,h}^k\big) + A_{\delta^\prime}\varepsilon + 3H\zeta \Big) \notag \\
&\leq \sum_{m\in \mathcal{M}}\sum_{k=1}^K\sum_{h=1}^H  \Big( \big(c_2 dH + 3H\zeta \sqrt{MKd} \big)\|\bphi(s, a)\|_{(\bLambda_h^k)^{-1}} + 3H\zeta + A_{\delta^\prime}\varepsilon + 3H\zeta \Big)\notag \\
& = HMKA_{\delta^\prime}\varepsilon + 6H^2MK\zeta + \big(c_2 dH + 3H\zeta \sqrt{MKd} \big) \sum_{h=1}^H\sum_{m\in \mathcal{M}}\sum_{k=1}^K\big\| \bphi(s_{m,h}^k,a_{m,h}^k)\big\|_{(\bLambda_{m,h}^k)^{-1}} \notag \\
& \leq  HMKA_{\delta^\prime}\varepsilon + 6H^2MK\zeta + \big(c_2 dH + 3H\zeta \sqrt{MKd} \big) \notag \\
& \qquad\times \sum_{h=1}^H \bigg(\log \bigg(\frac{\operatorname{det}({\bLambda}_h^K)}{\operatorname{det}(\lambda \Ib)}\bigg)+1\bigg) M \sqrt{\gamma}+2 \sqrt{M K \log \bigg(\frac{\operatorname{det}({\bLambda}_h^K)}{\operatorname{det}(\lambda \Ib)}\bigg)} \notag \\
& \leq HMKA_{\delta^\prime}\varepsilon + 6H^2MK\zeta + \big(c_2 dH + 3H\zeta \sqrt{MKd} \big) \notag \\
& \qquad\times H\Big( d(\log(1+MK/d)+1)M\sqrt{\gamma} + 2\sqrt{MK d\log(1+MK/d)} \Big) \notag\\
& = \widetilde{\mathcal{O}}\Big(d^{\frac{3}{2}}H^2\sqrt{M}\big(\sqrt{dM\gamma}+\sqrt{K}\big) + d H^2 M\sqrt{K} \big(\sqrt{dM\gamma}+\sqrt{K}\big)\zeta \Big). 
\end{align*}
The first inequality follows from \Cref{lem:optimism_PHE_mis}, the second inequality holds due to \Cref{lem:error_bound_PHE_mis}, the third inequality follows from \Cref{lem:coor_sum_phi_bound_PHE}, the last inequality holds due to \Cref{lem:Determinant-Trace} and the fact that $\|\bphi(\cdot)\|_2 \leq 1$, and again we choose $\varepsilon = dH\sqrt{d/MK}/ 
A_{\delta^\prime} = \widetilde{\mathcal{O}}(\sqrt{d/MK})$.

The probability calculation is same as that in \Cref{regret_analysis_PHE}. By combining \textbf{Terms (i)(ii)(iii)} together, we get that the final regret bound for \algnamePHE\ in misspecified setting is 
\begin{align*}
    \operatorname{Regret}(K) = \widetilde{\mathcal{O}}\Big(d^{\frac{3}{2}}H^2\sqrt{M}\big(\sqrt{dM\gamma}+\sqrt{K}\big) + d H^2 M\sqrt{K} \big(\sqrt{dM\gamma}+\sqrt{K}\big)\zeta \Big),
\end{align*}
with probability at least $1-\delta$. Here we finish the proof.
\end{proof}

\section{Auxiliary Lemmas}\label{sec:aux_lemma}

\begin{lemma} \citep[Lemma 11]{abbasi2011improved} \label{lem:elliptical}
Let $\{\Xb_t\}_{t=1}^{\infty}$ be a sequence in $\mathbb{R}^d$, $\Vb$ is $d \times d$ positive definite matrix and define $\bar{\Vb}_t=\Vb+\sum_{s=1}^t \Xb_s \Xb_s^{\top}$. Then, we have that
\begin{align*}
\log \bigg(\frac{\operatorname{det}(\bar{\Vb}_n)}{\operatorname{det}(\Vb)}\bigg) \leq \sum_{t=1}^n\|\Xb_t\|_{\bar{\Vb}_{t-1}^{-1}}^2.
\end{align*}
Further, if $\|\Xb_t\|_2 \leq L$ for all $t$, then
\begin{align*}
\sum_{t=1}^n \min \Big\{1,\|\Xb_t\|_{\bar{\Vb}_{t-1}^{-1}}^2\Big\} &\leq 2\big(\log \operatorname{det}(\bar{\Vb}_n)-\log \operatorname{det} \Vb\big) \\
&\leq 2\big(d \log \big(\big(\operatorname{trace}(\Vb)+n L^2\big) / d\big)-\log \operatorname{det} \Vb\big),
\end{align*}
and finally, if $\lambda_{\min }(\Vb) \geq \max \left(1, L^2\right)$ then
\begin{align*}
\sum_{t=1}^n\|\Xb_t\|_{\bar{\Vb}_{t-1}^{-1}}^2 \leq 2 \log \frac{\operatorname{det}(\bar{\Vb}_n)}{\operatorname{det}(\Vb)}.
\end{align*}
\end{lemma}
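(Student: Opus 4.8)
The plan is to reduce all three claims to a single determinant recursion together with two elementary scalar inequalities. First I would observe that for each $t$,
\[
\bar{\Vb}_t = \bar{\Vb}_{t-1} + \Xb_t\Xb_t^\top = \bar{\Vb}_{t-1}^{1/2}\big(\Ib + \bar{\Vb}_{t-1}^{-1/2}\Xb_t\Xb_t^\top\bar{\Vb}_{t-1}^{-1/2}\big)\bar{\Vb}_{t-1}^{1/2},
\]
so by the matrix determinant lemma $\det(\bar{\Vb}_t) = \det(\bar{\Vb}_{t-1})\big(1 + \|\Xb_t\|_{\bar{\Vb}_{t-1}^{-1}}^2\big)$. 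Telescoping over $t=1,\dots,n$ yields the workhorse identity $\det(\bar{\Vb}_n)/\det(\Vb) = \prod_{t=1}^n\big(1 + \|\Xb_t\|_{\bar{\Vb}_{t-1}^{-1}}^2\big)$, from which everything follows by substitution. (This is a classical fact; the proposal below is just the standard argument behind \citep[Lemma 11]{abbasi2011improved}.)

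For the first inequality I would take logarithms of the product identity and apply $\log(1+x)\le x$ termwise, giving $\log\big(\det(\bar{\Vb}_n)/\det(\Vb)\big) = \sum_{t=1}^n \log\big(1+\|\Xb_t\|_{\bar{\Vb}_{t-1}^{-1}}^2\big) \le \sum_{t=1}^n \|\Xb_t\|_{\bar{\Vb}_{t-1}^{-1}}^2$. For the second claim I would use the calibrated bound $\min\{1,x\}\le 2\log(1+x)$ for all $x\ge 0$: on $[0,1]$ concavity of $\log(1+\cdot)$ gives $\log(1+x)\ge x\log 2 \ge x/2$ (since $\log 2 > 1/2$), while for $x\ge 1$ one has $1\le 2\log 2 \le 2\log(1+x)$. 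Summing $\min\{1,\|\Xb_t\|_{\bar{\Vb}_{t-1}^{-1}}^2\}\le 2\log\big(1+\|\Xb_t\|_{\bar{\Vb}_{t-1}^{-1}}^2\big)$ and invoking the product identity gives $\sum_t \min\{1,\|\Xb_t\|_{\bar{\Vb}_{t-1}^{-1}}^2\}\le 2\big(\log\det(\bar{\Vb}_n)-\log\det\Vb\big)$.

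The remaining bound $\log\det(\bar{\Vb}_n)\le d\log\big((\operatorname{trace}(\Vb)+nL^2)/d\big)$ I would obtain from the AM--GM inequality applied to the eigenvalues of the positive definite matrix $\bar{\Vb}_n$ — namely $\det(\bar{\Vb}_n)\le(\operatorname{trace}(\bar{\Vb}_n)/d)^d$ — together with $\operatorname{trace}(\bar{\Vb}_n)=\operatorname{trace}(\Vb)+\sum_{t=1}^n\|\Xb_t\|_2^2\le\operatorname{trace}(\Vb)+nL^2$. Finally, for the third claim, note $\bar{\Vb}_{t-1}\succcurlyeq\Vb\succcurlyeq\max(1,L^2)\,\Ib$, hence $\bar{\Vb}_{t-1}^{-1}\preccurlyeq\frac{1}{\max(1,L^2)}\Ib$ and therefore $\|\Xb_t\|_{\bar{\Vb}_{t-1}^{-1}}^2\le\|\Xb_t\|_2^2/\max(1,L^2)\le 1$; this makes the truncation in the second claim vacuous, so $\sum_t\|\Xb_t\|_{\bar{\Vb}_{t-1}^{-1}}^2=\sum_t\min\{1,\|\Xb_t\|_{\bar{\Vb}_{t-1}^{-1}}^2\}\le 2\log(\det\bar{\Vb}_n/\det\Vb)$. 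There is no real obstacle here: the entire proof is a direct substitution into the telescoped determinant identity, and the only place that merits a moment's care is calibrating the scalar inequality $\min\{1,x\}\le 2\log(1+x)$ so that the universal constant $2$ genuinely suffices on both $[0,1]$ and $[1,\infty)$.
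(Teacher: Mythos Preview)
Your proposal is correct and is precisely the standard argument underlying \citep[Lemma~11]{abbasi2011improved}; the paper does not prove this lemma at all but simply cites it as an auxiliary result, so there is nothing further to compare.
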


\begin{lemma} \citep[Lemma 10]{abbasi2011improved} \label{lem:Determinant-Trace}
Suppose $\Xb_1, \Xb_2, \ldots, \Xb_t \in \mathbb{R}^d$ and for any $1 \leq s \leq t$, $\|\Xb_s\|_2 \leq L$. Let $\bar{\Vb}_t=\lambda \Ib+\sum_{s=1}^t \Xb_s \Xb_s^{\top}$ for some $\lambda>0$. Then,
\begin{align*}
    \operatorname{det}\big(\bar{\Vb}_t\big) \leq\big(\lambda+t L^2 / d\big)^d .
\end{align*}
\end{lemma}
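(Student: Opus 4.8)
\textbf{Proof proposal for \Cref{lem:Determinant-Trace}.}
The plan is to pass from the determinant to the eigenvalues, apply the arithmetic–geometric mean (AM–GM) inequality, and then control the resulting trace. First I would observe that $\bar{\Vb}_t = \lambda \Ib + \sum_{s=1}^t \Xb_s \Xb_s^\top$ is symmetric positive definite, since $\lambda > 0$ and each summand $\Xb_s\Xb_s^\top$ is positive semidefinite. Hence $\bar{\Vb}_t$ has $d$ real positive eigenvalues $\sigma_1,\dots,\sigma_d$, and $\operatorname{det}(\bar{\Vb}_t) = \prod_{i=1}^d \sigma_i$ while $\operatorname{tr}(\bar{\Vb}_t) = \sum_{i=1}^d \sigma_i$.

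Next I would apply AM–GM to the eigenvalues:
\begin{align*}
    \operatorname{det}(\bar{\Vb}_t) = \prod_{i=1}^d \sigma_i \leq \bigg(\frac{1}{d}\sum_{i=1}^d \sigma_i\bigg)^d = \bigg(\frac{\operatorname{tr}(\bar{\Vb}_t)}{d}\bigg)^d.
\end{align*}
It then remains to bound $\operatorname{tr}(\bar{\Vb}_t)$. Using linearity of the trace and the identity $\operatorname{tr}(\Xb_s\Xb_s^\top) = \|\Xb_s\|_2^2$, together with the hypothesis $\|\Xb_s\|_2 \leq L$, I would write
\begin{align*}
    \operatorname{tr}(\bar{\Vb}_t) = \operatorname{tr}(\lambda \Ib) + \sum_{s=1}^t \operatorname{tr}(\Xb_s\Xb_s^\top) = \lambda d + \sum_{s=1}^t \|\Xb_s\|_2^2 \leq \lambda d + t L^2.
\end{align*}
Substituting this into the AM–GM bound gives $\operatorname{det}(\bar{\Vb}_t) \leq \big((\lambda d + tL^2)/d\big)^d = (\lambda + tL^2/d)^d$, which is the claim.

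Since this is a standard auxiliary lemma cited verbatim from \citep{abbasi2011improved}, there is no real obstacle; the only point requiring (minor) care is the step $\operatorname{tr}(\Xb_s\Xb_s^\top) = \|\Xb_s\|_2^2$ (rank-one trace identity) and the fact that AM–GM applies because all eigenvalues are strictly positive. If one wanted a fully self-contained argument one could alternatively proceed by induction on $t$ using the matrix determinant lemma, $\operatorname{det}(\bar{\Vb}_{s}) = \operatorname{det}(\bar{\Vb}_{s-1})(1 + \Xb_s^\top \bar{\Vb}_{s-1}^{-1}\Xb_s)$, but the eigenvalue/AM–GM route above is the shortest and is what I would present.
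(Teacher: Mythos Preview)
Your proof is correct and is the standard argument (AM--GM on the eigenvalues followed by a trace bound). The paper does not supply its own proof of this lemma; it simply cites it as an auxiliary result from \citep{abbasi2011improved}, whose original proof is exactly the eigenvalue/AM--GM route you describe.
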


\begin{lemma} \citep[Lemma D.5]{ishfaq2021randomized} \label{lem:inequal_eigen_matrix}
Let $\Ab\in \mathbb{R}^{d \times d}$ be a positive definite matrix where its largest eigenvalue $\lambda_{\max}(\Ab) \leq \lambda$. Let $\xb_1, ..., \xb_k$ be $k$ vectors in $\mathbb{R}^d$. Then it holds that
\begin{align*}
    \bigg\|\Ab \sum_{i=1}^k \xb_i\bigg\| \leq \sqrt{\lambda k} \bigg(\sum_{i=1}^k \|\xb_i\|_\Ab^2 \bigg)^{1/2}.
\end{align*}
\end{lemma}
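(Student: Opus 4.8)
The plan is to reduce the claim to two elementary facts about positive definite matrices and then finish with Cauchy--Schwarz. First I would introduce the (unique) positive definite square root $\Ab^{1/2}$ of $\Ab$ and write $\Ab = \Ab^{1/2}\Ab^{1/2}$, so that
\begin{align*}
    \Big\|\Ab \sum_{i=1}^k \xb_i\Big\| = \Big\|\Ab^{1/2}\Big(\sum_{i=1}^k \Ab^{1/2}\xb_i\Big)\Big\| \leq \big\|\Ab^{1/2}\big\|_2 \cdot \Big\|\sum_{i=1}^k \Ab^{1/2}\xb_i\Big\|,
\end{align*}
where $\|\cdot\|_2$ is the spectral norm. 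Since the eigenvalues of $\Ab^{1/2}$ are the square roots of those of $\Ab$, the hypothesis $\lambda_{\max}(\Ab)\le\lambda$ gives $\|\Ab^{1/2}\|_2 = \sqrt{\lambda_{\max}(\Ab)} \le \sqrt{\lambda}$.

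Next I would apply the triangle inequality to the remaining factor and observe that $\|\Ab^{1/2}\xb_i\| = \sqrt{\xb_i^\top \Ab \xb_i} = \|\xb_i\|_\Ab$, giving
\begin{align*}
    \Big\|\sum_{i=1}^k \Ab^{1/2}\xb_i\Big\| \leq \sum_{i=1}^k \big\|\Ab^{1/2}\xb_i\big\| = \sum_{i=1}^k \|\xb_i\|_\Ab \leq \sqrt{k}\Big(\sum_{i=1}^k \|\xb_i\|_\Ab^2\Big)^{1/2},
\end{align*}
where the last inequality is Cauchy--Schwarz (equivalently the power-mean inequality) applied to the $k$ nonnegative numbers $\|\xb_i\|_\Ab$. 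Combining the two displays yields $\|\Ab\sum_{i=1}^k\xb_i\| \le \sqrt{\lambda k}\,(\sum_{i=1}^k\|\xb_i\|_\Ab^2)^{1/2}$, which is the statement.

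Since every step is a standard inequality, there is no genuine obstacle here; the only points worth a line of justification are that $\Ab^{1/2}$ is well defined (positive definiteness of $\Ab$), that $\|\cdot\|_\Ab$ is a genuine norm so the triangle inequality applies, and the spectral bound $\|\Ab^{1/2}\|_2 \le \sqrt{\lambda}$. An essentially equivalent alternative is to square the left-hand side, write $\|\Ab\sum_i\xb_i\|^2 = (\sum_i\xb_i)^\top\Ab^2(\sum_i\xb_i)$, use the operator inequality $\Ab^2 \preccurlyeq \lambda\Ab$ (which follows from $\Ab \preccurlyeq \lambda\Ib$ by conjugating with $\Ab^{1/2}$), and conclude with the triangle and Cauchy--Schwarz inequalities in the $\|\cdot\|_\Ab$ norm; I would keep whichever write-up is shorter.
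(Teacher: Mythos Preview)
Your argument is correct. The paper does not supply its own proof of this lemma; it is simply quoted as an auxiliary result from \citep[Lemma~D.5]{ishfaq2021randomized}. Your route via the square root $\Ab^{1/2}$, the spectral bound $\|\Ab^{1/2}\|_2\le\sqrt{\lambda}$, the triangle inequality in the $\|\cdot\|_\Ab$ norm, and Cauchy--Schwarz is the standard elementary proof and matches what one would expect in the cited reference.
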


\begin{lemma} \citep[Lemma D.1]{jin2020provably} \label{lem:inequal_kappa}
Let $\bLambda_t = \lambda \Ib + \sum_{i=1}^t \bphi_i \bphi_i^\top$, where $\bphi_i \in \mathbb{R}^d$ and $\lambda > 0$. Then it holds that
\begin{align*}
    \sum_{i=1}^t\bphi_i^\top(\bLambda_t)^{-1} \bphi_i \leq d.
\end{align*}
\end{lemma}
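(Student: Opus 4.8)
\textbf{Proof proposal for Lemma~\ref{lem:inequal_kappa}.}

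The plan is to directly bound the sum $\sum_{i=1}^t \bphi_i^\top (\bLambda_t)^{-1} \bphi_i$ by rewriting it as a trace and relating it to the structure of $\bLambda_t$. First I would observe that $\bLambda_t = \lambda \Ib + \sum_{i=1}^t \bphi_i \bphi_i^\top$, so that $\sum_{i=1}^t \bphi_i \bphi_i^\top = \bLambda_t - \lambda \Ib$. Using the cyclic property of the trace, each term $\bphi_i^\top (\bLambda_t)^{-1} \bphi_i = \operatorname{tr}\big((\bLambda_t)^{-1} \bphi_i \bphi_i^\top\big)$, so the sum equals $\operatorname{tr}\big((\bLambda_t)^{-1} \sum_{i=1}^t \bphi_i \bphi_i^\top\big) = \operatorname{tr}\big((\bLambda_t)^{-1}(\bLambda_t - \lambda \Ib)\big) = \operatorname{tr}(\Ib) - \lambda \operatorname{tr}\big((\bLambda_t)^{-1}\big) = d - \lambda \operatorname{tr}\big((\bLambda_t)^{-1}\big)$.

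The next step is to note that $\bLambda_t \succcurlyeq \lambda \Ib$ since $\sum_{i=1}^t \bphi_i \bphi_i^\top$ is positive semidefinite, hence $(\bLambda_t)^{-1} \preccurlyeq \lambda^{-1}\Ib$, which gives $\operatorname{tr}\big((\bLambda_t)^{-1}\big) \geq 0$ (in fact $\operatorname{tr}((\bLambda_t)^{-1}) > 0$ since $\bLambda_t$ is positive definite). Therefore $\sum_{i=1}^t \bphi_i^\top (\bLambda_t)^{-1} \bphi_i = d - \lambda \operatorname{tr}\big((\bLambda_t)^{-1}\big) \leq d$, which is exactly the claimed bound. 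This is a short computation and there is no real obstacle; the only point requiring minor care is the sign of the correction term $\lambda \operatorname{tr}((\bLambda_t)^{-1})$, which is handled by the positive-definiteness of $\bLambda_t$. Since this is Lemma~D.1 of \citep{jin2020provably}, one could alternatively just cite it, but the self-contained trace argument above is immediate.
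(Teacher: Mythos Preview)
Your proof is correct: the trace identity $\sum_{i=1}^t \bphi_i^\top(\bLambda_t)^{-1}\bphi_i = \operatorname{tr}\big((\bLambda_t)^{-1}(\bLambda_t - \lambda \Ib)\big) = d - \lambda\,\operatorname{tr}\big((\bLambda_t)^{-1}\big) \le d$ is exactly the standard argument. The paper does not give its own proof of this lemma --- it simply cites \citep[Lemma~D.1]{jin2020provably} --- so your self-contained derivation is consistent with (and slightly more explicit than) what the paper does.
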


\begin{lemma} \citep[Lemma D.1]{ishfaq2024provable} \label{lem:gaussian_concentration_property_trace}
Given a multivariate normal distribution $\Xb \sim \mathcal{N}\left(\zero, \bSigma\right)$, we have,
\begin{align*}
    \mathbb{P}\bigg(\|\Xb\| \leq \sqrt{\frac{1}{\delta} \operatorname{tr}(\bSigma)}\bigg) \geq 1-\delta .
\end{align*}
\end{lemma}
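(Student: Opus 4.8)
The plan is to apply Markov's inequality to the nonnegative random variable $\|\Xb\|^2$. The first step is the second-moment computation: writing $\Xb = (X_1,\dots,X_d)^\top$ with $\Xb \sim \mathcal{N}(\zero,\bSigma)$, each coordinate satisfies $\mathbb{E}[X_i^2] = \bSigma_{ii}$, so by linearity of expectation $\mathbb{E}[\|\Xb\|^2] = \sum_{i=1}^d \bSigma_{ii} = \operatorname{tr}(\bSigma)$. Equivalently, one can write $\mathbb{E}[\|\Xb\|^2] = \mathbb{E}[\operatorname{tr}(\Xb\Xb^\top)] = \operatorname{tr}(\mathbb{E}[\Xb\Xb^\top]) = \operatorname{tr}(\bSigma)$, which does not even require $\Xb$ to be Gaussian, only mean zero with covariance $\bSigma$.

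The second step is Markov's inequality: since $\|\Xb\|^2 \geq 0$, for any $t>0$ we have $\mathbb{P}(\|\Xb\|^2 \geq t) \leq \mathbb{E}[\|\Xb\|^2]/t = \operatorname{tr}(\bSigma)/t$. Choosing $t = \operatorname{tr}(\bSigma)/\delta$ gives $\mathbb{P}(\|\Xb\|^2 \geq \operatorname{tr}(\bSigma)/\delta) \leq \delta$; taking square roots and passing to the complementary event yields $\mathbb{P}\big(\|\Xb\| \leq \sqrt{\operatorname{tr}(\bSigma)/\delta}\,\big) \geq 1-\delta$, which is exactly the claimed bound.

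There is no genuine obstacle here: the result is a one-line consequence of the trace identity for the second moment and Markov's inequality. The only point worth noting is that we deliberately settle for the polynomial-in-$1/\delta$ dependence rather than the sharper sub-Gaussian tail bound (which would scale like $\sqrt{\operatorname{tr}(\bSigma)} + \sqrt{2\lambda_{\max}(\bSigma)\log(1/\delta)}$); the weaker form suffices for every invocation in the paper (e.g.\ with $\bSigma = \bSigma_{m,h}^{k,J_k}$ in \Cref{lem:weight_bound_LMC}), and the argument uses nothing about $\bSigma$ beyond $\operatorname{tr}(\bSigma)<\infty$, so it applies to any positive semidefinite covariance.
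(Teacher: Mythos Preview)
Your proof is correct: the identity $\mathbb{E}[\|\Xb\|^2]=\operatorname{tr}(\bSigma)$ followed by Markov's inequality on $\|\Xb\|^2$ is exactly the standard argument. The paper itself does not prove this lemma---it is quoted as an auxiliary result from \citep[Lemma~D.1]{ishfaq2024provable}---so there is nothing further to compare; your observation that Gaussianity is not actually needed (only mean zero and finite trace) is a nice bonus.
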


\begin{lemma} \citep{horn2012matrix} \label{lem:Horn and Johnson}
If $\Ab$ and $\Bb$ are positive semi-definite square matrices of the same size, then
\begin{align*}
    0 \leq[\operatorname{tr}(\Ab \Bb)]^2 \leq \operatorname{tr}\big(\Ab^2\big) \operatorname{tr}\big(\Bb^2\big) \leq[\operatorname{tr}(\Ab)]^2[\operatorname{tr}(\Bb)]^2.   
\end{align*}
\end{lemma}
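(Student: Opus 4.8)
The plan is to establish the three inequalities in the chain one at a time, from left to right; each follows from an elementary fact about positive semi-definite (PSD) matrices, so no heavy machinery is involved. Throughout I use that a PSD matrix is symmetric, has nonnegative eigenvalues, and admits a PSD square root. For the leftmost bound $0 \le [\operatorname{tr}(\Ab\Bb)]^2$, it suffices to note that $\operatorname{tr}(\Ab\Bb)$ is a real scalar, so its square is nonnegative; if one prefers the sharper fact $\operatorname{tr}(\Ab\Bb) \ge 0$, write $\Ab = \Ab^{1/2}\Ab^{1/2}$ and use cyclicity of the trace to get $\operatorname{tr}(\Ab\Bb) = \operatorname{tr}(\Ab^{1/2}\Bb\Ab^{1/2})$, which is the trace of a PSD matrix and hence $\ge 0$.

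For the middle inequality $[\operatorname{tr}(\Ab\Bb)]^2 \le \operatorname{tr}(\Ab^2)\operatorname{tr}(\Bb^2)$, I would invoke the Cauchy-Schwarz inequality for the Hilbert-Schmidt inner product $\langle X, Y\rangle := \operatorname{tr}(X^\top Y)$ on real square matrices, which gives $[\operatorname{tr}(X^\top Y)]^2 \le \operatorname{tr}(X^\top X)\,\operatorname{tr}(Y^\top Y)$. Taking $X = \Ab$, $Y = \Bb$ and using the symmetry of $\Ab$ and $\Bb$ to replace $\Ab^\top$ by $\Ab$ and $\Bb^\top$ by $\Bb$ yields exactly $[\operatorname{tr}(\Ab\Bb)]^2 \le \operatorname{tr}(\Ab^2)\operatorname{tr}(\Bb^2)$. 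Only symmetry is used here, not positivity.

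For the rightmost inequality I would diagonalize. Let $\lambda_1,\dots,\lambda_n \ge 0$ be the eigenvalues of $\Ab$, so $\operatorname{tr}(\Ab^2) = \sum_i \lambda_i^2$ while $[\operatorname{tr}(\Ab)]^2 = \big(\sum_i \lambda_i\big)^2 = \sum_i \lambda_i^2 + \sum_{i\neq j} \lambda_i\lambda_j \ge \operatorname{tr}(\Ab^2)$, since every cross term $\lambda_i\lambda_j$ is nonnegative; the same bound holds for $\Bb$, and multiplying the two inequalities — all quantities being nonnegative, so the direction is preserved — gives $\operatorname{tr}(\Ab^2)\operatorname{tr}(\Bb^2) \le [\operatorname{tr}(\Ab)]^2[\operatorname{tr}(\Bb)]^2$. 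Chaining the three steps proves the lemma. There is no genuine obstacle here: the only care needed is to invoke the correct structural property at each step — symmetry to turn $\operatorname{tr}(\Ab\Bb)$ into an honest inner product for Cauchy-Schwarz, and nonnegativity of the eigenvalues for the first and last bounds — and to keep both sides nonnegative when multiplying the two single-matrix inequalities in the final step.
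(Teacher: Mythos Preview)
Your proof is correct. The paper does not supply its own proof of this lemma; it is simply quoted as an auxiliary result from \citet{horn2012matrix}, so there is no in-paper argument to compare against, and your elementary derivation (Cauchy--Schwarz for the Hilbert--Schmidt inner product plus the eigenvalue inequality $\sum_i \lambda_i^2 \le (\sum_i \lambda_i)^2$ for nonnegative $\lambda_i$) is a standard way to establish it.
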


\begin{lemma} \citep[Lemma D.4]{jin2020provably} \label{lem:jin_D.4}
Let $\{s_i\}_{i=1}^{\infty}$ be a stochastic process on state space $\mathcal{S}$ with corresponding filtration $\{\mathcal{F}_i\}_{i=1}^{\infty}$. Let $\{\bphi_i\}_{i=1}^{\infty}$ be an $\mathbb{R}^d$-valued stochastic process where $\bphi_i \in \mathcal{F}_{i-1}$, and $\|\bphi_i\| \leq 1$. Let $\bLambda_k=\lambda \Ib+\sum_{i=1}^k \bphi_i \bphi_i^{\top}$. Then for any $\delta>0$, with probability at least $1-\delta$, for all $k \geq 0$, and any $V \in \mathcal{V}$ with $\sup _{s \in \mathcal{S}}|V(s)| \leq H$, we have
\begin{align*}
    \bigg\|\sum_{i=1}^k \bphi_i\{V(s_i)-\mathbb{E}[V(s_i) \mid \mathcal{F}_{i-1}]\}\bigg\|_{\bLambda_k^{-1}}^2 \leq 4 H^2\bigg[\frac{d}{2} \log \bigg(\frac{k+\lambda}{\lambda}\bigg)+\log \frac{\mathcal{N}_{\varepsilon}}{\delta}\bigg]+\frac{8 k^2 \varepsilon^2}{\lambda},
\end{align*}
where $\mathcal{N}_{\varepsilon}$ is the $\varepsilon$-covering number of $\mathcal{V}$ with respect to the distance $\operatorname{dist}(V, V^{\prime})=\sup _{s \in \mathcal{S}} |V(s)-$ $V^{\prime}(s)|$.
\end{lemma}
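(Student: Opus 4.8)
This is the classical ``self-normalized concentration plus $\varepsilon$-net'' argument of \citet{abbasi2011improved} and \citet{jin2020provably}, and the plan is to reproduce it in three steps. First I would fix a single $V \in \mathcal{V}$ and set $\eta_i = V(s_i) - \mathbb{E}[V(s_i)\mid\mathcal{F}_{i-1}]$. Since $\bphi_i$ is $\mathcal{F}_{i-1}$-measurable and $|V| \le H$ on $\mathcal{S}$, the sequence $\{\eta_i\}$ is a martingale difference sequence whose conditional range is at most $2H$, hence conditionally $H$-sub-Gaussian. Applying the time-uniform self-normalized tail bound (Theorem~1 of \citet{abbasi2011improved}) to $\sum_i \bphi_i\eta_i$ with regularizer $\lambda\Ib$ and noting $\bLambda_k = \lambda\Ib + \sum_{i=1}^k \bphi_i\bphi_i^\top$, one gets that with probability at least $1-\delta$, for all $k \ge 0$ simultaneously,
\[
\Big\|\sum_{i=1}^k \bphi_i\eta_i\Big\|_{\bLambda_k^{-1}}^2 \le 2H^2\log\frac{\det(\bLambda_k)^{1/2}\det(\lambda\Ib)^{-1/2}}{\delta} \le H^2\Big[d\log\frac{k+\lambda}{\lambda} + 2\log\frac{1}{\delta}\Big],
\]
where the last inequality uses $\|\bphi_i\| \le 1$, the AM--GM estimate $\det(\bLambda_k) \le (\lambda + k/d)^d$ (cf.\ \Cref{lem:Determinant-Trace}), and $\det(\lambda\Ib) = \lambda^d$.

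Second, I would take an $\varepsilon$-cover $\mathcal{V}_\varepsilon$ of $\mathcal{V}$ of cardinality $\mathcal{N}_\varepsilon$ with respect to $\operatorname{dist}(V,V') = \sup_{s}|V(s)-V'(s)|$, and union-bound the display above over the (finite) set $\mathcal{V}_\varepsilon$, replacing $\delta$ by $\delta/\mathcal{N}_\varepsilon$. This yields: with probability at least $1-\delta$, for all $k \ge 0$ and all $\tilde V \in \mathcal{V}_\varepsilon$, the corresponding self-normalized quantity is bounded by $H^2[d\log\frac{k+\lambda}{\lambda} + 2\log\frac{\mathcal{N}_\varepsilon}{\delta}]$. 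Third, for an arbitrary $V \in \mathcal{V}$ I would pick $\tilde V \in \mathcal{V}_\varepsilon$ with $\sup_s|V(s)-\tilde V(s)| \le \varepsilon$, write $\eta_i = \tilde\eta_i + (\eta_i - \tilde\eta_i)$ with $\tilde\eta_i = \tilde V(s_i) - \mathbb{E}[\tilde V(s_i)\mid\mathcal{F}_{i-1}]$, and use $(a+b)^2 \le 2a^2 + 2b^2$ to split the squared norm. The cover term is handled by Step~2, while the remainder satisfies $|\eta_i - \tilde\eta_i| \le 2\varepsilon$, so by the triangle inequality and $\|\bphi_i\|_{\bLambda_k^{-1}} \le \|\bphi_i\|/\sqrt{\lambda} \le 1/\sqrt{\lambda}$,
\[
\Big\|\sum_{i=1}^k \bphi_i(\eta_i - \tilde\eta_i)\Big\|_{\bLambda_k^{-1}}^2 \le \Big(2\varepsilon\sum_{i=1}^k \|\bphi_i\|_{\bLambda_k^{-1}}\Big)^2 \le \frac{4k^2\varepsilon^2}{\lambda}.
\]
Combining, $\big\|\sum_i \bphi_i\eta_i\big\|_{\bLambda_k^{-1}}^2 \le 2H^2[d\log\frac{k+\lambda}{\lambda} + 2\log\frac{\mathcal{N}_\varepsilon}{\delta}] + \frac{8k^2\varepsilon^2}{\lambda}$, which is exactly $4H^2[\frac{d}{2}\log\frac{k+\lambda}{\lambda} + \log\frac{\mathcal{N}_\varepsilon}{\delta}] + \frac{8k^2\varepsilon^2}{\lambda}$.

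The only genuinely delicate point is the uniformity over all $k$: a naive union bound over $k$ would diverge, but the self-normalized bound of \citet{abbasi2011improved} is already time-uniform via its stopping-time / method-of-mixtures construction, so once that black box is invoked nothing further is needed on that front. The rest of the argument is routine bookkeeping of constants, and the choice of $\varepsilon$ (and hence the value of $\mathcal{N}_\varepsilon$ for the concrete class $\mathcal{V}$) is deferred to the point of application, as in \Cref{lem:est_error} and \Cref{lem:est_error_PHE}.
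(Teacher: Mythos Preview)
Your proposal is correct and follows exactly the standard three-step argument (self-normalized concentration for a fixed $V$, union bound over an $\varepsilon$-cover, then approximation for general $V$) that constitutes the original proof in \citet{jin2020provably}. The paper under review does not give its own proof of this lemma---it is listed as an auxiliary result cited directly from \citet[Lemma~D.4]{jin2020provably}---so there is nothing further to compare.
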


\begin{lemma}\citep[Covering number of Euclidean ball]{vershynin2018high} \label{lem:vershynin2018high}
For any $\varepsilon>0$, $\mathcal{N}_{\varepsilon}$, the $\varepsilon$-covering number of the Euclidean ball of radius $B>0$ in $\mathbb{R}^d$ satisfies
\begin{align*}
    \mathcal{N}_{\varepsilon} \leq\bigg(1+\frac{2 B}{\varepsilon}\bigg)^d \leq\bigg(\frac{3 B}{\varepsilon}\bigg)^d.
\end{align*}
\end{lemma}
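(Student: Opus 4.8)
The plan is to prove the bound by a standard packing-plus-volume argument, controlling the covering number by the maximal cardinality of an $\varepsilon$-separated set. Let $\mathcal{B}=\mathcal{B}(\zero,B)$ denote the closed Euclidean ball of radius $B$ in $\mathbb{R}^d$. First I would fix a maximal $\varepsilon$-separated subset $\{\xb_1,\ldots,\xb_N\}\subseteq\mathcal{B}$, i.e. a set with $\|\xb_i-\xb_j\|>\varepsilon$ for all $i\neq j$ to which no further point of $\mathcal{B}$ can be appended without destroying the separation property; such a set exists by compactness of $\mathcal{B}$ (or Zorn's lemma in general). The key observation is that maximality forces this set to be an $\varepsilon$-net: for any $\xb\in\mathcal{B}$ there must exist some $\xb_i$ with $\|\xb-\xb_i\|\le\varepsilon$, for otherwise $\xb$ could be added to the separated set, contradicting maximality. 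Hence $\mathcal{N}_\varepsilon\le N$, and it suffices to upper bound $N$.

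To bound $N$, I would use a volume comparison. Since the centers are pairwise more than $\varepsilon$ apart, the open balls $\mathcal{B}(\xb_i,\varepsilon/2)$ are pairwise disjoint; and since each $\xb_i$ lies in $\mathcal{B}$, every such ball is contained in the enlarged ball $\mathcal{B}(\zero,B+\varepsilon/2)$. Comparing Lebesgue volumes and using the scaling $\mathrm{vol}(\mathcal{B}(\zero,r))=r^d\,\mathrm{vol}(\mathcal{B}(\zero,1))$ in $\mathbb{R}^d$ gives
\begin{align*}
N\,(\varepsilon/2)^d \;\le\; (B+\varepsilon/2)^d,
\end{align*}
from which $N\le\big(1+2B/\varepsilon\big)^d$ follows after cancelling the common factor, establishing the first inequality.

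For the second inequality I would simply note that in the regime $\varepsilon\le B$ one has $1\le B/\varepsilon$, so that $1+2B/\varepsilon\le 3B/\varepsilon$ and therefore $\mathcal{N}_\varepsilon\le(3B/\varepsilon)^d$; this is the only regime relevant to the applications of the lemma in the regret analysis, where the covering radius $\varepsilon$ is taken small. There is no substantial obstacle here, as the whole argument is elementary; the only points requiring a little care are the claim that a maximal $\varepsilon$-separated set is automatically an $\varepsilon$-net, and the bookkeeping in the volume step—in particular the fact that one must enlarge the containing ball to radius $B+\varepsilon/2$ rather than $B$ to accommodate the packing balls. Everything else is routine cancellation of the dimension-dependent volume constant $\mathrm{vol}(\mathcal{B}(\zero,1))$.
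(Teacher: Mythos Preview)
Your argument is correct and is precisely the standard packing-plus-volume proof. The paper itself does not prove this lemma at all---it is simply cited as a known result from \citet{vershynin2018high}, whose proof there is essentially the one you have written.
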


\begin{lemma}\label{lem:covering_num}
Let $\mathcal{V}$ denote a class of functions mapping from $\mathcal{S}$ to $\mathbb{R}$ with the following parametric form
\begin{align*}
    V(\cdot)=\max _{a \in \mathcal{A}}\Big\{\min \Big\{\max_{n \in [N]} \bphi(\cdot, a)^{\top} \wb^n, H-h+1\Big\}^{+}\Big\},
\end{align*}
where the parameter $\wb^n$ satisifies $\|\wb^n\| \leq B$ for all $n \in [N]$ and for all $(x, a) \in \mathcal{S} \times \mathcal{A}$, we have $\|\bphi(x, a)\| \leq 1$. Let $N_{\mathcal{V}, \varepsilon}$ be the $\varepsilon$-covering number of $\mathcal{V}$ with respect to the distance dist $(V, V^{\prime})=\sup_{s \in \mathcal{S}}\big|V(s)-V^{\prime}(s)\big|$. Then
\begin{align*}
    \mathcal{N}_{\mathcal{V}, \varepsilon} \leq \bigg(\frac{3 B}{\varepsilon}\bigg)^d.
\end{align*}
\end{lemma}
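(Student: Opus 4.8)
The plan is to adapt the standard covering-number argument for truncated linear value-function classes \citep[Lemma D.6]{jin2020provably}, \citep[Lemma D.4]{ishfaq2021randomized} to the multi-sampled class $\mathcal{V}$: reduce the sup-distance between two members of $\mathcal{V}$ to a Euclidean distance between their parameter tuples, and then count a net of the parameter space via \Cref{lem:vershynin2018high}.

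First I would take two functions $V,V'\in\mathcal{V}$ with underlying parameters $\{\wb^n\}_{n\in[N]}$ and $\{{\wb'}^n\}_{n\in[N]}$, and establish the pointwise bound
\begin{align*}
    \big|V(s)-V'(s)\big|\ \le\ \max_{n\in[N]}\big\|\wb^n-{\wb'}^n\big\|\qquad\text{for all }s\in\mathcal{S}.
\end{align*}
This follows by peeling the three nested operations defining $\mathcal{V}$, each of which is nonexpansive: $\big|\max_{a}f(a)-\max_{a}g(a)\big|\le\sup_a|f(a)-g(a)|$ handles the outer $\max_{a\in\mathcal{A}}$; $\big|\min\{x,c\}^{+}-\min\{y,c\}^{+}\big|\le|x-y|$ handles the truncation $\min\{\cdot,H-h+1\}^{+}$; $\big|\max_{n}x_n-\max_{n}y_n\big|\le\max_n|x_n-y_n|$ handles the inner $\max_{n\in[N]}$; and $\big|\bphi(s,a)^{\top}\wb^n-\bphi(s,a)^{\top}{\wb'}^n\big|\le\|\bphi(s,a)\|\,\|\wb^n-{\wb'}^n\|\le\|\wb^n-{\wb'}^n\|$ since $\|\bphi(s,a)\|\le1$. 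Chaining these inequalities in order gives $\operatorname{dist}(V,V')=\sup_{s\in\mathcal{S}}|V(s)-V'(s)|\le\max_{n\in[N]}\|\wb^n-{\wb'}^n\|$.

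Given this reduction, it suffices to $\varepsilon$-cover the parameter set $\{(\wb^1,\dots,\wb^N):\|\wb^n\|\le B\ \text{for all }n\}$ under the metric $\max_n\|\wb^n-{\wb'}^n\|$: replacing each $\wb^n$ by its nearest net point moves $V$ by at most $\varepsilon$ in $\operatorname{dist}$. By \Cref{lem:vershynin2018high}, the Euclidean ball of radius $B$ in $\RR^d$ has an $\varepsilon$-net of size at most $(3B/\varepsilon)^{d}$; using such a net for each of the parameter blocks $\wb^1,\dots,\wb^N$ and taking their product produces the desired $\varepsilon$-net of $\mathcal{V}$, which yields the stated bound on $\mathcal{N}_{\mathcal{V},\varepsilon}$.

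I do not expect a substantive obstacle: the one place requiring care is the first step, where the inner $\max_{n\in[N]}$ over the multi-sampled parameters---absent from the single-estimator classes of \citep{jin2020provably,ishfaq2024provable}---must be handled by the max-nonexpansiveness inequality before passing to the Euclidean bound, and the truncation must be recognized as a composition of two $1$-Lipschitz maps so that it does not inflate the distance. Everything after the contraction chain is a direct invocation of \Cref{lem:vershynin2018high}.
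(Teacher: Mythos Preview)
Your approach is exactly the paper's: reduce $\operatorname{dist}(V,V')$ to $\max_{n}\|\wb^n-{\wb'}^n\|$ by peeling the nonexpansive layers $\max_a$, the truncation, and $\max_n$, then invoke \Cref{lem:vershynin2018high} to cover the parameter ball. The paper's chain of inequalities is a terser version of your contraction argument, and its final step quotes the same Euclidean-ball covering bound.

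There is, however, one quantitative gap in your last sentence that the paper's proof also shares. You write that using the ball-net ``for each of the parameter blocks $\wb^1,\dots,\wb^N$ and taking their product \ldots\ yields the stated bound.'' But the product net has cardinality $\big((3B/\varepsilon)^d\big)^N=(3B/\varepsilon)^{Nd}$, not $(3B/\varepsilon)^d$. The paper makes the identical jump: after showing that each $V_1$ is $\varepsilon$-close to some $V_2$ whose $N$ parameters each lie in the single-ball net, it concludes ``$\mathcal{N}_{\mathcal{V},\varepsilon}\le\mathcal{N}_{\wb,\varepsilon}$'' with no justification for why the exponent drops from $Nd$ to $d$. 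Since $V$ genuinely depends on all $N$ parameter blocks through $\max_n\bphi^\top\wb^n$, nothing in either argument collapses the product; the construction as written only proves $\mathcal{N}_{\mathcal{V},\varepsilon}\le(3B/\varepsilon)^{Nd}$. So your plan faithfully reproduces the paper's proof, including this unexplained step---just be aware that the argument you sketch does not by itself deliver the exponent $d$ in the statement.
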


\begin{proof}
Consider any two functions $V_1, V_2 \in \mathcal{V}$ with parameters $\{\wb_1^{n}\}_{n \in [N]}$ and $\{\wb_2^{n}\}_{n \in [N]}$,  respectively. Then we have
\begin{align*}
    \operatorname{dist}(V_1, V_2) & \leq \sup _{s, a}\Big|\max_{n \in [N]} \bphi(s, a)^{\top} \wb_1^{n}-\max_{n \in [N]} \bphi(s, a)^{\top} \wb_2^{n}\Big| \notag \\
    & \leq \sup _{s, a}\Big|\max_{n \in [N]} \Big(\bphi(s, a)^{\top} \wb_1^{n}- \bphi(s, a)^{\top} \wb_2^{n}\Big)\Big| \notag \\
    & \leq \sup _{\|\bphi\| \leq 1} \max_{n \in [N]} \big|\bphi^{\top} \wb_1^{n}-\bphi^{\top} \wb_2^{n}\big| \notag\\
    & = \max_{n \in [N]} \sup _{\|\bphi\| \leq 1}\big|\bphi^{\top}(\wb_1^{n}-\wb_2^{n})\big| \\
    & \leq \max_{n \in [N]} \sup _{\|\bphi\| \leq 1}\|\bphi\| \big\|\wb_1^{n}-\wb_2^{n}\big\| \notag\\
    & \leq \max_{n \in [N]} \big\|\wb_1^{n}-\wb_2^{n}\big\|.
\end{align*}
Let $\mathcal{N}_{\wb, \varepsilon}$ denote the $\varepsilon$-covering number of $\{\wb \in \mathbb{R}^d \mid\|\wb\| \leq B\}$. Then, \Cref{lem:vershynin2018high} implies
\begin{align*}
    \mathcal{N}_{\wb, \varepsilon} \leq\bigg(1+\frac{2 B}{\varepsilon}\bigg)^d \leq\bigg(\frac{3 B}{\varepsilon}\bigg)^d .
\end{align*}
For any $V_1 \in \mathcal{V}$, we consider its corresponding parameters $\{\wb_1^{n}\}_{n \in [N]}$. For any $n \in [N]$, we can find $\wb_2^n$ such that $\|\wb_1^{n}-\wb_2^{n}\| \leq \varepsilon$, then we can get $V_2 \in \mathcal{V}$ with parameters $\{\wb_2^{n}\}_{n \in [N]}$. Then we have $\operatorname{dist}(V_1, V_2) \leq \max_{n \in [N]} \|\wb_1^{n}-\wb_2^{n}\| \leq \varepsilon$. Thus, we have,
\begin{align*}
    \mathcal{N}_{\mathcal{V}, \varepsilon} \leq \mathcal{N}_{\wb, \varepsilon} \leq \bigg(1+\frac{2 B}{\varepsilon}\bigg)^d \leq \bigg(\frac{3 B}{\varepsilon}\bigg)^d.
\end{align*}
This completes the proof.
\end{proof}

\begin{lemma}\citep{abramowitz1968handbook} \label{lem:gaussian_concentration}
Suppose $Z$ is a Gaussian random variable $Z \sim$ $\mathcal{N}(\mu, \sigma^2)$, where $\sigma>0$. For $0 \leq z \leq 1$, we have
\begin{align*}
    \mathbb{P}(Z>\mu+z \sigma) \geq \frac{1}{\sqrt{8 \pi}} e^{\frac{-z^2}{2}}, \quad \mathbb{P}(Z<\mu-z \sigma) \geq \frac{1}{\sqrt{8 \pi}} e^{\frac{-z^2}{2}}.
\end{align*}
And for $z \geq 1$, we have
\begin{align*}
    \frac{e^{-z^2 / 2}}{2 z \sqrt{\pi}} \leq \mathbb{P}(|Z-\mu|>z \sigma) \leq \frac{e^{-z^2 / 2}}{z \sqrt{\pi}}.
\end{align*}
\end{lemma}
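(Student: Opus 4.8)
The plan is to peel off the affine transformation and then reduce the statement to two classical facts about the tail of the standard normal. Writing $Z = \mu + \sigma X$ with $X \sim \mathcal{N}(0,1)$ and letting $\overline{\Phi}(z)$ denote $\mathbb{P}(X > z)$, a change of variables gives $\mathbb{P}(Z > \mu + z\sigma) = \overline{\Phi}(z)$, $\mathbb{P}(Z < \mu - z\sigma) = \mathbb{P}(X < -z) = \overline{\Phi}(z)$ by symmetry of $X$, and $\mathbb{P}(|Z - \mu| > z\sigma) = 2\overline{\Phi}(z)$. Hence it suffices to prove (i) $\overline{\Phi}(z) \ge \tfrac12 p(z)$ for $0 \le z \le 1$ and (ii) a two-sided bound $\tfrac{1}{2z\sqrt{\pi}} e^{-z^2/2} \le 2\overline{\Phi}(z) \le \tfrac{1}{z\sqrt{\pi}} e^{-z^2/2}$ for $z \ge 1$, where $p(t) = (2\pi)^{-1/2} e^{-t^2/2}$ is the standard normal density, so that $\tfrac12 p(z) = (8\pi)^{-1/2} e^{-z^2/2}$.

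For part (i) I would argue by monotonicity. Set $g(z) = \overline{\Phi}(z) - \tfrac12 p(z)$; using $\overline{\Phi}'(z) = -p(z)$ and $p'(z) = -z\,p(z)$ one gets $g'(z) = p(z)\big(\tfrac{z}{2} - 1\big)$, which is nonpositive on $[0,1]$, so $g$ is nonincreasing there and $g(z) \ge g(1)$. It then remains to check $g(1) \ge 0$, i.e. $\overline{\Phi}(1) \ge \tfrac12 p(1)$, which is the $z = 1$ instance of the elementary lower Mills bound $\overline{\Phi}(z) \ge \tfrac{z}{1+z^2}\, p(z)$. I would establish that bound by showing $h(z) = \overline{\Phi}(z) - \tfrac{z}{1+z^2}\, p(z)$ has $h(+\infty) = 0$ and $h'(z) = -\tfrac{2}{(1+z^2)^2}\, p(z) \le 0$ (a one-line computation using $\tfrac{d}{dz}\tfrac{z}{1+z^2} = \tfrac{1-z^2}{(1+z^2)^2}$), so $h \ge 0$ everywhere. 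The left-tail inequality for $\mathbb{P}(Z < \mu - z\sigma)$ then follows verbatim from the symmetry $X \overset{d}{=} -X$.

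For part (ii), the upper estimate comes from the comparison $\overline{\Phi}(z) = \int_z^{\infty} p(t)\,dt \le \int_z^{\infty} \tfrac{t}{z}\, p(t)\,dt = \tfrac{p(z)}{z}$, valid since $t/z \ge 1$ on $[z,\infty)$, while the matching lower estimate is again $\overline{\Phi}(z) \ge \tfrac{z}{1+z^2}\, p(z)$ from the previous paragraph, in which one uses $\tfrac{z}{1+z^2} \ge \tfrac{1}{2z}$ for $z \ge 1$; doubling yields bounds of the claimed form $c\, e^{-z^2/2}/z$. The only genuinely non-mechanical step — and the one I expect to be the main obstacle — is sharpening the absolute constants so that the two sides land precisely on $\tfrac{1}{z\sqrt{\pi}}$ and $\tfrac{1}{2z\sqrt{\pi}}$ uniformly in $z \ge 1$ (and, similarly, verifying $g(1) \ge 0$ with the stated constant $(8\pi)^{-1/2}$ in part (i)); this constant bookkeeping is precisely where the regime restrictions $z \le 1$ and $z \ge 1$ get used, and for brevity I would cite \citet{abramowitz1968handbook} for it rather than reproduce the numerics.
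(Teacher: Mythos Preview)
The paper does not prove this lemma; it is simply cited from \citet{abramowitz1968handbook} and lives in the auxiliary-lemma appendix. Your reduction to the standard normal via $Z=\mu+\sigma X$ and the Mills-ratio bounds is exactly the standard route, and for part (i) it delivers the stated constant on the nose: your monotonicity argument reduces to checking $\overline{\Phi}(1)\ge \tfrac12 p(1)$, and the lower Mills bound $\overline{\Phi}(z)\ge \tfrac{z}{1+z^2}p(z)$ at $z=1$ gives precisely that, so no ``constant bookkeeping'' is needed there. Likewise, for the lower inequality in part (ii) your bound $2\overline{\Phi}(z)\ge \tfrac{2z}{1+z^2}p(z)\ge \tfrac{1}{z}p(z)=\tfrac{1}{z\sqrt{2\pi}}e^{-z^2/2}$ is actually \emph{stronger} than the stated $\tfrac{1}{2z\sqrt{\pi}}e^{-z^2/2}$, so that side is done too.

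Your instinct that the upper bound in part (ii) is the real obstacle is correct, and in fact the obstacle is insurmountable: the inequality $2\overline{\Phi}(z)\le e^{-z^2/2}/(z\sqrt{\pi})$ is false for all $z>1$. Asymptotically $2\overline{\Phi}(z)\sim 2p(z)/z=\sqrt{2}\,e^{-z^2/2}/(z\sqrt{\pi})$, which exceeds the claimed bound by a factor $\sqrt{2}$, and already at $z=2$ one has $2\overline{\Phi}(2)\approx 0.0455$ while $e^{-2}/(2\sqrt{\pi})\approx 0.0382$. The sharp version is the one your Mills argument yields, namely $2\overline{\Phi}(z)\le \sqrt{2}\,e^{-z^2/2}/(z\sqrt{\pi})=e^{-z^2/2}/(z\sqrt{\pi/2})$. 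This appears to be a transcription slip that has propagated through the Thompson-sampling literature; fortunately the paper's proofs only ever invoke the anti-concentration inequality from the first display (in the optimism lemmas), so the error is inconsequential for the regret analysis. You should state and prove the correct upper bound rather than defer to the citation.
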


\begin{lemma}\citep[Lemma D.2]{ishfaq2021randomized} \label{lem:good_event_probability_PHE}
Consider a $d$-dimensional multivariate normal distribution $\mathcal{N}\big(\zero, A \bLambda^{-1}\big)$ where $A$ is a scalar. Let $\boldsymbol{\eta}_1, \boldsymbol{\eta}_2, \ldots, \boldsymbol{\eta}_N$ be $N$ independent samples from the distribution. Then for any $\delta>0$
\begin{align*}
   \mathbb{P}\left(\max _{j \in[M]}\left\|\boldsymbol{\eta}_j\right\|_{\bLambda} \leq c \sqrt{d A \log (d / \delta)}\right) \geq 1-M \delta, 
\end{align*}
where $c$ is some absolute constant.
\end{lemma}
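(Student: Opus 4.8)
The plan is to reduce the quadratic form $\norm{\boldsymbol{\eta}_j}_{\bLambda}^2$ to a concentration bound for the squared norm of a standard Gaussian vector, and then apply a union bound over the samples. First I would whiten each sample: since $A\bLambda^{-1}=(\sqrt{A}\,\bLambda^{-1/2})(\sqrt{A}\,\bLambda^{-1/2})^{\top}$, we may write $\boldsymbol{\eta}_j=\sqrt{A}\,\bLambda^{-1/2}\mathbf{z}_j$ with $\mathbf{z}_1,\dots,\mathbf{z}_N\overset{\text{i.i.d.}}{\sim}\mathcal{N}(\zero,\Ib)$. Then $\norm{\boldsymbol{\eta}_j}_{\bLambda}^2=\boldsymbol{\eta}_j^{\top}\bLambda\boldsymbol{\eta}_j=A\,\mathbf{z}_j^{\top}\bLambda^{-1/2}\bLambda\bLambda^{-1/2}\mathbf{z}_j=A\norm{\mathbf{z}_j}^2$, so it suffices to control $\max_{j\in[N]}\norm{\mathbf{z}_j}$.

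Next I would bound a single $\norm{\mathbf{z}_j}$. Writing $\mathbf{z}_j=(z_{j,1},\dots,z_{j,d})$ with $z_{j,i}\sim\mathcal{N}(0,1)$, the Gaussian tail bound in \Cref{lem:gaussian_concentration} gives $\mathbb{P}(|z_{j,i}|>s)\le e^{-s^2/2}$ for $s\ge 1$, so a union bound over the $d$ coordinates yields $\mathbb{P}(\max_{i\in[d]}|z_{j,i}|>s)\le d\,e^{-s^2/2}$. Choosing $s=\sqrt{2\log(d/\delta)}$ — which is at least $1$ once $\delta$ is below an absolute constant, and in the complementary regime the target inequality becomes vacuous after enlarging $c$ — shows that with probability at least $1-\delta$ we have $\norm{\mathbf{z}_j}^2=\sum_{i=1}^{d}z_{j,i}^2\le 2d\log(d/\delta)$, hence $\norm{\boldsymbol{\eta}_j}_{\bLambda}=\sqrt{A}\,\norm{\mathbf{z}_j}\le c\sqrt{dA\log(d/\delta)}$ for an absolute constant $c$. (Alternatively one could invoke a standard Laurent--Massart-type concentration inequality for chi-squared variables; the coordinate-wise bound is used here only to keep the argument self-contained, since no such inequality is quoted in \Cref{sec:aux_lemma}.)

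Finally I would apply the single-sample estimate with the same $\delta$ to each of the $N$ independent draws and take a union bound over $j\in[N]$, obtaining
\begin{align*}
    \mathbb{P}\Big(\max_{j\in[N]}\norm{\boldsymbol{\eta}_j}_{\bLambda}\le c\sqrt{dA\log(d/\delta)}\Big)\ge 1-N\delta,
\end{align*}
which is the claim (up to the harmless discrepancy between the number of samples $N$ and the index range $M$ in the statement). I do not anticipate any genuine difficulty: the whole argument is standard Gaussian concentration followed by a union bound, and the only mildly delicate point — fixing the absolute constant $c$ and dealing with the non-small-$\delta$ regime — is absorbed into $c$.
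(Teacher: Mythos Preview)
Your argument is correct: the whitening reduction to $\norm{\boldsymbol{\eta}_j}_{\bLambda}^2=A\norm{\mathbf{z}_j}^2$ followed by a coordinate-wise Gaussian tail bound and two union bounds is a valid and self-contained proof, and you correctly flag the $N$ versus $M$ typo in the statement. Note, however, that the paper does not actually prove this lemma --- it is quoted verbatim as an auxiliary result from \citep[Lemma~D.2]{ishfaq2021randomized} in \Cref{sec:aux_lemma}, so there is no in-paper proof to compare against; the paper simply invokes the single-sample bound (essentially your step~2--3) directly in the proof of \Cref{lem:good_events_PHE}.
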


\begin{lemma}\citep[Lemma 12]{abbasi2011improved} 
\label{lem12_abbasi}
Let $\Ab$, $\Bb$ and $\Cb$ be positive semi-definite matrices such that $\Ab=\Bb+\Cb$. Then we have that
\begin{align*}
    \sup _{\xb \neq 0} \frac{\xb^{\top} \Ab \xb}{\xb^{\top} \Bb \xb} \leq \frac{\operatorname{det}(\Ab)}{\operatorname{det}(\Bb)} .   
\end{align*}
\end{lemma}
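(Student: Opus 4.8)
The plan is to reduce this matrix inequality to a statement about the eigenvalues of a single symmetric matrix. In every application in the paper $\Bb$ arises as a regularized covariance matrix of the form $\lambda\Ib + (\text{PSD})$, so I will assume $\Bb \succ 0$, which is exactly what makes the left-hand supremum finite. First I would observe that $\xb \mapsto \Bb^{1/2}\xb$ is a bijection of $\mathbb{R}^d \setminus \{\zero\}$, so that
\begin{align*}
  \sup_{\xb \neq \zero} \frac{\xb^{\top} \Ab \xb}{\xb^{\top} \Bb \xb}
  = \sup_{v \neq \zero} \frac{v^{\top} \Bb^{-1/2} \Ab \Bb^{-1/2} v}{v^{\top} v}
  = \lambda_{\max}\big(\Bb^{-1/2}\Ab\Bb^{-1/2}\big),
\end{align*}
the last equality being the variational (Rayleigh-quotient) characterization of the largest eigenvalue of a symmetric matrix, with the supremum attained on the unit sphere by compactness.

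Next I would invoke the hypothesis $\Ab = \Bb + \Cb$ with $\Cb \succeq 0$ to write $\Bb^{-1/2}\Ab\Bb^{-1/2} = \Ib + \Mb$, where $\Mb \defeq \Bb^{-1/2}\Cb\Bb^{-1/2} \succeq 0$. Letting $\mu_1 \geq \mu_2 \geq \cdots \geq \mu_d \geq 0$ denote the eigenvalues of $\Mb$, we have $\lambda_{\max}(\Ib+\Mb) = 1 + \mu_1$, and since every $\mu_i \geq 0$ the elementary inequality
\begin{align*}
  1 + \mu_1 \leq \prod_{i=1}^d (1 + \mu_i)
\end{align*}
holds (expanding the product, every omitted term is nonnegative). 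Finally $\prod_{i=1}^d(1+\mu_i) = \operatorname{det}(\Ib + \Mb) = \operatorname{det}\big(\Bb^{-1/2}\Ab\Bb^{-1/2}\big) = \operatorname{det}(\Ab)/\operatorname{det}(\Bb)$ by multiplicativity of the determinant, which chains the three displays into the claimed bound.

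There is no genuine obstacle here; the only points needing care are (i) verifying $\Bb \succ 0$ so that $\Bb^{\pm 1/2}$ are well defined and the quotient is bounded — immediate in the paper's usage — and (ii) the combinatorial step $1+\mu_1 \leq \prod_i(1+\mu_i)$, which uses nonnegativity of the $\mu_i$ and hence the hypothesis $\Cb \succeq 0$. If one prefers to avoid matrix square roots, the identical argument runs with the generalized eigenvalues of the pencil $(\Ab,\Bb)$ in place of the eigenvalues of $\Bb^{-1/2}\Ab\Bb^{-1/2}$.
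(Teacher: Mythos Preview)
Your argument is correct. The paper itself does not supply a proof of this lemma: it is listed among the auxiliary lemmas in \Cref{sec:aux_lemma} and attributed directly to \citet[Lemma~12]{abbasi2011improved} without reproduction of the proof. Your Rayleigh-quotient reduction to $\lambda_{\max}(\Ib+\Mb)=1+\mu_1\le\prod_i(1+\mu_i)=\operatorname{det}(\Ab)/\operatorname{det}(\Bb)$ is precisely the standard argument given in that reference, so there is nothing to compare beyond noting that you have recovered the cited proof verbatim in spirit; your explicit caveat that $\Bb\succ 0$ is needed (and holds in every use in the paper, where $\Bb$ is a regularized Gram matrix) is the only point the original leaves implicit.
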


\section{Additional Experimental Details} 
\label{sec:add_exp_details}
We conduct comprehensive experiments investigating the exploration strategies for DQN under a multi-agent setting. For all the $Q$ networks in our experiments, we use ReLU as our activation function. Given that all experiments are conducted under multi-agent settings unless explicitly specified as a single-agent or centralized scenario, we denote our methods: CoopTS-PHE as "PHE" and CoopTS-LMC as "LMC" in experimental contexts and figures. In addition to our methods, the baselines we selected are either commonly used (DQN~\citep{dqn}, DDQN~\citep{doubledqn}) or with competitive empirical performance (Bootstrapped DQN~\citep{bootstrappeddqn}, NoisyNet DQN~\citep{noisynet}). Both Bootstrapped DQN and NoisyNet DQN are randomized exploration methods. Bootstrapped DQN uses finite ensembles to generate the randomized value functions and views them as approximate posterior samples of $Q$-value functions. NoisyNet DQN injects noise into the parameters of neural networks to aid efficient exploration. For those figures which aim to compare among different $m$ agents within a single plot, we use \textbf{Total Episodes} to indicate the total number of training samples for a direct comparison. Note that the shaded areas on all figures represent the standard deviation.

\begin{figure}[t]
    \centering
    \includegraphics[width = 0.8\textwidth]{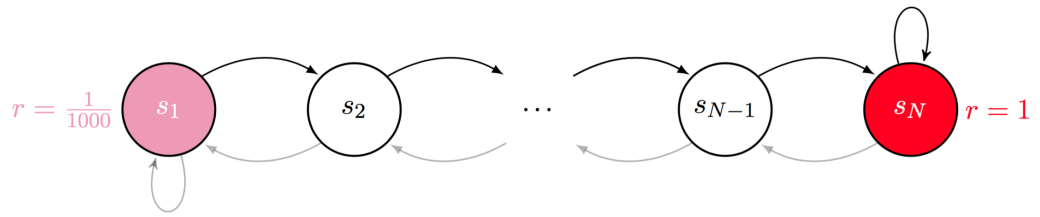}
    \caption{The N-Chain environment~\citep{bootstrappeddqn}.
    }
    \label{fig:nChain_env}
\end{figure}

\begin{table}[htbp]
\centering
\caption{The swept hyper-parameters in N-Chain for PHE}
\label{table:hyper-chain-phe} 
\begin{tabular}{ p{3.5cm} p{8cm}  }
  \toprule
    Hyper-parameter           & Values                 \\ 
  \midrule
    Learning Rate $\eta_k$ &  $\{10^{-1}, 3\times 10^{-2}, 10^{-2}, 3\times 10^{-3}, 10^{-3}, 3\times 10^{-4}, 10^{-4}\}$                  \\
    N\underline {o} Target Networks         & $\{1, 2, 4, 8\}$                    \\
    Reward Noise     & $\{0, 10^{-4} 10^{-3}, 10^{-2}, 10^{-1}, 1.0\}$                        \\
    Regularization Noise        & $\{0, 10^{-4} 10^{-3}, 10^{-2}, 10^{-1}, 1.0\}$                           \\
    
  \bottomrule
\end{tabular} 
\end{table}

\begin{table}[htbp]
\centering
\caption{The swept hyper-parameters in N-Chain for LMC}
\label{table:hyper-chain-lmc}  
\begin{tabular}{ p{3.6cm} p{8cm}  }
  \toprule
    Hyper-parameter           & Values                 \\ 
  \midrule
    Learning Rate $\eta_k$ &  $\{10^{-1}, 3\times 10^{-2}, 10^{-2}, 3\times 10^{-3}, 10^{-3}, 3\times 10^{-4}, 10^{-4}\}$                  \\
    Bias Factor $\alpha$        & $\{1.0, 0.1, 0.01\}$                    \\
    Inverse Temperature $\beta_{m,k}$    & $\{10^0, 10^2, 10^4, 10^6, 10^8\}$                        \\
    N\underline {o} Update $J_k$       & $\{1, 4, 16, 32\}$                         \\
    
  \bottomrule
\end{tabular} 
\end{table}

\begin{table}[htbp]
\centering
\caption{Hyper-parameters used in the N-chain}
\label{table:hyper-chain} 
\resizebox{\textwidth}{!}{%
\begin{tabular}{p{3cm} p{1.4cm} p{1.4cm} p{1.4cm} p{1.5cm}p{1.4cm} p{1.4cm}}
  \toprule
    Hyper-parameter           & PHE   & LMC  & DQN & Bootstrapped DQN   & Noisy DQN & DDQN            \\ 
  \midrule

    Discount Factor $\lambda$       & 0.99  & 0.99      & 0.99  & 0.99   & 0.99  & 0.99            \\

    Learning Rate $\eta_k$      & $3\times 10^{-2}$  & $10^{-4}$   & $3\times 10^{-2}$    & $3\times 10^{-2}$     & $3\times 10^{-2}$    & $3\times 10^{-2}$             \\
    Hidden Activation      & Relu     & Relu    & Relu     & Relu  & Relu     & Relu         \\
    Output Activation      & Linear   & Linear    & Linear   & Linear     & Linear   & Linear         \\
   N\underline {o} Update $J_k$       & 1   & 4    & 1   & 1    & 1   & 1          \\
   N\underline {o} Target Networks & 2   & 1    & 1   & 4    & 1   & 1          \\
   Batch Size & 32   & 32    & 32   & 32    & 32   & 32          \\
    NN size &  $32 \times 32$  &  $32 \times 32$  &  $32 \times 32$     &  $32 \times 32$   &  $32 \times 32$     &  $32 \times 32$              \\
    
  \bottomrule
\end{tabular} }
\end{table}

\begin{figure*}[t]
    \centering
    
     \subfigure[m=2]{
         \includegraphics[width=0.3\linewidth]{figures/n_chain25_average2_new.pdf}%
         
     }
     \subfigure[m=3]{
         \includegraphics[width=0.3\linewidth]{figures/n_chain25_average3_new.pdf}%
        
     }
     \subfigure[m=4]{
         \includegraphics[width=0.3\linewidth]{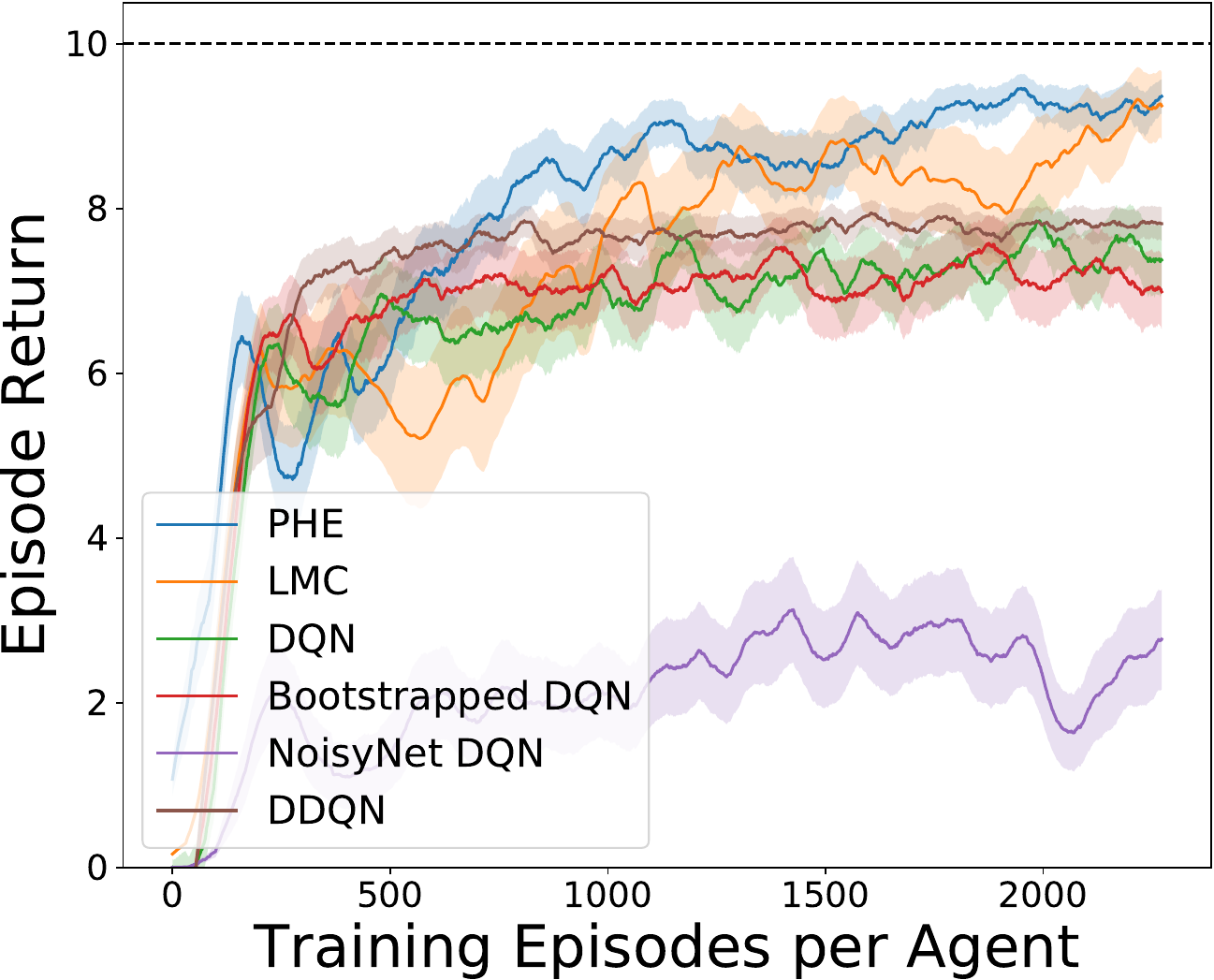}%
         
     }
    \caption{Comparison among different exploration strategies in $N$-chain with $N=25$. All results are averaged over 10 runs.}
    \label{fig:n25_all_appendix}
\end{figure*}
\begin{figure}
     \centering
    
     \subfigure[PHE]{
         \includegraphics[width = 0.45\textwidth]{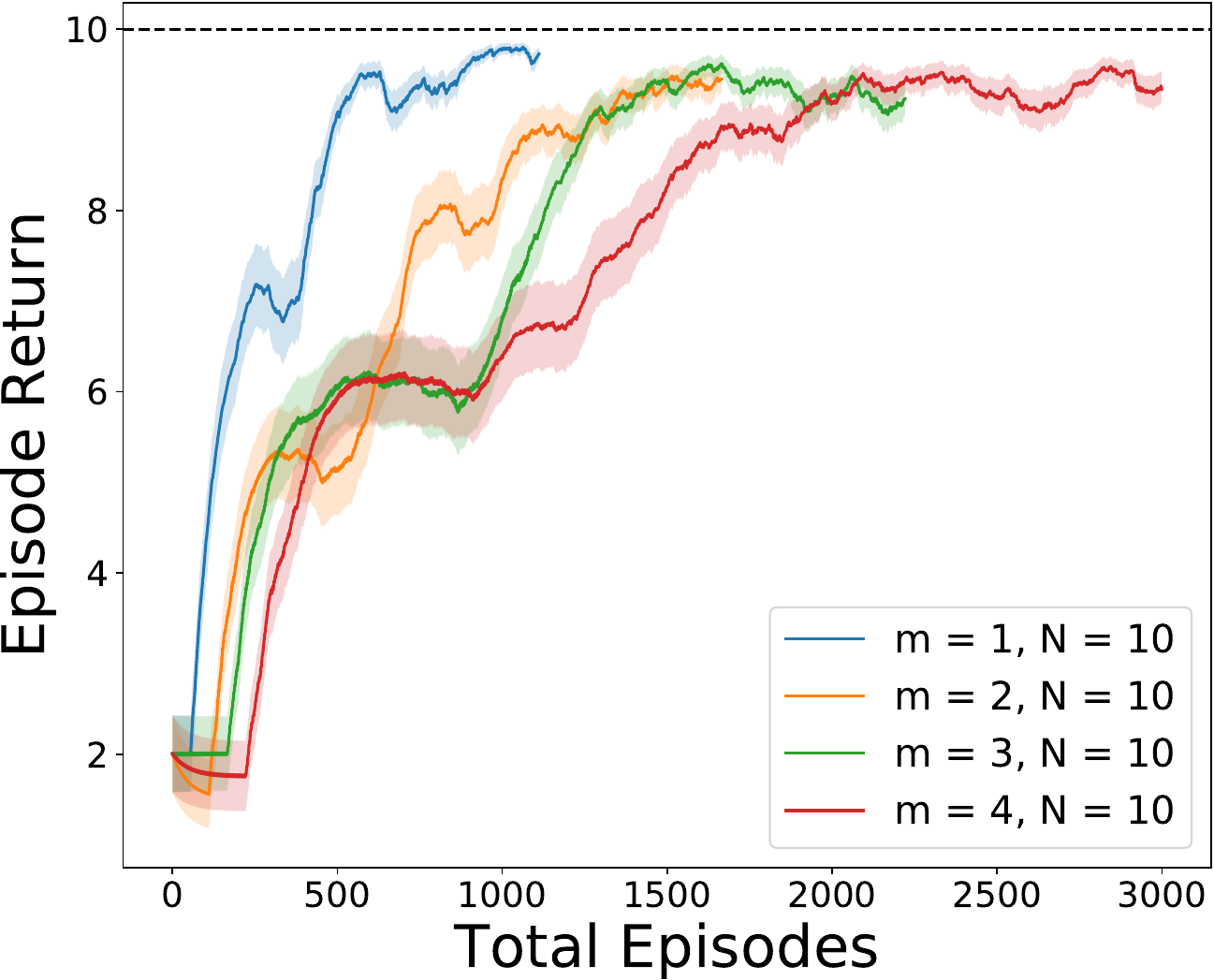} \label{fig:no_communication_n10_PHE}
     }
      \subfigure[LMC]{
         \includegraphics[width = 0.45\textwidth]{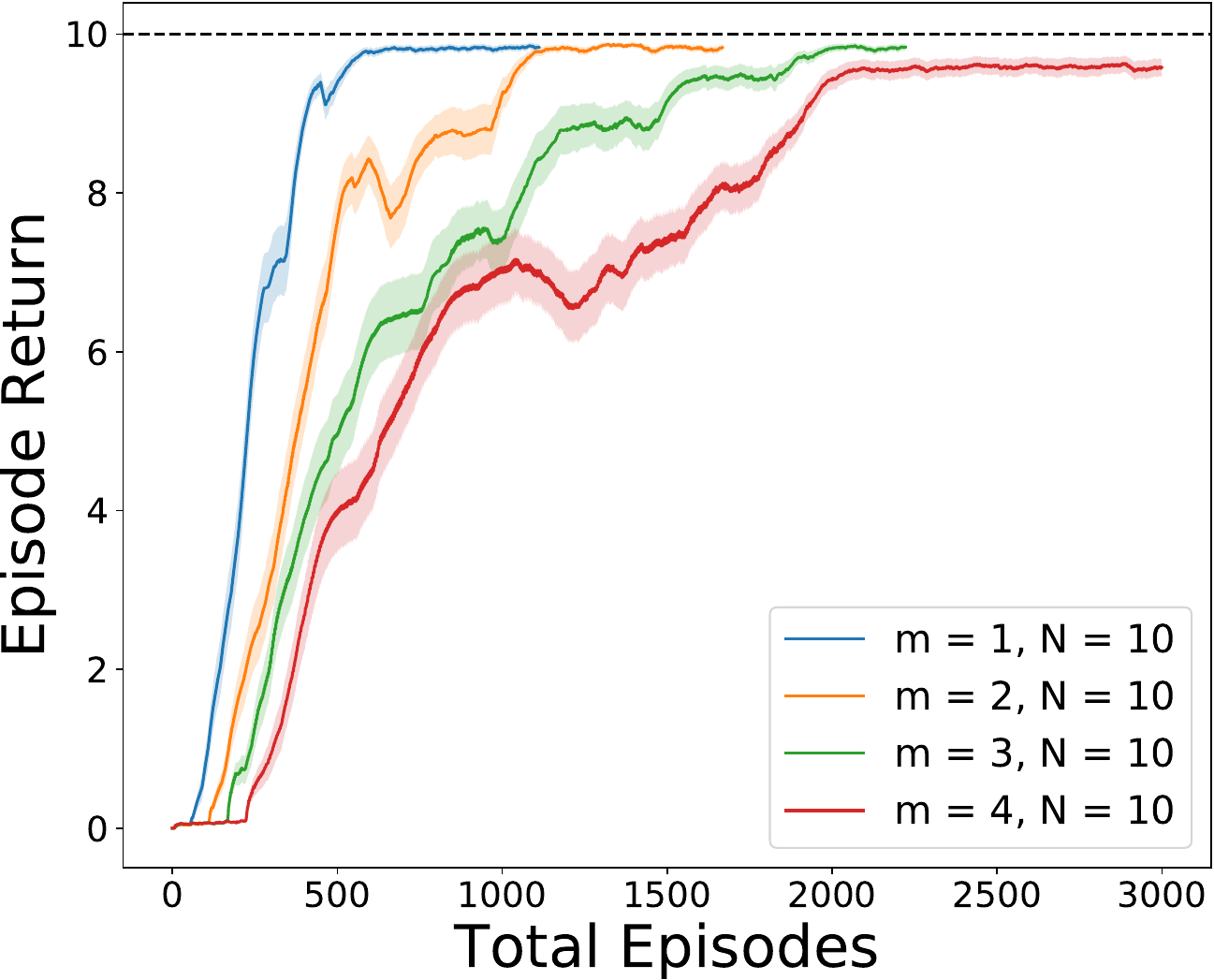} \label{fig:no_communication_n10_LMC}
     }

        \caption{Rewards with averaged over 10 independent runs for different numbers of agents among algorithms without communication. Note that when $m = 1$, one agent indicates a centralized setting.}
        \label{fig:no_communication}
\end{figure}

\begin{figure}
     \centering
    
     \subfigure[m=2]{
         \includegraphics[width=0.3\linewidth]{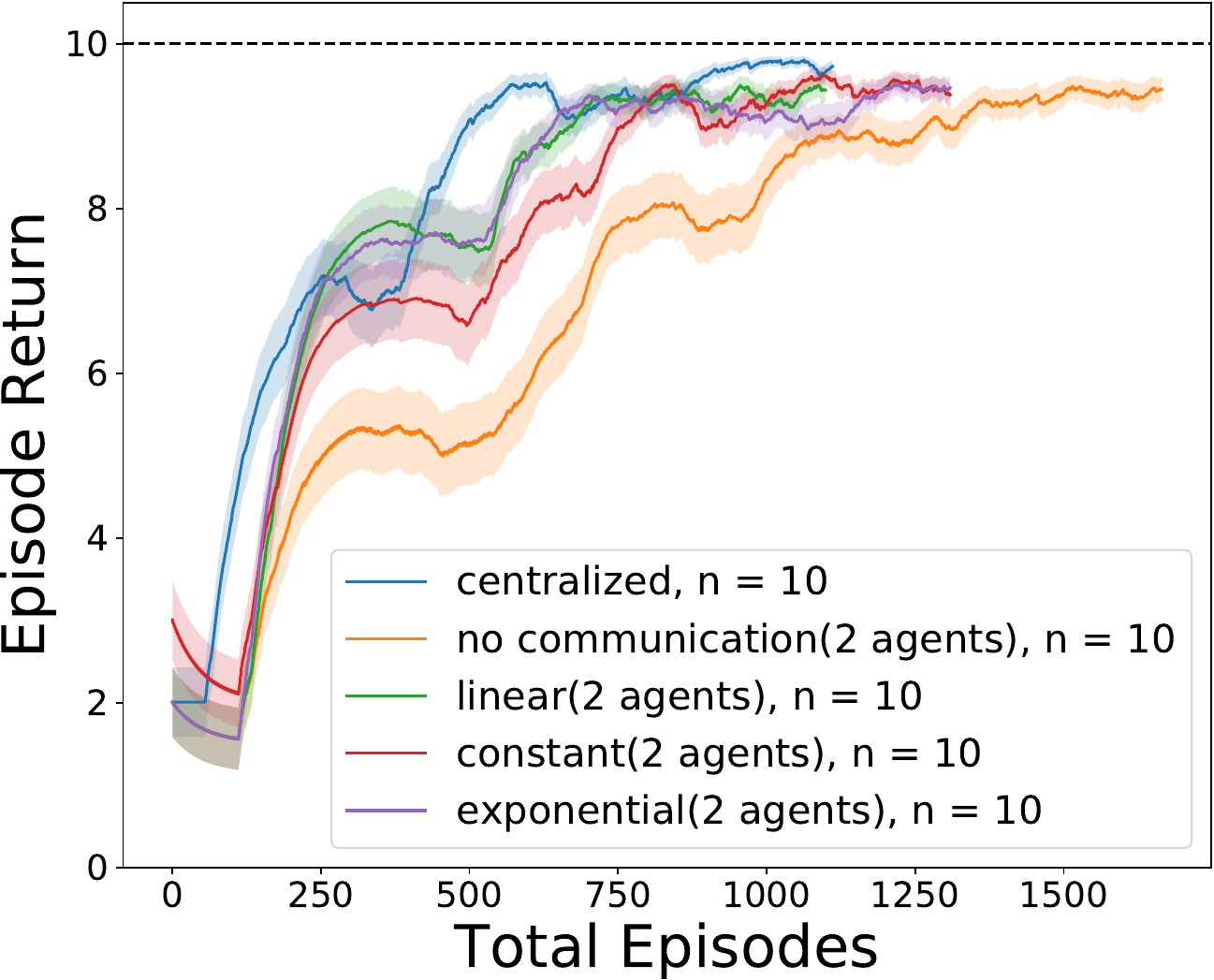}
         \label{fig:n10_PHE_agent2}
     }
     \subfigure[m=3]{
         \includegraphics[width=0.3\linewidth]{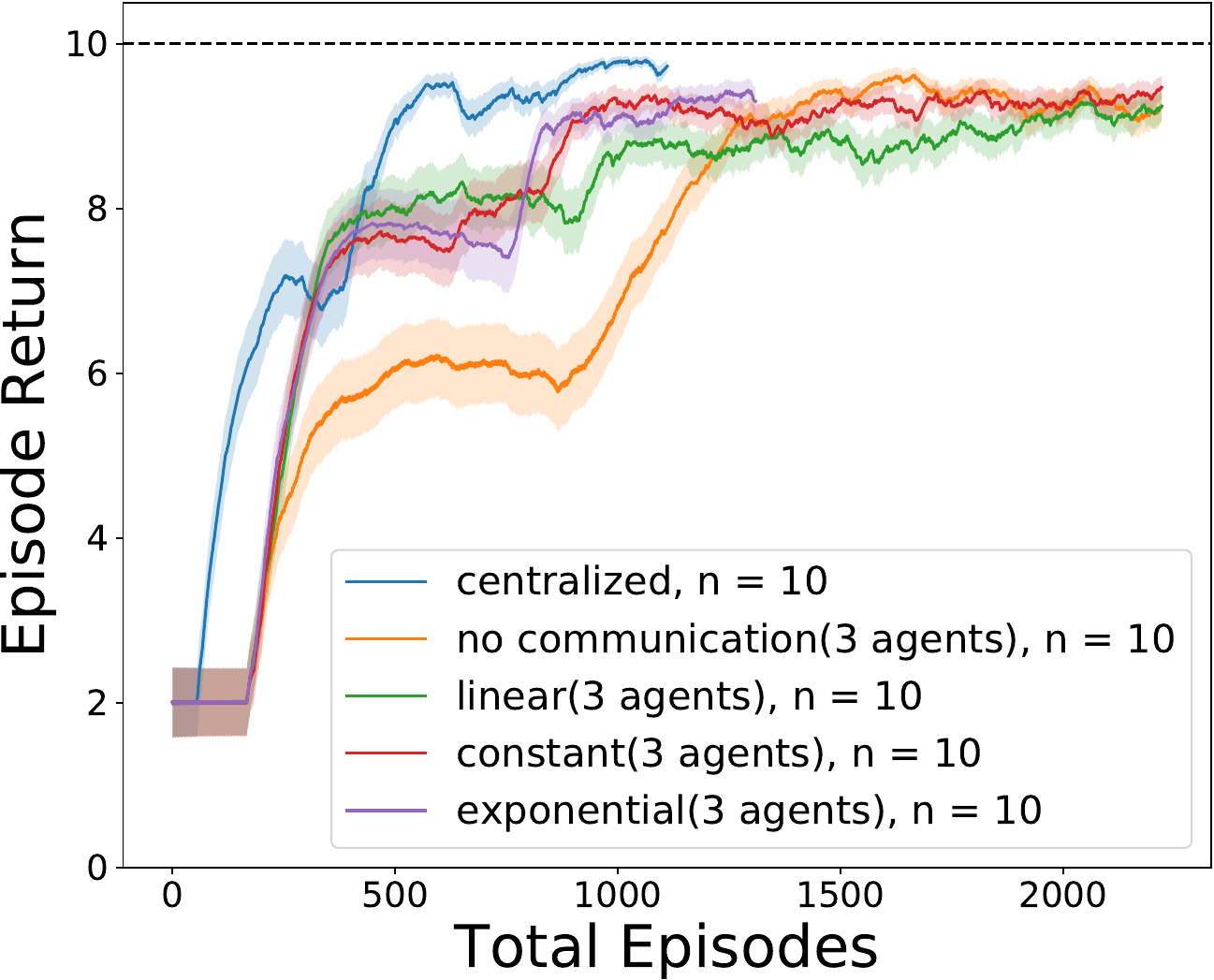}
         \label{fig:n10_PHE_agent3}
     }
     \subfigure[m=4]{
         \includegraphics[width=0.3\linewidth]{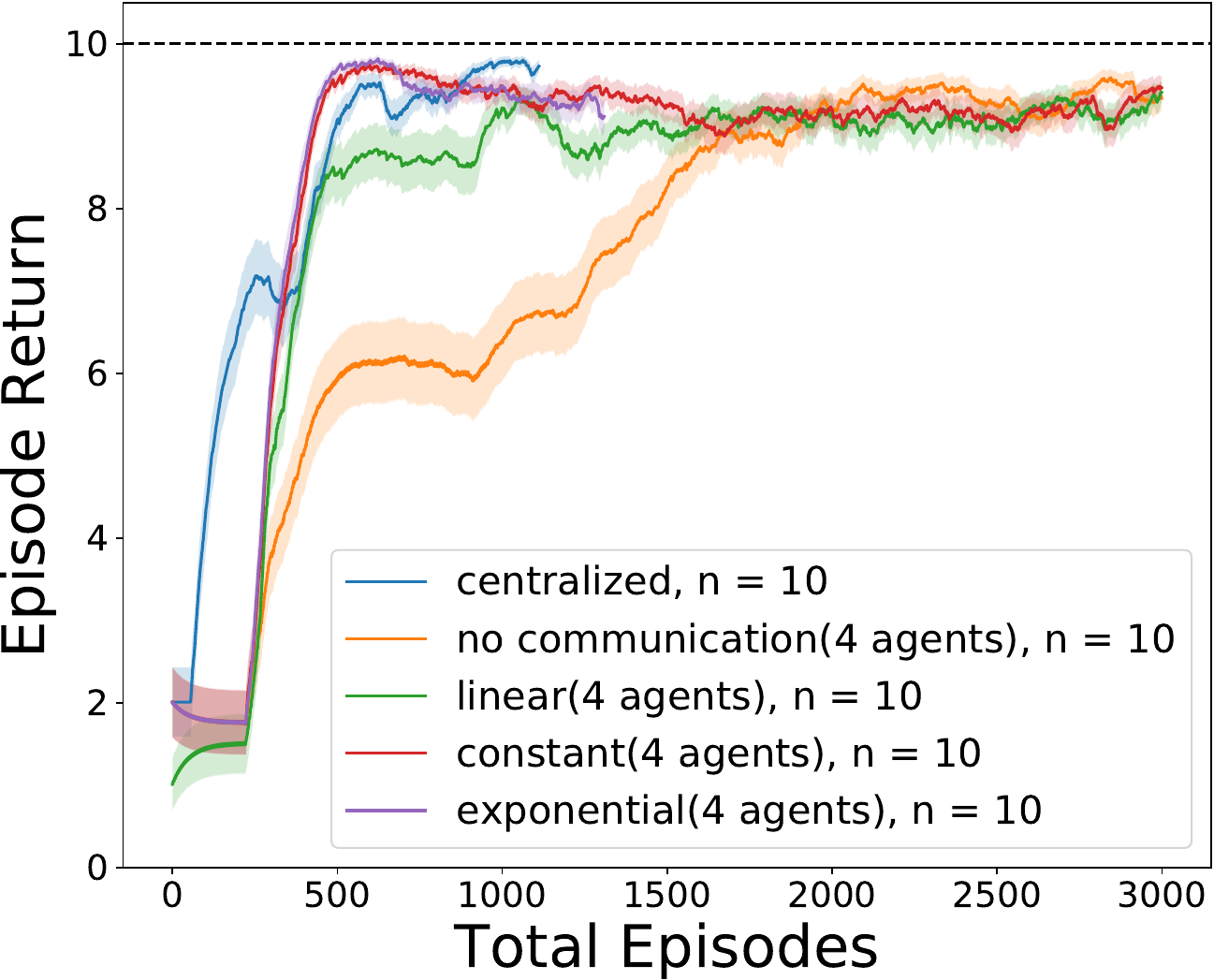}
         \label{fig:n10_PHE_agent4}
     }
     \vskip\baselineskip
     \subfigure[m=2]{
         \includegraphics[width=0.3\linewidth]{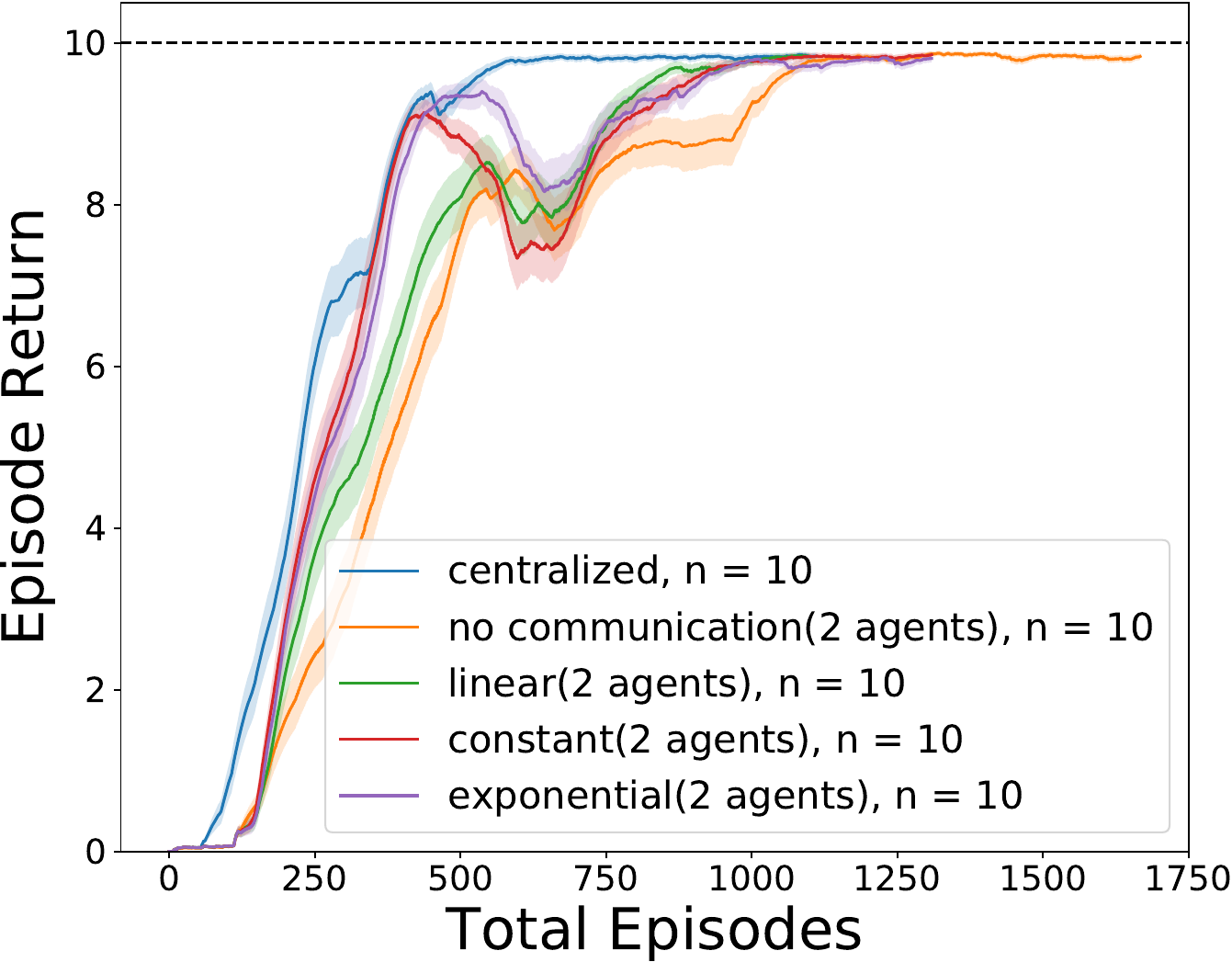}
         \label{fig:n10_LMC_agent2}
     }
     \subfigure[m=3]{
         \includegraphics[width=0.3\linewidth]{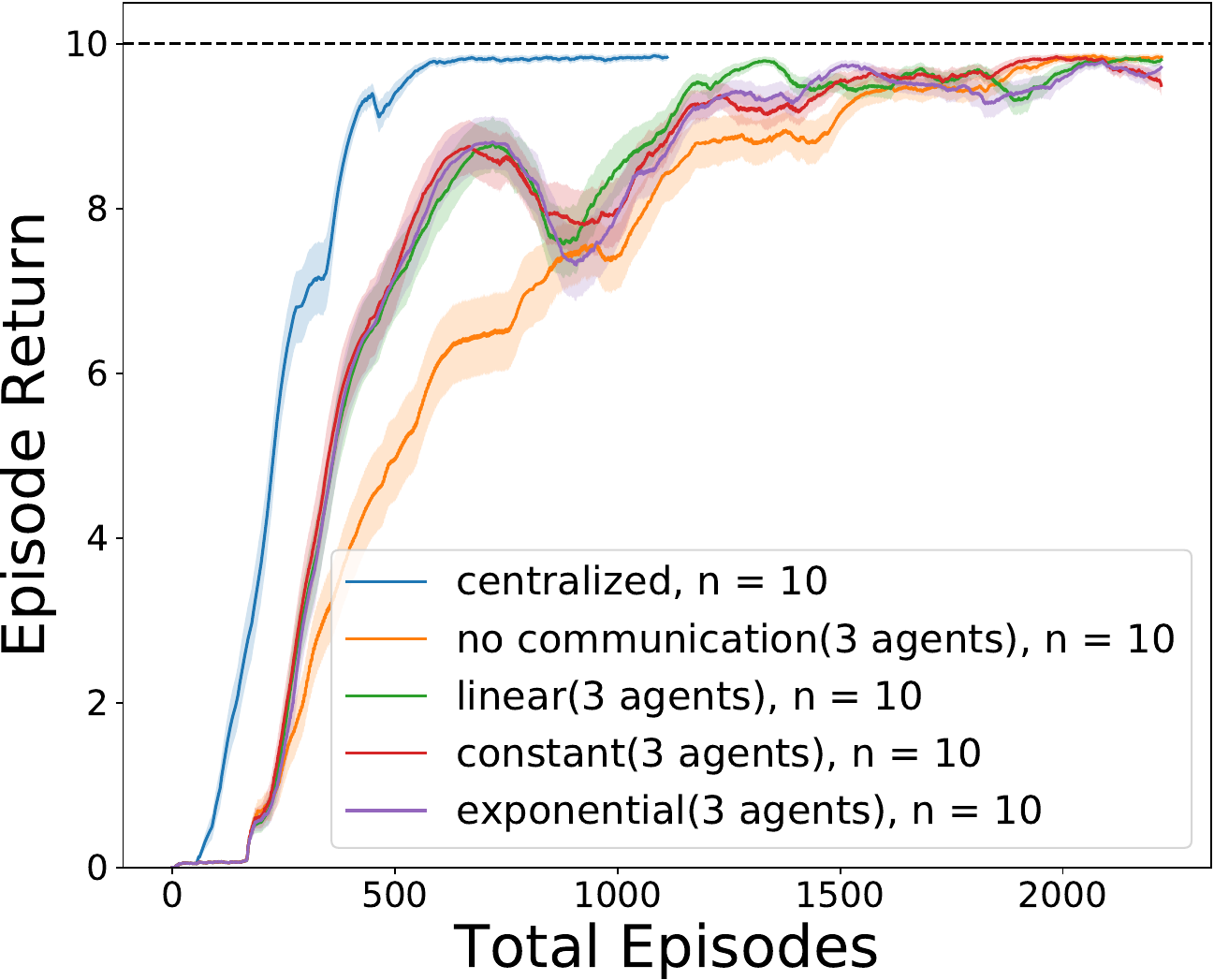}
         \label{fig:n10_LMC_agent3}
     }
     \subfigure[m=4]{
         \includegraphics[width=0.3\linewidth]{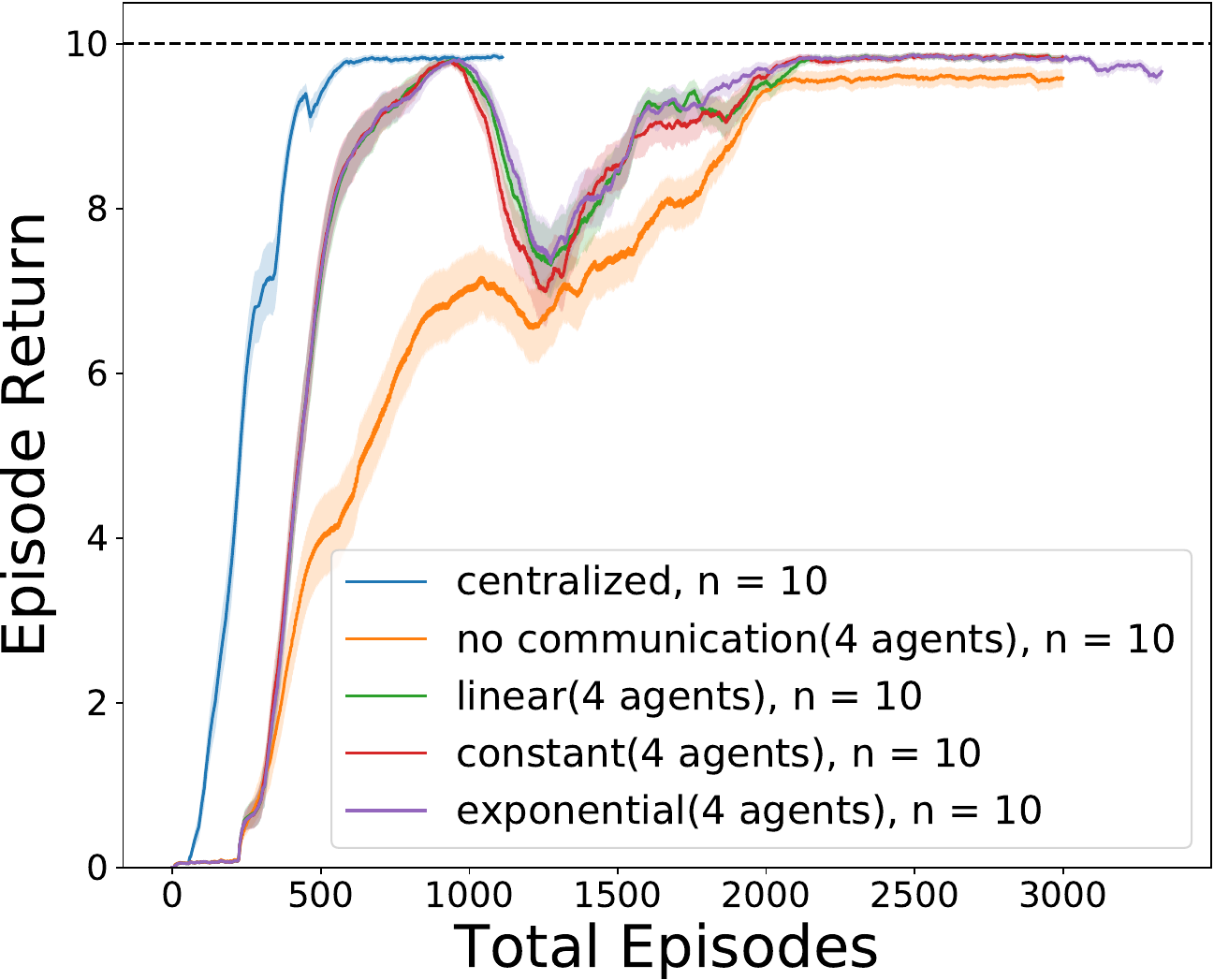}
         \label{fig:n10_LMC_agent4}
     }
        \caption{Different number of agents $m$ with different synchronization strategies as well as the single-agent and no communication settings in $N=10$. \textbf{Top: }PHE, \textbf{Bottom: }LMC}
        \label{fig:n10_communication}
\end{figure}

\begin{figure}
     \centering
    
     \subfigure[m=2]{
         \includegraphics[width=0.3\linewidth]{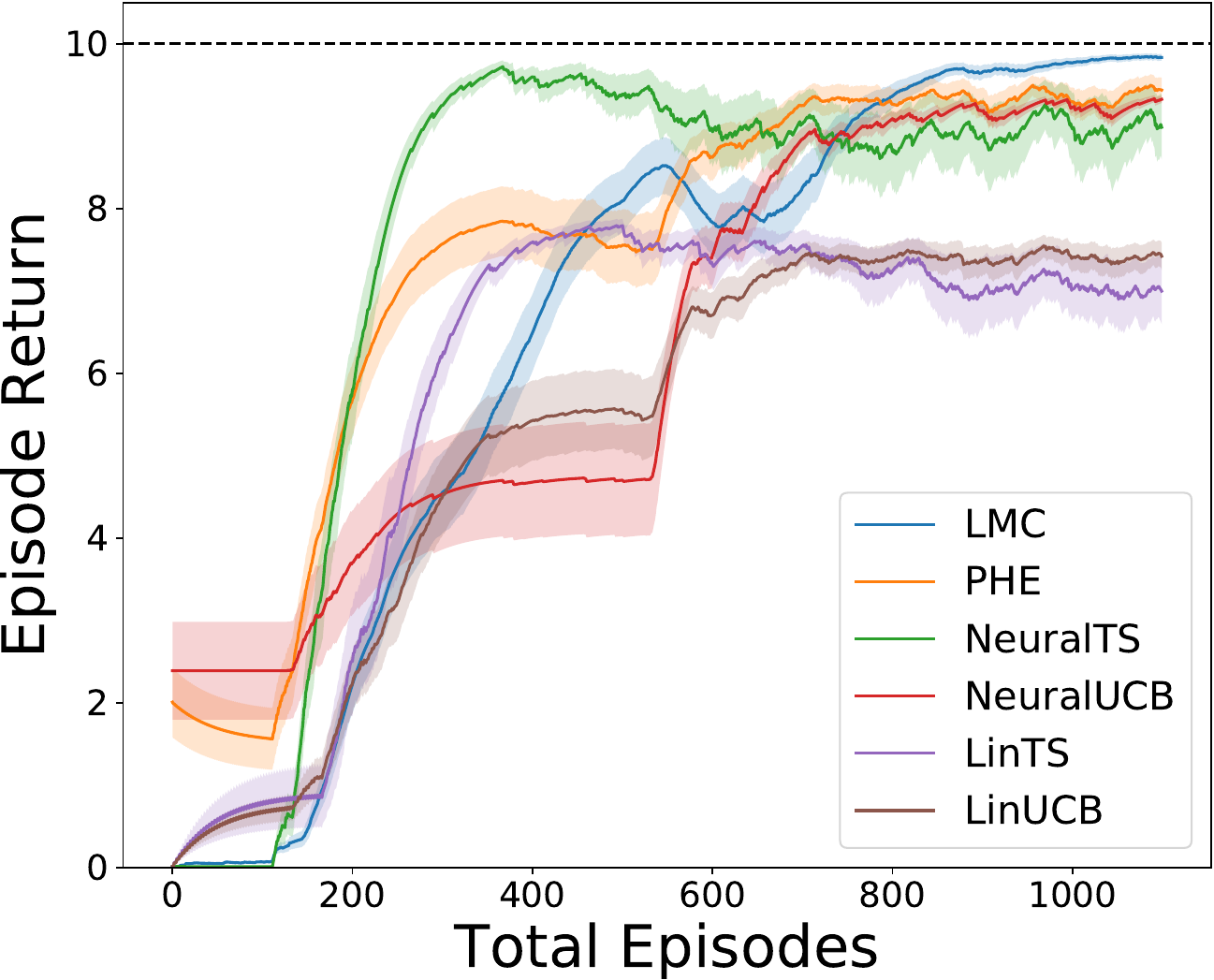}
         \label{fig:nchain10_agent2}
     }
     \subfigure[m=3]{
         \includegraphics[width=0.3\linewidth]{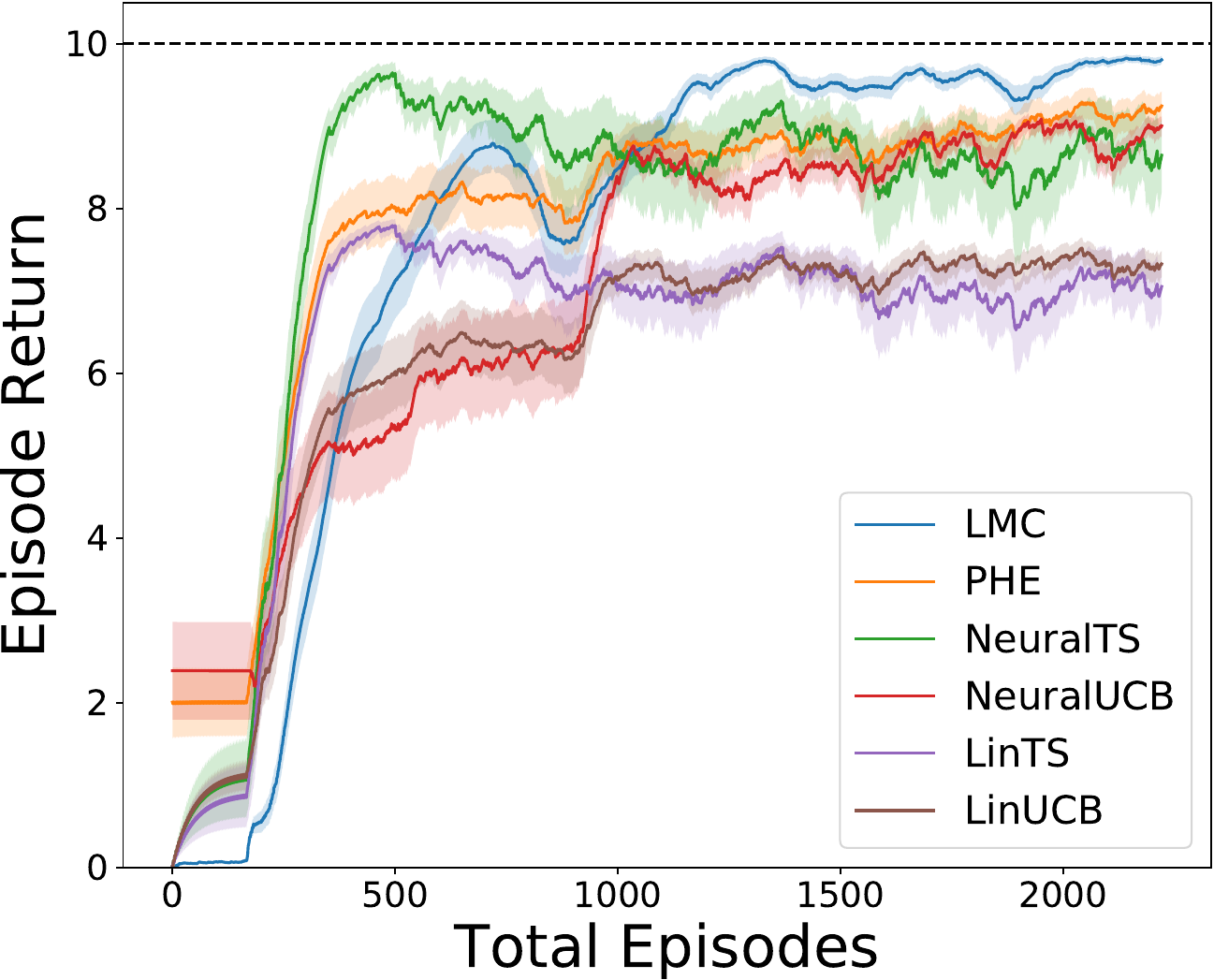}
         \label{fig:nchain10_agent3}
     }
     \subfigure[m=4]{
         \includegraphics[width=0.3\linewidth]{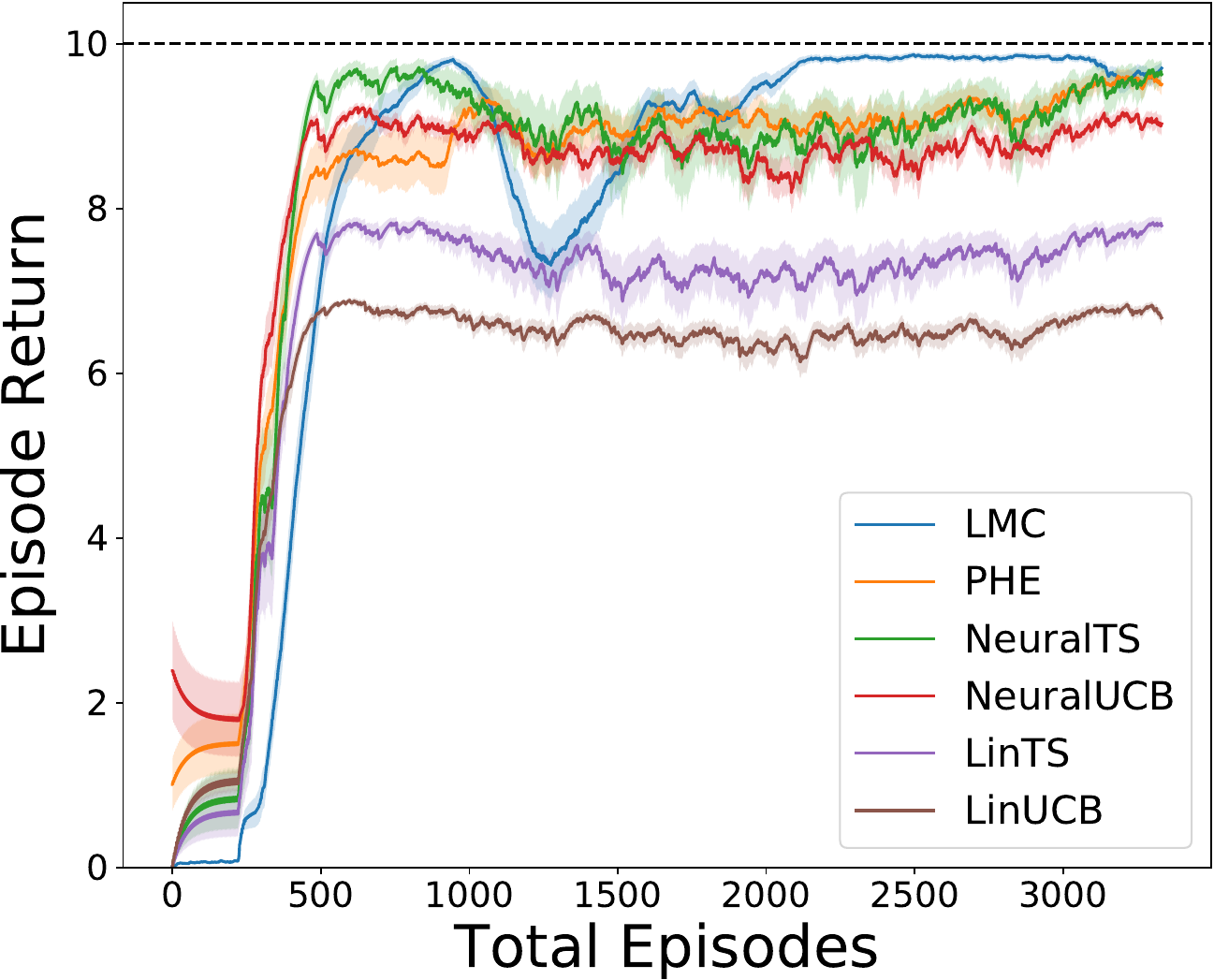}
         \label{fig:nchain10_agent4}
     }
        \caption{Performance with different number of agents $m$ compared with bandit-inspired exploration in $N=10$.}
        \label{fig:n10_overall}
\end{figure}

\begin{figure}[t]
    \centering
    \includegraphics[width = 0.6\textwidth]{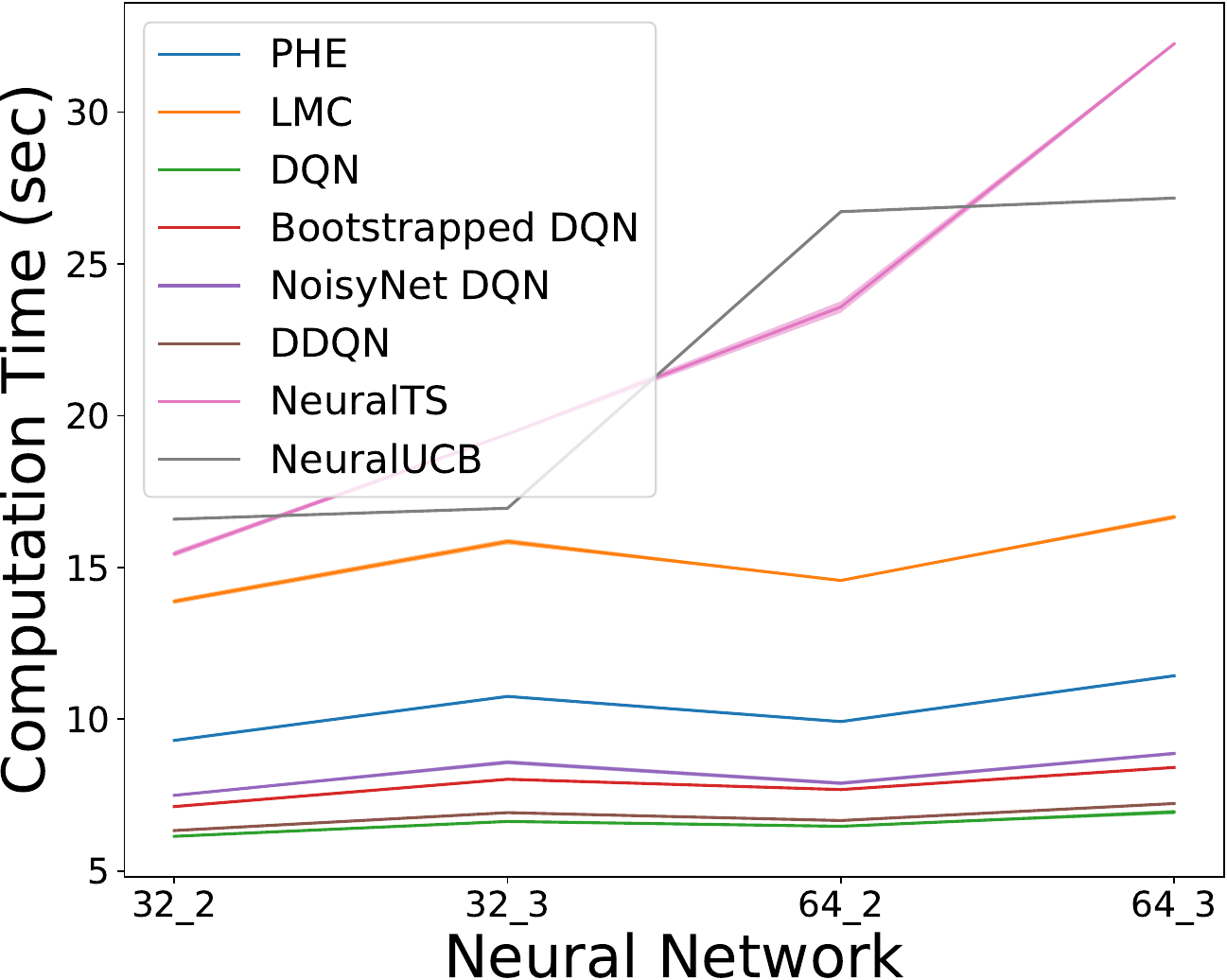}
    \caption{Computation time with different exploration strategies. Note that the $x$-axis indicates the neural network size, i.e., 32\_2 implies two layers with $32$ neurons in each layer.
    }
    \label{fig:n10_computation}
\end{figure}

\begin{figure}
     \centering

     \subfigure[m=1 (centralized), N=25]{
         \includegraphics[width=0.45\linewidth]{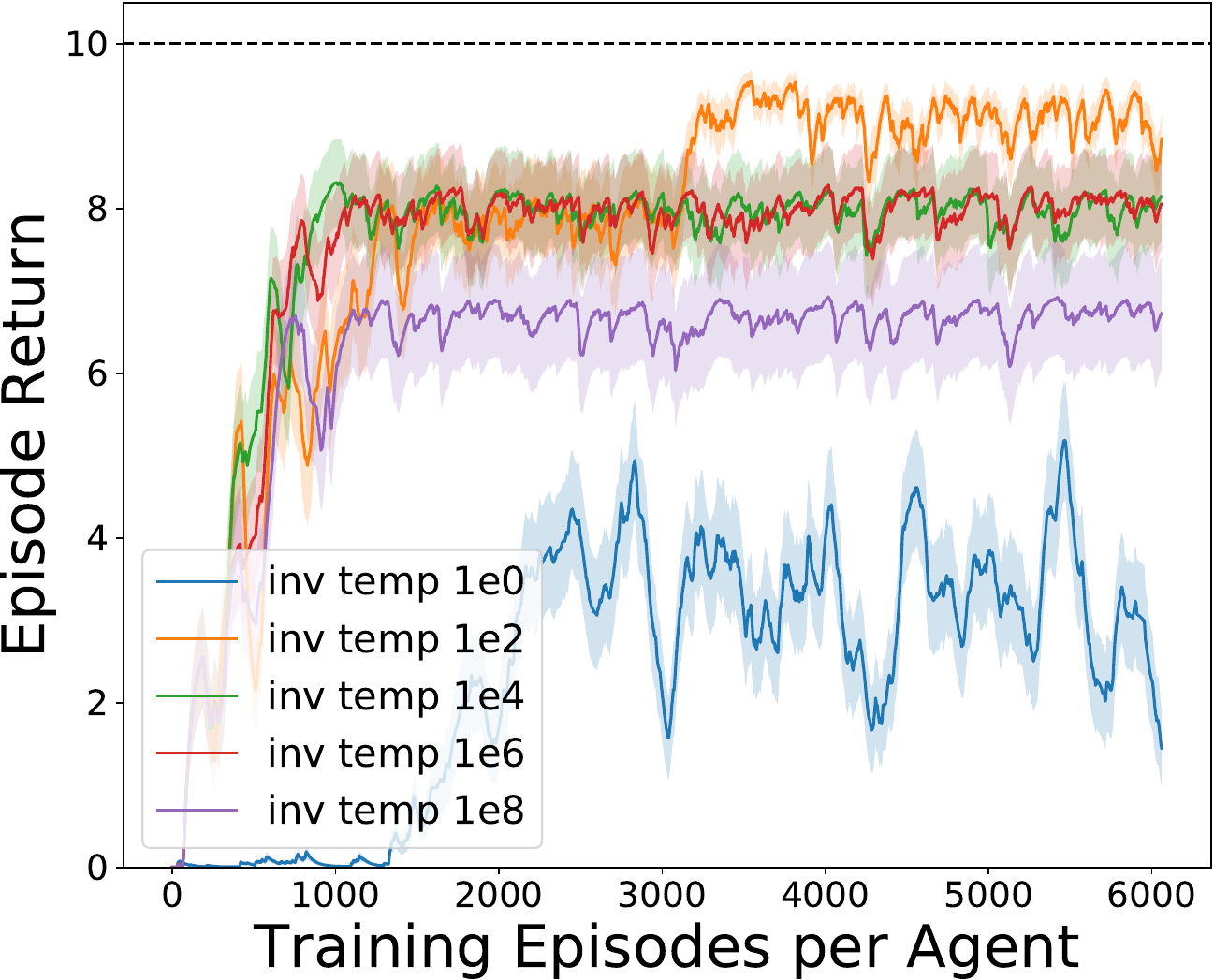}
         \label{fig:n25_LMC_noisehyper_central}
     }
      \subfigure[m=2 (no communication), N=25]{
         \includegraphics[width=0.45\linewidth]{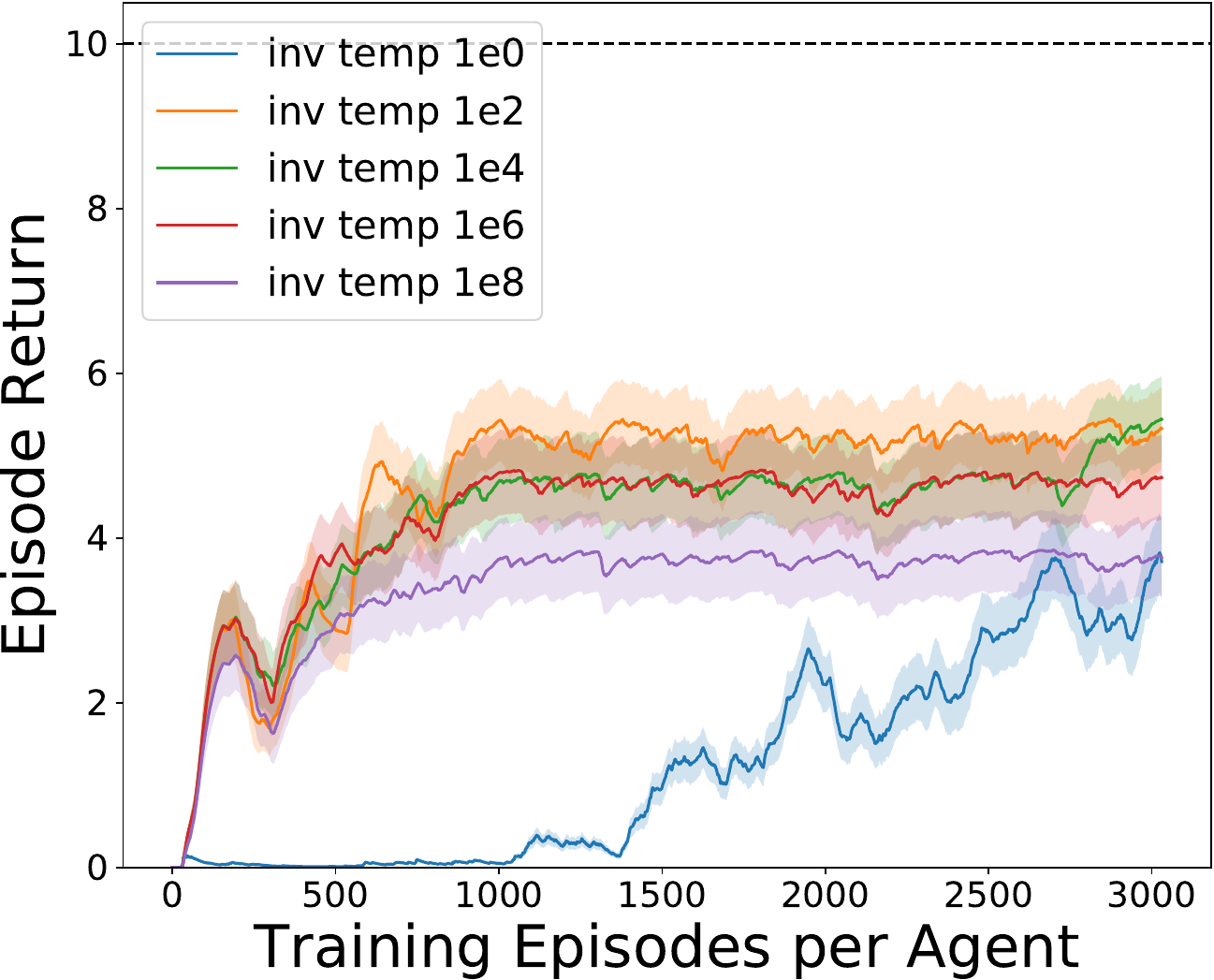}
         \label{fig:n25_LMC_noisehyper_no_commun}
     }

        \caption{Hyper-parameter tuning of inverse temperature (inv temp) $\beta_{m,k}$ for LMC with $N=25$: (a) centralized setting $m = 1$ (b) 2 agents without communication $m = 2$.}
        \label{fig:n25_LMC_noisehyper}
\end{figure}

\begin{figure}
     \centering

     \subfigure[PHE]{
         \includegraphics[width = 0.45\textwidth]{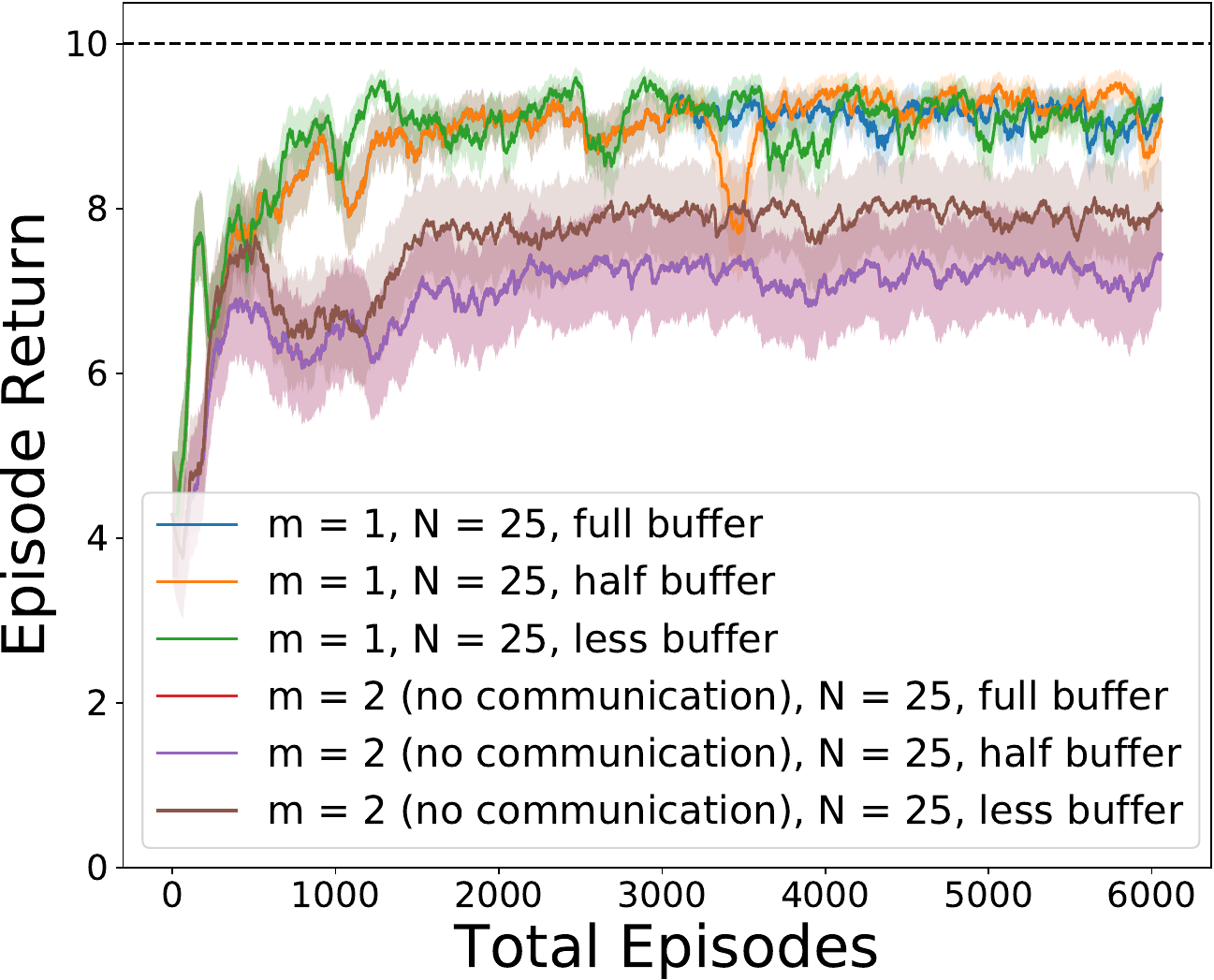}
         \label{fig:n25_diff_buffer_PHE}
     }
      \subfigure[LMC]{
         \includegraphics[width = 0.45\textwidth]{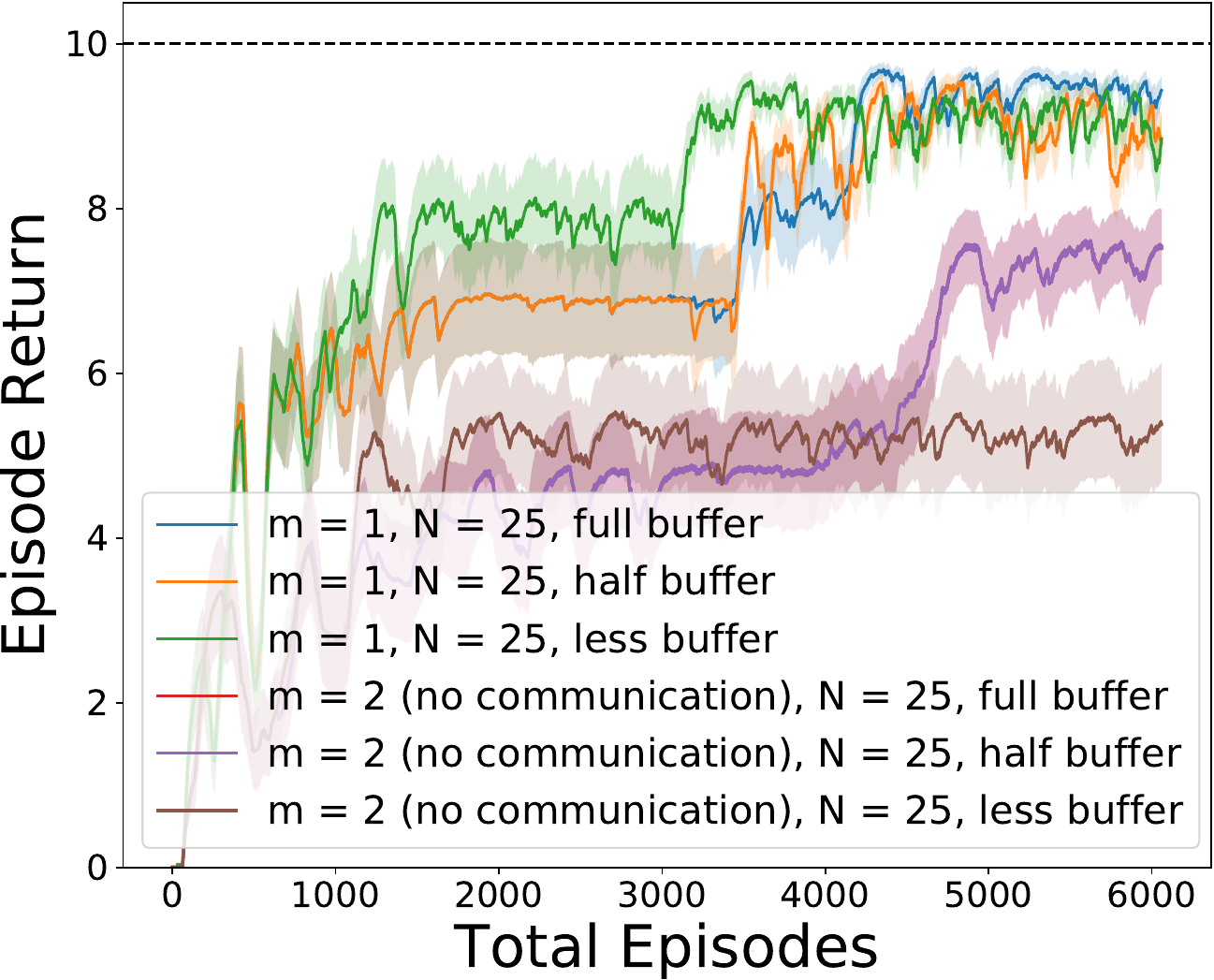}
         \label{fig:n25_diff_buffer_LMC}
     }

        \caption{Different buffer size with $N=25$ between single agent (centralized) and $2$ agents (no communication). Note that the full buffer indicates the size of the total episodes. Each agent in no communication setting only occupies half of the total episodes. Therefore, two curves (full buffer, half buffer) in no communication are consistent.}
        \label{fig:n25_diff_buffer}
\end{figure}
\begin{figure}
     \centering

     \subfigure[PHE]{
         \includegraphics[width=0.45\linewidth]{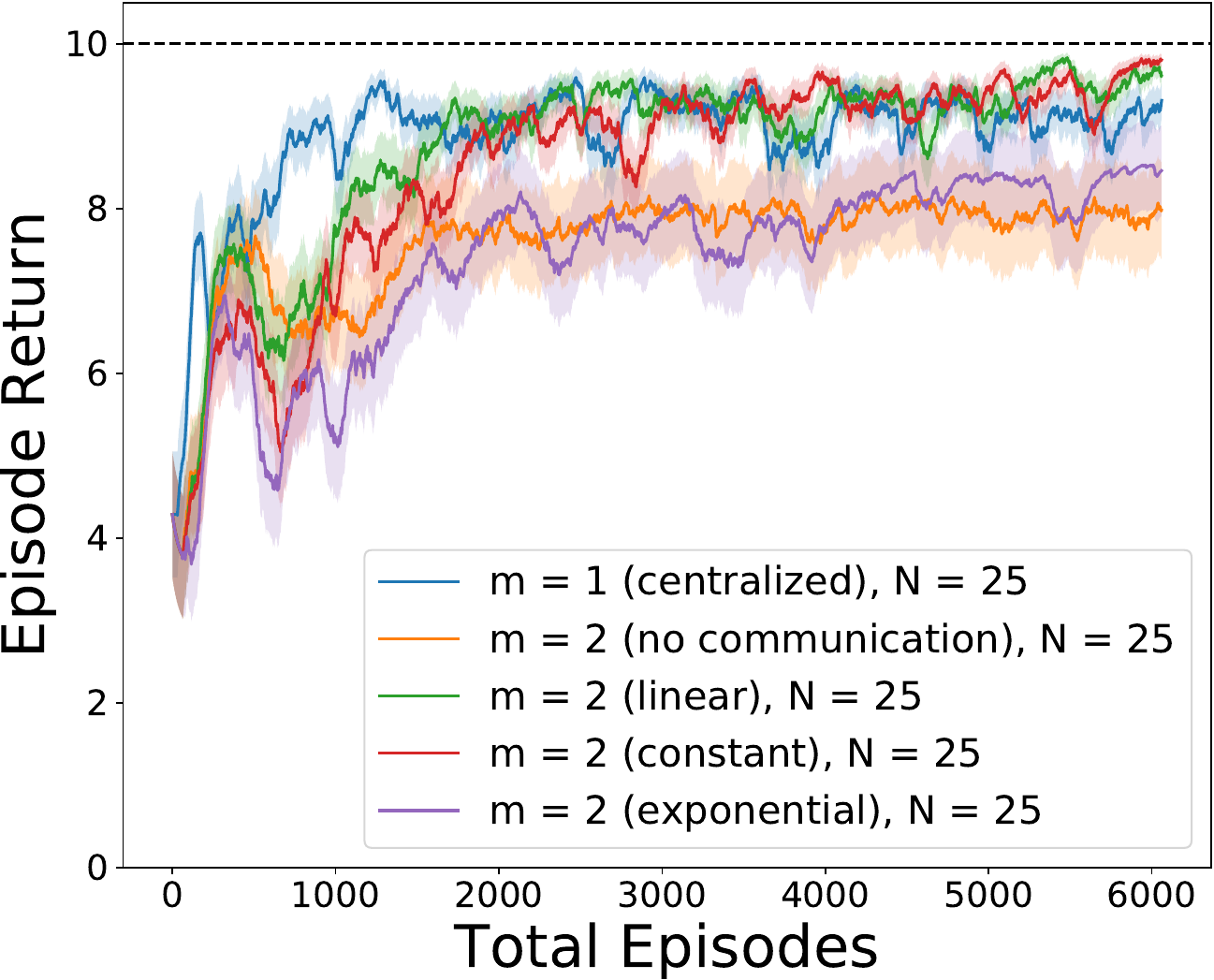}
         \label{fig:n25_PHE_agent2}
     }
      \subfigure[LMC]{
         \includegraphics[width=0.45\linewidth]{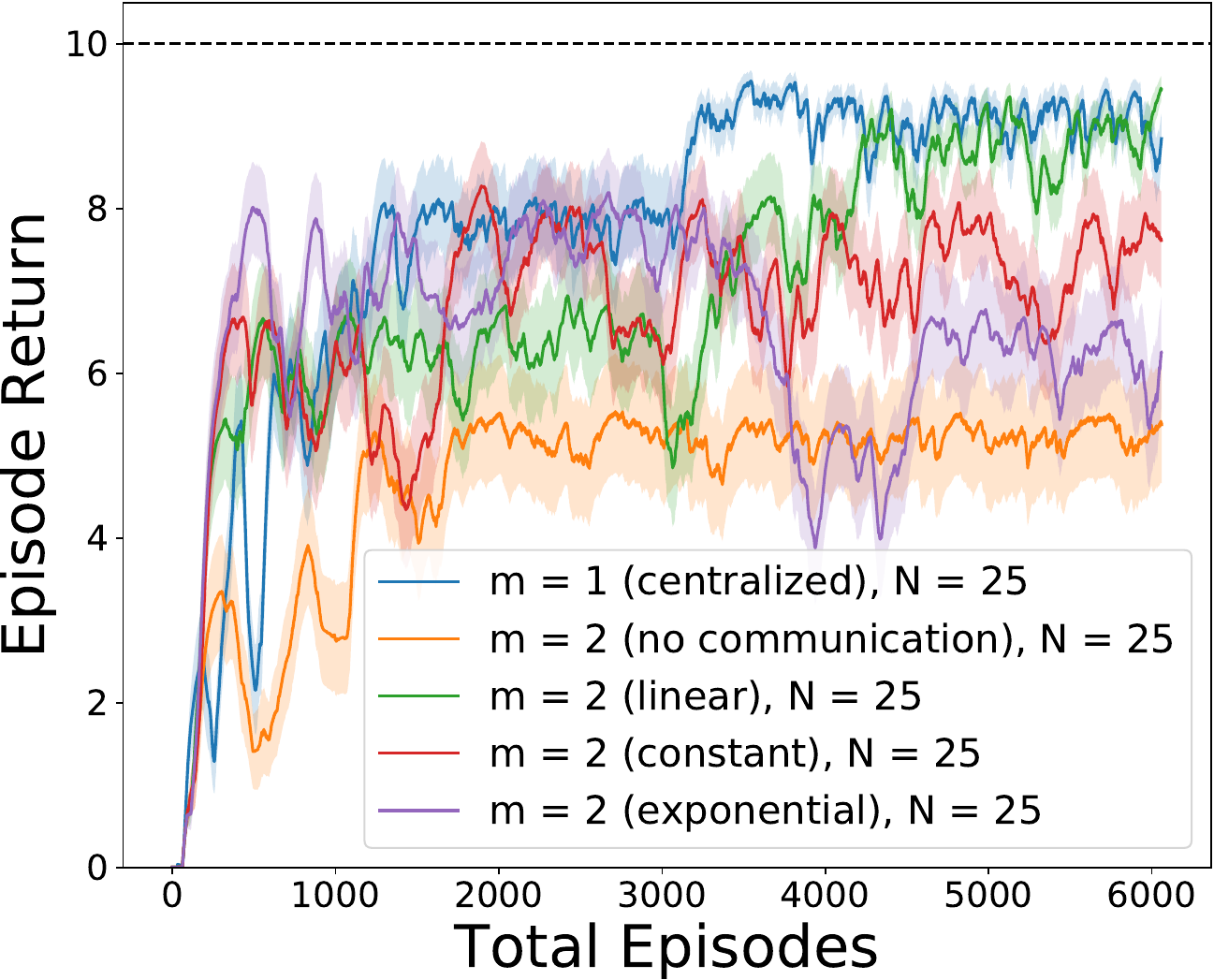}
         \label{fig:n25_LMC_agent2}
     }

        \caption{Different synchronization strategies as well as the single-agent and no communication settings in $N=25$.}
        \label{fig:n25_PHE_LMC}
\end{figure}
\begin{figure}[t]
    \centering
    \includegraphics[width = 0.6\textwidth]{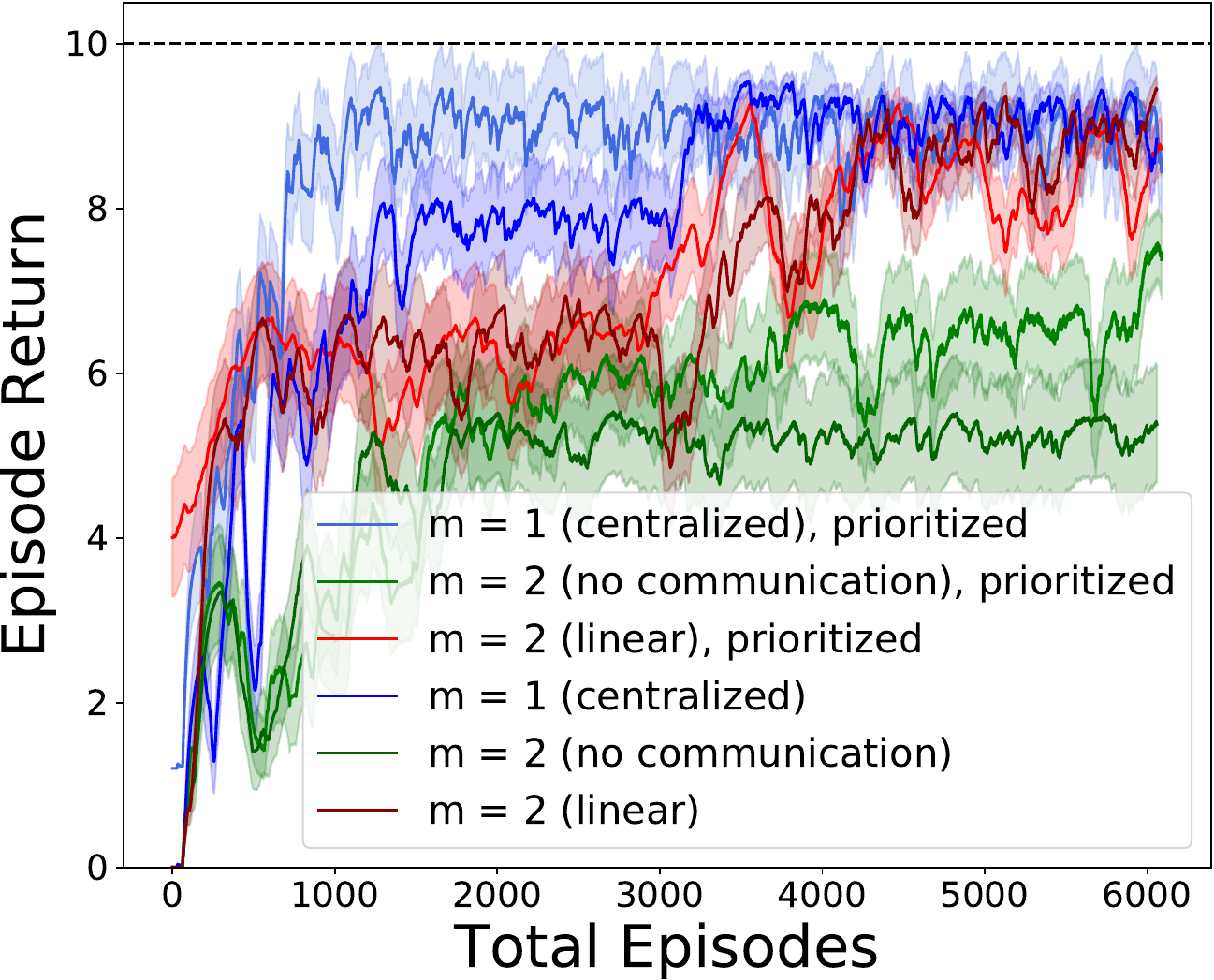}
    \caption{Gap reduction improvement with prioritized experience replay for parallel learning without communication. Note that the same settings with standard and prioritized experience replay are in the same-ish color.
    }
    \label{fig:n25_prioritized}
\end{figure}

\subsection{$N$-chain}\label{sec:appendix_nchain}
We commence by presenting the comprehensive results for $N=25$ in \Cref{fig:n25_all_appendix}, illustrating that our randomized exploration methods exhibit greater suitability in realistic scenarios characterized by an increasing number of agents. This superiority is particularly evident under two potential circumstances: (1) where there are more limitations on computation or data access from each source in the real world, and (2) when parallel learning from multiple sources can significantly enhance runtime efficiency.

Subsequently, we provide a more comprehensive study to investigate the exploration capabilities facilitated by parallel training. Preliminary experiments are conducted with a reduced state space, specifically considering $N = 10$.  The study aims to investigate exploration capabilities across varying agent counts, specifically within the set $m \in \{1, 2, 3, 4\}$.

We list the details of all swept hyper-parameters in $N$-chain for PHE and LMC in \Cref{table:hyper-chain-phe} and \Cref{table:hyper-chain-lmc}, respectively. Specifically, PHE is trained with reward noise $\epsilon_h^{k, l, n} = 10^{-2}$ and regularizer noise $\bxi_h^{k,n} = 10^{-3}$ in \eqref{equ:loss_function_phe} and LMC is trained with $\beta_{m,k} = 10^2$ and in \eqref{eq:LMC_update} and optimized by Adam SGLD~\citep{ishfaq2024provable} with $\alpha_1 = 0.9$, $\alpha_2 = 0.999$ and bias factor $\alpha = 0.1$. The final hyper-parameters used in $N$-chain are presented in \Cref{table:hyper-chain}.

\paragraph{Performance Consistency with Varying $m$}
In the investigation detailed in \Cref{fig:no_communication}, we explore parallel learning without inter-agent communication. Note that the $x$-axis implies the total training episodes from $m$ agents. Consequently, while multiple agents engage in simultaneous policy learning, each agent independently formulates its policies without the exchange of transition information. The discernible trend in this scenario is that an increase in the number of agents sharing the total episodes results in a slower rate of policy learning. Notably, despite this temporal discrepancy, all learning trajectories eventually approximate convergence towards the optimal dashed line.

\paragraph{Different Synchronization Conditions}
To further demonstrate the efficiency of parallel learning with communication, we compare different synchronization conditions in \Cref{para:algorithm_interpretation}. Specifically, we denote synchronization (1) in every constant step as $\textit{constant}$, (2) following exponential function as $\textit{exponential}$, and (3) based on (\ref{equ:synchronize_condition}) as $\textit{linear}$.
To have a fair comparison among different synchronization conditions, we firstly record the empirical number of synchronization via $\textit{linear}$ condition in average, and then we consider constant value for $\textit{constant}$ condition and select proper base $b$ for $\textit{exponential}$ condition with a similar number of synchronization. \Cref{fig:n10_communication} illustrates that any synchronization condition can improve learning efficiency but still with centralized learning as an upper bound. Note that the $x$-axis implies the total training episodes from $m$ agents.

\paragraph{Performance Compared with Bandit-inspired Methods}
Since one of our proposed random exploration strategies, PHE is a variant of approximated TS, it is fair for us to investigate the performance of other exploration methods from bandit algorithms with the integration of DQN. We mainly compare both TS and UCB under neural network (i.e., NeuralTS~\citep{NeuralTS} and NeuralUCB~\citep{NeuralUCB}) and linear (i.e., LinTS~\citep{LinTS} and LinUCB~\citep{LinUCB}) settings. We show that a performance gap exists between linear approaches and other neural-based methods even in a small-scale exploration problem with $N = 10$ in \Cref{fig:n10_overall}. Note that the $x$-axis implies the total training episodes from $m$ agents.

\paragraph{Computational Time}
We have demonstrated that both NeuralTS and NeuralUCB exhibit convergence to performance levels comparable to our proposed randomized exploration strategies (i.e., PHE and LMC) when considering the case of $N = 10$ with $m = 4$ under the synchronization condition ($\textit{linear}$), as outlined in (\ref{equ:synchronize_condition}). However, we argue that the scalability of both methods is limited due to their associated computational costs. To substantiate this assertion, we conduct experiments across all methods including DQN baselines with $N=10$ and $m=4$ over $10^4$ steps with varying neural network sizes, such as $[32, 32, 32]$, which signifies three layers with $32$ neurons in each layer. Importantly, the length of the chain $N$ has no bearing on the running time.

In \Cref{fig:n10_computation}, we show the computational time of all methods under different neural network sizes. The solid lines represent the average computational time over 10 random seeds and the shaded area represents the standard deviation. We observe that NeuralTS and NeuralUCB have heavy running time consistently with varying network sizes. Although the computation time of LMC is still higher than other remaining approaches, we observe that it maintains a similar computation time with different neural network sizes, which can still be scaled up to more complex problems with larger neural networks.

\paragraph{Hyper-parameter Tuning of Inverse Temperature $\beta_{m,k}$} Subsequently, we scale the problem to $N=25$. Given the extended horizon, the demand for exploration intensifies, leading us to conduct hyper-parameter tuning for the inverse temperature parameter $\beta_{m,k}$ in LMC, as illustrated in \Cref{fig:n25_LMC_noisehyper}. It is crucial to note that the efficacy of learning is significantly influenced by the exploration capacity in both centralized learning and parallel learning without communication. Our observations reveal a discernible gap between centralized and parallel learning, a departure from the pattern observed in \Cref{fig:no_communication}. We posit that the disparity may stem from issues associated with the replay buffer size in off-policy RL algorithms. Specifically, when the replay buffer exhausts its capacity for new transitions, the incoming transition replaces the oldest one.

\paragraph{Hyper-parameter Tuning of Buffer Size}
Therefore, we present a performance comparison between a solitary agent ($m=1$) and a scenario involving two agents ($m=2$) in \Cref{fig:n25_diff_buffer} with different buffer sizes. Full buffer and half buffer indicate the replay buffer's capacity to store the complete set and half of the transitions during training, respectively. We observe that the learning process is more efficient with less buffer size in a centralized setting because having an excessively large replay buffer may potentially impede the efficiency of the learning process. Furthermore, the gap between centralized setting and paralleling learning still exists among different buffer sizes. Therefore, we focus on the setting of less buffer size with different synchronization conditions in \Cref{fig:n25_PHE_LMC}. We conclude that $\textit{linear}$ condition results in competitive performance in both PHE and LMC in the $N$-chain problem and we report all exploration strategies with $\textit{linear}$ condition in \Cref{sec:main_n-chain}. Note that the $x$-axis in \Cref{fig:n25_diff_buffer} and \Cref{fig:n25_PHE_LMC} represent the total training episodes from $m$ agents.

\paragraph{Ablation Study of Sampling Mechanism}
To reduce the reward gap, we adopt a better sampling mechanism in the replay buffer with prioritized experience replay (PER). In \Cref{fig:n25_prioritized}, parallel learning without inter-agent communication can increase reward with PER, where the $x$-axis represents the total training episodes from $m$ agents. However, centralized learning with PER improves faster convergence with similar performance and the trends for $\textit{linear}$ condition curves are similar. Therefore, the gap between centralized and parallel learning without communication is reduced with PER. Note that the main experimental results in \Cref{fig:n25_all} are based on standard experience replay because standard sampling in $\textit{linear}$ condition has similar performance against PER with faster training time.

\begin{figure}[t]
    \centering
    
     \subfigure[SuperMarioBros-1-1-v0]{
         \includegraphics[width=0.22\linewidth]{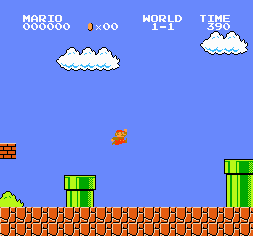}
         \label{fig:mario1}
     }
     \subfigure[SuperMarioBros-1-2-v0]{
         \includegraphics[width=0.22\linewidth]{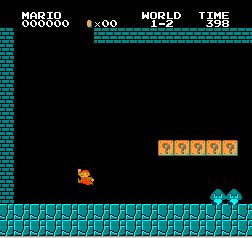}
         \label{fig:mario2}
     }
     \subfigure[SuperMarioBros-1-3-v0]{
         \includegraphics[width=0.22\linewidth]{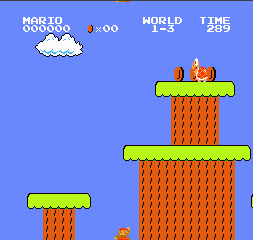}
         \label{fig:mario3}
     }
     \subfigure[SuperMarioBros-1-4-v0]{
         \includegraphics[width=0.22\linewidth]{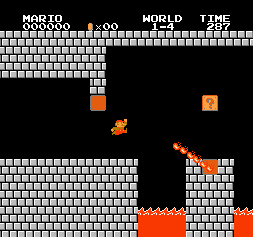}
         \label{fig:mario4}
     }
    \caption{Illustrations of $4$ different environments in Super Mario Bros task. 
    }
    \label{fig:mario}
\end{figure}

\begin{table}[htbp]
\centering
\caption{Hyper-parameters used in the Super Mario Bros}
\label{table:hyper-mario} 
\resizebox{\textwidth}{!}{%
\begin{tabular}{p{3cm} p{1.4cm} p{1.4cm} p{1.4cm} p{1.5cm}p{1.4cm} p{1.4cm}}
  \toprule
    Hyper-parameter           & PHE   & LMC  & DQN & Bootstrapped DQN   & Noisy DQN & DDQN            \\ 
  \midrule
    Discount Factor $\lambda$       & 0.9  & 0.9     & 0.9  & 0.9   & 0.9  & 0.9           \\

    Learning Rate $\eta_k$      & $ 10^{-2}$  & $3\times10^{-4}$   & $10^{-2}$    & $10^{-2}$     & $10^{-2}$    & $10^{-2}$             \\
    Hidden Activation      & Relu     & Relu    & Relu     & Relu  & Relu     & Relu         \\
    Output Activation      & Linear   & Linear    & Linear   & Linear     & Linear   & Linear         \\
   N\underline {o} Update $J_k$       & 1   & 4    & 1   & 1    & 1   & 1          \\
   N\underline {o} Target Networks & 2   & 1    & 1   & 4    & 1   & 1          \\
   Batch Size & 32   & 32    & 32   & 32    & 32   & 32          \\
    
  \bottomrule
\end{tabular} }
\end{table}

\subsection{Super Mario Bros}\label{sec:appendix_mario}

While cooperative parallel learning enhances training efficiency through data sharing, challenges emerge when handling data from devices capturing images or audio due to privacy concerns in real-world applications. In response, our approach extends randomized exploration strategies to a federated reinforcement learning framework as shown in \Cref{algo:federated_general_framework}, from \Cref{algo:general_framework}, which incorporates parameter synchronization among $Q$ neural networks (Line \ref{line:federated_SYNCHRONIZE_1}-\ref{line:federated_SYNCHRONIZE_2} in \Cref{algo:federated_general_framework}) rather than relying on the conventional practice of sharing agents' transitions. Note that the synchronization follows the format as in \Cref{algo:general_framework} to update $Q$ functions with horizon $h \in H$. However, in practice, we can directly update the weight of the neural network to reduce the communication cost.

The training process unfolds within a federated reinforcement learning framework, wherein local updates and global aggregations are iteratively executed~\citep{hao2022federated}. Specifically, each agent iterates through multiple local updates of its value function, followed by server-mediated averaging of these functions across all agents, constituting a form of parameter sharing. Note that the transitions are not accessible among agents, leading us to directly synchronize all agents with parameter sharing every constant local iteration instead of synchronization condition in \eqref{equ:synchronize_condition}. We use the same architecture for all the experiments in the Super Mario Bros task with the preprocessed images as the input states and $7$ discrete actions in action space.

Particularly, we construct $3$ convolutional neural network layers with width [32, 64, 32], followed by $2$ fully connected layers with the output of action space in the $Q$ network. The detailed hyper-parameters for Super Mario Bros task are presented in \Cref{table:hyper-mario}.

\begin{algorithm}[htp]
\caption{Unified Algorithm Framework for Randomized Exploration in \textcolor{red}{Federated Learning}\label{algo:federated_general_framework}
}
    \begin{algorithmic}[1]
	
        \FOR{episode $k =1,...,K$}
		\FOR{agent $m \in \mathcal{M}$}
    		\STATE Receive initial state $s^{k}_{m,1}$.
                    
                \STATE $V_{m, H+1}^k(\cdot) \leftarrow 0$.
                    
                \STATE $\{Q_{m,h}^k(\cdot,\cdot)\}_{h=1}^H\leftarrow$\textbf{\textcolor{blue}{Randomized Exploration}} \hfill $\triangleleft$ \Cref{algo:exploration_part_PHE_general_function_class} or \Cref{algo:exploration_part_LMC_general_function_class}  

                \FOR{step $h =1,...,H$}
                    \STATE $a^k_{m,h} \leftarrow \argmax_{a \in \mathcal{A}} Q^k_{m,h}(s^k_{m,h},a)$. 
                    
                    \STATE Receive $s^k_{m, h+1}$ and $r_{h}$.
                      
                    \IF{Condition}
                        \STATE SYNCHRONIZE $\leftarrow$ True.
                    \ENDIF
                \ENDFOR
            \ENDFOR
            
            \IF{SYNCHRONIZE} \label{line:federated_SYNCHRONIZE_1}
                \FOR{step $h =H,...,1$} 
                    \STATE $\bar{Q}_{m}^k \leftarrow \frac{1}{M} \sum_{m=1}^M Q_{m,h}^k$
                    \STATE $Q_{m,h}^k \leftarrow \bar{Q}_{m,h}^k$, $\forall m$
                \ENDFOR
               
            \ENDIF \label{line:federated_SYNCHRONIZE_2}
	\ENDFOR	
    \end{algorithmic}
\end{algorithm}

\begin{table}[htbp]
\centering
\caption{Hyper-parameters used in the building energy systems}
\label{table:hyper-energy} 
\begin{tabular}{p{3cm} p{1.4cm} p{1.4cm} p{1.4cm} p{1.5cm}p{1.4cm} p{1.4cm}}
  \toprule
    Hyper-parameter           & PHE   & LMC  & DQN & Bootstrapped DQN   & Noisy DQN & DDQN            \\ 
  \midrule

    Discount Factor $\lambda$       & 0.99  & 0.99      & 0.99  & 0.99   & 0.99  & 0.99            \\

    Learning Rate $\eta_k$      & $3\times 10^{-3}$  & $3\times 10^{-3}$   & $3\times 10^{-3}$    & $3\times 10^{-3}$     & $3\times 10^{-3}$    & $3\times 10^{-3}$             \\
    Hidden Activation      & Relu     & Relu    & Relu     & Relu  & Relu     & Relu         \\
    Output Activation      & Linear   & Linear    & Linear   & Linear     & Linear   & Linear         \\
   N\underline {o} Update $J_k$       & 1   & 8    & 1   & 1    & 1   & 1          \\
   N\underline {o} Target Networks & 2   & 1    & 1   & 4    & 1   & 1          \\
   Batch Size & 32   & 32    & 32   & 32    & 32   & 32          \\
    NN size &  $64 \times 64$  &  $64 \times 64$  &  $64 \times 64$     &  $64 \times 64$   &  $64 \times 64$     &  $64 \times 64$              \\
    
  \bottomrule
\end{tabular} 
\end{table}

\begin{figure*}
        \centering
        \subfigure[Tampa (hot humid)]{
            \centering
    \includegraphics[width=0.45\textwidth]{figures/sustain_hothumid.pdf}   
        \label{fig:sustain_hothumid}}
        \vspace{-16pt}
        \subfigure[Tucson (hot dry)]{  
            \centering \includegraphics[width=0.45\textwidth]{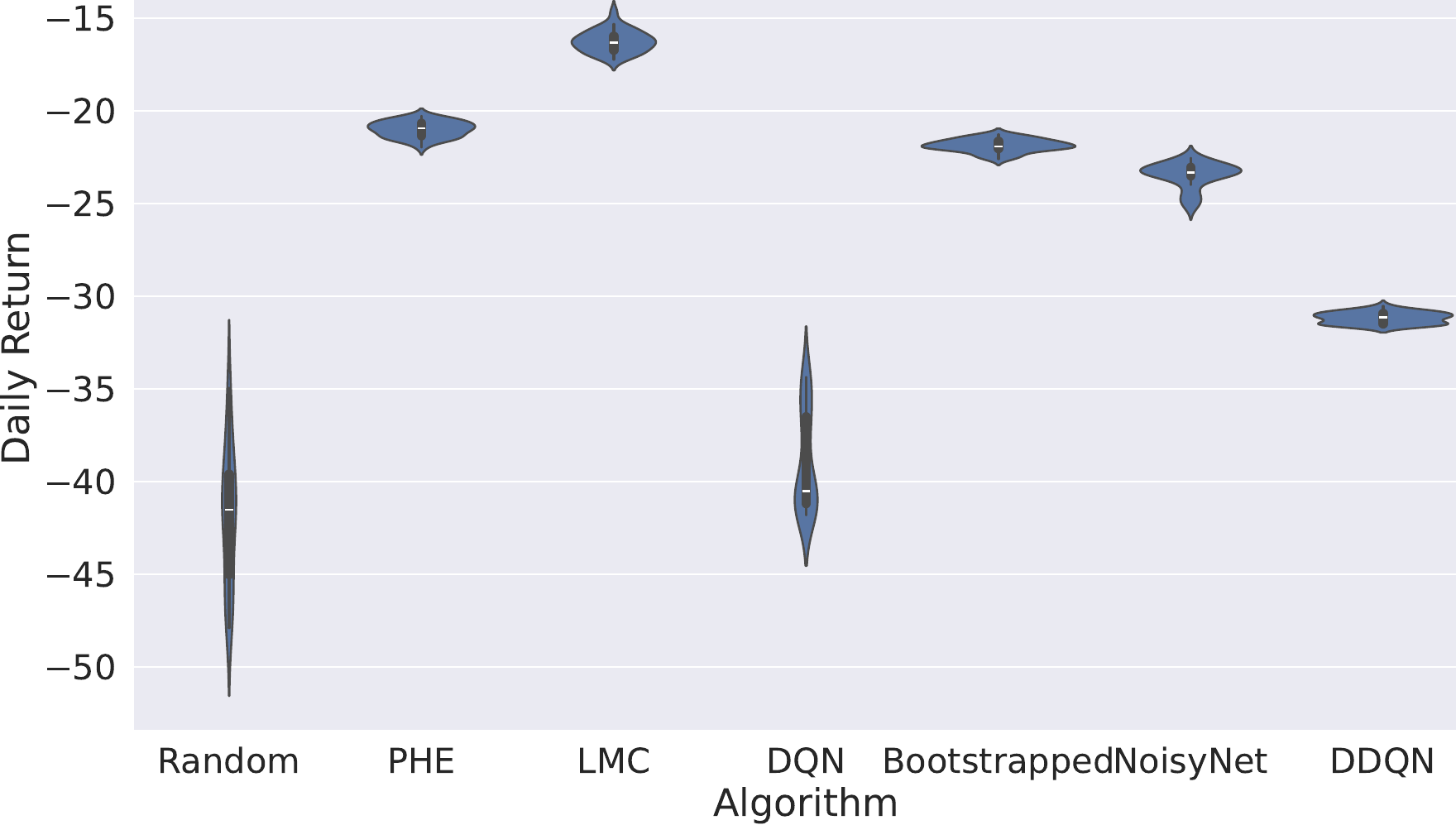}  
    \label{fig:sustain_hotdry}}
      
    \vskip\baselineskip
     
         \vskip\baselineskip
     \centering
        \subfigure[Rochester (cold humid)]{
            \centering
    \includegraphics[width=0.45\textwidth]{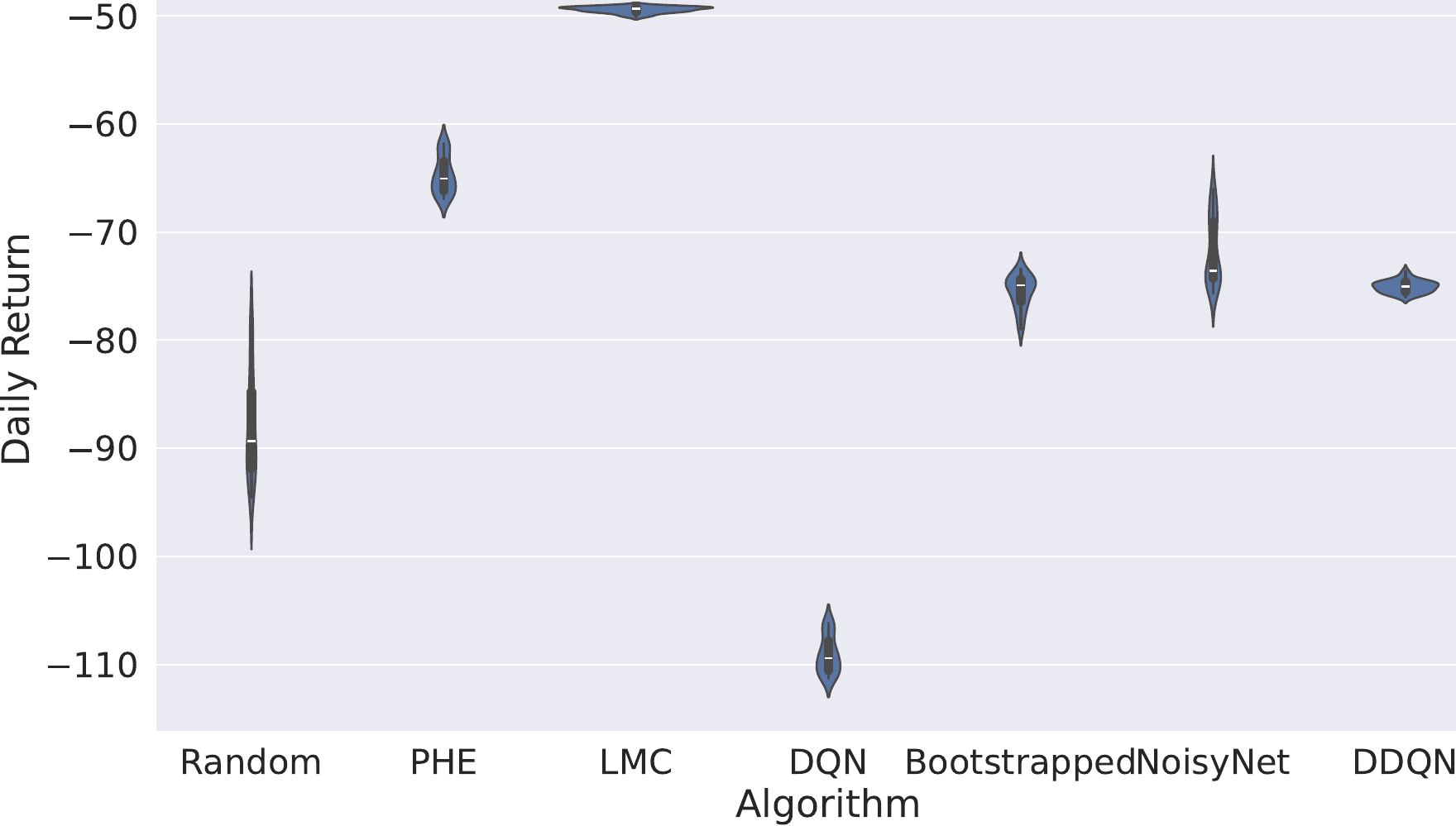}   
        \label{fig:sustain_coldhumid}}
        \subfigure[Great Falls (cold dry)]{  
            \centering \includegraphics[width=0.45\textwidth]{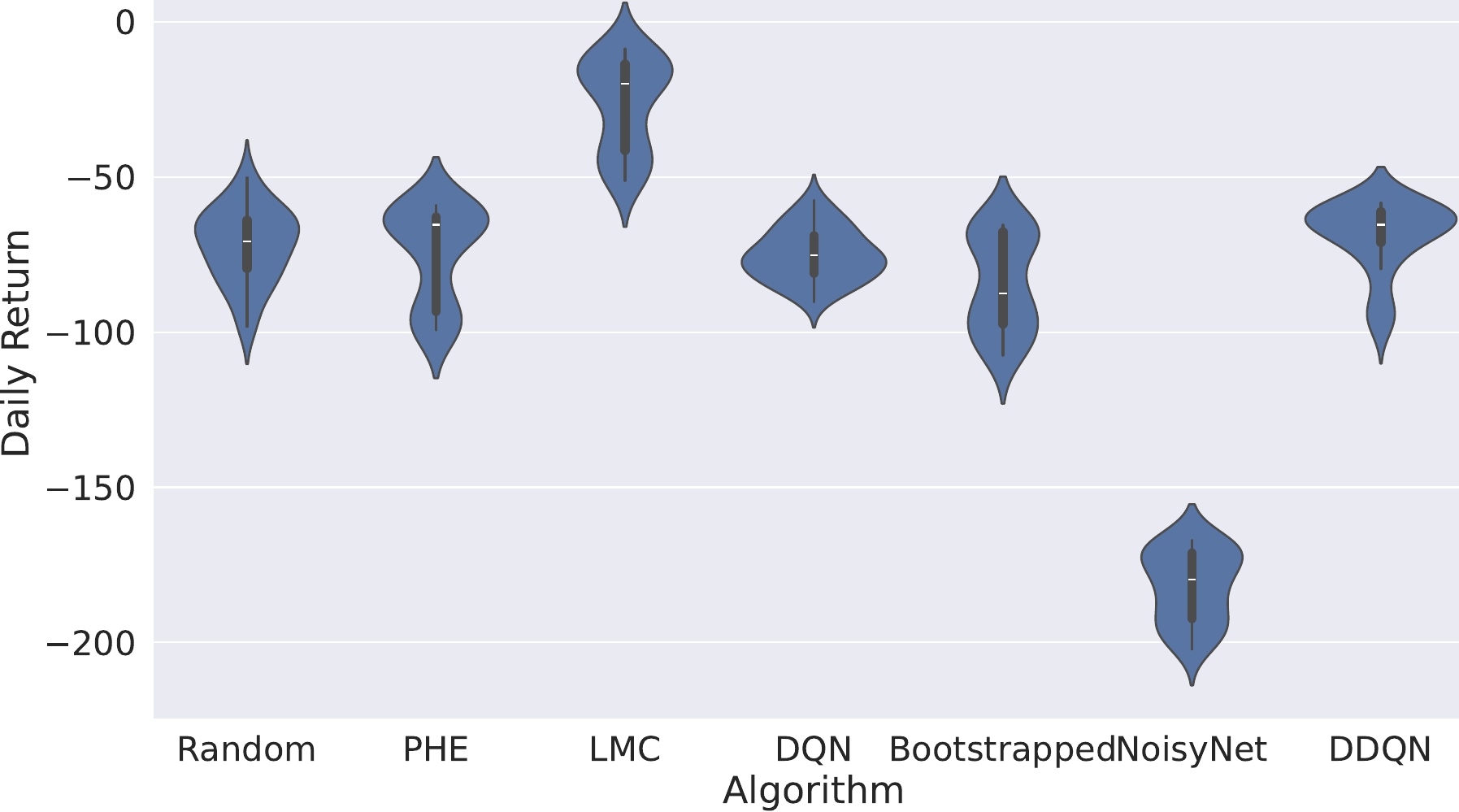}  
    \label{fig:sustain_colddry}}
 
         \caption {Evaluation performance at different cities in building energy systems} 
        \label{fig:sustain_all}
    \end{figure*}
\subsection{Thermal Control of Building Energy Systems}\label{sec:appendix_building}
BuildingEnv encompasses the regulation of heat flow in a multi-zone building to sustain a desired temperature setpoint. We focus on one pre-defined building called "office small" in different cities with varying weather types, i.e., Tampa
(Hot Humid), Tucson (Hot Dry), Rochester (Cold Humid), and Great Falls (Cold Dry). Each episode is designed to span a single day, comprising 5-minute time intervals (H = 288, $\tau$ = 5/60 hours).

\paragraph{Observation Space}
The state at time step $t$, denoted as $s(t) \in \mathbb{R}^{M+4}$, encompasses the temperatures $T_i(t)$ of each zone, where $i \in M$, along with four additional properties: $Q^{GHI}(t)$, $\Bar{Q}^p(t)$, $T_G(t)$, and $T_E(t)$. Specifically, $Q^{GHI}(t)$ represents the heat gain from solar irradiance, $\Bar{Q}^p(t)$ denotes the heat acquired from occupant activities, while $T_G(t)$ and $T_E(t)$ signify the ground and outdoor environment temperatures, respectively.

\paragraph{Action Space} The continuous version of the action $a(t) \in [-1, 1]^M$ controls the heating of $M$ zones. However, since our randomized exploration strategies use DQN~\citep{dqn} as the backbone, we adopt the multi-discrete action space defined in \citep{sustaingym}, which is a vector of action spaces. Then we convert the multi-discrete action space to a single discrete action space with action mapping.

\paragraph{Reward Function} The primary objective is to minimize energy consumption while ensuring the maintenance of temperature within a specified comfort range. Therefore, the reward is penalized with both temperature deviations and HVAC energy consumption as follows:
\begin{align*} 
    \textstyle r(t) = - (1- \beta) \|a(t)\|_2 - \beta\|T^{\text{target}}(t) - T(t)\|_2, 
\end{align*}
where $T^{\text{target}}(t) =[T_1^{\text{target}}(t), T_2^{\text{target}}(t), ..., T_M^{\text{target}}(t)]$ are the target temperatures and $T^(t) = [T_1(t), T_2(t), ..., T_M(t)]$ are the actual zonal temperatures. The parameter $\beta$ is the trade-off between the energy consumption and temperature deviation penalties.

We execute experiments following the united framework in \Cref{algo:general_framework}, synchronizing every constant number of steps across diverse weather conditions in varying cities. The hyper-parameters we used are in \Cref{table:hyper-energy}. Subsequently, we evaluate the performance of all methods in distinct cities, as illustrated in \Cref{fig:sustain_all}. Notably, our proposed random exploration strategies demonstrate a consistently higher mean return across all cities. However, it is worth highlighting that DQN in \Cref{fig:sustain_coldhumid} and Noisy-Net in \Cref{fig:sustain_colddry} exhibit lower returns compared to random actions. This outcome can be attributed to the discrete action space configuration~\citep{sustaingym}. In addition, we observe that maintaining thermal control of buildings is more challenging in cold weather conditions compared to hot weather conditions.

\end{document}